\title{State Aggregation Learning from Markov Transition Data}
\author{
	Yaqi Duan \\
	Princeton University \\
	\texttt{yaqid@princeton.edu} \\
	\And
	Zheng Tracy Ke \\
	Harvard University \\
	\texttt{zke@fas.harvard.edu} \\
	\And
	Mengdi Wang \\
	Princeton University \\
	\texttt{mengdiw@princeton.edu} \\
}
\definecolor{blue}{RGB}{000,000,200}
\definecolor{green}{RGB}{000,150,100}
\definecolor{purple}{RGB}{120,000,250}
\def\red#1{{\color{red}{#1}}}
\newtheorem{Definition}{Definition}
\newtheorem{Theorem}{Theorem}
\newtheorem{Assumption}{Assumption}
{
	\theoremstyle{definition}

}
\newtheorem{Proposition}{Proposition}
\newcommand{\G}{{\mathbf{G}}}
\newcommand{\bbP}{{\mathbb{P}}}
\newcommand{\V}{{\rm V}}
\newcommand{\bOmega}{\boldsymbol{\Omega}}
\newcommand{\diag}{{\rm diag}}
\def\mybox#1{\vskip1mm \begin{center} \bf \red
		\hspace{.0\textwidth}\vbox{\hrule\hbox{\vrule\kern6pt
				\parbox{.95\textwidth}{\kern6pt#1\vskip6pt}\kern6pt\vrule}\hrule}
	\end{center} \vskip-5mm}
\def\bP{\mathbf{P}}
\def\bF{\mathbf{F}}
\def\bG{\mathbf{G}}
\def\bQ{\mathbf{Q}}
\def\bX{\mathbf{X}}
\def\bZ{\mathbf{Z}}	
\def\bU{\mathbf{U}}	
\def\bV{\mathbf{V}}	
\def\bA{\mathbf{A}}	
\def\bB{\mathbf{B}}	
\def\bT{\mathbf{T}}	
\def\bW{\mathbf{W}}	
\def\bI{\mathbf{I}}	
\def\bR{\mathbf{R}}
\def\bSigma{\boldsymbol{\Sigma}}	
\def\mu{\pi}
\def\V{\mathbf{V}}
\newcommand{\beq}{\begin{equation}}
\newcommand{\eeq}{\end{equation}}
\newcommand{\bN}{{\bf N}}
\newcommand{\bH}{{\bf H}}
\newcommand{\bD}{{\bf D}}
\newcommand{\bL}{{\bf L}}
\newcommand{\bh}{{\bf h}}
\newcommand{\hbh}{\widehat{\bf h}}
\newcommand{\hbb}{\widehat{\bf b}}
\newcommand{\bd}{{\bf d}}
\newcommand{\xRightarrow}[2][]{\ext@arrow 0359\Rightarrowfill@{#1}{#2}}
\newcommand{\bu}{{\bf u}}
\newcommand{\bpi}{{\boldsymbol{\pi}}}
\newcommand{\bm}{{\bf m}}
\newcommand{\bb}{{\bf b}}
\newcommand{\bw}{{\bf w}}
\newcommand{\bvarOmega}{{\boldsymbol{\varOmega}}}
\newcommand{\be}{{\bf e}}
\newcommand{\bg}{{\bf g}}
\newcommand{\bY}{{\bf Y}}
\newcommand{\bxi}{{\boldsymbol{\xi}}}
\newcommand{\bXi}{{\bf \Xi}}
\newcommand{\bvarXi}{{\boldsymbol{\varXi}}}
\newcommand{\bvarG}{{\boldsymbol{G}}}
\newcommand{\bvarH}{{\boldsymbol{H}}}
\newcommand{\bx}{{\bf x}}
\newcommand{\bvarSigma}{{\boldsymbol{\varSigma}}}
\newcommand{\bmu}{\boldsymbol{\eta}}
\newcommand{\bK}{{\bf K}}
\newcommand{\bS}{{\bf S}}
\newcommand{\bupsilon}{{\boldsymbol{\upsilon}}}
\newcommand{\bl}{{\bf l}}
\newcommand{\dist}{{\rm dist}}
\newtheorem{remark}{Remark}
\newtheorem{lemma}{Lemma}[section]
\newtheorem{theorem}{Theorem}[section]
\newtheorem{corollary}{Corollary}[section]
\begin{document}
	
\maketitle
	

\vspace{-0.5cm}

\begin{abstract}
\vspace{-0.2cm}
State aggregation is a popular model reduction method rooted in optimal control. It reduces the complexity of engineering systems by mapping the system's states into a small number of meta-states. The choice of aggregation map often depends on the data analysts' knowledge and is largely ad hoc. In this paper, we propose a tractable algorithm that estimates the probabilistic aggregation map from the system's trajectory. We adopt a soft-aggregation model, where each meta-state has a signature raw state, called an {\it anchor state}. This model includes several common state aggregation models as special cases. Our proposed method is a simple two-step algorithm: The first step is spectral decomposition of empirical transition matrix, and the second step conducts a linear transformation of singular vectors to find their approximate convex hull. It outputs the aggregation distributions and disaggregation distributions for each meta-state in explicit forms, which are not obtainable by classical spectral methods. On the theoretical side, we prove sharp error bounds for estimating the aggregation and disaggregation distributions and for identifying anchor states. The analysis relies on a new entry-wise deviation bound for singular vectors of the empirical transition matrix of a Markov process, which is of independent interest and cannot be deduced from existing literature. The application of our method to Manhattan traffic data successfully generates a data-driven state aggregation map with nice interpretations.  
\end{abstract}

\vspace{-0.25cm}

\begin{center}
	\begin{tabular}{p{0.9\linewidth}}
		{\bf Keywords:} anchor states, entry-wise eigenvector analysis, multivariate time series, nonnegative matrix factorization, spectral methods, state representation learning, total variation bounds, unsupervised learning, vertex hunting
	\end{tabular}	
\end{center}

\section{Introduction}
\vspace{-0.15cm}
State aggregation is a long-existing approach for model reduction of complicated systems. It is widely used as a heuristic to reduce the complexity of control systems and reinforcement learning (RL). The earliest idea of state aggregation is to aggregate ``similar'' states into a small number of  subsets through a partition map.  However, the partition is often handpicked by practitioners based on domain-specific knowledge or exact knowledge about the dynamical system \cite{rogers1991aggregation,bertsekas1995neuro}. Alternatively, the partition can be chosen via discretization of the state space in accordance with some priorly known similarity metric or feature functions \cite{tsitsiklis1996feature}. 
Prior knowledge of the dynamical system is often required in order to handpick the aggregation without deteriorating its performance. There lacks a principled approach to find the best state aggregation structure from data. 

In this paper, we propose a model-based approach to learning probabilistic aggregation structure. We adopt the {\it soft state aggregation} model, a flexible model for Markov systems. It allows one to represent each state using a membership distribution over latent variables (see Section~\ref{sec:model} for details). Such models have been used for modeling large Markov decision processes, where the membership can be used as state features to significantly reduce its complexity \cite{singh1995reinforcement,yang2019sample}. When the membership distributions are degenerate, it reduces to the more conventional {\it hard state aggregation} model, and so our method is also applicable to finding a hard partition of the state space. 

The soft aggregation model is parameterized by $p$ aggregation distributions and $r$ disaggregation distributions, where $p$ is the total number of states in a Markov chain and $r$ is the number of (latent) meta-states. Each aggregation distribution contains the probabilities of transiting from one raw state to  different meta-states, and each disaggregation distribution contains the probabilities of transiting from one meta-state to different raw states. Our goal is to use sample trajectories of a Markov process to estimate these aggregation/disaggregation distributions. The obtained aggregation/disaggregation distributions can be used to estimate the transition kernel, sample from the Markov process, and plug into downstream tasks in optimal control and reinforcement learning (see Section~\ref{sec:application} for an example). 
In the special case when the system admits a hard aggregation, these distributions naturally produce a partition map of states.

Our method is a two-step algorithm. The first step is the same as the vanilla spectral method, where we extract the first $r$ left and right singular vectors of the empirical transition matrix. The second step is a novel linear transformation of singular vectors. The rationale of the second step is as follows: Although the left (right) singular vectors are not valid estimates of the aggregation (disaggregation) distributions, their linear span is a valid estimate of the linear span of aggregation (disaggregation) distributions. Consequently, the left (right) singular vectors differ from the targeted aggregation (disaggregation) distributions only by a linear transformation. We estimate this linear transformation by leveraging a geometrical structure associated with singular vectors. Our method requires no prior knowledge of meta-states and provides a data-driven approach to learning the aggregation map.

{\bf Our contributions.} 

\vspace{-0.1cm}

\begin{enumerate}[leftmargin=0pt,itemindent=1.25em,itemsep=-0.01cm]
\vspace{-.5em}
\item We introduce a notion of ``anchor state'' for the identifiability of soft state aggregation. It creates an analog to the notions of ``separable feature'' in nonnegative matrix factorization \cite{donoho2004does} and ``anchor word'' in topic modeling \cite{Ge}. The introduction of ``anchor states'' not only ensures model identifiability but also greatly improves the interpretability of meta-states (see Section~\ref{sec:model} and Section~\ref{sec:application}). Interestingly, hard-aggregation indeed assumes {\it all} states are anchor states. Our framework instead assumes there exists {\it one} anchor state for each meta-state. 
 
\item We propose an efficient method for estimating the aggregation/disaggregation distributions of a soft state aggregation model from Markov transition data. In contrast, classical spectral methods are not able to estimate these distributions directly.

\item We prove statistical error bounds for the total variation divergence between the estimated aggregation/disaggregation distributions and the ground truth. The estimation errors depend on the size of state space, number of meta-states, and mixing time of the process. We also prove a sample complexity bound for accurately recovering all anchor states. To our best knowledge, this is the first result of statistical guarantees for soft state aggregation learning. 

\item At the core of our analysis is an entry-wise large deviation bound for singular vectors of the empirical transition matrix of a Markov process. This connects to the recent interests of entry-wise analysis of empirical eigenvectors \cite{abbe2017entrywise, koltchinskii2016asymptotics, koltchinskii2016perturbation, zhong2018near, chen2017spectral,eldridge2017unperturbed}. Such analysis is known to be challenging, and techniques that work for one type of random matrices often do not work for another. Unfortunately, our desired results cannot be deduced from any existing literature, and we have to derive everything from scratch (see Section~\ref{sec:theory}). Our large-deviation bound provides a convenient technical tool for the analysis of spectral methods on Markov data, and is of independent interest.

\item We applied our method to a Manhattan taxi-trip dataset, with interesting discoveries. The estimated state aggregation model extracts meaningful traffic modes, and the output {\it anchor regions} capture popular landmarks of the Manhattan city, such as Times square and WTC-Exchange. A plug-in of the aggregation map in a reinforcement learning (RL) experiment, taxi-driving policy optimization, significantly improves the performance. 
These validate that our method is practically useful. 
\end{enumerate}

{\bf Connection to literature.}
While classical spectral methods have been used for aggregating states in Markov processes \cite{weber2005robust,zhang2018spectral}, these methods do not directly estimate a state aggregation model. It was shown in \cite{zhang2018spectral} that spectral decomposition can reliably recover the principal subspace of a Markov transition kernel. Unfortunately, singular vectors themselves are {\it not} valid estimates of the aggregation/disaggregation distributions: the population quantity that singular vectors are trying to estimate is strictly different from the targeted aggregation/disaggregation distributions (see Section~\ref{sec:method}). 

Our method is inspired by the connection between soft state aggregation, nonnegative matrix factorization (NMF) \cite{lee1999learning,gillis2014and}, and topic modeling \cite{blei2003latent}. 
Our algorithm is connected to the spectral method in \cite{Topic-SCORE} for topic modeling. The method in \cite{Topic-SCORE} is a general approach about using spectral decomposition for nonnegative matrix factorization. 
Whether or not it can be adapted to state aggregation learning and how accurately it estimates the soft-aggregation model was unknown. In particular, \cite{Topic-SCORE} worked on a topic model, where the data matrix has column-wise independence and their analysis heavily relies on this property. Unfortunately, our data matrix has column-wise dependence, so we are unable to use their techniques. We build our analysis from ground up.  

There are recent works on statistical guarantees of learning a Markov model \cite{falahatgar2016learning, NIPS2018_7345, li2018estimation, zhang2018spectral, pmlr-v98-wolfer19a}. They target on estimating the transition matrix, but our focus is to estimate the aggregation/disaggregation distributions. Given an estimator of the transition matrix, how to obtain the aggregation/disaggregation distributions is non-trivial (for example, simply performing a vanilla PCA on the estimator doesn't work). To our best knowledge, our result is the first statistical guarantee for estimating the aggregation/disaggregation distributions. We can also use our estimator of aggregation/disaggregation distributions to form an estimator of the transition matrix, and it can achieve the known minimax rate for a range of parameter settings. See Section~\ref{sec:theory} for more discussions.  



\section{Soft State Aggregation Model with Anchor States} \label{sec:model}
We say that a Markov chain $X_0,X_1,\ldots, X_n$ admits a {\it soft state aggregation} with $r$ meta-states, if there exist random variables $Z_0, Z_1,\ldots,Z_{n-1}\in\{1,2,\ldots,r\}$ such that 
{
\vspace{-0.1cm}
\begin{equation} \label{mod0}
\bbP(X_{t+1} \mid X_t ) = \sum_{k=1}^r \mathbb{P} (Z_t=k\mid X_t )\cdot \mathbb{P} (X_{t+1}\mid Z_t=k),
\end{equation}}
for all $t$ with probability $1$. Here, $\mathbb{P} (Z_t\mid X_{t})$ and $\mathbb{P} (X_{t+1}\mid Z_t)$ are independent of $t$ and referred to as the {\it aggregation distributions} and {\it disaggregation distributions}, respectively. 
The soft state aggregation model has been discussed in various literatures (e.g., \cite{singh1995reinforcement, zhang2018spectral}), where $r$ is presumably much smaller than $p$. See \cite[Section 6.3.7]{bertsekas1995dynamic} for a textbook review.
This decomposition means that one can map the states into meta-states while preserving the system's dynamics. In the special case where each aggregation distribution 
is degenerate (we say a discrete distribution is {\it degenerate} if only one outcome is possible), it reduces to the hard state aggregation model or lumpable Markov model. 

The soft state aggregation model has a matrix form.  
Let $\bP\in\mathbb{R}^{p\times p}$ be the transition matrix, where $P_{ij}=\bbP(X_{t+1}=j \mid X_t=i )$. We introduce $\bU\in\mathbb{R}^{p\times r}$ and $\bV\in\mathbb{R}^{p\times r}$, where $U_{ik}=\mathbb{P} (Z_t=k\mid X_t =i)$ and $V_{jk}=\mathbb{P}(X_{t+1}=j\mid Z_t=k)$. 
Each row of $\bU$ is an aggregation distribution, and each column of $\bV$ is a disaggregation distribution. Then,  \eqref{mod0} is equivalent to (${\bf 1}_s$: the vector of $1$'s)
{\setlength\abovedisplayskip{0.7pt}
\setlength\belowdisplayskip{10pt}
\vspace{0.15cm}
\beq \label{P-decompose}
\bP = \bU \bV^\top,  \qquad \text{where $\bU{\bf 1}_r = {\bf1}_p$ and $\bV^\top{\bf1}_p = {\bf1}_r$}. 
\eeq}Here, $\bU$ and $\bV$ are not identifiable, unless with additional conditions. We assume that each meta-state has a signature raw state, defined either through the aggregation process or the disaggregation process.

\begin{Definition}[Anchor State] \label{def:anchor}
A state $i$ is called an ``aggregation anchor state'' of the meta-state $k$ if $U_{ik}=1$ and $U_{is}=0$ for all $s\neq k$. A state $j$ is called a ``disaggregation anchor state'' of the meta-state $k$, if $V_{jk}>0$ and $V_{js}=0$ for all $s\neq k$. 
\end{Definition} 

An aggregation anchor state transits to only one meta-state, and a disaggregation anchor state can be transited from only one meta-state. Since \eqref{P-decompose} is indeed a nonnegative matrix factorization (NMF), the definition of anchor states are natural analogs of ``separable features'' in NMF \cite{donoho2004does}. They are also natural analogs of ``pure documents'' and ``anchor words'' in topic modeling. We note that in a hard state aggregation model, every state is an anchor state by default. 

{Throughout this paper, we take the following assumption: 
\begin{Assumption} There exists at least one disaggregation anchor state for each meta-state. \end{Assumption}} By well-known results in NMF \cite{donoho2004does}, this assumption guarantees that $\bU$ and $\bV$ are uniquely defined by \eqref{P-decompose}, provided that $\bP$ has a rank of $r$. Our results can be extended to the case where each meta-state has an aggregation anchor state (see the remark in Section~\ref{sec:method}). For simplicity, from now on, we call a disaggregation anchor state an {\it anchor state} for short. 

{\it The introduction of anchor states not only guarantees identifiability but also enhances interpretability of the model.} 
This is demonstrated in an application to New York city taxi traffic data. We model the taxi traffic by a finite-state Markov chain, where each state is a pick-up/drop-off location in the city. Figure~\ref{fig:interpretability} illustrates the estimated soft-state aggregation model (see Section~\ref{sec:application} for details). The estimated anchor states coincide with notable landmarks in Manhattan, such as Times square area, the museum area on Park avenue, etc. Hence, each meta-state (whose disaggregation distribution is plotted via a heat map over Manhattan in (c)) can be nicely interpreted as a representative traffic mode with exclusive destinations (e.g., the traffic to Times square, the traffic to WTC Exchange, the traffic to museum park, etc.). In contrast, if we don't explore anchor states but simply use PCA to conduct state aggregation,  the obtained meta-states have no clear association with notable landmarks, so are hard to interpret. 
The interpretability of our model also translates to better performance in downstream tasks in reinforcement learning (see Section~\ref{sec:application}). 

\begin{figure*}[ht]
	\centering
\includegraphics[height=1.78in]{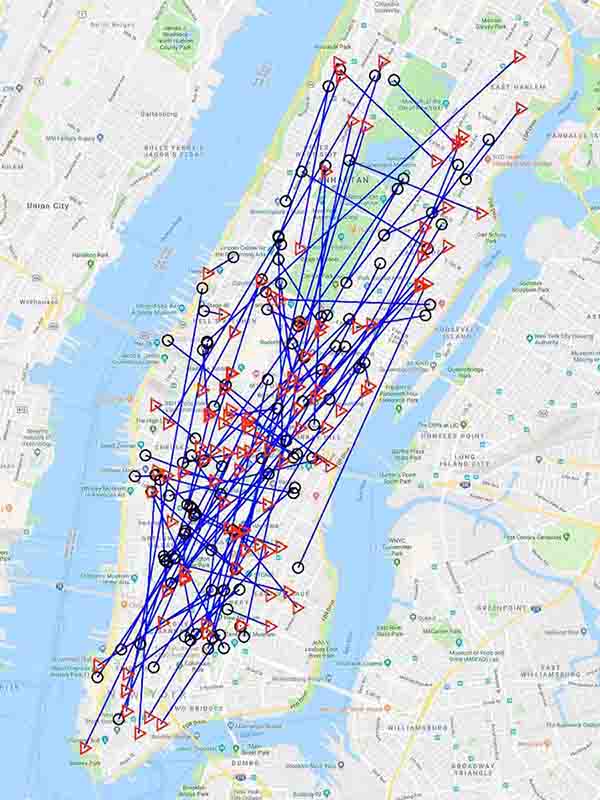}
\includegraphics[height=1.78in]{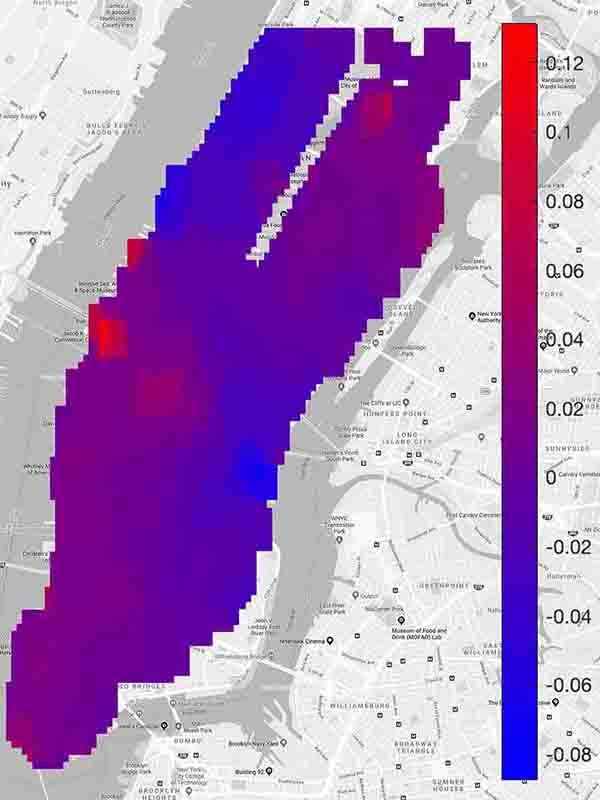}
\includegraphics[height=1.78in]{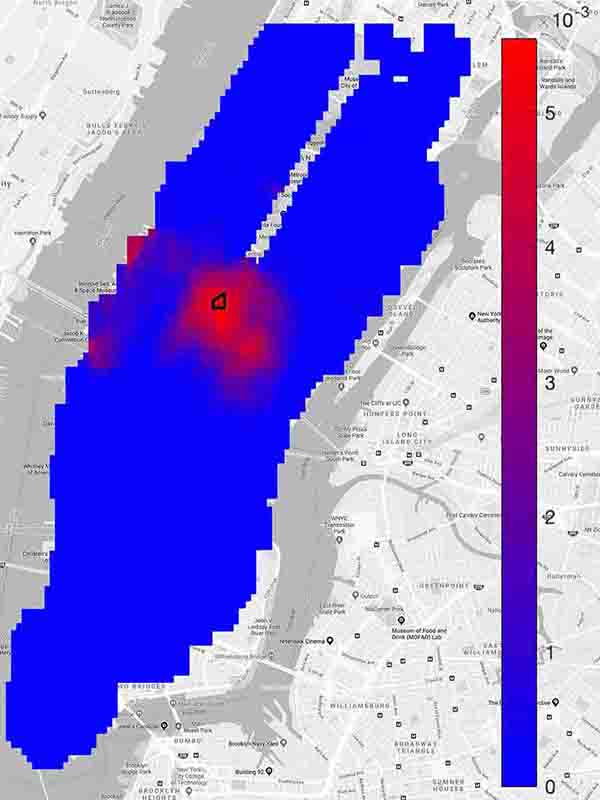}
		\includegraphics[height=1.78in]{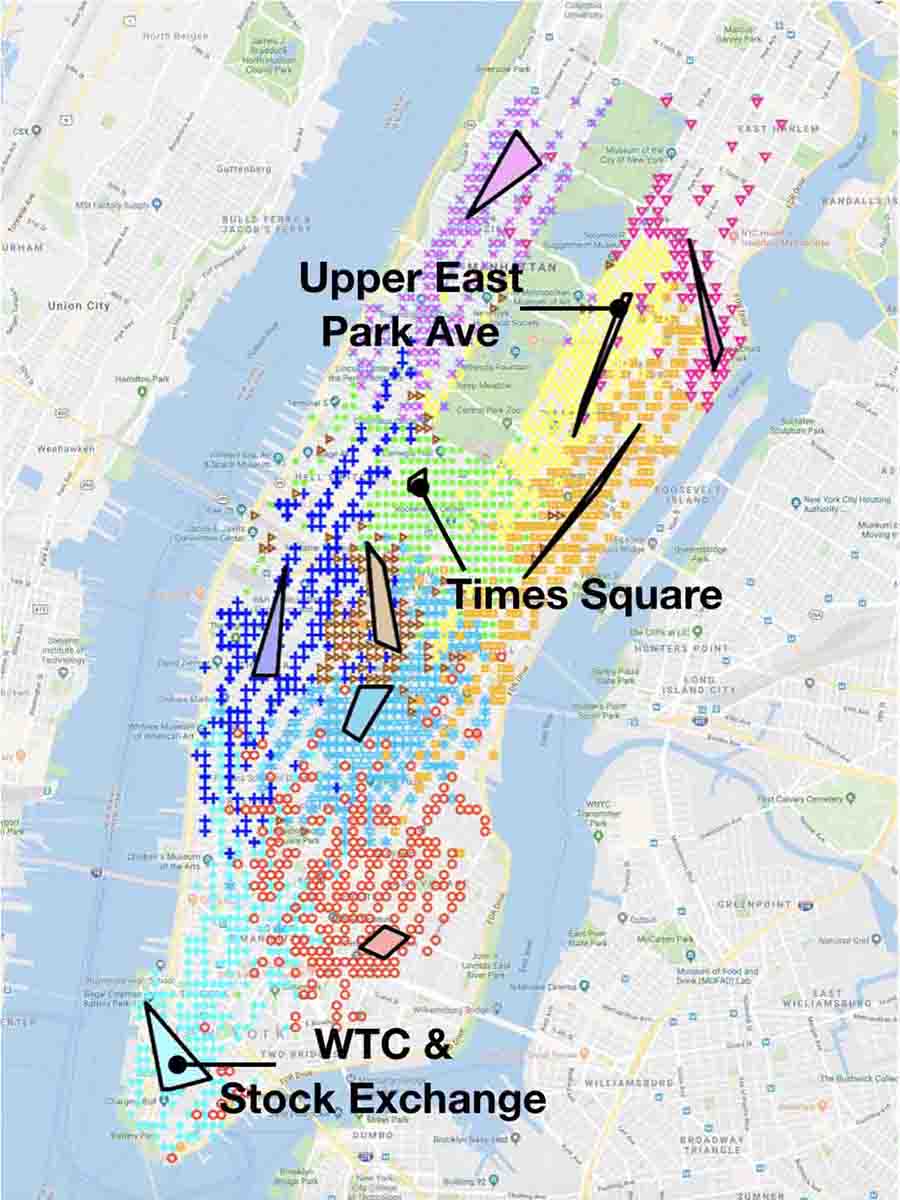}
	
	\vspace{-0.2cm}
	\caption{\footnotesize {\bf Soft state aggregation learned from NYC taxi data.} Left to right: (a) Illustration of 100 taxi trips (O: pick-up location, \textcolor{red}{$\triangle$}: drop-off location). (b) A principal component of $\mathbf{P}$ (heat map), lacking interpretability. (c) Disaggregation distribution of $\mathbf{P}$ corresponding to the Times square anchor region. (d) Ten anchor regions identified by Alg.\ 1, coinciding with landmarks of NYC.} \label{fig:interpretability}
	\vspace{-15pt}
\end{figure*}

\section{An Unsupervised State Aggregation Algorithm} \label{sec:method}
Given a Markov trajectory $\{X_0,X_1,\ldots,X_n\}$ from a state-transition system, let $\bN\in\mathbb{R}^{p\times p}$ be the matrix consisting of empirical state-transition counts, i.e.,
$\bN_{ij} = \sum_{t=0}^{n-1} \mathbf1\{X_t = i, X_{t+1} = j\}$. Our algorithm takes as input the matrix ${\bf N}$ and the number of meta-states $r$, and it estimates (a) the disaggregation distributions $\bV$, (b) the aggregation distributions $\bU$, and (c) the anchor states. See Algorithm 1. Part (a) is the core of the algorithm, which we explain below. 

{\it Insight 1: Disaggregation distributions are linear transformations of right singular vectors.}

We consider an oracle case, where the transition matrix $\bP$ is given and we hope to retrieve $\bV$ from $\bP$. Let $\bH=[\bh_1,\ldots,\bh_r]\in\mathbb{R}^{p\times r}$ be the matrix containing first $r$ right singular vectors of $\bP$. Let $\mathrm{Span}(\cdot)$ denote the column space of a matrix. By linear algebra, $\mathrm{Span}(\bH)=\mathrm{Span}(\bP^{\top})=\mathrm{Span}(\bV\bU^{\top}) = \mathrm{Span}(\bV)$. Hence, the columns of $\bH$ and the columns of $\bV$  are two different bases of the same $r$-dimensional subspace.
It follows immediately that there exists $\bL\in\mathbb{R}^{r\times r}$ such that $
\bH = \bV\bL$.
On the one hand, this indicates that singular vectors are not valid estimates of disaggregation distributions, as each singular vector $\bh_k$ is a linear combination of multiple disaggregation distributions.  On the other hand, it suggests a promising two-step procedure for recovering $\bV$ from $\bP$: (i) obtain the right singular vectors $\bH$, (ii) identify the matrix $\bL$ and retrieve $\bV=\bH\bL^{-1}$. 

{\it Insight 2: The linear transformation of $\bL$ is estimable given the existence of anchor states.}

The estimation of $\bL$ hinges on a geometrical structure induced by the anchor state assumption \cite{donoho2004does}: Let ${\cal C}$ be a simplicial cone with $r$ supporting rays, where the directions of the supporting rays are specified by the $r$ rows of the matrix $\bL$. If $j$ is an anchor state, then the $j$-th row of $\bH$ lies on one supporting ray of this simplicial cone. If $j$ is not an anchor state, then the $j$-th row of $\bH$ lies in the interior of the simplicial cone. See the left panel of Figure~\ref{fig:geometry} for illustration with $r=3$. 
Once we identify the $r$ supporting rays of this simplicial cone, we immediately obtain the desired matrix $\bL$.

{\it Insight 3: Normalization on eigenvectors is the key to estimation of $\bL$ under noise corruption.}

In the real case where $\bN$, instead of $\bP$, is given, we can only obtain a noisy version of $\bH$. 
With noise corruption, to estimate supporting rays of a simplicial cone is very challenging. \cite{Topic-SCORE} discovered that a particular row-wise normalization on $\bH$ manages to ``project'' the simplicial cone to a simplex with $r$ vertices. Then, for all anchor states of one meta-state, their corresponding rows collapse to one single point in the noiseless case (and a tight cluster in the noisy case). The task reduces to estimating the vertices of a simplex, which is much easier to handle under noise corruption. This particular normalization is called SCORE \cite{SCORE}. It re-scales each row of $\bH$ by the first coordinate of this row. After re-scaling, the first coordinate is always $1$, so it is eliminated; the normalized rows then have $(r-1)$ coordinates. See the right panel of Figure~\ref{fig:geometry}. Once we identify the $r$ vertices of this simplex, we can use them to re-construct $\bL$ in a closed form \cite{Topic-SCORE}.

\begin{figure}[t]
	\centering
	\includegraphics[width=.75\linewidth]{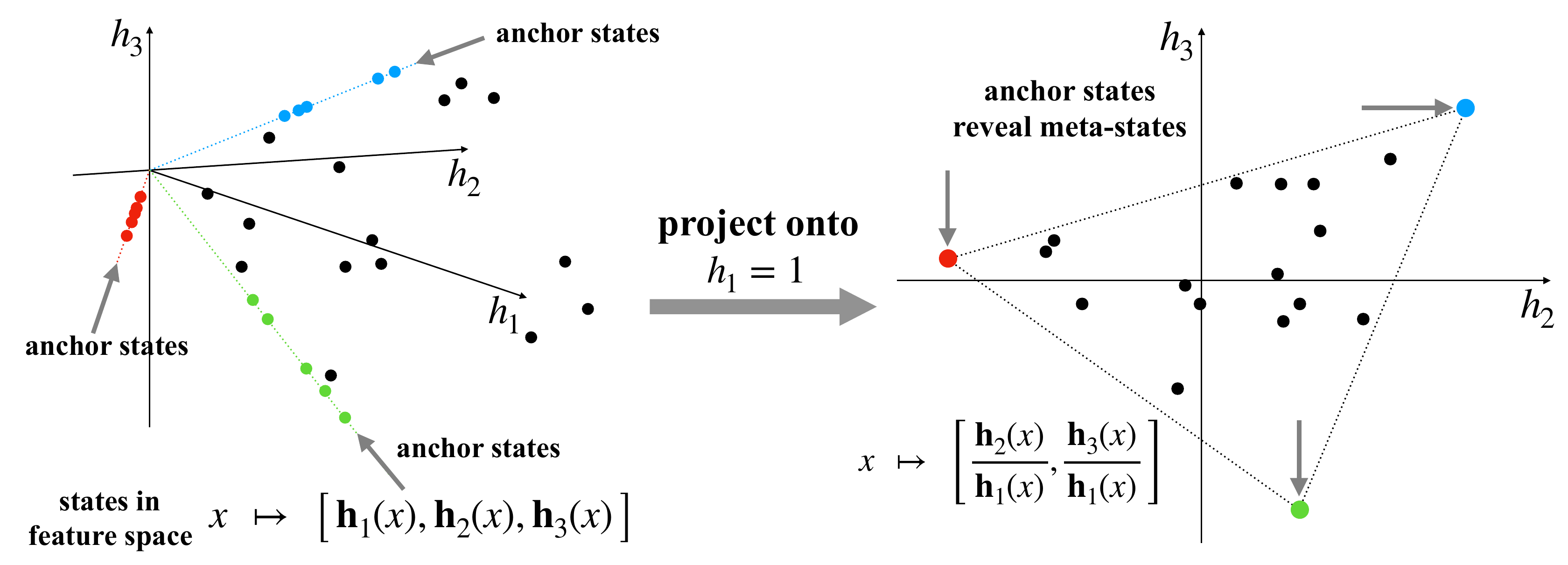}
	\caption{\footnotesize {\bf Geometrical structure of anchor states}. Left: Each dot is a row of the matrix $\bH=[\bh_1,\bh_2,\bh_3]$. The data points are contained in a simplicial cone with three supporting rays. Right: Each dot is a re-scaled row of $\bH$ by SCORE, where the first coordinate is dropped and so every row is in a plane. The data points are contained in a simplex with three vertices. } \label{fig:geometry}
	\vspace{-1em}
\end{figure}

These insights have casted estimation of $\bV$ to a simplex finding problem: Given data $\widehat{\bd}_1,\ldots,\widehat{\bd}_p\in\mathbb{R}^{r-1}$, suppose they are noisy observations of non-random vectors $\bd_1,\ldots,\bd_p\in\mathbb{R}^{r-1}$ (in our case, each $\bd_j$ is a row of the matrix $\bH$ after SCORE normalization), where these non-random vectors are contained in a simplex with $r$ vertices $\bb_1,\ldots,\bb_r$ with at least one $\bd_j$ located on each vertex. We aim to estimate the vertices $\bb_1,\ldots,\bb_r$. This is a well-studied problem in the literature, also known as {\it linear unmixing} or {\it archetypal analysis}. There are a number of existing algorithms \cite{boardman1995mapping, araujo2001successive, nascimento2005vertex, chang2006new, mixed-SCORE}. For example, the successive projection algorithm \cite{araujo2001successive} has a complexity of $O(pr^4)$.

Algorithm 1 follows the insights above but is more sophisticated. Due to space limit, we relegate the detailed explanation to Appendix. It has a complexity of $O(p^3)$, where main cost is from SVD.




{\bf Remark} {\it (The case with aggregation anchor states)}. The anchor states here are the disaggregation anchor states in Defintion~\ref{def:anchor}. Instead, if each meta-state has an aggregation anchor state, there is a similar geometrical structure associated with the left singular vectors. We can modify our algorithm to first use left singular vectors to estimate $\bU$ and then estimate $\bV$ and anchor states.

\begin{algorithm}[!t]
	{\small 
	\caption{Learning the Soft State Aggregation Model.} \label{alg:main}
	{\bf Input}: empirical state-transition counts $\bN$, number of meta-states $r$, {anchor state threshold $\delta_0$}
	\vspace{-1pt}
	
	\begin{enumerate} \itemsep = -1pt
	\item Estimate the matrix of disaggregation distributions $\bV$. 
	\begin{enumerate} \itemsep = -1pt
		\item[(i)] Conduct SVD on $\widetilde{\bN} = \bN [\mathrm{diag}(\bN^\top{\bf 1}_p)]^{-1/2}\in\mathbb{R}^{p\times p}$, and let $\widehat{\bh}_1,...,\widehat{\bh}_r$ denote the first $r$ right singular vectors. Obtain a matrix $\widehat{\bD} = [\mathrm{diag}(\hbh_1)]^{-1}[\hbh_2,\ldots,\hbh_r]\in\mathbb{R}^{p\times (r-1)}$. 
	        \item[(ii)] Run an existing vertex finding algorithm to rows of $\widehat{\bD}$. Let $\hbb_1,\ldots,\hbb_r$ be the output vertices. (In our numerical experiments, we use the vertex hunting algorithm in \cite{mixed-SCORE}). 
	        		\item[(iii)] For $1\leq j\leq p$, compute 
		\vspace{-10pt}
		\[
		\widehat{\bf w}_j^*=\mathrm{argmin}_{{\bf q}\in \mathbb{R}^r} \Bigl\|\widehat{\bf d}_j-\sum_{k=1}^r q_k\widehat{\bf b}_k\Bigr\|^2 + \Bigl(1 - \sum_{k=1}^r q_k\Bigr)^2.
		\vspace{-6pt}
		\]
		Set the negative entries in $\widehat\bw_j^*$ to zero and renormalize it to have a unit $\ell^1$-norm. The resulting vector is denoted as $\widehat\bw_j$. Let $\widehat{\bf W}=[\widehat{\bf w}_1,\widehat{\bf w}_2,\ldots,\widehat{\bf w}_p]^{\top}\in\mathbb{R}^{p\times r}$. Obtain the matrix $[\mathrm{diag}(\widehat{\bh}_1)][\mathrm{diag}(\bN^{\top} \mathbf{1}_p)]^{1/2}\widehat{\bf W}$ and re-normalize each column of it to have a unit $\ell^1$-norm. The resulting matrix is $\widehat{\bf V}$. 
		\end{enumerate}
		\item Estimate matrix of aggregation distributions $\bU$. Let $\widehat{\bP}=[\mathrm{\diag}(\bN{\bf 1}_p)]^{-1}\bN$ be the empirical transition probability matrix. Estimate $\bU$ by
		\vspace{-3pt}
		$$\widehat{\bU} = \widehat{\bP} \widehat{\bV}(\widehat{\bV}^\top\widehat{\bV})^{-1}, 
		\vspace{-4pt}
		$$ 
		\item Estimate the set of anchor states. Let $\widehat{\bf w}_j$ be as in Step (iii). Let
		\vspace{-3pt}
		$$
		{\cal A} = \big\{1\leq j\leq p: \max_{1\leq k\leq r}\widehat{\bf w}_j(k)\geq 1-\delta_0\big\}. 
		\vspace{-14pt}
		$$
	\end{enumerate}
	{\bf Output}: estimates $\widehat{\bV}$ and $\widehat{\bU}$, set of anchor states ${\mathcal{A}}$
}
\end{algorithm}

\vspace{-1em}

\section{Main Statistical Results} \label{sec:theory}
Our main results are twofold. The first is row-wise large-deviation bounds for empirical singular vectors. The second is statistical guarantees of learning the soft state aggregation model. 
Throughout this section, suppose we observe a trajectory $\{X_0,X_1,\ldots,X_n\}$ from an ergodic Markov chain with $p$ states, where the transition matrix $\bP$ satisfies \eqref{P-decompose} with $r$ meta-states. Let $\bpi\in\mathbb{R}^p$ denote the stationary distribution. Define a mixing time
$\tau_* = \min \bigl\{ k \geq 1: \max_{1 \leq i \leq p} \| (\bP^k)_{i,\cdot} - \bpi^{\top} \|_1 \leq 1/2 \bigr\}$. 
We assume there are constants $c_1$, $C_1$, $\bar{C}_1$, $c_2$, $c_3$, $C_4 > 0$ such that the following conditions hold:
\vspace{-2pt}
\begin{itemize} \itemsep -2pt
\item[(a)] The stationary distribution $\bpi$ satisfies $c_1p^{-1} \leq \bpi_j \leq C_1p^{-1}$, for $j=1,2,\ldots,p$. 
\item[(b)] The stationary distribution on meta-states satisfies $\bigl(\bU^{\top}\bpi\bigr)_k \leq \bar{C}_1 r^{-1}$, for $k=1,2,\ldots,r$. 
\item[(c)] $\lambda_{\min}(\bU^{\top} \bU) \geq c_2 pr^{-1}$ and $
		\lambda_{\min}(\bV^{\top} \bV) \geq c_2 p^{-1}r$. 
\item[(d)] The first two singular values of $[\diag(\bpi)]\bP[\diag(\bpi)]^{-1/2}$ satisfy $\sigma_1 - \sigma_2 \geq c_3 p^{-\frac{1}{2}}$.
\item[(e)] The entries of the $r$-by-$r$ matrix 
	$\bU^{\top}\bP\bV$ satisfy $\frac{\max_{k,l}(\bU^{\top}\bP\bV)_{kl}}{\min_{k,l}(\bU^{\top}\bP\bV)_{kl}} \leq C_4$. 
\end{itemize}
Conditions (a)-(b) require that the Markov chain has a balanced number of visits to each state and to each meta-state when reaching stationarity. Such conditions are often imposed for learning a Markov model \cite{zhang2018spectral,li2018estimation}. Condition (c) eliminates the aggregation (disaggregation) distributions from being highly collinear, so that each of them can be accurately identified from the remaining. Condition (d) is a mild eigen-gap condition, which is necessary for consistent estimation of eigenvectors from PCA \cite{abbe2017entrywise,Topic-SCORE}. Condition (e) says that the meta-states are reachable from one-another and that meta-state transitions cannot be overly imbalanced. 

\vspace{-.8em}

\subsection{Row-wise large-deviation bounds for singular vectors}
\vspace{-.3em}
At the core of the analysis of any spectral method (the classical PCA or our unsupervised algorithm) is characterization of the errors of approximating population eigenvectors by empirical eigenvectors. If we choose the loss function to be the Euclidean norm between two vectors, it reduces to deriving a bound for the spectral norm of noise matrix \cite{DavisKahan} and is often  manageable. However, the Euclidean norm bound is useless for our problem: In order to obtain the total-variation bounds for estimating aggregation/disaggregation distributions, we need sharp error bounds for {\it each entry} of the eigenvector. Recently, there has been active research on entry-wise analysis of eigenvectors \cite{abbe2017entrywise, koltchinskii2016asymptotics, koltchinskii2016perturbation, zhong2018near, chen2017spectral,eldridge2017unperturbed}.
Since eigenvectors depend on data matrix in a complicated and highly nonlinear form, such analysis is well-known to be challenging. More importantly, there is no universal technique that works for all problems, and such bounds are obtained in a problem-by-problem manner (e.g., for Gaussian covariance matrix \cite{koltchinskii2016asymptotics, koltchinskii2016perturbation}, network adjacency matrix \cite{abbe2017entrywise, mixed-SCORE}, and topic matrix \cite{Topic-SCORE}). As an addition to the nascent literature, we develop such results for transition count matrices of a Markov chain. 
The analysis is challenging due to that the entries of the count matrix are dependent of each other. 

Recall that $\widetilde{\bN}$ is the re-scaled transition count matrix introduced in Algorithm 1 and $\widehat{\bh}_1,\ldots,\widehat{\bh}_r$ are its first $r$ right singular vectors (our technique also applies to the original count matrix $\bN$ and the empirical transition matrix $\widehat{\bP}$).  
Theorem~\ref{thm:Eig1} and Theorem~\ref{thm:Eig2} deal with the leading singular vector and the remaining ones, respectively. ($\widetilde{O}$ means ``bounded up to a logarithmic factor of $n,p$''). 
\begin{Theorem}[Entry-wise perturbation bounds for $\widehat{\bf h}_1$] \label{thm:Eig1}
Suppose the regularity conditions (a)-(e) hold. There exists a parameter $\omega \in \{\pm1\}$ such that if $n = \widetilde{\Omega}\big(\tau_*p\big)$, then with probability at least $1-n^{-1}$,
$\max_{1 \leq j \leq p} | \omega \widehat{\bf h}_1(j) - {\bf h}_1(j) | = \widetilde{O}\Big( (\sigma_2 - \sigma_1)^{-1}( 1 + \sqrt{\tau_*p/n}) \sqrt{\frac{\tau_*}{np}} \Big)$. 
\end{Theorem}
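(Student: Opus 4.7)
The plan is to cast the problem as a matrix perturbation around the population target $\M := [\diag(\bpi)]\bP[\diag(\bpi)]^{-1/2}$, whose first $r$ right singular vectors are $\bh_1,\ldots,\bh_r$, and to show that $\widehat{\M} := n^{-1/2}\widetilde{\bN}$ concentrates entry-wise around $\M$ strongly enough to control $|\widehat{\bh}_1(j)-\omega\bh_1(j)|$ uniformly in $j$; the sign factor $\omega$ is the standard indeterminacy in the definition of a singular vector. First, I would decompose the noise $\bE := \widehat{\M} - \M$ into two pieces: $\bE_1$ captures the deviation of $n^{-1}\bN$ from $\diag(\bpi)\bP$, while $\bE_2$ captures the deviation of the column normalizer $n^{-1}(\bN^{\top}\mathbf{1}_p)$ from $\bpi$. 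Condition~(a) guarantees $\pi_j \asymp 1/p$, so every column weight is bounded away from zero on a high-probability event, and a first-order Taylor expansion of the inverse square root linearizes $\bE_2$ into a diagonal modulation of $\M$ plus higher-order terms controllable by a crude spectral bound.

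Next, I would invoke the standard first-order eigenvector perturbation identity for the leading eigenpair of $\M^{\top}\M$, which yields
$$\widehat{\bh}_1 - \omega\bh_1 \;=\; \sum_{k\geq 2} \frac{\bh_k\bh_k^{\top}\bigl(\M^{\top}\bE + \bE^{\top}\M\bigr)\bh_1}{\sigma_1^2-\sigma_k^2} \;+\; \text{higher-order terms},$$
where the higher-order terms involve $\bE^{\top}\bE$ and a quadratic resolvent contribution. Reading off the $j$-th coordinate reduces the task to bounding scalar statistics of the form $\bh_k^{\top}(\M^{\top}\bE+\bE^{\top}\M)\bh_1$ for $k\geq 2$ and individual entries $(\bE\bv)_i$ for fixed unit vectors $\bv$ depending only on $\M$. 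Condition~(d) provides the eigen-gap $\sigma_1^2-\sigma_k^2 \gtrsim (\sigma_1-\sigma_2)\sigma_1$, explaining the $(\sigma_1-\sigma_2)^{-1}$ prefactor. To sidestep the circularity that $\widehat{\bh}_1$ is itself random, I would first obtain a crude Davis--Kahan bound on $\|\widehat{\bh}_1-\omega\bh_1\|_2$ from a spectral-norm bound on $\bE$, and then feed that estimate back into the perturbation identity to upgrade it entry-wise, a standard bootstrapping device used in Abbe et al.\ and in the topic-model analysis of \cite{Topic-SCORE}.

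The main technical obstacle is the concentration step: the entries of $\bN$ come from a single Markov trajectory and are strongly dependent, so the matrix/vector Bernstein inequalities used in the i.i.d.\ topic-model analysis do not apply. I would replace them by Bernstein-type bounds for Markov chains keyed on the mixing time $\tau_*$, either via Paulin-style pseudo-spectral-gap inequalities or a blocking argument that packages consecutive samples into nearly independent blocks of length $\asymp \tau_*\log n$. Applied to linear functionals of the form $(\bE\bv)_i = n^{-1}\sum_{t=0}^{n-1}\bigl(\mathbf{1}\{X_t=i\}f_{\bv}(X_{t+1}) - \bbE_\pi[\,\cdot\,]\bigr)$, with $f_{\bv}$ a bounded mean-zero integrand, such inequalities deliver a typical deviation of order $\sqrt{\tau_*/n}$ multiplied by a variance factor that, thanks to Conditions~(a)--(b), contributes the additional $p^{-1/2}$ in the rate; the quadratic $\bE^{\top}\bE$ remainder produces the Bernstein-style correction $\sqrt{\tau_* p/n}$. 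A union bound over $j=1,\ldots,p$ costs only a $\log p$ factor, absorbed into $\widetilde{O}$. Combining the resolvent bound, the linearized normalization correction, and the Markovian concentration estimates yields exactly the claimed rate $\widetilde{O}\bigl((\sigma_1-\sigma_2)^{-1}(1+\sqrt{\tau_* p/n})\sqrt{\tau_*/(np)}\bigr)$, valid on an event of probability at least $1-n^{-1}$ once $n=\widetilde{\Omega}(\tau_* p)$ ensures that all Taylor expansions, the Davis--Kahan bootstrap, and the high-probability concentration events hold simultaneously.
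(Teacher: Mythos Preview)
Your concentration strategy (blocking the Markov chain into near-independent pieces of length $\asymp\tau_*\log n$ and applying matrix/vector Bernstein) is exactly what the paper does, and your decomposition of $\bE$ into the count-matrix error plus the column-normalization error is also correct. The genuine gap is in the perturbation step. A single first-order resolvent expansion followed by a Davis--Kahan $\ell_2$ bootstrap inevitably leaves a term of the form
\[
\bigl\|\be_j^\top\bH\bigr\|_2\cdot(\sigma_1-\sigma_2)^{-1}\bigl\|\widehat\bQ-\bQ\bigr\|_2,
\]
coming from the piece of the remainder where $\widehat\bg_1-\omega\bg_1$ (or $\widehat\bh_1-\omega\bh_1$) is pushed back through $\M^\top=\bH\bSigma\bG^\top$ and then read off at coordinate $j$. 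Under conditions (a)--(c) one has $\|\be_j^\top\bH\|_2\asymp\sqrt{r/p}$ and $\|\widehat\bQ-\bQ\|_2=\widetilde O(\sqrt{\tau_*/n})$, so this term is $(\sigma_1-\sigma_2)^{-1}\sqrt{r\tau_*/(np)}$, i.e.\ a factor $\sqrt{r}$ larger than the $r$-free rate claimed in Theorem~\ref{thm:Eig1}. This is precisely the bound one gets by specializing the paper's general row-wise lemma (the one underlying Theorem~\ref{thm:Eig2}) to a single vector; that route does \emph{not} give Theorem~\ref{thm:Eig1}.

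The paper removes the $\sqrt{r}$ by replacing the one-step linearization with a depth-$K$ power-iteration expansion on the symmetric dilation $\bY$: writing
\[
\omega\widehat\bxi_1-\bxi_1=\omega\widehat\sigma_1^{-1}\sum_{k=0}^{K-1}(\widehat\sigma_1^{-1}\bY)^k(\widehat\bY-\bY)\widehat\bxi_1+\bigl(\omega\widehat\sigma_1^{-K}\bY^K\widehat\bxi_1-\bxi_1\bigr).
\]
After $K$ steps the residual that carries the $\|\be_j^\top\bH\|_2$ factor is additionally damped by $(2\sigma_2/(\sigma_1+\sigma_2))^K$; choosing $K\asymp\log r$ forces this to be at most $r^{-1/2}$, exactly cancelling the $\sqrt{r}$. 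The $K$ leading terms reduce to scalars $\bg_1^\top(\widehat\bQ-\bQ)\bmu_k$ and $\bmu_k^\top(\widehat\bQ-\bQ)\bh_1$ for \emph{nonnegative} unit vectors $\bmu_k$ satisfying $\bh_1^\top\bmu_k,\bg_1^\top\bmu_k=O(p^{-1/2})$, and a dedicated concentration corollary (exploiting the nonnegativity and delocalization of $\bg_1,\bh_1$) bounds each by $\widetilde O(\sqrt{\tau_*/(np)})$. Your proposal is missing both ingredients: the iterated expansion that suppresses the $\|\be_j^\top\bH\|_2$ contribution, and the specialized scalar concentration for $\bg_1^\top\bE\bmu$ with nonnegative $\bmu$ that handles the resulting $K$ linear terms.
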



\begin{Theorem}[Row-wise perturbation bounds for $\widehat{\bf H}$] \label{thm:Eig2}
Suppose the regularity conditions (a)-(e) hold. 
For $1 \leq s \leq t \leq r$, let ${\bf H}_* = \big[ {\bf h}_s, \ldots, {\bf h}_t \big] $, $\widehat{\bf H}_* = \big[ \widehat{\bf h}_s, \ldots, \widehat{\bf h}_t \big] $, and 
$\Delta_* = \min\big\{ \sigma_{s-1} - \sigma_s, \sigma_t - \sigma_{t+1} \big\}$, where $\sigma_0 = +\infty$, $\sigma_{r+1} = 0$.  If $n = \widetilde{\Omega}\big(\tau_*p\big)$, then with probability $1-n^{-1}$, there is an orthogonal matrix ${\bf \Omega}_*$ such that
$ \max_{1 \leq j \leq p} \big\| {\bf e}_j^{\top} \big( \widehat{\bf H}_* {\bf \Omega}_* - {\bf H}_* \big) \big\|_2 = \widetilde{O}\Big(\Delta_*^{-1}\Big(1+\sqrt{\tau_*p/nr}\Big)\sqrt{\frac{\tau_*r}{np}}\Big)$. 
\end{Theorem}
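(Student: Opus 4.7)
The plan is to convert the row-wise subspace perturbation into a first-order linear functional of the noise matrix, control it by a vector Bernstein inequality for Markov chains, and bound the higher-order remainder with a standard operator-norm Wedin estimate. Write $\widetilde{\bN}/\sqrt{n} = \widetilde{\bP} + \bE$, where $\widetilde{\bP} = [\diag(\bpi)]\bP[\diag(\bpi)]^{-1/2}$ has right singular vectors $\bh_1,\ldots,\bh_r$ and the singular values referred to in condition (d). I would split $\bE$ into a centered count piece, $(\bN - n[\diag(\bpi)]\bP)[\diag(\bpi)]^{-1/2}/n$, plus a column-normalizer correction from Taylor expanding $[\diag(\bN^\top\mathbf{1}_p)/n]^{-1/2}$ around $[\diag(\bpi)]^{-1/2}$. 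A block Bernstein inequality for ergodic Markov chains, with blocks of length $\Theta(\tau_*\log n)$, then controls both pieces at the operator-norm level by $\|\bE\|_{\rm op} = \widetilde{O}(\sqrt{\tau_*/n})$.

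Let $\bF_*\bSigma_*\bH_*^\top$ collect the partial SVD of $\widetilde{\bP}$ over the singular indices $s,\ldots,t$. Starting from the identity $\widetilde{\bN}^\top\widetilde{\bN}\widehat{\bH}_* = \widehat{\bH}_*\widehat{\bSigma}_*^2$ and inserting $\widetilde{\bN}/\sqrt{n} = \widetilde{\bP} + \bE$, I would derive the first-order expansion
\begin{equation*}
\widehat{\bH}_*\bOmega_* - \bH_* \;=\; \bE^\top\bF_*\bSigma_*^{-1} \;+\; \bR,
\end{equation*}
with $\bOmega_*$ the Wedin-optimal rotation and $\bR$ a remainder admitting the row-wise bound $\|\be_j^\top\bR\|_2 = \widetilde{O}\bigl(\Delta_*^{-2}\|\bE\|_{\rm op}^2 + \Delta_*^{-1}\|\bE\|_{\rm op}\,\|\be_j^\top(\widehat{\bH}_*\bOmega_* - \bH_*)\|_2\bigr)$. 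The second summand is absorbed by a fixed-point iteration that bootstraps a crude Frobenius Davis--Kahan bound into the sharp row-wise bound we want.

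The leading term is where the actual work lies. For each row $j$, $\be_j^\top\bE^\top\bF_* = \sum_i \bE_{ij}\bF_{*,i,:}$ is a linear functional of column $j$ of $\bE$, which unrolls to the Markov-additive sum
\begin{equation*}
\be_j^\top\bE^\top\bF_* \;\approx\; \frac{1}{\sqrt{n\bpi_j}}\sum_{t=0}^{n-1}\bigl(\mathbf{1}\{X_{t+1}=j\} - \bP_{X_t,j}\bigr)\bF_{*, X_t,:}.
\end{equation*}
Conditions (a)--(c) yield the delocalization $\|\bF_{*, x, :}\|_2 = O(\sqrt{r/p})$ uniformly in $x$: writing $\bF_* = [\diag(\bpi)]\bU\bM$ for some $\bM$ with $\|\bM\|_{\rm op} = O(\sqrt{pr})$ (by condition (c)) and using $\|\bU_{x,:}\|_2\le 1$ and $\bpi_x = O(1/p)$ gives the claim; combined with $\bpi_j = O(1/p)$, a vector Bernstein inequality for $\tau_*$-mixing partial sums then delivers $\|\be_j^\top\bE^\top\bF_*\|_2 = \widetilde{O}(\sqrt{\tau_* r/(np)})$. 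Division by $\sigma_{\min}(\bSigma_*)\ge\Delta_*$ supplies the leading term of the stated rate, while the remainder contributes $\widetilde{O}(\Delta_*^{-2}\tau_*/n) = \Delta_*^{-1}\sqrt{\tau_* p/(nr)}\cdot\sqrt{\tau_* r/(np)}$, which accounts for the $(1 + \sqrt{\tau_* p/(nr)})$ factor in Theorem~\ref{thm:Eig2}.

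The main obstacle is that $[\diag(\bN^\top\mathbf{1}_p)]^{-1/2}$ couples every entry of column $j$ of $\bE$ to the entire trajectory, so $\bE^\top\bF_*$ is not a pure Markov additive functional. I would cure this by isolating the scalar factor $N_j^{-1/2}$, Taylor-expanding it about $(n\bpi_j)^{-1/2}$ with a scalar Bernstein bound on $|N_j - n\bpi_j|$, treating the linear term as the Markov sum above, and pushing the quadratic normalizer correction into $\bR$. A secondary hurdle is seeding the fixed-point iteration with a genuinely row-wise (rather than merely Frobenius) Wedin bound; here a leave-one-block-out construction, in which a short trajectory window around each visit to state $j$ is replaced by its conditional expectation given the rest of the chain, yields an auxiliary spectral estimator that is nearly independent of the noise in column $j$ yet differs from $\widehat{\bH}_*$ by an operator-norm-controllable amount, closing the argument.
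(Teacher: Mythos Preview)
Your high-level plan---first-order term $\bE^\top\bF_*\bSigma_*^{-1}$ controlled by a Markov vector-Bernstein, plus a higher-order remainder---is correct and lands on the same two contributions as the paper: a $\Delta_*^{-1}\sqrt{\tau_* r/(np)}$ linear piece from $\|\bG_*^\top(\widehat\bQ-\bQ)\be_j\|_2$ (your $\be_j^\top\bE^\top\bF_*$), and a $\Delta_*^{-2}\|\bE\|_{\rm op}^2$ remainder. Your delocalization argument for the rows of $\bF_*$ via conditions (a)--(c) is exactly the paper's Lemma on $\|\be_j^\top\bG\|_2$.

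Where you diverge is in how you control the remainder. You frame it as a self-referential bound and then reach for a leave-one-block-out construction plus fixed-point iteration. The paper avoids all of this with a purely deterministic perturbation lemma: working through the symmetric dilation $\bY$ of $\bQ$, it inserts the auxiliary $\widetilde{\bvarXi}=\bY\widehat{\bvarXi}\widehat{\bvarSigma}^{-1}$ and splits $\be_j^\top(\widehat{\bvarXi}\bOmega-\bvarXi)$ into $\be_j^\top(\widehat{\bvarXi}-\widetilde{\bvarXi})$ and $\be_j^\top(\widetilde{\bvarXi}\bOmega-\bvarXi)$. The first piece becomes $\be_j^\top(\widehat{\bY}-\bY)\widehat{\bvarXi}\widehat{\bvarSigma}^{-1}$, and replacing $\widehat{\bvarXi}$ by $\bvarXi$ costs only $\|\widehat{\bY}-\bY\|_2\|\widehat{\bvarXi}\bOmega-\bvarXi\|_{\rm op}$ via the \emph{operator-norm} Davis--Kahan bound---no row-wise self-reference. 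The second piece lives in the column span of $\bXi$, so it factors as $\|\be_j^\top\bXi\|_2$ times an operator-norm quantity, and $\|\be_j^\top\bH\|_2\lesssim\sqrt{r/p}$ is already known a priori from the regularity conditions. The upshot is the closed-form deterministic bound
\[
\|\be_j^\top(\widehat\bH_*\bOmega-\bH_*)\|_2 \;\lesssim\; \Delta_*^{-1}\|\bG_*^\top(\widehat\bQ-\bQ)\be_j\|_2 + \Delta_*^{-1}\|\be_j^\top\bH\|_2\|\widehat\bQ-\bQ\|_2 + \Delta_*^{-2}\|\widehat\bQ-\bQ\|_2^2,
\]
into which the concentration estimates plug directly. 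Your route would work, but the Markov leave-one-block-out you sketch is genuinely delicate (the ``replace a window by its conditional expectation'' trick does not decouple as cleanly as in the i.i.d.\ case), and it is simply unnecessary here: the paper's trick buys you a remainder that depends only on the delocalization of the \emph{true} singular vectors and the operator norm of the noise, both of which are already in hand.

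One small arithmetic slip: you equate $\Delta_*^{-2}\tau_*/n$ with $\Delta_*^{-1}\sqrt{\tau_* p/(nr)}\sqrt{\tau_* r/(np)}$, but the latter is $\Delta_*^{-1}\tau_*/n$; the match with the stated rate relies on $\Delta_*\gtrsim p^{-1/2}$ (which the regularity conditions supply for the cases the paper actually uses).
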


\vspace{-1em}

\subsection{Statistical guarantees of soft state aggregation}
\vspace{-.5em}
We study the error of estimating $\bU$ and $\bV$, as well as recovering the set of anchor states. Algorithm 1 plugs in some existing algorithm for the simplex finding problem. We make the following assumption:

\begin{Assumption}[Efficiency of simplex finding]
Given data $\widehat{\bd}_1,...,\widehat{\bd}_p\in\mathbb{R}^{r-1}$, suppose they are noisy observations of non-random vectors $\bd_1,...,\bd_p$, where these non-random vectors are contained in a simplex with $r$ vertices $\bb_1,\ldots,\bb_r$ with at least one $\bd_j$ located on each vertex. The simplex finding algorithm outputs $\widehat{\bb}_1,...,\widehat{\bb}_r$ such that $\max_{1\leq k\leq r}\|\widehat{\bb}_k-\bb_k\|\leq C\max_{1\leq j\leq p}\|\widehat{\bd}_j-\widehat{\bd}_j\|$. 
\end{Assumption}

Several existing simplex finding algorithms satisfy this assumption, such as the successive projection algorithm \cite{araujo2001successive}, the vertex hunting algorithm \cite{mixed-SCORE, Topic-SCORE}, and the algorithm of archetypal analysis \cite{javadi2019non}.  
Since this is not the main contribution of this paper, we refer the readers to the above references for details. In our numerical experiments, we use the vertex hunting algorithm in \cite{mixed-SCORE, Topic-SCORE}. 

First, we provide total-variation bounds between estimated individual aggregation/disaggregation distributions and the ground truth. Write $\bV=[\bV_1,\ldots,\bV_r]$ and $\bU=[\bu_1,\ldots,\bu_p]^{\top}$, where each $\bV_k\in\mathbb{R}^p$ is a disaggregation distribution and each $\bu_i\in\mathbb{R}^r$ is an aggregation distribution. 

\begin{Theorem}[Error bounds for estimating $\bV$] \label{StatisticalError}
Suppose the regularity conditions (a)-(e) hold and Assumptions 1 and 2 are satisfied. 
When $n = \widetilde{\Omega}\bigl( \tau_* p^{\frac{3}{2}}r \bigr)$, with probability at least $1-n^{-1}$, the estimate $\widehat{\bV}$ given by Algorithm 1 satisfies
$\frac{1}{r} \sum_{k=1}^r \bigl\|\widehat{\bV}_k - \bV_k\bigr\|_1 = \widetilde{O}\Bigl( \bigl(1 + p\sqrt{\tau_*/n}\bigr)\sqrt{\frac{\tau_*pr}{n}} \Bigr)$. 
\end{Theorem}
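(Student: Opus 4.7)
The plan is to decompose the estimation error of $\widehat{\bV}$ through the successive stages of Algorithm 1 and control each contribution using the row-wise eigenvector bounds of Theorems~1--2 together with Assumption~2. Let $\bH=[\bh_1,\ldots,\bh_r]$ be the population counterpart of $\widehat{\bH}$. First, I would invoke Theorem~1 and Theorem~2 (with $s=2$, $t=r$) to produce, after an appropriate sign/orthogonal alignment, a uniform bound of the form $\max_{j}\|\widehat\bh_1(j)-\bh_1(j)\|+\max_{j}\|{\bf e}_j^\top(\widehat\bH_*\bOmega_*-\bH_*)\|_2 = \widetilde{O}\big(\sqrt{\tau_* r/(np)}\big(1+p\sqrt{\tau_*/n}/\sqrt{r}\big)\big)$ using the eigengap assumption (d). Since regularity (a) and (c)--(d) imply $|\bh_1(j)|\asymp p^{-1/2}$ uniformly in $j$, the perturbation is dominated by $\bh_1(j)$ (in the regime $n=\widetilde\Omega(\tau_*p^{3/2}r)$), so $\widehat{\bh}_1(j)$ stays bounded away from zero; this legitimizes the SCORE normalization.

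Next I would translate these row-wise bounds to the normalized rows $\widehat{\bd}_j$. Writing $\widehat{\bd}_j=\widehat{\bH}_*(j,:)/\widehat{\bh}_1(j)$ and using the quotient rule plus the lower bound on $\bh_1(j)$, we obtain $\max_j\|\widehat{\bd}_j-\bd_j\|_2 = \widetilde{O}\big(\sqrt{p}\cdot \epsilon_H\big)$ where $\epsilon_H$ is the row-wise eigenvector error above. Assumption~2 then yields $\max_k\|\widehat{\bb}_k-\bb_k\|_2$ of the same order. Because at least one anchor state lies on each vertex and the row $\bd_j$ for each raw state is a convex combination $\bd_j=\sum_k w_j(k)\bb_k$ of the $r$ vertices, the program solved in step (iii) of Algorithm~1 is essentially an $\ell^2$ projection; a standard perturbation argument for constrained least squares with perturbed design (vertices) and response ($\widehat{\bd}_j$) gives $\max_j\|\widehat{\bw}_j-\bw_j\|_1 = \widetilde{O}(\kappa\cdot \sqrt{p}\,\epsilon_H)$, where $\kappa$ is the conditioning of the simplex, which is controlled by condition (e) through the geometry of $[\bb_1,\ldots,\bb_r]$. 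The subsequent truncation of negative entries and $\ell^1$ renormalization can only improve this bound (up to a constant factor) since the true $\bw_j$ is a probability vector.

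Third, I would push the error back through the rescaling $\widehat{\bV}\propto[\diag(\widehat{\bh}_1)][\diag(\bN^\top\bone_p)]^{1/2}\widehat{\bW}$ followed by column-wise $\ell^1$ renormalization. For this step, I would use a Bernstein-type concentration bound for Markov chains (available under conditions (a) and mixing time $\tau_*$) to show $\bN^\top\bone_p/n$ concentrates around $\bpi$ entry-wise at rate $\widetilde{O}(\sqrt{\tau_*/(np)})$, so $[\diag(\bN^\top\bone_p)]^{1/2}/\sqrt{n}$ is a small multiplicative perturbation of $[\diag(\bpi)]^{1/2}$. Combined with the population identity relating $\bV_k$ to the rows of $\bH$ through $\bh_1(j)\sqrt{\pi_j}\,\bw_j(k)$ (which can be verified directly from $\bP=\bU\bV^\top$ and the definition of $\widetilde{\bN}$'s population analogue), these perturbations assemble into the column-wise $\ell^1$ bound $\|\widehat{\bV}_k-\bV_k\|_1=\widetilde{O}(p\cdot\epsilon_H)$. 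Substituting the order of $\epsilon_H$ from Theorem~2 and averaging over $k$ yields exactly the claimed rate $\widetilde{O}\big((1+p\sqrt{\tau_*/n})\sqrt{\tau_*pr/n}\big)$.

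The main obstacle I anticipate is the third paragraph: carefully propagating the entry-wise eigenvector error through two nonlinear transformations (SCORE and the final diagonal rescaling plus $\ell^1$ renormalization) while keeping the $\ell^1$ aggregation over the $p$ coordinates sharp. In particular, one has to avoid a crude union bound that would cost an extra $\sqrt{p}$ factor; this requires exploiting that the error vector $\widehat{\bv}_k-\bv_k$ is a sum of coordinate-wise terms whose signs and magnitudes are governed by the \emph{same} random perturbation of $\widehat\bH$, so that $\ell^1$-aggregation effectively scales like $\sqrt{p}$ times the row-wise $\ell^2$ eigenvector perturbation rather than $p$ times it. The truncation-and-renormalization step in (iii) also needs a short argument showing that the $\ell^1$ distance to the simplex of probability vectors is at most a constant factor times the unconstrained perturbation, which follows from nonexpansiveness of the $\ell^1$-projection.
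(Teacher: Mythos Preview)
Your overall decomposition matches the paper's proof: row-wise eigenvector bounds $\to$ SCORE normalization error $\to$ vertex hunting error $\to$ weight recovery error $\to$ diagonal rescaling and column normalization. The gap is exactly where you flag it, but your proposed resolution is wrong.

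The mechanism you suggest for keeping the $\ell^1$ aggregation sharp---some probabilistic cancellation where ``signs and magnitudes are governed by the same random perturbation of $\widehat{\bH}$''---is not how the argument works, and no such cancellation is needed or available. The resolution is purely deterministic and comes from the diagonal rescaling itself. Let $\bV^{\circ}=[\diag(\bh_1)][\diag(\bpi)]^{1/2}\bW$ and $\widehat{\bV}^{\circ}$ be its empirical counterpart. Because $\bh_1(j)\asymp\sqrt{\pi_j}$ under the regularity conditions, the row-wise error obeys
\[
\bigl\|\be_j^\top(\widehat{\bV}^{\circ}-\bV^{\circ})\bigr\|_1 \;\lesssim\; \sqrt{\pi_j}\cdot err \;+\; \bigl\|n^{-1}\bm-\bpi\bigr\|_\infty \;+\; \pi_j\cdot Err,
\]
where $Err=\max_j\|\widehat{\bd}_j-\bd_j\|_2$ and $err=\max_j|\widehat{\bh}_1(j)-\bh_1(j)|$. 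The key is that the coefficient multiplying $Err$ is $\pi_j$, not a constant, precisely because the scaling factor in step~(iii) is $\bh_1(j)\sqrt{\pi_j}\asymp\pi_j$. Summing over $j$ then uses $\sum_j\pi_j=1$ and $\sum_j\sqrt{\pi_j}\leq\sqrt{p}$ to obtain
\[
\frac{1}{r}\sum_{k=1}^r\bigl\|\widehat{\bV}_k-\bV_k\bigr\|_1 \;\lesssim\; Err+\sqrt{p}\cdot err + p\,\bigl\|n^{-1}\bm-\bpi\bigr\|_\infty,
\]
after which the claimed rate follows by substituting the bounds on $Err$, $err$, and the empirical frequencies. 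No probabilistic structure beyond what is already inside $Err$ and $err$ is used at this stage.

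A minor side point: your stated $\epsilon_H$ omits the factor $\Delta_*^{-1}\asymp\sqrt{p}$ from Theorem~2. With it included, the row-wise bound for $\bH_*$ is of order $\sqrt{\tau_* r/n}+\tau_* p/n$, and the SCORE step then gives $Err\lesssim\sqrt{p}\cdot\epsilon_{H_*}+\sqrt{pr}\cdot err$; it is this $Err$ (not $p\cdot\epsilon_H$) that appears in the final bound. Your two slips happen to cancel numerically, but the accounting should be corrected.
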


\begin{Theorem}[Error bounds for estimating $\bU$] \label{StatisticalError2}
Suppose the regularity conditions (a)-(e) hold and Assumptions 1 and 2 are satisfied. When $n = \widetilde{\Omega}\bigl( \tau_* p^{\frac{3}{2}} r\bigr)$, with probability at least $1-n^{-1}$, the estimate $\widehat{\bU}$ given by Algorithm 1 satisfies
$\frac{1}{p} \sum_{j=1}^p  \bigl\|\widehat{\bu}_j - \bu_j\bigr\|_1 = \widetilde{O}\Bigl( r^{\frac{3}{2}}\bigl(1 + p\sqrt{\tau_*/n}\bigr)\sqrt{\frac{\tau_*pr}{n}} \Bigr)$. 
\end{Theorem}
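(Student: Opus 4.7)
The plan is to decompose $\widehat{\bU}-\bU$ into a piece driven by the error in $\widehat{\bP}$ and a piece driven by the error in $\widehat{\bV}$, control each piece using Theorem~\ref{StatisticalError} together with a row-wise concentration estimate for $\widehat{\bP}$, and then convert the resulting Frobenius bound into the averaged row-wise $\ell^{1}$ bound claimed in the statement. Using $\bP=\bU\bV^{\top}$, so that $\bU=\bP\bV(\bV^{\top}\bV)^{-1}$, together with the algebraic identity $\bV^{\top}\widehat{\bV}(\widehat{\bV}^{\top}\widehat{\bV})^{-1}-\bI_{r}=(\bV-\widehat{\bV})^{\top}\widehat{\bV}(\widehat{\bV}^{\top}\widehat{\bV})^{-1}$, a direct calculation gives
\begin{equation*}
\widehat{\bU}-\bU \;=\; (\widehat{\bP}-\bP)\,\widehat{\bV}(\widehat{\bV}^{\top}\widehat{\bV})^{-1} \;-\; \bU\,(\widehat{\bV}-\bV)^{\top}\widehat{\bV}(\widehat{\bV}^{\top}\widehat{\bV})^{-1}.
\end{equation*}

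The four building blocks can each be estimated. First, $\|\widehat{\bV}(\widehat{\bV}^{\top}\widehat{\bV})^{-1}\|_{\mathrm{op}}=\lambda_{\min}(\widehat{\bV}^{\top}\widehat{\bV})^{-1/2}$; condition~(c) gives $\lambda_{\min}(\bV^{\top}\bV)\geq c_{2}p/r$, and Weyl's inequality combined with the Frobenius control of $\widehat{\bV}-\bV$ from Theorem~\ref{StatisticalError} keeps $\lambda_{\min}(\widehat{\bV}^{\top}\widehat{\bV})\gtrsim p/r$ on the high-probability event, so this factor is $O(\sqrt{r/p})$. Second, conditions~(a)--(b) force each column sum of $\bU$ to be $O(p/r)$ while each row sum equals one, hence $\|\bU\|_{\mathrm{op}}\leq\sqrt{\|\bU\|_{1}\|\bU\|_{\infty}}=O(\sqrt{p/r})$. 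Third, Theorem~\ref{StatisticalError} supplies the per-column $\ell^{1}$ bound for $\widehat{\bV}-\bV$, which upgrades via $\|\cdot\|_{2}\leq\|\cdot\|_{1}$ to $\|\widehat{\bV}-\bV\|_{F}=\widetilde{O}\bigl(\sqrt{r}\,(1+p\sqrt{\tau_{*}/n})\sqrt{\tau_{*}pr/n}\bigr)$. Fourth, for $\widehat{\bP}-\bP$ condition~(a) gives $\sum_{j}N_{ij}\asymp n/p$; a Bernstein-type concentration inequality for ergodic Markov chains, applied row by row with a union bound over $i$, yields $\|\widehat{\bP}_{i,\cdot}-\bP_{i,\cdot}\|_{2}^{2}=\widetilde{O}(\tau_{*}p/n)$ uniformly in $i$ and hence $\|\widehat{\bP}-\bP\|_{F}=\widetilde{O}(p\sqrt{\tau_{*}/n})$.

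Combining these pieces bounds $\|\widehat{\bU}-\bU\|_{F}$, and the passage to the row-averaged $\ell^{1}$ norm is Cauchy--Schwarz applied twice: since $\widehat{\bu}_{j}-\bu_{j}\in\mathbb{R}^{r}$, one has $\|\widehat{\bu}_{j}-\bu_{j}\|_{1}\leq\sqrt{r}\,\|\widehat{\bu}_{j}-\bu_{j}\|_{2}$, and then $\sum_{j}\|\widehat{\bu}_{j}-\bu_{j}\|_{2}\leq\sqrt{p}\,\|\widehat{\bU}-\bU\|_{F}$, so that $p^{-1}\sum_{j}\|\widehat{\bu}_{j}-\bu_{j}\|_{1}\leq\sqrt{r/p}\,\|\widehat{\bU}-\bU\|_{F}$; substituting the Frobenius bound delivers the stated rate. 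The main obstacle is getting the $r$- and $p$-dependence right: the crude operator-norm surrogate $\|\widehat{\bV}-\bV\|_{\mathrm{op}}\leq\|\widehat{\bV}-\bV\|_{F}$ together with a global bound on $\|\widehat{\bP}-\bP\|_{F}$ lose multiplicative factors in both parameters, and the sharp rate appears to require using the per-column $\ell^{1}$ guarantee of Theorem~\ref{StatisticalError}, the row-stochastic scaling $\|\bU\|_{\mathrm{op}}=O(\sqrt{p/r})$ rather than the naive $O(\sqrt{p})$, and a row-by-row treatment of $\widehat{\bP}-\bP$ exploiting its multinomial-like local structure. Finally, the sample-size hypothesis $n=\widetilde{\Omega}(\tau_{*}p^{3/2}r)$ is used precisely to guarantee that the perturbation of $\bV^{\top}\bV$ is negligible against $c_{2}p/r$, so that $(\widehat{\bV}^{\top}\widehat{\bV})^{-1}$ is well-defined with the claimed operator-norm bound on the success event.
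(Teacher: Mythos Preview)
Your decomposition and overall strategy are reasonable, but there is a genuine gap, and it stems from two linked issues.

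\textbf{First, you have condition (c) backwards.} The paper's regularity condition is $\lambda_{\min}(\bV^{\top}\bV)\geq c_{2}\,r/p$, not $c_{2}\,p/r$. (This is the natural scale: the columns of $\bV$ are probability vectors of length $p$, so entries are $O(1/p)$ and $\bV^{\top}\bV$ has entries of order $1/p$.) Consequently $\bigl\|\widehat{\bV}(\widehat{\bV}^{\top}\widehat{\bV})^{-1}\bigr\|_{\mathrm{op}}=\lambda_{\min}(\widehat{\bV}^{\top}\widehat{\bV})^{-1/2}\asymp\sqrt{p/r}$, not $\sqrt{r/p}$. Your ``$O(\sqrt{r/p})$'' is off by a full factor of $p/r$, and this is why your final substitution appears to land on the stated rate; with the correct scaling it does not.

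\textbf{Second, once you use the correct $\sqrt{p/r}$, the Frobenius route for the $(\widehat{\bP}-\bP)$ term is too crude.} Your bound $\bigl\|(\widehat{\bP}-\bP)\widehat{\bV}(\widehat{\bV}^{\top}\widehat{\bV})^{-1}\bigr\|_{F}\leq\|\widehat{\bP}-\bP\|_{F}\cdot\sqrt{p/r}$ with $\|\widehat{\bP}-\bP\|_{F}=\widetilde{O}(p\sqrt{\tau_{*}/n})$ yields, after the $\sqrt{r/p}$ conversion to the row-averaged $\ell^{1}$ norm, a contribution of order $p\sqrt{\tau_{*}/n}$. The target is $r^{3/2}\widetilde{err}_{n}$ with leading term $r^{2}\sqrt{\tau_{*}p/n}$, so your term is larger by a factor $\sqrt{p}/r^{2}$ whenever $p\gg r^{4}$. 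The loss comes from treating $\widehat{\bP}-\bP$ in Frobenius norm: this matrix is \emph{not} low-rank, so its Frobenius norm is much larger than its spectral norm.

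The paper's proof avoids this by bounding the \emph{operator} norm $\|\widehat{\bU}-\bU\|_{2}$ directly. Writing $\widehat{\bU}=[\diag(n^{-1}\bm)]^{-1}(n^{-1}\bN)\widehat{\bV}(\widehat{\bV}^{\top}\widehat{\bV})^{-1}$ and $\bU=[\diag(\bpi)]^{-1}\bF\bV(\bV^{\top}\bV)^{-1}$, it uses a four-term triangle inequality in which the noise piece enters through the spectral-norm concentration $\|n^{-1}\bN-\bF\|_{2}=\widetilde{O}\bigl(\sqrt{\tau_{*}/(np)}\bigr)$ (Lemma~\ref{MC_Concentration}) rather than a Frobenius bound, and $\|\widehat{\bV}-\bV\|_{2}$ is controlled via $\sqrt{\|\cdot\|_{1}\|\cdot\|_{\infty}}$ using both the column-$\ell^{1}$ bound of Theorem~\ref{StatisticalError} and a row-$\ell^{1}$ bound derived from the intermediate quantities in that proof. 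This gives $\|\widehat{\bU}-\bU\|_{2}=\widetilde{O}(\sqrt{pr}\,\widetilde{err}_{n})$. Since $\widehat{\bU}-\bU$ has rank at most $2r$, one then gets $\|\widehat{\bU}-\bU\|_{F}\leq\sqrt{2r}\,\|\widehat{\bU}-\bU\|_{2}$, and the final Cauchy--Schwarz step (which you do correctly) delivers $r^{3/2}\widetilde{err}_{n}$. The key idea you are missing is to exploit low rank \emph{after} controlling the spectral norm, rather than going to Frobenius norm before the rank reduction has taken effect.
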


Second, we provide sample complexity guarantee for the exact recovery of anchor states. To eliminate false positives, we need a condition that the non-anchor states are not too `close' to an anchor state; this is captured by the quantity $\delta$ below. (Note $\delta_j = 0$ for anchor states $j \in \mathcal{A}^*$.)

\begin{Theorem}[Exact recovery of anchor states]\label{thm-anchor}
Suppose the regularity conditions (a)-(e) hold and Assumptions 1 and 2 are satisfied. Let ${\cal A}^*$ be the set of (disaggregation) anchor states. Define $\delta_j=1-\max_{1 \leq k \leq r} \mathbb{P}_{X_0 \sim \boldsymbol{\pi}}( Z_0 = k \, | \, X_1 = j )
$ and $\delta = \min_{j \notin \mathcal{A}^*} \delta_j$. Suppose the threshold $\delta_0$ in Algorithm 1 satisfies $\delta_0 = O(\delta)$. If $n = \widetilde{\Omega}\bigl(\delta_0^{-2}\tau_*p^{\frac{3}{2}}r\bigr)$, then $\mathbb{P}(\mathcal{A} = \mathcal{A}^*) \geq 1-n^{-1}$. 
\end{Theorem}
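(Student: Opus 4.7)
The plan is to show that $\widehat{\bw}_j$ from Algorithm~1 uniformly approximates a population probability vector $\bw_j$ that equals $\be_{k_j}$ on anchor states and satisfies $\max_k \bw_j(k) = 1 - \delta_j \leq 1 - \delta$ on non-anchor states, so that thresholding at $1 - \delta_0$ cleanly separates the two groups. I would first identify $\bw_j$ explicitly: at stationarity, Bayes' rule gives $\mathbb{P}_{X_0\sim\bpi}(Z_0 = k \mid X_1 = j) = V_{jk}(\bU^\top\bpi)_k/\bpi_j$, and I define $\bw_j(k)$ to be this quantity. For $j \in \mathcal{A}^*$ only one coordinate of row $j$ of $\bV$ is nonzero, hence $\bw_j = \be_{k_j}$; for $j \notin \mathcal{A}^*$ the definition of $\delta_j$ in the theorem directly gives $\max_k \bw_j(k) = 1 - \delta_j$. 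A direct calculation using the factorization $\bH = [\diag(\bpi)]^{-1/2}\bV\bM$, with $\bM$ the invertible change-of-coordinates from the SVD of $[\diag(\bpi)]\bP[\diag(\bpi)]^{-1/2}$, shows the SCORE-normalized row satisfies $\bd_j = \sum_k \bw_j(k)\bb_k$, so on population inputs the quadratic program in step (iii) returns $\widehat{\bw}_j^* = \bw_j$ and the truncation-plus-renormalization leaves it unchanged.

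The second step is the uniform entrywise bound $\max_{1\leq j\leq p}\|\widehat{\bw}_j - \bw_j\|_\infty = \widetilde{O}\bigl(\sqrt{\tau_* p^{3/2} r / n}\bigr)$, which I would assemble from four ingredients: (i) the row-wise singular-vector controls from Theorems~\ref{thm:Eig1} and~\ref{thm:Eig2}, which bound $|\widehat{\bh}_1(j) - \omega h_1(j)|$ and $\|\be_j^\top(\widehat{\bH}_*\bOmega_* - \bH_*)\|_2$ uniformly in $j$; (ii) condition (a), which implies $|h_1(j)| \asymp p^{-1/2}$, so the SCORE ratio $\widehat{\bd}_j$ is handled by a ratio inequality with denominator bounded away from zero on the event of Theorem~\ref{thm:Eig1}; (iii) Assumption~2, giving $\max_k \|\widehat{\bb}_k - \bb_k\|_2 \leq C\max_j \|\widehat{\bd}_j - \bd_j\|_2$; and (iv) conditions (b)--(e), which force the $r$-vertex simplex (equivalently the matrix $\bM$) to be well-conditioned, so that the least-squares map $(\widehat{\bd}_j, \{\widehat{\bb}_k\}) \mapsto \widehat{\bw}_j^*$ is Lipschitz with a constant depending only polynomially on $r$. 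The final truncation and renormalization of step (iii) can only shrink the error, since $\bw_j$ is itself a probability vector.

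The combinatorial conclusion then follows directly. Writing $\eta$ for the $\ell^\infty$ error above, every $j \in \mathcal{A}^*$ satisfies $\max_k \widehat{\bw}_j(k) \geq 1 - \eta$, and every $j \notin \mathcal{A}^*$ satisfies $\max_k \widehat{\bw}_j(k) \leq 1 - \delta + \eta$; hence $\mathcal{A} = \mathcal{A}^*$ as soon as $\eta < \delta_0$ and $\delta_0 + \eta < \delta$. Interpreting the hypothesis $\delta_0 = O(\delta)$ as $\delta_0 \leq c\delta$ for a sufficiently small constant $c < 1/2$, the sample-size condition $n = \widetilde{\Omega}(\delta_0^{-2}\tau_* p^{3/2} r)$ forces $\eta = \widetilde{O}(\delta_0)$, which simultaneously meets both inequalities on the high-probability event of Theorems~\ref{thm:Eig1}--\ref{thm:Eig2}. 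The main obstacle is step two, specifically tracking how the row-wise Euclidean bound on $\widehat{\bH}$ propagates through the SCORE division (which amplifies by $\sqrt{p}$ because $h_1(j) \asymp p^{-1/2}$) and through the convex-combination fit into a uniform entrywise bound on $\widehat{\bw}_j$ with the sharp $p^{3/2} r$ scaling, rather than the looser $p^2 r$ one would obtain by naively concatenating Theorem~\ref{thm:Eig2} with a simplex Lipschitz constant. The remaining perturbation machinery is the same as used in the proofs of Theorems~\ref{StatisticalError} and~\ref{StatisticalError2}.
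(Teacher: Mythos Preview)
Your overall architecture matches the paper's: control $\max_j\|\widehat\bw_j-\bw_j\|$ via Theorems~\ref{thm:Eig1}--\ref{thm:Eig2}, the SCORE ratio step, Assumption~2, and simplex conditioning, then threshold. However, the identification in your first paragraph is incorrect, and this is a genuine gap, not a cosmetic one. The population barycentric coordinates produced by step~(iii) of Algorithm~1 are \emph{not} the Bayesian posteriors $\bzeta_j(k)=\mathbb{P}_{X_0\sim\bpi}(Z_0=k\mid X_1=j)=V_{jk}(\bU^{\top}\bpi)_k/\pi_j$. Writing $\bH=[\diag(\bpi)]^{-1/2}\bV\bL$ for the change of basis, the simplex representation $\bd_j=\sum_k \bw_j(k)\bb_k$ forces
\[
\bw_j(k)=\frac{V_{jk}\,\bl_1(k)}{\sum_{l}V_{jl}\,\bl_1(l)},
\qquad\text{whereas}\qquad
\bzeta_j(k)=\frac{V_{jk}\,(\bU^{\top}\bpi)_k}{\sum_{l}V_{jl}\,(\bU^{\top}\bpi)_l}.
\]
These coincide only when $\bl_1\propto\bU^{\top}\bpi$, which fails in general: $\bl_1$ is the Perron eigenvector of $(\bU^{\top}[\diag(\bpi)]^2\bU)(\bV^{\top}[\diag(\bpi)]^{-1}\bV)$, not $\bU^{\top}\bpi$. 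Consequently your claim that for non-anchor $j$ one has $\max_k\bw_j(k)=1-\delta_j$ \emph{exactly} is false, and with it the clean separation ``$\eta<\delta_0$ and $\delta_0+\eta<\delta$'' breaks down as written.

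The paper repairs this by inserting one extra comparison lemma: the diagonal matrix $\bA=[\diag(\bU^{\top}\bpi)][\diag(\bl_1)]^{-1}$ has entries bounded above and below by constants (from conditions~(b)--(e) and the Perron analysis of $\bl_1$), and $\bzeta_j=\|\bA\bw_j\|_1^{-1}\bA\bw_j$. From this one gets the bi-Lipschitz estimate $\|\bzeta_j-\be_k\|_1\leq C\|\bw_j-\be_k\|_1$, hence $\min_{j\notin\mathcal A^*}\min_k\|\bw_j-\be_k\|_1\geq c\,\delta$ for a constant $c>0$. That constant is then absorbed into the hypothesis $\delta_0=O(\delta)$. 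Everything else in your plan (the $\widetilde O(\sqrt{\tau_* p^{3/2}r/n})$ uniform bound on $\|\widehat\bw_j-\bw_j\|_1$ via SCORE + vertex hunting, and the final thresholding argument) is correct and is exactly what the paper does; you are only missing this $\bw_j\leftrightarrow\bzeta_j$ translation step.
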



We connect our results to several lines of works in the literature. First, in the special case of $r=1$, our problem reduces to learning a discrete distribution with $p$ outcomes, where the minimax rate of total-variation distance is $O(\sqrt{p/n})$ \cite{han2015minimax}. Our bound matches with this rate when $p=O(\sqrt{n})$. However, our problem is much harder: each row of $\bP$ is a mixture of $r$ discrete distributions. Second, our setting is connected to the setting of learning a mixture of discrete distributions \cite{rabani2014learning,li2015learning} but is different in important ways. Those works consider learning {\it one} mixture distribution, and the data are $iid$ observations. Our problem is to estimate $p$ mixture distributions, which share the same basis distributions but have different mixing proportions, and our data are a single trajectory of a Markov chain. Third, our problem is connected to topic modeling \cite{Ge,Topic-SCORE}, where we may view the empirical transition profile of each raw state as a `document'. However, in topic modeling, the documents are independent of each other, but the `documents' here are highly dependent as they are generated from a single trajectory of a Markov chain. Last, we compare with the literature of estimating the transition matrix $\bP$ of a Markov model. Without low-rank assumptions on $\bP$, the minimax rate of the total variation error is $O(p/\sqrt{n})$ \cite{pmlr-v98-wolfer19a} (also, see \cite{kontorovich2013learning} and reference therein for related settings in hidden Markov models); with a low-rank structure on $\bP$, the minimax rate becomes $O(\sqrt{rp/n})$ \cite{zhang2018spectral}. To compare, we use our estimator of $(\bU,\bV)$ to construct an estimator of $\bP$ by $\widehat{\bP}=\widehat{\bU}\widehat{\bV}^{\top}$. When $r$ is bounded and $p=O(\sqrt{n})$, this estimator achieves a total-variation error of $O(\sqrt{rp/n})$, which is optimal. At the same time, we want to emphasize that estimating $(\bU,\bV)$ is a more challenging problem than estimating $\bP$, and we are not aware of any existing theoretical results of the former.

%

\vspace{-.3cm}

\paragraph{Simulation.}
We test our method on simulations (settings are in the appendix). The results are summarized in Figure~\ref{fig:ErrorComparison}. It suggests: (a) the rate of convergence in Theorem~\ref{StatisticalError} is confirmed by numerical evidence, and (b) our method compares favorably with existing methods on estimating $\bP$ (to our best knowledge, there is no other method that directly estimates $\bU$ and $\bV$; so we instead compare the estimation on $\bP$).

\begin{figure}[ht]
	\centering
		\includegraphics[width = 1.6in,height = 1.3in]{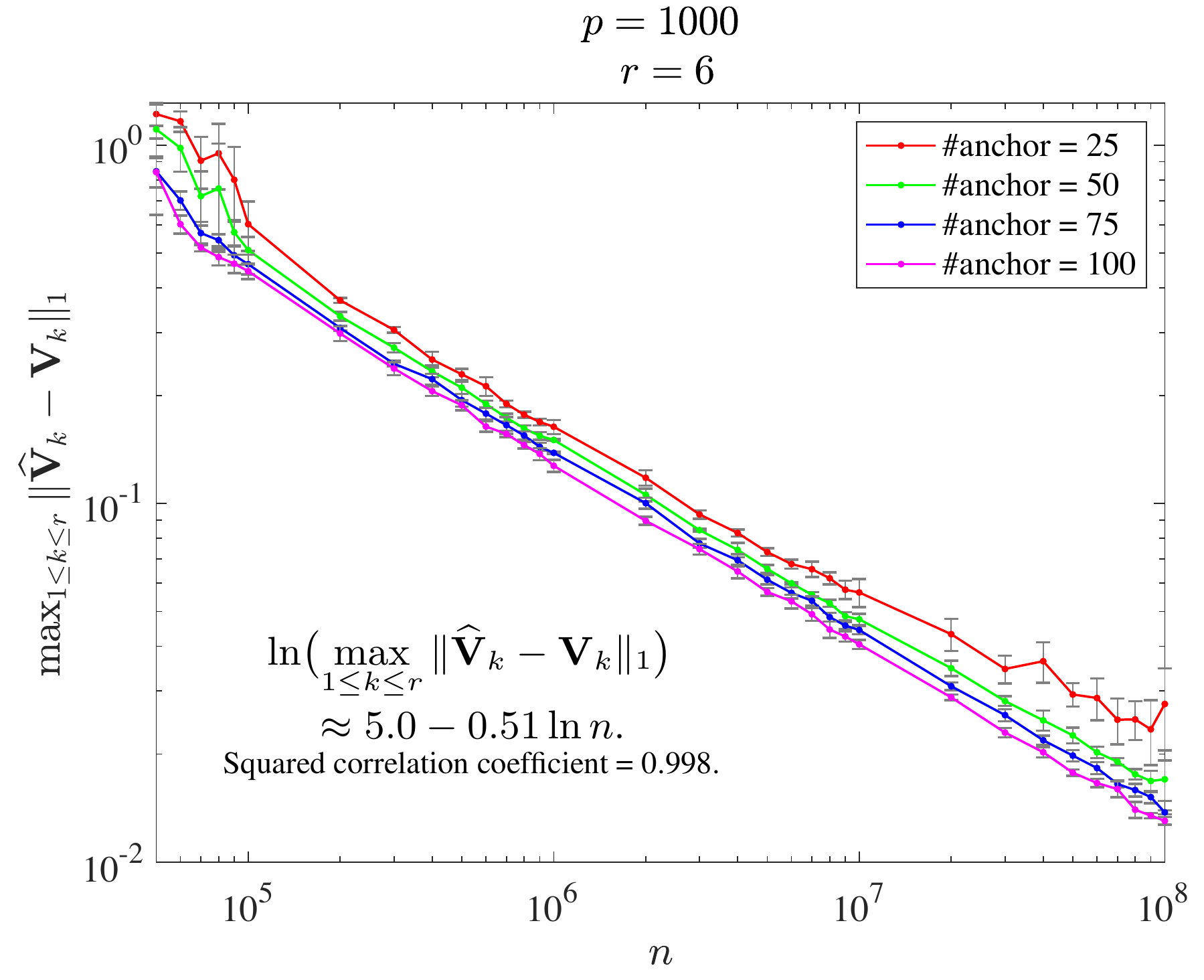}  \hspace{0.3cm}
		\includegraphics[width = 1.6in,height = 1.3in]{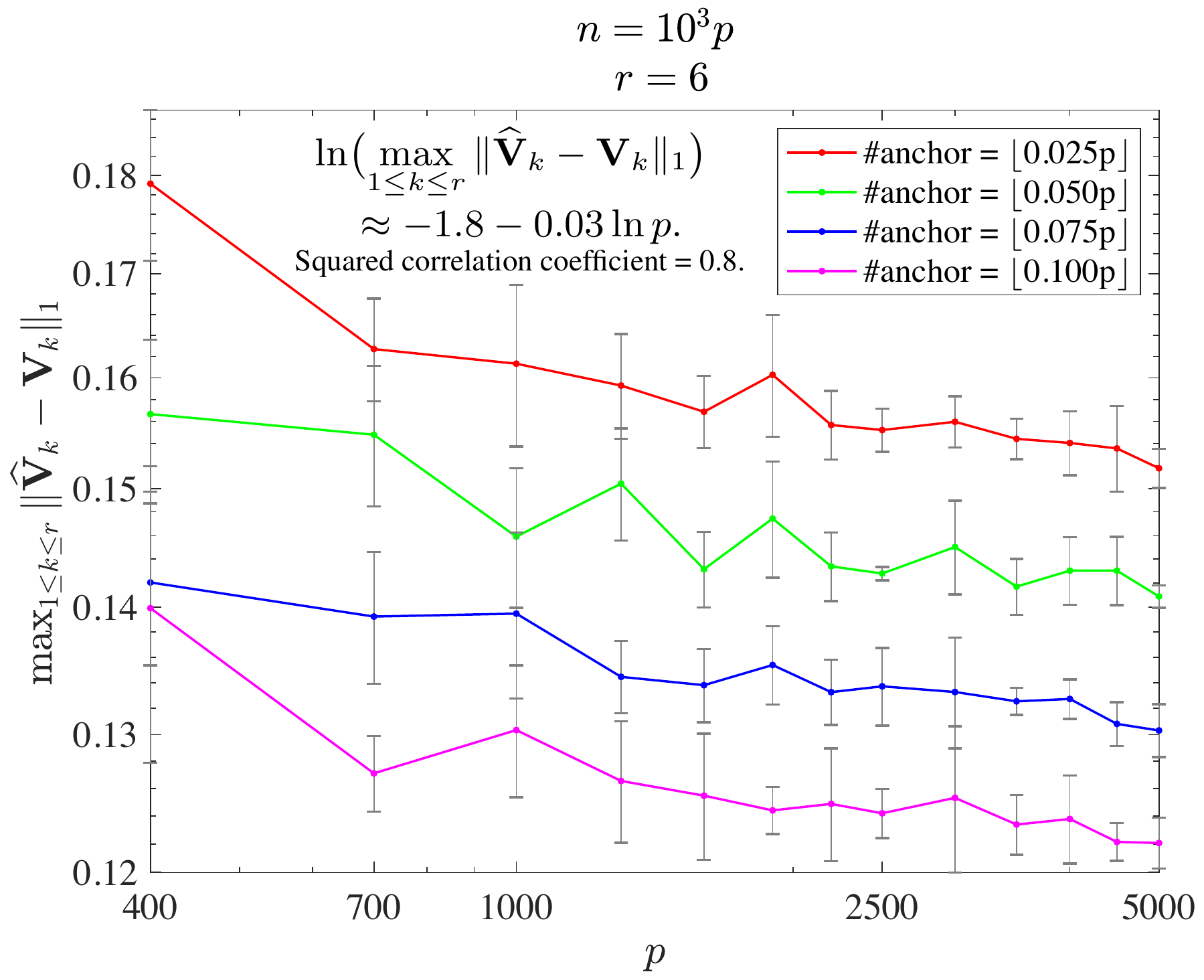}  \hspace{0.3cm}
	    \includegraphics[width = 1.6in,height = 1.3in]{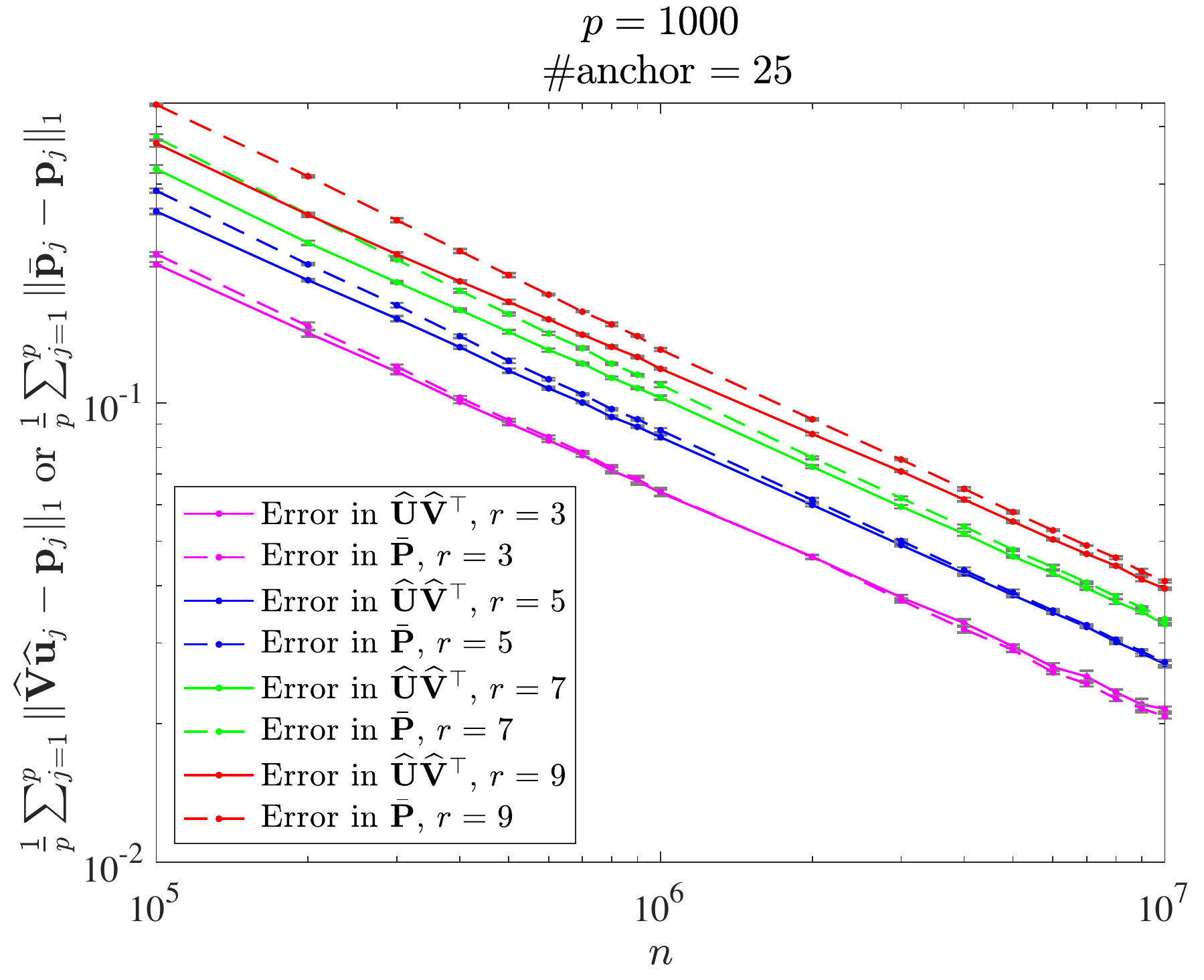} 
	        \vspace{-1em}
	\caption{\footnotesize {\bf Simulation Results.} Left: Total variation error on $\bV$ ($p$ is fixed and $n$ varies). Middle:   Total variation error on $\bV$ ($p/n$ is fixed). Both panels validate the scaling of $\sqrt{p/n}$ in Theorem~\ref{StatisticalError}. Right: Recovery error of $\bP$, where $\widehat{\bf U}\widehat{\bf V}^{\top}$ is our method and $\bar{\bf P}$ is the spectral estimator \cite{zhang2018spectral} (note: this method cannot estimate $\bU$,$\bV$ ).} \label{fig:ErrorComparison}
	\vspace{-0.2cm}
\end{figure}

\vspace{-.8em}

\section{Analysis of NYC Taxi Data and Application to Reinforcement Learning} \label{sec:application}
\vspace{-.5em}




\begin{figure}[ht]
	\centering
	
	\begin{minipage}{0.32\linewidth}
		\centering
		\hspace{-.7cm}
		{\footnotesize \begin{tabular}{c}(a)\\ \\ \end{tabular}}
		\hspace{-0.3cm}
		\begin{minipage}{0.4\linewidth}
			\centering
			\includegraphics[width = 0.95\linewidth]{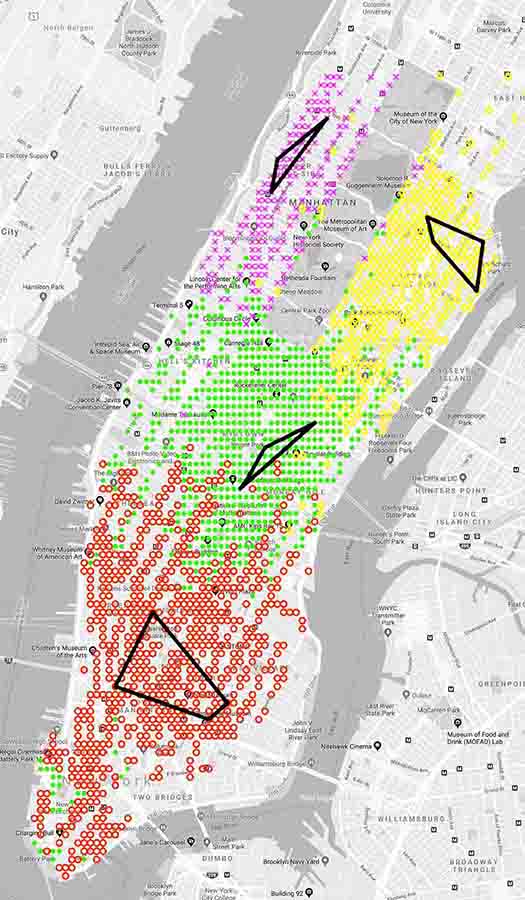} \vspace{-0.05cm}\\
			{\footnotesize $r = 4$  }
		\end{minipage}
		\hspace{0.02cm}
		\begin{minipage}{0.4\linewidth}
			\centering
			\includegraphics[width = 0.95\linewidth]{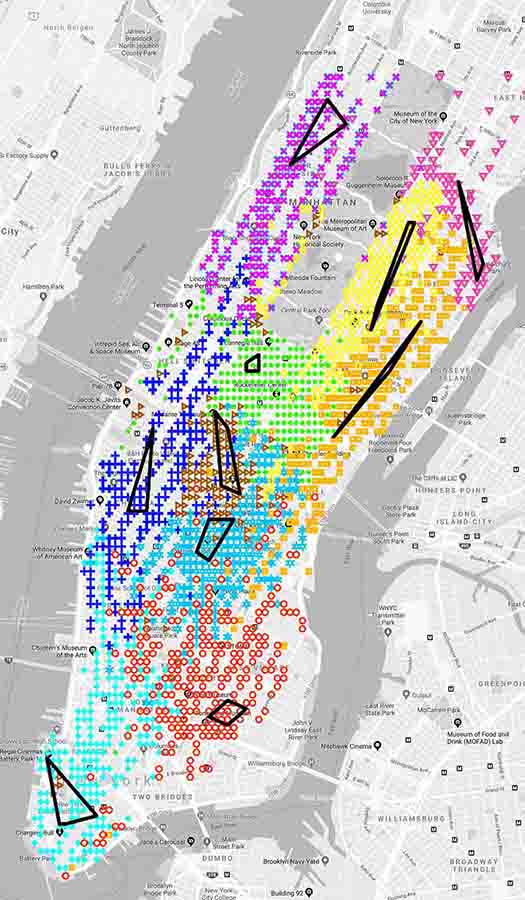} \vspace{-0.05cm}\\
			{\footnotesize $r = 10$  }
		\end{minipage} \vspace{-0.15cm}
	\end{minipage}
	\hspace{0.02cm}
	\begin{minipage}{0.65\linewidth}
		\centering
		\hspace{-1cm}
		{\footnotesize \begin{tabular}{c}(b)\\ \\ \end{tabular}}
		\hspace{-0.3cm}
		\begin{minipage}{0.23\linewidth}
			\centering
			\includegraphics[width = \linewidth]{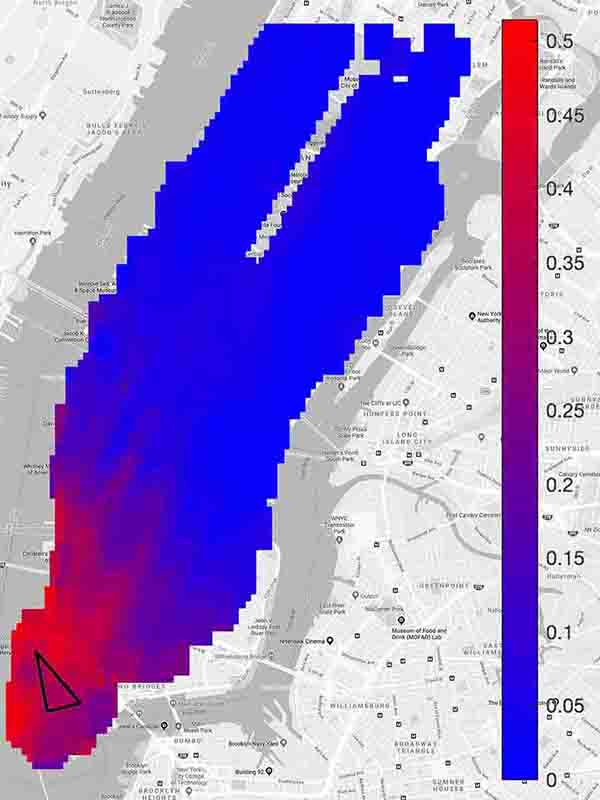} \\
			{\footnotesize $\widehat{\bf U}_k$ (downtown)}
		\end{minipage}
		\hspace{0.01cm}
		\begin{minipage}{0.23\linewidth}
			\centering
			\includegraphics[width = \linewidth]{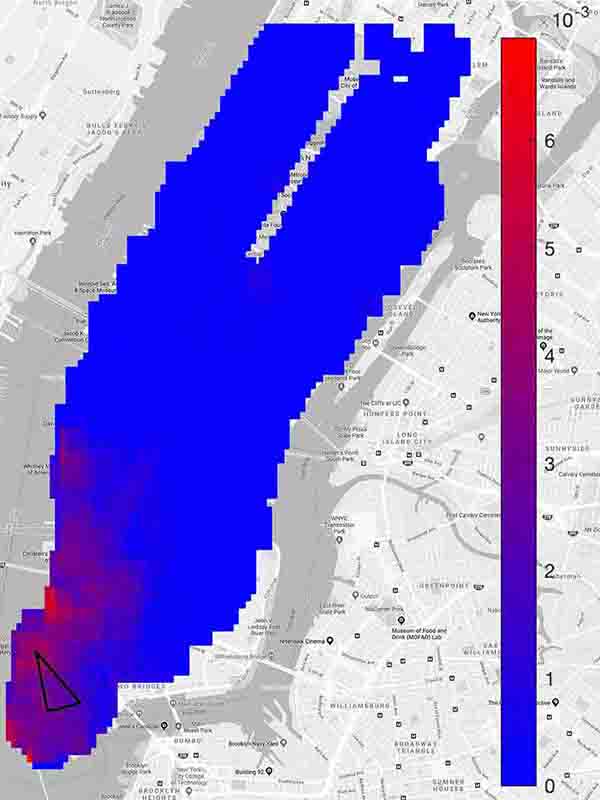}\\
			{\footnotesize $\widehat{\bf V}_k$ (downtown)}
		\end{minipage}
		\hspace{0.15cm}
		\begin{minipage}{0.23\linewidth}
			\centering
			\includegraphics[width = \linewidth]{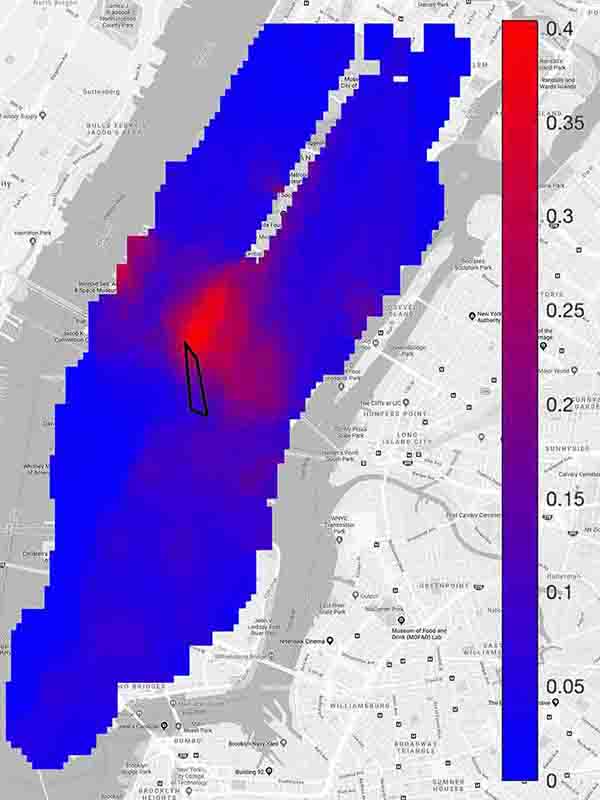}\\
			{\footnotesize $\widehat{\bf U}_k$ (midtown)}
		\end{minipage}
		\hspace{0.01cm}
		\begin{minipage}{0.23\linewidth}
			\centering
			\includegraphics[width = \linewidth]{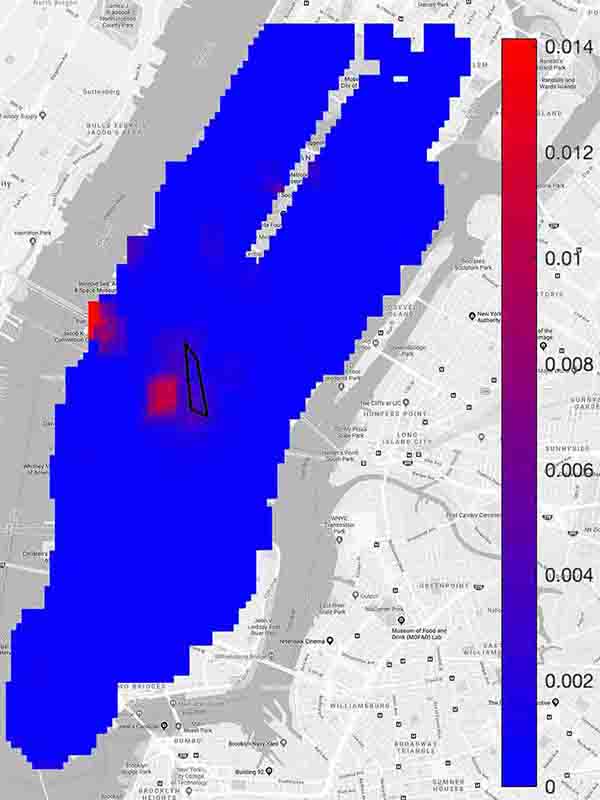} \\
			{\footnotesize $\widehat{\bf V}_k$ (midtown)}
		\end{minipage}
		\\ \vspace{-0.1cm}
	\end{minipage}
	\caption{\footnotesize {\bf State aggregation results.} (a) Estimated anchor regions and partition of NYC. (b) Estimated disaggregation distribution ($\widehat{\bV}_k$) and aggregation likelihood ($\widehat{\bf U}_k$) for two meta-states.}\label{HeatMap}
	\vspace{-0.5cm}
\end{figure}

We analyze a dataset of $1.1 \times 10^7$ New York city yellow cab trips that were collected in January 2016 \cite{NYCyellowcabJan2016}. We treat each taxi trip as a sample transition generated by a city-wide Markov process over NYC, where the transition is from a pick-up location to some drop-off location. We  discretize the map into $p=1922$ cells so that the Markov process becomes a finite-state one.

\vspace{-0.1cm}
{\bf Anchor regions and partition.}
We apply Alg.\ 1 to the taxi-trip data with $r=4,10$. The algorithm identifies sets of anchor states that are close to the vertices, as well as columns of $\widehat{\bU}$ and $\widehat{\bV}$ corresponding to each vertex (anchor region).
We further use the estimated $\widehat{\bf U}$, $\widehat{\bf V}$ to find a partition of the city. Recall that in Algorithm 1, each state is projected onto a simplex, which can be represented as a convex combination of simplex's vertices  (see Figure~\ref{fig:geometry}). 
For each state, we assign the state to a cluster that corresponds to largest value in the weights of convex combination. In this way, we can cluster the 1922 locations into a small number of regions.
The partition results are shown in Figure \ref{HeatMap} (a), where anchor regions are marked in each cluster.

\vspace{-0.1cm}
{\bf Estimated aggregation and disaggregation distributions.}
Let $\widehat{\bf U}$, $\widehat{\bf V}$ be the estimated aggregation and disaggregation matrices.
We use heat maps to visualize their columns. Take $r=10$ for example. We pick two meta-states, with anchor states in the downtown and midtown areas, respectively.
We plot in Figure \ref{HeatMap} (b) the corresponding columns of $\widehat{\bf U}$ and $\widehat{\bf V}$. Each column of $\widehat{\bf V}$ is a disaggregation distribution, and each column of $\widehat{\bf U}$ can be thought of as a likelihood function for transiting to corresponding meta-states. The heat maps reveal the leading ``modes'' of the traffic-dynamics.

\vspace{-0.3cm}
\paragraph{Aggregation distributions used as features for RL.}
\begin{wrapfigure}{r}{0.3\linewidth} \vspace{-0.1cm}
	\begin{minipage}{\linewidth}
		\centering
		\includegraphics[width = 0.75\linewidth]{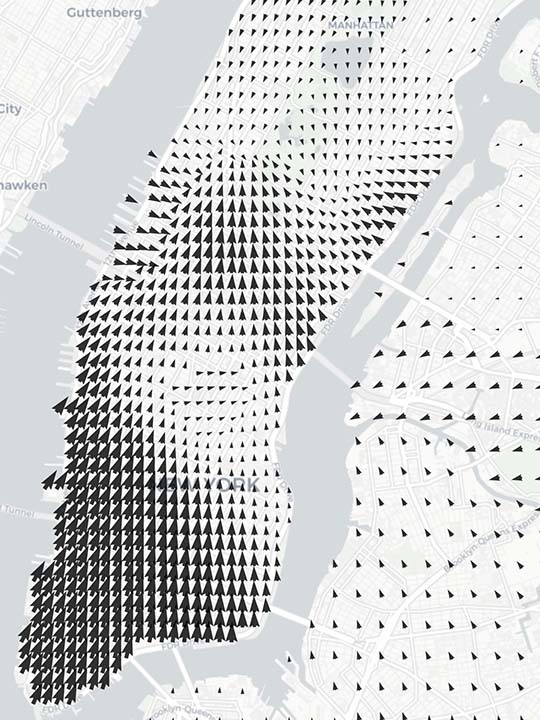} \\ \vspace{-0.2cm}
		\caption{\footnotesize {\bf The optimal driving policy} learnt by feature-based RL with estimated aggregation distributions as state features. Arrows point out the most favorable directions and the thickness is proportional to favorability of the direction.}\label{Policy}
		\vspace{-0.2cm}
	\end{minipage}
\end{wrapfigure}
\hspace{-0.35cm}
Soft state aggregation can be used to reduce the complexity of reinforcement learning (RL) \cite{singh1995reinforcement}.
Aggregation/disaggregation distributions provide features to parameterize high-dimensional policies, in conjunction with feature-based RL methods \cite{chen2018scalable,yang2019sample}. Next we experiment with using the aggregation distributions as features for RL.

Consider the taxi-driving policy optimization problem. The driver's objective is to maximize {the} daily revenue - a Markov decision process where the driver chooses driving directions in realtime based on locations. 
We compute the optimal policy using feature-based RL \cite{chen2018scalable} and simulated NYC traffic. The algorithm takes as input 27 estimated aggregation distributions as {\it state features}. For comparison, we also use a hard partition of the city which is handpicked according to 27 NYC districts. 
RL using aggregation distributions as features achieves a daily revenue of $\$230.57$, while the method using handpicked partition achieves $\$209.14$. 
Figure \ref{Policy} plots the optimal driving policy. 
This experiment suggests that (1) state aggregation learning provides features for RL automatically; (2) using aggregation distributions as features leads to better performance of RL than using handpicked features. 

%
%
%

%


\newpage

\bibliography{reference}

\begin{thebibliography}{10}

\bibitem{NYCyellowcabJan2016}
{NYC} {Taxi} and {Limousine} {Commission} ({TLC}) trip record data.
\newblock \url{http://www.nyc.gov/html/tlc/html/about/trip_record_data.shtml}.
\newblock Accessed June 11, 2018.

\bibitem{abbe2017entrywise}
Emmanuel Abbe, Jianqing Fan, Kaizheng Wang, and Yiqiao Zhong.
\newblock Entrywise eigenvector analysis of random matrices with low expected
  rank.
\newblock {\em arXiv preprint arXiv:1709.09565}, 2017.

\bibitem{araujo2001successive}
M{\'a}rio C{\'e}sar~Ugulino Ara{\'u}jo, Teresa Cristina~Bezerra Saldanha,
  Roberto Kawakami~Harrop Galvao, Takashi Yoneyama, Henrique~Caldas Chame, and
  Valeria Visani.
\newblock The successive projections algorithm for variable selection in
  spectroscopic multicomponent analysis.
\newblock {\em Chemometrics and Intelligent Laboratory Systems}, 57(2):65--73,
  2001.

\bibitem{Ge}
Sanjeev Arora, Rong Ge, and Ankur Moitra.
\newblock Learning topic models--going beyond {SVD}.
\newblock In {\em Foundations of Computer Science (FOCS)}, pages 1--10, 2012.

\bibitem{bertsekas1995dynamic}
Dimitri~P Bertsekas.
\newblock {\em Dynamic programming and optimal control}.
\newblock Athena scientific Belmont, MA, 2007.

\bibitem{bertsekas1995neuro}
Dimitri~P Bertsekas and John~N Tsitsiklis.
\newblock {\em Neuro-dynamic programming}.
\newblock Athena Scientific, Belmont, MA, 1996.

\bibitem{blei2003latent}
David~M Blei, Andrew~Y Ng, and Michael~I Jordan.
\newblock Latent dirichlet allocation.
\newblock {\em Journal of machine Learning research}, 3(Jan):993--1022, 2003.

\bibitem{boardman1995mapping}
Joseph~W Boardman, Fred~A Kruse, and Robert~O Green.
\newblock Mapping target signatures via partial unmixing of aviris data.
\newblock 1995.

\bibitem{chang2006new}
C-I Chang, C-C Wu, Weimin Liu, and Y-C Ouyang.
\newblock A new growing method for simplex-based endmember extraction
  algorithm.
\newblock {\em IEEE Transactions on Geoscience and Remote Sensing},
  44(10):2804--2819, 2006.

\bibitem{chen2018scalable}
Yichen Chen, Lihong Li, and Mengdi Wang.
\newblock Scalable bilinear $\pi$ learning using state and action features.
\newblock {\em Proceedings of the 35th International Conference on Machine
  Learning}, 2018.

\bibitem{chen2017spectral}
Yuxin Chen, Jianqing Fan, Cong Ma, Kaizheng Wang, et~al.
\newblock Spectral method and regularized {MLE} are both optimal for top-$ {K}
  $ ranking.
\newblock {\em The Annals of Statistics}, 47(4):2204--2235, 2019.

\bibitem{DavisKahan}
Chandler Davis and William~Morton Kahan.
\newblock The rotation of eigenvectors by a perturbation. {III}.
\newblock {\em SIAM Journal on Numerical Analysis}, 7(1):1--46, 1970.

\bibitem{donoho2004does}
David Donoho and Victoria Stodden.
\newblock When does non-negative matrix factorization give a correct
  decomposition into parts?
\newblock In {\em Advances in neural information processing systems}, pages
  1141--1148, 2004.

\bibitem{eldridge2017unperturbed}
Justin Eldridge, Mikhail Belkin, and Yusu Wang.
\newblock Unperturbed: spectral analysis beyond {D}avis-{K}ahan.
\newblock {\em arXiv preprint arXiv:1706.06516}, 2017.

\bibitem{falahatgar2016learning}
Moein Falahatgar, Alon Orlitsky, Venkatadheeraj Pichapati, and Ananda~Theertha
  Suresh.
\newblock Learning {Markov} distributions: Does estimation trump compression?
\newblock In {\em 2016 IEEE International Symposium on Information Theory
  (ISIT)}, pages 2689--2693. IEEE, 2016.

\bibitem{gillis2014and}
Nicolas Gillis.
\newblock The why and how of nonnegative matrix factorization.
\newblock {\em Regularization, Optimization, Kernels, and Support Vector
  Machines}, 12(257), 2014.

\bibitem{han2015minimax}
Yanjun Han, Jiantao Jiao, and Tsachy Weissman.
\newblock Minimax estimation of discrete distributions under $\ell_1$ loss.
\newblock {\em IEEE Transactions on Information Theory}, 61(11):6343--6354,
  2015.

\bibitem{NIPS2018_7345}
Yi~HAO, Alon Orlitsky, and Venkatadheeraj Pichapati.
\newblock On learning markov chains.
\newblock In S.~Bengio, H.~Wallach, H.~Larochelle, K.~Grauman, N.~Cesa-Bianchi,
  and R.~Garnett, editors, {\em Advances in Neural Information Processing
  Systems 31}, pages 648--657. Curran Associates, Inc., 2018.

\bibitem{javadi2019non}
Hamid Javadi and Andrea Montanari.
\newblock Non-negative matrix factorization via archetypal analysis.
\newblock {\em Journal of the American Statistical Association},
  (just-accepted):1--27, 2019.

\bibitem{SCORE}
Jiashun Jin.
\newblock Fast community detection by {SCORE}.
\newblock {\em Annals of Statistics}, 43(1):57--89, 2015.

\bibitem{mixed-SCORE}
Jiashun Jin, Zheng~Tracy Ke, and Shengming Luo.
\newblock Estimating network memberships by simplex vertex hunting.
\newblock {\em arXiv:1708.07852}, 2017.

\bibitem{Topic-SCORE}
Zheng~Tracy Ke and Minzhe Wang.
\newblock A new {SVD} approach to optimal topic estimation.
\newblock {\em arXiv:1704.07016}, 2017.

\bibitem{koltchinskii2016asymptotics}
Vladimir Koltchinskii and Karim Lounici.
\newblock Asymptotics and concentration bounds for bilinear forms of spectral
  projectors of sample covariance.
\newblock In {\em Annales de l'Institut Henri Poincar{\'e}, Probabilit{\'e}s et
  Statistiques}, volume~52, pages 1976--2013. Institut Henri Poincar{\'e},
  2016.

\bibitem{koltchinskii2016perturbation}
Vladimir Koltchinskii and Dong Xia.
\newblock Perturbation of linear forms of singular vectors under {G}aussian
  noise.
\newblock In {\em High Dimensional Probability VII}, pages 397--423. Springer,
  2016.

\bibitem{kontorovich2013learning}
Aryeh Kontorovich, Boaz Nadler, and Roi Weiss.
\newblock On learning parametric-output hmms.
\newblock In {\em International Conference on Machine Learning}, pages
  702--710, 2013.

\bibitem{lee1999learning}
Daniel Lee and Sebastian Seung.
\newblock Learning the parts of objects by non-negative matrix factorization.
\newblock {\em Nature}, 401(6755):788--791, 1999.

\bibitem{li2015learning}
Jian Li, Yuval Rabani, Leonard~J Schulman, and Chaitanya Swamy.
\newblock Learning arbitrary statistical mixtures of discrete distributions.
\newblock In {\em Proceedings of the forty-seventh annual ACM symposium on
  Theory of computing}, pages 743--752. ACM, 2015.

\bibitem{li2018estimation}
Xudong Li, Mengdi Wang, and Anru Zhang.
\newblock Estimation of {Markov} chain via rank-constrained likelihood.
\newblock {\em Proceedings of the 35th international conference on Machine
  learning}, 2018.

\bibitem{minc1988nonnegative}
Henryk Minc.
\newblock {\em Nonnegative matrices}.
\newblock Wiley, 1988.

\bibitem{nascimento2005vertex}
Jos{\'e}~MP Nascimento and Jos{\'e}~MB Dias.
\newblock Vertex component analysis: A fast algorithm to unmix hyperspectral
  data.
\newblock {\em IEEE transactions on Geoscience and Remote Sensing},
  43(4):898--910, 2005.

\bibitem{rabani2014learning}
Yuval Rabani, Leonard~J Schulman, and Chaitanya Swamy.
\newblock Learning mixtures of arbitrary distributions over large discrete
  domains.
\newblock In {\em Proceedings of the 5th conference on Innovations in
  theoretical computer science}, pages 207--224. ACM, 2014.

\bibitem{rogers1991aggregation}
David~F Rogers, Robert~D Plante, Richard~T Wong, and James~R Evans.
\newblock Aggregation and disaggregation techniques and methodology in
  optimization.
\newblock {\em Operations Research}, 39(4):553--582, 1991.

\bibitem{singh1995reinforcement}
Satinder~P Singh, Tommi Jaakkola, and Michael~I Jordan.
\newblock Reinforcement learning with soft state aggregation.
\newblock In {\em Advances in neural information processing systems}, pages
  361--368, 1995.

\bibitem{Freedman}
Joel Tropp et~al.
\newblock Freedman's inequality for matrix martingales.
\newblock {\em Electronic Communications in Probability}, 16:262--270, 2011.

\bibitem{tsitsiklis1996feature}
John~N Tsitsiklis and Benjamin Van~Roy.
\newblock Feature-based methods for large scale dynamic programming.
\newblock {\em Machine Learning}, 22(1-3):59--94, 1996.

\bibitem{weber2005robust}
Marcus Weber and Susanna Kube.
\newblock Robust {P}erron cluster analysis for various applications in
  computational life science.
\newblock In {\em International Symposium on Computational Life Science}, pages
  57--66. Springer, 2005.

\bibitem{pmlr-v98-wolfer19a}
Geoffrey Wolfer and Aryeh Kontorovich.
\newblock Minimax learning of ergodic markov chains.
\newblock In {\em Proceedings of the 30th International Conference on
  Algorithmic Learning Theory}, pages 904--930, 2019.

\bibitem{yang2019sample}
Lin~F Yang and Mengdi Wang.
\newblock Sample-optimal parametric {Q}-learning with linear transition models.
\newblock {\em Proceedings of the 36th International Conference on Machine
  Learning}, 2019.

\bibitem{zhang2018spectral}
Anru Zhang and Mengdi Wang.
\newblock Spectral state compression of {Markov} processes.
\newblock {\em arXiv:1802.02920}, 2018.

\bibitem{zhong2018near}
Yiqiao Zhong and Nicolas Boumal.
\newblock Near-optimal bounds for phase synchronization.
\newblock {\em SIAM Journal on Optimization}, 28(2):989--1016, 2018.

\end{thebibliography}
\bibliographystyle{plain}

\newpage

	\appendix

	\addtocontents{toc}{\protect\setcounter{tocdepth}{2}}
	\numberwithin{equation}{section}
	
	{ \huge \bf Appendices }
	
	\vspace{1.5cm}
	
	\tableofcontents
	
	\newpage
	\section{Preliminaries}

	%
	\paragraph{Notations.}~
	
	In the following, we denote the frequency matrix $[\diag(\bpi)]\bP$ by $\bF$.
	The algorithm deals with $\widetilde{\bN} = \bN[\diag(\bm)]^{-\frac{1}{2}}$ where $\bm = \bN^{\top}{\bf 1}_p$. We normalize $\widetilde{\bN}$ with $\sqrt{n}$ and define
	$\widehat\bQ = n^{-\frac{1}{2}}\bN[\diag(\bm)]^{-\frac{1}{2}}$. Martrix $\widehat{\bQ}$ has a population counterpart $\bQ = \bF[\diag(\bpi)]^{-1/2}$. The soft state aggregation assumption implies that $\bQ$ has a singular value decomposition $\bQ = \bG \bSigma \bH^{\top}$, where $\bG = \bigl[\bg_1,\bg_2,\ldots,\bg_r\bigr] \in \mathbb{R}^{p \times r}$, $\bH = \bigl[ \bh_1,\bh_2, \ldots, \bh_r \bigr] \in \mathbb{R}^{p \times r}$, $\bSigma = \diag(\sigma_1,\sigma_2,\ldots,\sigma_r)$, $\sigma_1 \geq \sigma_2 \geq \ldots \geq \sigma_r > 0$. Analogously, suppose that $\widehat\bQ$ has singular values $\widehat\sigma_1 \geq \widehat\sigma_2 \geq \ldots \geq \widehat\sigma_p \geq 0$. For $i=1,2,\ldots,p$, $\widehat\bg_i$ and $\widehat\bh_i$ are respectively the left and right singular vectors associated with $\widehat\sigma_i$.
	
	For any $k \in \mathbb{N}$, let $\mathbb{S}^{k-1}$ denote the unit sphere in $\mathbb{R}^k$ and ${\bf I}_k \in \mathbb{R}^{k \times k}$ denote the identity matrix.
	
	
	
	\paragraph{Assumptions.}~
	
	In the paper, we propose the following regularity conditions:
	\begin{Assumption}[Regularity conditions]\label{assumption-reg}
		There exists constants $c_1$, $C_1$, $\bar{C}_1$, $c_2$, $c_3$, $C_4 > 0$ such that
		\begin{enumerate} \itemsep = -0.3cm
			\item the stationary distribution $\bpi$ satisfies \beq c_1p^{-1} \leq\label{Assumption1} \bpi_j \leq C_1p^{-1}, \quad \text{for $j=1,2,\ldots,p$}; \eeq
			\item the stationary distribution on meta-states satisfies \beq \label{Assumption5} \bigl(\bU^{\top}\bpi\bigr)_k \leq \bar{C}_1 r^{-1}, \quad \text{for $k=1,2,\ldots,r$}; \eeq
			\item the aggregation and disaggregation distributions satisfy \beq\label{Assumption2'} \lambda_{\min}\bigl(\bU^{\top} \bU\bigr) \geq c_2 pr^{-1}, \
			\lambda_{\min}\bigl(\bV^{\top} \bV \bigr) \geq c_2 p^{-1}r; \eeq	
			\item the first and second largest singular values of $[\diag(\bpi)]\bP[\diag(\bpi)]^{-1/2}$ satisfy \beq \label{Assumption3} \sigma_1 - \sigma_2 \geq c_3 p^{-\frac{1}{2}}; \eeq
			\item the ratio between the largest and smallest entries in an $r$-by-$r$ matrix $\bU^{\top}\bP\bV$ satisfies  \beq\label{Assumption4} \frac{\max_{k,l}\big(\bU^{\top}\bP\bV\bigr)_{kl}}{\min_{k,l}\big(\bU^{\top}\bP\bV\bigr)_{kl}} \leq C_4. \eeq
		\end{enumerate}
	\end{Assumption}
	
	We can rewrite \eqref{Assumption2'} into the following form, which is more convenient for our proofs:
	\beq\label{Assumption2} \begin{aligned} & \lambda_{\min}\bigl(\bU^{\top} [\diag(\bpi)]^2 \bU\bigr) \geq c_2' p^{-1}r^{-1}, \\
		& \lambda_{\min}\bigl(\bV^{\top} [\diag(\bpi)]^{-1} \bV \bigr) \geq c_2' r. 
	\end{aligned} \eeq	
	To see the equivalence between \eqref{Assumption2'} and \eqref{Assumption2}, we note that under assumption \eqref{Assumption1},
	\[ \bU^{\top}\bigl( [\diag(\bpi)]^2 - c_1^2p^{-2} \bI_p \bigr) \bU \succeq 0, \]
	thus by \eqref{Assumption2'},
	\[ \bU^{\top}[\diag(\bpi)]^2\bU \succeq c_1^2 p^{-2}\bU^{\top}\bU \succeq c_1^2c_2p^{-1}r^{-1} {\bf I}_r. \]
	Similarly,
	\[ \bV^{\top}\bigl([\diag(\bpi)]^{-1} - C_1^{-1}p \bI_p \bigr)\bV \succeq 0, \]
	which implies
	\[ \bV^{\top}[\diag(\bpi)]^{-1}\bV \succeq C_1^{-1}p \bV^{\top}\bV \succeq C_1^{-1}c_2 r {\bf I}_r. \]
	For simplicity, we replace the parameter $c_2'$ in \eqref{Assumption2} by $c_2$ in the subsequent discussions.

	\paragraph{Preliminary lemmas.}~
	
	We list some preliminary lemmas that will be used later, of which the proofs can be found in Appendix \ref{TechnicalProofs}.
	
	\begin{lemma} \label{sigma_r}
		Under assumptions \eqref{Assumption1} and \eqref{Assumption2},
		\beq \label{normQ} c_2 p^{-\frac{1}{2}} \leq \sigma_r \leq \sigma_1 \leq C_1c_1^{-\frac{1}{2}}p^{-\frac{1}{2}}.\eeq
	\end{lemma}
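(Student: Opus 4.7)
The plan is to prove the upper and lower bounds separately, both exploiting the factorization $\bQ = [\diag(\bpi)]\,\bU\bV^{\top}[\diag(\bpi)]^{-1/2}$ that follows from \eqref{P-decompose}, together with assumption \eqref{Assumption1} and the reformulated version \eqref{Assumption2} of the regularity condition on $\bU,\bV$.

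For the upper bound on $\sigma_1 = \|\bQ\|_{\mathrm{op}}$, I would compute $\|\bQ\y\|^2$ directly for an arbitrary unit vector $\y\in\mathbb{R}^p$. Starting from $Q_{ij} = \bpi_i P_{ij}\bpi_j^{-1/2}$, expand
\[
\|\bQ\y\|^2 \,=\, \sum_i \bpi_i^2 \Bigl(\sum_j P_{ij}\,\bpi_j^{-1/2} y_j\Bigr)^{\!2}.
\]
A Cauchy--Schwarz step on the inner sum against the probability vector $(P_{ij})_{j=1}^p$, which satisfies $\sum_j P_{ij} = 1$, bounds the square by $\sum_j P_{ij}\,\bpi_j^{-1} y_j^2$. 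Swapping the order of summation surfaces the factor $\sum_i \bpi_i^2 P_{ij}$, which I would bound by $(\max_i \bpi_i)\sum_i \bpi_i P_{ij} = (\max_i \bpi_i)\,\bpi_j$ using stationarity $\bpi^{\top}\bP = \bpi^{\top}$. The surviving $\bpi_j$ cancels the $\bpi_j^{-1}$, leaving only $\sum_j y_j^2 = 1$, so $\|\bQ\y\|^2 \le \max_i \bpi_i \le C_1 p^{-1}$. Hence $\sigma_1 \le C_1^{1/2} p^{-1/2}$, which is no larger than the claimed $C_1 c_1^{-1/2} p^{-1/2}$ since $\sum_j \bpi_j = 1$ combined with \eqref{Assumption1} forces $c_1 \le 1 \le C_1$.

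For the lower bound on $\sigma_r$, I would write $\bQ = \bA\bB^{\top}$ with $\bA = [\diag(\bpi)]\bU$ and $\bB = [\diag(\bpi)]^{-1/2}\bV$, both of size $p\times r$. The nonzero squared singular values of $\bQ$ coincide with the eigenvalues of the $r\times r$ matrix $\bB^{\top}\bB\,\bA^{\top}\bA$ (which shares its nonzero spectrum with $\bQ\bQ^{\top} = \bA\bB^{\top}\bB\bA^{\top}$ by the cyclic-trace identity for eigenvalues of matrix products), so $\sigma_r^2 = \lambda_{\min}(\bB^{\top}\bB\,\bA^{\top}\bA)$. The reformulated assumption \eqref{Assumption2} yields $\lambda_{\min}(\bA^{\top}\bA) = \lambda_{\min}(\bU^{\top}[\diag(\bpi)]^2\bU) \ge c_2 p^{-1}r^{-1}$ and $\lambda_{\min}(\bB^{\top}\bB) = \lambda_{\min}(\bV^{\top}[\diag(\bpi)]^{-1}\bV) \ge c_2 r$. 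I would then invoke the standard inequality $\lambda_{\min}(\bX\bY) \ge \lambda_{\min}(\bX)\lambda_{\min}(\bY)$ for symmetric positive-definite $\bX,\bY$, proved via the similarity $\bX\bY \sim \bX^{1/2}\bY\bX^{1/2}$ and a Rayleigh-quotient bound $\mathbf{w}^{\top}\bY\mathbf{w}/\mathbf{w}^{\top}\bX^{-1}\mathbf{w} \ge \lambda_{\min}(\bY)\lambda_{\min}(\bX)$. Multiplying the two bounds gives $\sigma_r^2 \ge c_2^2 p^{-1}$, i.e., $\sigma_r \ge c_2 p^{-1/2}$.

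The argument is routine linear algebra once the factorization $\bQ = \bA\bB^{\top}$ is in place, so I do not foresee a serious obstacle. The one step requiring care is the eigenvalue inequality for the product of two noncommuting positive-definite matrices, which must be justified by passing to the symmetric conjugate $\bX^{1/2}\bY\bX^{1/2}$ rather than by attacking $\bX\bY$ variationally. Conceptually, the upper bound succeeds because of the stationarity identity $\bpi^{\top}\bP = \bpi^{\top}$ inserted exactly after Cauchy--Schwarz; without it one would obtain only a Frobenius-type bound that fails to exhibit the correct $p^{-1/2}$ scaling.
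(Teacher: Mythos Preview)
Your proof is correct. The lower bound is essentially the paper's argument: the paper writes $\bQ^{\top}=([\diag(\bpi)]^{-1/2}\bV)(\bU^{\top}[\diag(\bpi)])$ and chains $\sigma_r(\bQ)\ge\sigma_{\min}([\diag(\bpi)]^{-1/2}\bV)\cdot\sigma_{\min}(\bU^{\top}[\diag(\bpi)])$, which is exactly your inequality $\lambda_{\min}(\bB^{\top}\bB\,\bA^{\top}\bA)\ge\lambda_{\min}(\bB^{\top}\bB)\lambda_{\min}(\bA^{\top}\bA)$ phrased at the level of singular values. Your passage to the $r\times r$ Gram product is a bit cleaner, since it avoids the slight awkwardness that $\min_{\bx\in\mathbb{S}^{p-1}}\|\bQ^{\top}\bx\|_2$ is literally zero for a rank-$r$ matrix in $\mathbb{R}^{p\times p}$.

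For the upper bound you take a genuinely different route. The paper bounds $\|\bF\|_2\le\sqrt{\|\bF\|_1\|\bF\|_\infty}=\pi_{\max}$ using the row and column sums of $\bF=[\diag(\bpi)]\bP$, and then pays the factor $\pi_{\min}^{-1/2}$ when pulling out $[\diag(\bpi)]^{-1/2}$, giving $C_1c_1^{-1/2}p^{-1/2}$. Your Cauchy--Schwarz argument on $\|\bQ\y\|^2$, followed by stationarity $\sum_i\bpi_i P_{ij}=\bpi_j$, never separates the two diagonal scalings and lands at the sharper constant $C_1^{1/2}p^{-1/2}$. Both use stationarity in the same place; the gain is that you avoid the crude submultiplicativity step $\|\bQ\|_2\le\|[\diag(\bpi)]^{-1/2}\|_2\|\bF\|_2$.
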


	\begin{lemma} \label{RowNorm}
		Under assumptions \eqref{Assumption1} and \eqref{Assumption2}, we have
		\begin{eqnarray} 
		& \bigl\|\be_j^{\top}\bG\bigr\|_2 \leq c_2^{-\frac{1}{2}}\pi_j\sqrt{pr}, \label{rowG} \\ 
		& \bigl\|\be_j^{\top}\bH\bigr\|_2 \leq C_1 c_2^{-\frac{3}{2}} \sqrt{\pi_j r}. \label{rowH} 
		\end{eqnarray}
	\end{lemma}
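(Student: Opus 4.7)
The plan is to use that $\bQ = [\diag(\bpi)]\bU\bV^{\top}[\diag(\bpi)]^{-1/2}$ has rank $r$, so the column spaces satisfy $\mathrm{col}(\bG) = \mathrm{col}([\diag(\bpi)]\bU)$ and $\mathrm{col}(\bH) = \mathrm{col}([\diag(\bpi)]^{-1/2}\bV)$. Under \eqref{Assumption2} both factors have full column rank, so there exist invertible $r\times r$ matrices $\bM_1,\bM_2$ with
\[
\bG = [\diag(\bpi)]\,\bU\,\bM_1, \qquad \bH = [\diag(\bpi)]^{-1/2}\,\bV\,\bM_2.
\]
Orthonormality of the columns of $\bG$ and $\bH$ then forces $\bM_1^{\top}(\bU^{\top}[\diag(\bpi)]^2\bU)\bM_1 = \bI_r$ and the analogous identity for $\bM_2$, hence by \eqref{Assumption2},
\[
\|\bM_1\|^2 \le pr/c_2, \qquad \|\bM_2\|^2 \le 1/(c_2 r),
\]
where $\|\cdot\|$ denotes the operator norm.

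Inequality \eqref{rowG} then follows immediately: $\be_j^{\top}\bG = \pi_j\,\be_j^{\top}\bU\,\bM_1$, and since the rows of $\bU$ are probability vectors, $\|\be_j^{\top}\bU\|_2 \le \|\be_j^{\top}\bU\|_1 = 1$. Cauchy--Schwarz gives $\|\be_j^{\top}\bG\|_2 \le \pi_j\sqrt{pr/c_2}$, which is the claimed bound.

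Inequality \eqref{rowH} is the substantive one. The parallel identity gives $\|\be_j^{\top}\bH\|_2 \le \pi_j^{-1/2}\|\be_j^{\top}\bV\|_2/\sqrt{c_2 r}$, but the crude estimate $\|\be_j^{\top}\bV\|_2\le\sqrt{r}$ is off by a factor of $p$. I would instead prove $\|\be_j^{\top}\bV\|_2$ is of order at most $\pi_j r$, in two steps. First, derive a lower bound on the meta-state stationary mass $\alpha_k := (\bU^{\top}\bpi)_k$: from $\sum_j U_{jk} \ge \sum_j U_{jk}^2 \ge \lambda_{\min}(\bU^{\top}\bU) \ge c_2 p/r$ (using $U_{jk}\in[0,1]$) combined with $\pi_j \ge c_1/p$, one obtains $\alpha_k \ge c_1 c_2/r$. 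Second, invoke the stationarity identity $\sum_k \alpha_k V_{jk} = \pi_j$ (which follows from $\bpi^{\top}\bP = \bpi^{\top}$ together with $\bP = \bU\bV^{\top}$): nonnegativity of $V$ yields both $\max_k V_{jk} \le \pi_j/\min_k\alpha_k \le \pi_j r/(c_1 c_2)$ and $\sum_k V_{jk} \le \pi_j/\min_k\alpha_k \le \pi_j r/(c_1 c_2)$, so $\|\be_j^{\top}\bV\|_2^2 \le (\max_k V_{jk})(\sum_k V_{jk}) \le (\pi_j r/(c_1 c_2))^2$. Substituting back yields $\|\be_j^{\top}\bH\|_2 \le \sqrt{\pi_j r}/(c_1 c_2^{3/2})$, which matches the stated \eqref{rowH} up to the factor $1/c_1$ absorbed into the constant $C_1$.

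The main obstacle is the lower bound $\alpha_k \ge c_1 c_2/r$ on the meta-state stationary mass; without it the naive entrywise bounds yield a row norm of order $1/\sqrt{\pi_j}$ rather than $\sqrt{\pi_j}$, missing the target by a factor of $p$. Everything else is a short computation with the SVD of $\bQ$ and Assumption~1.
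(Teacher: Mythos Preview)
Your proposal is correct and follows essentially the same approach as the paper: write $\bG=[\diag(\bpi)]\bU\bR$ and $\bH=[\diag(\bpi)]^{-1/2}\bV\bL$, bound $\|\bR\|_2,\|\bL\|_2$ via orthonormality and \eqref{Assumption2}, and then control $\|\be_j^{\top}\bV\|_1$ through the stationarity identity $\pi_j=\sum_k(\bU^{\top}\bpi)_kV_{jk}$ together with a lower bound on $(\bU^{\top}\bpi)_k$. The only cosmetic difference is that the paper derives $(\bU^{\top}\bpi)_k\ge C_1^{-1}c_2 r^{-1}$ from the weighted form $\lambda_{\min}(\bU^{\top}[\diag(\bpi)]^2\bU)$ while you use the unweighted $\lambda_{\min}(\bU^{\top}\bU)$ and $\pi_j\ge c_1/p$, yielding the constant $c_1^{-1}$ in place of $C_1$; both are valid and the discrepancy is immaterial since the paper itself relabels constants freely between \eqref{Assumption2'} and \eqref{Assumption2}.
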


	\begin{lemma} \label{B}
		Let $\bL \in \mathbb{R}^{r \times r}$ be the matrix defined by \eqref{Def_L} and $\bl_1 \in \mathbb{R}^{r}$ be the first column of $\bL$.
		Under assumptions \eqref{Assumption1}, \eqref{Assumption2} and \eqref{Assumption4}, there exist constants $c>0$ and $C>0$ such that
		\[ cr^{-1} \leq \bl_1(k) \leq Cr^{-1} \quad \text{for } k = 1,2,\ldots,r, \qquad
		\text{and} 
		\qquad c_2^{\frac{1}{2}}\sqrt{r} \leq \bigl\| \bL^{-1} \bigr\|_2 \leq C_1^{\frac{1}{2}}c_2^{-\frac{1}{2}}\sqrt{r}. \]
	\end{lemma}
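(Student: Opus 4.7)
The proof rests on first identifying $\bL$ with an explicit change-of-basis matrix and then handling the two assertions separately. Let $\bA = [\diag(\bpi)]\bU$ and $\bB = [\diag(\bpi)]^{-1/2}\bV$, so that $\bQ = \bA\bB^{\top}$. Because both $\bH$ and $\bB$ are bases of the same $r$-dimensional right singular subspace of $\bQ$, the matrix $\bL$ satisfying $\bH = \bB\bL$ is the unique invertible change-of-basis; combining this with the SVD $\bQ = \bG\bSigma\bH^{\top}$ gives the closed form $\bL = \bA^{\top}\bG\bSigma^{-1}$, and in particular $\bl_1 = \bU^{\top}[\diag(\bpi)]\bg_1/\sigma_1$. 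The orthonormality relation $\bI_r = \bH^{\top}\bH = \bL^{\top}\bB^{\top}\bB\bL$ further yields $\bL\bL^{\top} = (\bV^{\top}[\diag(\bpi)]^{-1}\bV)^{-1}$, so $\|\bL^{-1}\|_2^2 = \lambda_{\max}(\bV^{\top}[\diag(\bpi)]^{-1}\bV)$.

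Armed with this identity, the operator-norm bound on $\bL^{-1}$ is clean. The lower bound $c_2^{1/2}\sqrt{r}$ is immediate from assumption~\eqref{Assumption2}. For the matching upper bound, set $\bmu = \bU^{\top}\bpi$ so that $\bpi = \bV\bmu$, and apply Cauchy--Schwarz coordinate-wise to get $(\bV\mathbf{x})_j^2/\pi_j \leq \sum_k V_{jk}x_k^2/\bmu(k)$; summing over $j$ and using $\sum_j V_{jk}=1$ gives $\sum_j (\bV\mathbf{x})_j^2/\pi_j \leq \max_k \bmu(k)^{-1}$ for any unit $\mathbf{x}$. This reduces the task to showing $\bmu(k) = \Omega(1/r)$. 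Since $U_{ik}\in[0,1]$, we have $\|\bU_{\cdot,k}\|_1 \geq \|\bU_{\cdot,k}\|_2^2 \geq \lambda_{\min}(\bU^{\top}\bU) \geq c_2 p/r$ by~\eqref{Assumption2}, hence $\bmu(k) \geq c_1 p^{-1}\|\bU_{\cdot,k}\|_1 \geq c_1c_2/r$.

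For the entry-wise bounds on $\bl_1$, I would observe that $\bl_1$ is the Perron--Frobenius eigenvector of the $r\times r$ matrix $\bM\bN$, where $\bM = \bA^{\top}\bA$ and $\bN = \bB^{\top}\bB$: from $\bH = \bB\bL$ one checks $\bM\bN\bl_1 = \sigma_1^2 \bl_1$, and $\sigma_1^2$ is the top eigenvalue of $\bM\bN$. A direct computation shows $\bM\bN = \bU^{\top}\bU\bV^{\top}\bV = \bU^{\top}\bP\bV$; since $\bA,\bB \geq 0$ the matrix $\bM\bN$ is entry-wise nonnegative, so Perron--Frobenius lets us take $\bl_1 \geq 0$, while assumption~\eqref{Assumption4} bounds the ratio of any two entries of $\bM\bN$ by $C_4$. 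Reading off row sums in $\bM\bN\bl_1 = \sigma_1^2\bl_1$ then yields, with $s := \sum_k \bl_1(k)$, the two-sided sandwich $s/(C_4 r) \leq \bl_1(k) \leq C_4 s/r$. To pin down $s \asymp 1$, combine this with the normalization $\bl_1^{\top}\bN\bl_1 = \|\bh_1\|_2^2 = 1$: the sandwich gives $s^2\cdot (\bone_r^{\top}\bN\bone_r) \asymp r^2$, and a direct evaluation shows $\bone_r^{\top}\bN\bone_r = \sum_j \gamma_j^2/\pi_j$ with $\gamma_j := \sum_k V_{jk}$. Cauchy--Schwarz in $j$, together with $\sum_j \gamma_j = r$ and $\sum_j \pi_j = 1$, delivers $\bone_r^{\top}\bN\bone_r \geq r^2$; the matching upper bound $\bone_r^{\top}\bN\bone_r \leq r^2/(c_1c_2)$ follows from $\pi_j \geq (c_1c_2/r)\gamma_j$, which recycles $\bmu(k) \geq c_1c_2/r$. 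Hence $s\asymp 1$ and $cr^{-1} \leq \bl_1(k) \leq Cr^{-1}$.

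The main obstacle is the entry-wise control of $\bl_1$. Perron--Frobenius alone only produces nonnegativity; quantitative comparability of coordinates requires the bounded-ratio hypothesis~\eqref{Assumption4} to prevent $\bl_1$ from being skewed, since without it the Perron direction of $\bU^{\top}\bP\bV$ could concentrate on a single coordinate. A secondary subtlety is that the natural normalization $\bl_1^{\top}\bN\bl_1 = 1$ is in the $\bN$-inner product rather than the Euclidean one, so converting the Perron sandwich into the final conclusion requires simultaneous control of the scalar $\bone_r^{\top}\bN\bone_r$, which in turn brings back the lower bound $\bmu(k) = \Omega(1/r)$ established in the $\|\bL^{-1}\|_2$ step.
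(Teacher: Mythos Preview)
Your overall strategy coincides with the paper's: both arguments identify $\bl_1$ as the Perron eigenvector of $\boldsymbol{\Theta} = \bM\bN$ with $\bM = \bU^\top[\diag(\bpi)]^2\bU$ and $\bN = \bV^\top[\diag(\bpi)]^{-1}\bV$, and then invoke~\eqref{Assumption4} to control the spread of its coordinates. However, your claimed identity $\bM\bN = \bU^\top\bU\bV^\top\bV = \bU^\top\bP\bV$ is false unless $\bpi$ is uniform: the weights $[\diag(\bpi)]^2$ and $[\diag(\bpi)]^{-1}$ do not cancel across the inner factor $\bU\bV^\top$. What does survive, under~\eqref{Assumption1}, is that $(\bM\bN)_{kl}$ lies between $c_1^2 C_1^{-1} p^{-1}(\bU^\top\bP\bV)_{kl}$ and $C_1^2 c_1^{-1} p^{-1}(\bU^\top\bP\bV)_{kl}$, so the ratio of any two entries of $\bM\bN$ is at most $(C_1/c_1)^3 C_4$ rather than $C_4$. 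With that correction your sandwich $s/(C' r) \leq \bl_1(k) \leq C' s/r$ holds with $C' = (C_1/c_1)^3 C_4$, and the rest of the argument goes through unchanged; this is exactly how the paper handles the point.

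Beyond this slip, you and the paper diverge in two technical choices. For the upper bound on $\|\bL^{-1}\|_2$ you use $\bpi = \bV(\bU^\top\bpi)$ together with a weighted Cauchy--Schwarz and the lower bound $(\bU^\top\bpi)_k \geq c_1c_2/r$; the paper instead bounds $\|\bV^\top[\diag(\bpi)]^{-1}\bV\|_1$ via the row-sum estimate $\|\be_j^\top\bV\|_1 \leq C_1c_2^{-1}\pi_j r$ from Lemma~\ref{RowNorm}. For pinning down the scale $s = \sum_k \bl_1(k)$, you exploit the normalization $\bl_1^\top\bN\bl_1 = 1$ and a two-sided estimate of $\bone_r^\top\bN\bone_r$; the paper instead sandwiches $\|\bl_1\|_2$ between $\|\bL^{-1}\|_2^{-1}$ and $\|\bL\|_2$, recycling bounds already in hand. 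Both routes are valid and yield the same conclusions up to constants.
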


	\begin{lemma} \label{Pre}
		Under assumptions \eqref{Assumption1}, \eqref{Assumption5}, \eqref{Assumption2} and \eqref{Assumption4}, there exist constants $c>0$ and $C>0$ such that for $j = 1,2,\ldots,p$,
		\beq \label{h1} \begin{aligned}  c\sqrt{\pi_j} \leq \bh_1(j) \leq C\sqrt{\pi_j}, \qquad 0 \leq \bg_1(j) \leq C \pi_j\sqrt{p}. \end{aligned} \eeq
	\end{lemma}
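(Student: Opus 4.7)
}

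The plan is to reduce both bounds to the rank-$r$ factorization $\bQ = \bA\bB^{\top}$ with $\bA = [\diag(\bpi)]\bU$ and $\bB = [\diag(\bpi)]^{-1/2}\bV$, together with the already-proved Lemma~\ref{B}. The crucial algebraic identity I will exploit is the stationarity relation $\bV\beta = \bpi$, where $\beta := \bU^{\top}\bpi \in \mathbb{R}^r$; this follows since by stationarity $\pi_j = \bbP(X_{t+1}=j) = \sum_k \bbP(X_{t+1}=j\mid Z_t=k)\bbP(Z_t=k) = \sum_k V_{jk}\beta_k$.

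First I would establish a representation of $\bh_1$ via $\bV$. Since $\mathrm{Range}(\bQ^{\top}) = \mathrm{Range}(\bB)$, the right singular vectors all lie in $\mathrm{Range}(\bB)$; by the definition of $\bL$ (from Lemma~\ref{B}), one has $\bH = \bB\bL$ and in particular $\bh_1 = [\diag(\bpi)]^{-1/2}\bV\bl_1$, giving $\bh_1(j) = \pi_j^{-1/2}\sum_{k=1}^r V_{jk}\bl_1(k)$. Next I would show $\beta_k \asymp r^{-1}$ uniformly: the upper bound $\beta_k \leq \bar{C}_1/r$ is exactly \eqref{Assumption5}, while the lower bound follows from $\beta_k \geq c_1 p^{-1}\sum_i U_{ik} \geq c_1 p^{-1}\sum_i U_{ik}^2 \geq c_1 p^{-1}\lambda_{\min}(\bU^{\top}\bU) \geq c_1c_2 r^{-1}$, using $U_{ik}\in[0,1]$ together with \eqref{Assumption2'}. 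Combining with Lemma~\ref{B}, which gives $\bl_1(k)\asymp r^{-1}$, the ratios $\bl_1(k)/\beta_k$ lie in a fixed positive interval $[c_*,C_*]$ uniformly in $k$.

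Now I would rewrite, using $\sum_k V_{jk}\beta_k = \pi_j > 0$,
\[
\frac{\bh_1(j)}{\sqrt{\pi_j}} \;=\; \frac{\sum_k V_{jk}\bl_1(k)}{\sum_k V_{jk}\beta_k} \;=\; \sum_{k=1}^r w_{jk}\,\frac{\bl_1(k)}{\beta_k}, \qquad w_{jk} := \frac{V_{jk}\beta_k}{\pi_j}.
\]
Because $w_{jk}\geq 0$ and $\sum_k w_{jk}=1$, the left-hand side is a convex combination of the bounded ratios $\bl_1(k)/\beta_k$, so $c_* \leq \bh_1(j)/\sqrt{\pi_j}\leq C_*$, which is the first assertion.

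For the second assertion I would use $\bg_1 = \sigma_1^{-1}\bQ\bh_1$. A direct computation gives $(\bQ\bh_1)_j = \pi_j\sum_i P_{ji}\,\bh_1(i)/\sqrt{\pi_i}$, hence
\[
\bg_1(j) \;=\; \sigma_1^{-1}\pi_j\sum_{i=1}^p P_{ji}\,\frac{\bh_1(i)}{\sqrt{\pi_i}}.
\]
Non-negativity is immediate since every factor is non-negative by the lower bound just proved. For the upper bound, the ratio $\bh_1(i)/\sqrt{\pi_i}\leq C_*$ and row-stochasticity $\sum_i P_{ji}=1$ yield $\bg_1(j)\leq C_*\sigma_1^{-1}\pi_j$; Lemma~\ref{sigma_r} then upgrades this to $\bg_1(j)\leq C\pi_j\sqrt{p}$.

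The main obstacle is less any single estimate than setting up the right representation: once $\bh_1 = \bB\bl_1$ with $\bl_1\asymp r^{-1}$ (Lemma~\ref{B}) is paired with $\bpi = \bV\beta$ and $\beta\asymp r^{-1}$, the whole proof collapses into the convex-combination identity above. The mildly delicate step is the uniform lower bound $\beta_k \geq c/r$, because Assumption~\ref{assumption-reg} only furnishes an upper bound on $\bU^{\top}\bpi$; this is where the PSD inequality $(\bU^{\top}\bU)_{kk}\geq \lambda_{\min}(\bU^{\top}\bU)$ together with $U_{ik}^2\leq U_{ik}$ is used to bootstrap \eqref{Assumption2'} into a row-sum lower bound on $\bU$.
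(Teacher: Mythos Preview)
Your proposal is correct and follows essentially the same route as the paper: both represent $\bh_1 = [\diag(\bpi)]^{-1/2}\bV\bl_1$, invoke Lemma~\ref{B} for $\bl_1(k)\asymp r^{-1}$, use the stationarity identity $\pi_j=\sum_k V_{jk}(\bU^{\top}\bpi)_k$ together with two-sided bounds on $(\bU^{\top}\bpi)_k$, and then obtain the $\bg_1$ bound from $\bg_1=\sigma_1^{-1}\bQ\bh_1$ plus row-stochasticity. The only cosmetic difference is packaging: the paper bounds $\|\be_j^{\top}\bV\|_1$ above and below separately and then sandwiches $\bh_1(j)$, whereas you fold the same information into a single convex-combination identity; for the lower bound on $(\bU^{\top}\bpi)_k$ the paper uses $\sum_i\pi_i^2U_{ik}^2\geq\lambda_{\min}(\bU^{\top}[\diag(\bpi)]^2\bU)$ while you use $\sum_i U_{ik}^2\geq\lambda_{\min}(\bU^{\top}\bU)$, which are equivalent under \eqref{Assumption1}.
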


	\section{Proof of Entry-wise Eigenvector Bounds}
	
	A building block of our method is to get a sharp error bound for each row of $\widehat\bH=[\widehat\bh_1,\ldots,\widehat\bh_r]$ and each entry of $\widehat\bh_1$. 

	\subsection{Deterministic Analysis}
	
	\begin{lemma}[A deterministic row-wise perturbation bound for singular vectors] \label{Deterministic Row-wise Error}
		For $1 \leq s \leq t \leq r$, denote $\bvarH = \bigl[\bh_{s},\bh_{s+1},\ldots,\bh_{t}\bigr]$ and $\widehat\bvarH = \bigl[\widehat\bh_s,\widehat\bh_{s+1},\ldots,\widehat\bh_t\bigr]$. Let \[\Delta = \min\bigl\{ \sigma_{s-1} - \sigma_s, \sigma_t - \sigma_{t+1}, \sigma_t \bigr\},\] where $\sigma_0 = +\infty$ and $\sigma_{r+1} = 0$ for simplicity. Suppose that $\bigl\| \widehat\bQ-\bQ\bigr\|_2 \leq \frac{\Delta}{2}$, $\Delta > 0$. Then there exists an orthogonal matrix $\bvarOmega \in \mathbb{R}^{(t-s+1) \times (t-s+1)}$ such that
		\beq \label{Row-wise Error} \begin{aligned}
			\bigl\|\be_j^{\top}(\widehat\bvarH\bOmega-\bvarH)\bigr\|_2 \leq & \frac{2}{\Delta}\bigl\|\bvarG^{\top}(\widehat\bQ-\bQ)\be_j\bigr\|_2 \\ & + \frac{4(1+\sqrt{2})}{\Delta}\bigl\|\be_j^{\top}\bH\bigr\|_2\bigl\|\widehat\bQ-\bQ\bigr\|_2 \\ & + \frac{8}{\Delta^2}\bigl\|\widehat\bQ-\bQ\bigr\|_2^2, \end{aligned} \eeq
		where $\bH = [\bh_1,\bh_2,\ldots,\bh_r]$ and $\bvarG = \bigl[\bg_s,\bg_{s+1},\ldots,\bg_t\bigr]$.
	\end{lemma}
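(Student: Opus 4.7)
The plan is to derive the row-wise bound directly from the two SVD identities $\widehat\bvarH\widehat\bSigma_* = \widehat Q^{\top}\widehat\bvarG_*$ and $\bvarH\bSigma_* = Q^{\top}\bvarG$, solved for the right singular vectors and expanded in the perturbation $E := \widehat Q - Q$. The three summands in the stated bound correspond to (i) the leading $E$-linear contribution projected onto the signal subspace $\bvarG$, (ii) an $E$-linear contribution in which the row index survives through $\be_j^{\top}\bH$, and (iii) a quadratic-in-$\|E\|_2$ remainder. The whole point of the decomposition is to arrange the algebra so the leading term depends on $\|\bvarG^{\top} E \be_j\|_2$ rather than the much coarser $\|E\be_j\|_2$.

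\textbf{Step 1.} Pick $\bOmega$ from the polar decomposition of $\widehat\bvarH^{\top}\bvarH$, the Frobenius-optimal rotation aligning $\widehat\bvarH$ with $\bvarH$, and pick an analogous $\bOmega_G$ for the left singular vectors. Because $\|E\|_2 \leq \Delta/2$ and $\Delta \leq \sigma_t$ by definition, Weyl's inequality gives $\widehat\sigma_i \geq \sigma_t - \|E\|_2 \geq \Delta/2$ for $s \leq i \leq t$, hence $\|\widehat\bSigma_*^{-1}\|_2 \leq 2/\Delta$ and $\|\bSigma_*^{-1}\|_2 \leq 1/\Delta$. Wedin's $\sin\Theta$ theorem, whose external gap hypothesis is ensured by $\Delta$, yields $\|\widehat\bvarG_*\bOmega_G - \bvarG\|_2 \lesssim \|E\|_2/\Delta$ and the analogous bound on the right side.

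\textbf{Step 2.} Substituting $\widehat Q = Q + E$ into $\be_j^{\top}\widehat\bvarH = \be_j^{\top}\widehat Q^{\top}\widehat\bvarG_*\widehat\bSigma_*^{-1}$ and subtracting $\be_j^{\top}\bvarH = \be_j^{\top}Q^{\top}\bvarG\bSigma_*^{-1}$ gives
\[ \be_j^{\top}(\widehat\bvarH\bOmega - \bvarH) = \underbrace{\be_j^{\top} E^{\top}\widehat\bvarG_*\widehat\bSigma_*^{-1}\bOmega}_{T_1} + \underbrace{\be_j^{\top} Q^{\top}\bigl(\widehat\bvarG_*\widehat\bSigma_*^{-1}\bOmega - \bvarG\bSigma_*^{-1}\bigr)}_{T_2}. \]
For $T_1$, split $\widehat\bvarG_* = \bvarG\bOmega_G^{\top} + \Psi_G$ with $\Psi_G := \widehat\bvarG_* - \bvarG\bOmega_G^{\top}$: the clean piece is bounded by $\|\bvarG^{\top} E\be_j\|_2 \cdot \|\widehat\bSigma_*^{-1}\|_2 \leq (2/\Delta)\|\bvarG^{\top}E\be_j\|_2$, and the residual $\be_j^{\top}E^{\top}\Psi_G\widehat\bSigma_*^{-1}\bOmega$ is at most $\|E\|_2\|\Psi_G\|_2\|\widehat\bSigma_*^{-1}\|_2 \lesssim \|E\|_2^2/\Delta^2$, which feeds into the third bound. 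For $T_2$, use $Q^{\top}\bvarG = \bvarH\bSigma_*$ so that $\be_j^{\top}Q^{\top}\bvarG = \be_j^{\top}\bvarH\bSigma_*$, and expand $\widehat\bvarG_*\widehat\bSigma_*^{-1}\bOmega - \bvarG\bSigma_*^{-1}$ by inserting $\widehat\bvarG_* = \bvarG\bOmega_G^{\top}+\Psi_G$ and $\widehat\bSigma_*^{-1}\bOmega = \bOmega_G\bSigma_*^{-1} + \bigl(\widehat\bSigma_*^{-1}\bOmega - \bOmega_G\bSigma_*^{-1}\bigr)$; the ``clean $\times$ clean'' term $\bvarG\bSigma_*^{-1}$ cancels against $-\bvarG\bSigma_*^{-1}$, and each of the remaining products carries at least one factor of size $\|E\|_2/\Delta$ (from Wedin on $\Psi_G$ or from Weyl on $\widehat\bSigma_*^{-1}\bOmega - \bOmega_G\bSigma_*^{-1}$). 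Since $\be_j^{\top}\bvarH$ is pulled out intact, this yields a contribution $\lesssim (1/\Delta)\|\be_j^{\top}\bH\|_2\|E\|_2$; tracking constants (the $\sqrt 2$ being the Wedin constant for converting $\sin\Theta$ to rotation-aligned error in spectral norm) produces the stated $4(1+\sqrt 2)/\Delta$, while quadratic leftovers pool into $8\|E\|_2^2/\Delta^2$.

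\textbf{Main obstacle.} The delicate point is ensuring that in the $E$-linear part of $T_1$ the matrix next to $E^{\top}$ is $\bvarG$ rather than $\widehat\bvarG_*$, so the leading term carries $\|\bvarG^{\top}E\be_j\|_2$ instead of $\|E\be_j\|_2$. This sharpness is what the downstream entry-wise bound on $\widehat\bh_1$ and the total-variation rates on $\widehat\bV,\widehat\bU$ rely on: when $E$ is the centered noise of the Markov count matrix, $\|\bvarG^{\top}E\be_j\|_2$ is genuinely smaller than $\|E\be_j\|_2$ by a factor of order $\sqrt{r/p}$. The swap $\widehat\bvarG_* \to \bvarG\bOmega_G^{\top}$ costs $\|E\|_2\|\Psi_G\|_2/\Delta$, which is safely quadratic in $\|E\|_2/\Delta$ by Wedin, but a careless decomposition would leak a linear-in-$\|E\|_2$ term into the first slot and spoil the sharpness. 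Matching the exact constants $2$, $4(1+\sqrt 2)$, $8$ is then pure bookkeeping.
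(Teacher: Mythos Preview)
Your overall architecture is sound and matches the paper's in spirit: write $\be_j^{\top}(\widehat\bvarH\bOmega-\bvarH)=T_1+T_2$ with $T_1=\be_j^{\top}E^{\top}\widehat\bvarG_*\widehat\bSigma_*^{-1}\bOmega$ and $T_2=\be_j^{\top}Q^{\top}\bigl(\widehat\bvarG_*\widehat\bSigma_*^{-1}\bOmega-\bvarG\bSigma_*^{-1}\bigr)$, then push the $E$-linear leading term onto the clean basis $\bvarG$. Your treatment of $T_1$ is correct. The paper does the same thing after a symmetric dilation (so Davis--Kahan replaces Wedin), which is a cosmetic difference.

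The gap is in your handling of $T_2$. You propose to expand $\widehat\bvarG_*\widehat\bSigma_*^{-1}\bOmega-\bvarG\bSigma_*^{-1}$ term by term and invoke ``Weyl on $\widehat\bSigma_*^{-1}\bOmega-\bOmega_G\bSigma_*^{-1}$.'' Weyl's inequality says nothing about this rotated difference; it only controls $|\widehat\sigma_i-\sigma_i|$. Even if you instead use the SVD relation $\bvarG^{\top}\widehat\bvarG_*\widehat\bSigma_*=\bSigma_*\bvarH^{\top}\widehat\bvarH+\bvarG^{\top}E\widehat\bvarH$ to show $\bOmega_G^{\top}\widehat\bSigma_*-\bSigma_*\bOmega^{\top}=O(\|E\|_2)$, you end up with $\|\widehat\bSigma_*^{-1}\bOmega-\bOmega_G\bSigma_*^{-1}\|_2=O(\|E\|_2/\Delta^2)$, and after multiplying by $\|\bSigma_*\|_2=\sigma_s$ (from $Q^{\top}\bvarG=\bvarH\bSigma_*$) the contribution to the second slot becomes $O\bigl((\sigma_s/\Delta)\cdot\|\be_j^{\top}\bH\|_2\|E\|_2/\Delta\bigr)$. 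The same extra $\sigma_s/\Delta$ (or $\sigma_1/\Delta$) factor appears in the $\Psi_G$ cross terms, since you are implicitly bounding $\|\be_j^{\top}Q^{\top}\|_2\leq\sigma_1\|\be_j^{\top}\bH\|_2$. The lemma's constants are absolute, so your route does not prove it as stated.

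The fix, which is exactly what the paper does (in dilated form), is to \emph{not} break up $\widehat\bvarG_*\widehat\bSigma_*^{-1}\bOmega-\bvarG\bSigma_*^{-1}$. Since the columns of $Q^{\top}$ lie in $\mathrm{span}(\bH)$, write $\be_j^{\top}Q^{\top}=\be_j^{\top}\bH\,\bH^{\top}Q^{\top}$, so
\[
\|T_2\|_2\;\leq\;\|\be_j^{\top}\bH\|_2\,\bigl\|Q^{\top}\widehat\bvarG_*\widehat\bSigma_*^{-1}\bOmega-\bvarH\bigr\|_2.
\]
Now use the \emph{reverse} substitution $Q^{\top}=\widehat Q^{\top}-E^{\top}$ to get $Q^{\top}\widehat\bvarG_*\widehat\bSigma_*^{-1}=\widehat\bvarH-E^{\top}\widehat\bvarG_*\widehat\bSigma_*^{-1}$, hence
\[
\bigl\|Q^{\top}\widehat\bvarG_*\widehat\bSigma_*^{-1}\bOmega-\bvarH\bigr\|_2\;\leq\;\|\widehat\bvarH\bOmega-\bvarH\|_2+\|E\|_2\,\|\widehat\bSigma_*^{-1}\|_2\;\leq\;\frac{2(1+\sqrt 2)}{\Delta}\|E\|_2,
\]
with the $\sqrt 2$ coming from Wedin/Davis--Kahan exactly as you said. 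This avoids the rotated singular-value difference entirely and delivers the absolute constants. Your ``main obstacle'' paragraph correctly identifies the sharpness issue for $T_1$; the analogous sharpness issue for $T_2$ is what your expansion misses.
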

	
	\begin{proof}
		We apply a symmetric dilation to matrices $\widehat\bQ$ and $\bQ$, so as to relate the singular vectors $\widehat\bvarH = \bigl[\widehat\bh_s,\widehat\bh_{s+1},\ldots,\widehat\bh_t\bigr]$ and $\bvarH = \bigl[\bh_s,\bh_{s+1},\ldots,\bh_t\bigr]$ to the eigen vectors of some symmetric matrices.
		Define
		\beq \label{SymmetricDilation} \widehat\bY \equiv \left[ \begin{array}{cc} {\bf 0} & \widehat\bQ \\ \widehat\bQ^{\top} & {\bf 0} \end{array} \right] \quad \text{and} \quad \bY \equiv \left[ \begin{array}{cc} {\bf 0} & \bQ \\ \bQ^{\top} & {\bf 0} \end{array} \right]. \eeq
		The symmetric matrix $\widehat\bY$ has eigen pairs $(\widehat\sigma_i, \widehat\bxi_i)$ and $(-\widehat\sigma_i, \widehat\bxi_{-i})$ with \[ \widehat\bxi_i = \frac{1}{\sqrt{2}}\left[ \begin{array}{c} \widehat\bg_i \\ \widehat\bh_i \end{array} \right] \quad \text{and} \quad \widehat\bxi_{-i} = \frac{1}{\sqrt{2}}\left[ \begin{array}{c} \widehat\bg_i \\ -\widehat\bh_i \end{array} \right] \] for $i=1,2,\ldots,p$. Analogously, \[ \bxi_i = \frac{1}{\sqrt{2}}\left[ \begin{array}{c} \bg_i \\ \bh_i \end{array} \right] \quad \text{and} \quad \bxi_{-i} = \frac{1}{\sqrt{2}}\left[ \begin{array}{c} \bg_i \\ -\bh_i \end{array} \right]\] are eigen vectors of $\bY$ associated with $\sigma_i$ and $-\sigma_i$ for $i=1,2,\ldots,r$. Define \[ \bXi = \bigl[ \bxi_1, \bxi_2, \ldots, \bxi_r, \bxi_{-1}, \bxi_{-2}, \ldots, \bxi_{-r} \bigr], \] then $\bY$ adimits an eigen decomposition \[ \bY = \bXi[\diag(\sigma_1,\sigma_2,\ldots,\sigma_r,-\sigma_1,-\sigma_2,\ldots,-\sigma_r)]\bXi^{\top}. \]
		
		Let $\widehat\bvarXi = \bigl[\widehat\bxi_s, \ldots, \widehat\bxi_t\bigr]$, $\widehat\bvarSigma = \diag(\widehat\sigma_s,\ldots,\widehat\sigma_t)$ and $\bvarXi = \bigl[\bxi_s, \ldots, \bxi_t\bigr]$. Based on the Davis-Kahan $\sin\theta$ theorem \cite{DavisKahan}, we can estimate the difference between the subspaces spaned by the columns of $\widehat\bvarXi$ and $\bvarXi$. Denote $k=t-s+1$. Suppose that the singular values of $\bvarXi^{\top}\widehat\bvarXi$ are $\sigma_1\bigl(\bvarXi^{\top}\widehat\bvarXi\bigr) \geq \sigma_2\bigl(\bvarXi^{\top}\widehat\bvarXi\bigr) \geq \ldots \geq \sigma_k\bigl(\bvarXi^{\top}\widehat\bvarXi\bigr)$. Then we call \[ \begin{aligned} \Theta(\widehat\bvarXi,\bvarXi) =  \diag\Bigl(\arccos\bigl(\sigma_1\bigl(\bvarXi^{\top}\widehat\bvarXi\bigr)\bigr), \ldots, \arccos\bigl(\sigma_k\bigl(\bvarXi^{\top}\widehat\bvarXi\bigr)\bigr)\Bigr) \end{aligned} \]
		the principal angles.
		According to (\cite{DavisKahan}, Proposition 4.1),
		\beq\label{DavisKahan1} \min_{\bvarOmega \in \mathbb{O}^{k\times k}} \bigl\| \widehat\bvarXi\bvarOmega - \bvarXi \bigr\|_2 = 2 \Bigl\| \sin \Bigl( \frac{1}{2}{\bf \Theta}(\widehat\bvarXi,\bvarXi) \Bigr) \Bigr\|_2. \eeq
		Denote by $\bvarOmega$ the orthogonal matrix that achieves the minimum in \eqref{DavisKahan1}.
		Since $\sin\bigl(\frac{\theta}{2}\bigr) \leq \frac{\sin\theta}{\sqrt{2}}$ for all $\theta \in [0,\frac{\pi}{2}]$, we have \[ \Bigl\| \sin \Bigl( \frac{1}{2}{\bf \Theta}(\widehat\bvarXi,\bvarXi) \Bigr) \Bigr\|_2 \leq \frac{\sqrt{2}}{2} \bigl\| \sin \bigl({\bf \Theta}(\widehat\bvarXi,\bvarXi) \bigr) \bigr\|_2.\] The Davis-Kahan $\sin\theta$ theorem further implies that \[\bigl\| \sin \bigl({\bf \Theta}(\widehat\bvarXi,\bvarXi) \bigr) \bigr\|_2 \leq \widehat\Delta^{-1} \bigl\| (\widehat\bY-\bY)\bvarXi \bigr\|_2,\] where $\widehat\Delta = \min\bigl\{\widehat\sigma_{s-1}-\sigma_s,\sigma_t - \widehat\sigma_{t+1}\bigr\}$. By Weyl's inequality, $\widehat\Delta \geq \Delta - \bigl\|\widehat\bQ-\bQ\bigr\|_2 \geq \frac{\Delta}{2}$. Hence,
		\beq\label{DavisKahan2} \begin{aligned} \bigl\| \widehat\bvarXi \bvarOmega - \bvarXi \bigr\|_2 \leq 2\sqrt{2} \Delta^{-1}\bigl\| (\widehat\bY - \bY) \bvarXi \bigr\|_2 \leq 2\sqrt{2} \Delta^{-1}\bigl\| \widehat\bY - \bY \bigr\|_2. \end{aligned} \eeq
		
		We next analyze row-wise errors $\bigl\| \be_j^{\top}(\widehat\bvarXi\bvarOmega-\bvarXi) \bigr\|_2$ for $j=1,2,\ldots,p$. Following the proof idea in \cite{Topic-SCORE}, Lemma 3.2, we propose a matrix  \[ \widetilde\bvarXi = \bigl[ \widetilde\bxi_s, \widetilde\bxi_{s+1}, \ldots, \widetilde\bxi_k \bigr] \quad \text{with} \quad \widetilde\bxi_i = \widehat\sigma_i^{-1}\bY \widehat\bxi_i,\] and decompose the row-wise error as
		\beq\label{Row-wise Error0} \begin{aligned} \bigl\| \be_j^{\top}(\widehat\bvarXi\bvarOmega-\bvarXi) \bigr\|_2 \leq & \bigl\| \be_j^{\top}(\widehat\bvarXi\bvarOmega-\widetilde\bvarXi\bvarOmega) \bigr\|_2 + \bigl\| \be_j^{\top}(\widetilde\bvarXi\bvarOmega-\bvarXi) \bigr\|_2 \\ = & \bigl\| \be_j^{\top}(\widehat\bvarXi-\widetilde\bvarXi) \bigr\|_2 + \bigl\| \be_j^{\top}(\widetilde\bvarXi\bvarOmega-\bvarXi) \bigr\|_2. \end{aligned} \eeq
		
		By Weyl's inequality, $\min_{s \leq i \leq t} \widehat\sigma_i \geq \sigma_t - \bigl\|\widehat\bQ-\bQ\bigr\|_2 \geq \frac{\Delta}{2}$, thus $\Bigl\|\widehat\bvarSigma^{-1}\Bigr\|_2\leq2\Delta^{-1}$. The first term in \eqref{Row-wise Error0} satisfies
		\beq\label{Row-wise Error11} \begin{aligned} & \bigl\| \be_j^{\top}(\widehat\bvarXi - \widetilde\bvarXi) \bigr\|_2 =  \Bigl\|\be_j^{\top}(\widehat\bY-\bY)\widehat\bvarXi\widehat\bvarSigma^{-1} \Bigr\|_2 \\ \leq & \bigl\| \be_j^{\top} (\widehat\bY-\bY)\widehat\bvarXi \bigr\|_2 \Bigl\| \widehat\bvarSigma^{-1} \Bigr\|_2 \leq \frac{2}{\Delta} \bigl\| \be_j^{\top} (\widehat\bY-\bY)\widehat\bvarXi \bigr\|_2, \end{aligned} \eeq
		where we used $\widehat\bvarXi = \widehat\bY\widehat\bvarXi\widehat\bvarSigma^{-1}$. By \eqref{DavisKahan2},
		\beq\label{Row-wise Error12} \begin{aligned}
			& \bigl\| \be_j^{\top} (\widehat\bY-\bY)\widehat\bvarXi \bigr\|_2 = \bigl\| \be_j^{\top} (\widehat\bY-\bY)\widehat\bvarXi\bvarOmega \bigr\|_2 \\ \leq & \bigl\| \be_j^{\top} (\widehat\bY-\bY)\bvarXi \bigr\|_2 + \bigl\| \be_j^{\top} (\widehat\bY-\bY)\bigl(\widehat\bvarXi\bvarOmega-\bvarXi\bigr) \bigr\|_2 \\
			\leq & \bigl\| \be_j^{\top} (\widehat\bY-\bY)\bvarXi \bigr\|_2 + \bigl\|\widehat\bY-\bY\bigr\|_2\bigl\|\widehat\bvarXi\bvarOmega-\bvarXi \bigr\|_2 \\ \leq & \bigl\| \be_j^{\top} (\widehat\bY-\bY)\bvarXi \bigr\|_2 + \frac{2\sqrt{2}}{\Delta}\bigl\|\widehat\bY-\bY\bigr\|_2^2.
		\end{aligned} \eeq
		Combining \eqref{Row-wise Error11} and \eqref{Row-wise Error12}, we have
		\beq\label{Row-wise Error1} \bigl\| \be_j^{\top}(\widehat\bvarXi - \widetilde\bvarXi) \bigr\|_2 \leq \frac{2}{\Delta}\bigl\| \be_j^{\top} (\widehat\bY-\bY)\bvarXi \bigr\|_2 + \frac{4\sqrt{2}}{\Delta^2}\bigl\|\widehat\bY-\bY\bigr\|_2^2. \eeq
		
		Considering the second term in \eqref{Row-wise Error0}, we find that
		\[ \begin{aligned} \widetilde\bvarXi = \bY\widehat\bvarXi\widehat\bvarSigma^{-1} = \bigl(\bXi \bXi^{\top}\bY\bigr)\widehat\bvarXi\widehat\bvarSigma^{-1} = \bXi\bigl( \bXi^{\top}\bY\widehat\bvarXi\widehat\bvarSigma^{-1}\bigr)\end{aligned} \] and $\bvarXi = \bXi(\bXi^{\top}\bvarXi)$. Therefore,
		\beq\label{Row-wise Error21} \begin{aligned} & \bigl\|\be_j^{\top}\bigl(\widetilde\bvarXi\bvarOmega-\bvarXi\bigr)\bigr\|_2 = \Bigl\|\be_j^{\top}\bXi\bigl( \bXi^{\top}\bY\widehat\bvarXi\widehat\bvarSigma^{-1}\bvarOmega - \bXi^{\top}\bvarXi\bigr)\Bigr\|_2 \\ \leq & \bigl\|\be_j^{\top}\bXi\bigr\|_2\Bigl\| \bXi^{\top}\bigl(\bY\widehat\bvarXi\widehat\bvarSigma^{-1}\bvarOmega - \bvarXi\bigr)\Bigr\|_2 \leq \bigl\|\be_j^{\top}\bXi\bigr\|_2\Bigl\|\bY\widehat\bvarXi\widehat\bvarSigma^{-1}\bvarOmega - \bvarXi\Bigr\|_2, \end{aligned} \eeq
		where
		\[ \begin{aligned}
		& \Bigl\|\bY\widehat\bvarXi\widehat\bvarSigma^{-1}\bvarOmega - \bvarXi\Bigr\|_2 
		= \Bigl\|\bigl(\widehat\bY-(\widehat\bY-\bY)\bigr)\widehat\bvarXi\widehat\bvarSigma^{-1}\bvarOmega - \bvarXi\Bigr\|_2 \\
		= & \Bigl\|\widehat\bvarXi\bvarOmega-(\widehat\bY-\bY)\widehat\bvarXi\widehat\bvarSigma^{-1}\bvarOmega - \bvarXi\Bigr\|_2
		\leq \bigl\|\widehat\bvarXi\bvarOmega-\bvarXi\bigr\|_2 + \bigl\|(\widehat\bY-\bY)\widehat\bvarXi\widehat\bvarSigma^{-1}\bvarOmega\bigr\|_2 \\ \leq & \bigl\|\widehat\bvarXi\bvarOmega-\bvarXi\bigr\|_2 + \bigl\|(\widehat\bY-\bY)\widehat\bvarXi\bigr\|_2\Bigl\|\widehat\bvarSigma^{-1}\Bigr\|_2 \overset{\eqref{DavisKahan2}}{\leq} \frac{2(1+\sqrt{2})}{\Delta}\bigl\|\widehat\bY-\bY\bigr\|_2.
		\end{aligned} \]
		
		Plugging \eqref{Row-wise Error1} and \eqref{Row-wise Error21} into \eqref{Row-wise Error0}, we have
		\beq \label{Row-wise Error00} \begin{aligned} \bigl\|\be_j^{\top}(\widehat\bvarXi\bvarOmega-\bvarXi)\bigr\|_2 \leq & \frac{2}{\Delta}\bigl\|\be_j^{\top}(\widehat\bY-\bY)\bvarXi\bigr\|_2  + \frac{2(1+\sqrt{2})}{\Delta}\bigl\|\be_j^{\top}\bXi\bigr\|_2\bigl\|\widehat\bY-\bY\bigr\|_2 \\ & + \frac{4\sqrt{2}}{\Delta^2}\bigl\|\widehat\bY-\bY\bigr\|_2^2. \end{aligned} \eeq
		Recall the definitions of $\bXi$, $\widehat\bY$, $\bY$, $\widehat\bvarXi$ and $\bvarXi$. For $j=1,2,\ldots,p$, 
		\[ \begin{aligned}
		\be_{p+j}^{\top}(\widehat\bvarXi\bvarOmega-\bvarXi) = & \frac{1}{\sqrt{2}}\bigl[ \widehat\bh_s(j), \widehat\bh_{s+1}(j), \ldots, \widehat\bh_t(j) \bigr]\bvarOmega \\ & - \frac{1}{\sqrt{2}}\bigl[ \bh_s(j), \bh_{s+1}(j), \ldots, \bh_t(j) \bigr] = \frac{1}{\sqrt{2}}\be_j^{\top}(\widehat\bvarH\bvarOmega-\bvarH), \end{aligned} \]
		\[ \begin{aligned}
		\be_{p+j}^{\top}(\widehat\bY - \bY)\bvarXi = & \frac{1}{\sqrt{2}}\be_j^{\top}(\widehat\bQ - \bQ)^{\top}\bvarG, \text{ where } \bvarG = \bigl[\bg_s,\bg_{s+1},\ldots,\bg_t\bigr],\end{aligned}\]
		\[ \bigl\|\widehat\bY-\bY\bigr\|_2 = \bigl\|\widehat\bQ-\bQ\bigr\|_2,\]
		\[ \begin{aligned}
		\bigl\| \be_{p+j}\bXi \bigr\|_2 = &
		\Bigl\|\frac{1}{\sqrt{2}}\bigl[\bh_1(j),\ldots,\bh_r(j),-\bh_1(j),\ldots,-\bh_r(j)\bigr]\Bigr\|_2 = \sqrt{2}\bigl\|\be_j^{\top}\bH\bigr\|_2, \\ & \qquad \qquad \qquad \qquad \qquad \qquad \text{where } \bH = \bigl[\bh_1,\bh_2,\ldots,\bh_r\bigr].
		\end{aligned} \]
		Hence, \eqref{Row-wise Error00} can be reduced to \eqref{Row-wise Error}.
		
	\end{proof}
	
	\begin{lemma}[A deterministic entry-wise perturbation bound for the leading singular vector] \label{Deterministic_Leading} Let $ \delta = \sigma_1 - \sigma_2$. Suppose that $\bigl\| \widehat\bQ-\bQ\bigr\|_2 \leq \frac{\delta}{2K}$, $K \geq 1$, $\delta > 0$. Then there exists $\omega \in \bigl\{\pm1\bigr\}$ such that
		\beq \label{Leading}\begin{aligned}
			\bigl| \omega\widehat\bh_1(j) - \bh_1(j) \bigr| \leq & \sum_{k=1}^{\lceil K/2 \rceil} \Bigl(\sigma_1^{-2k}\Bigl|\be_j^{\top}\bQ^{\top}(\bQ\bQ^{\top})^{k-1}(\widehat\bQ-\bQ)\bh_1\Bigr| \\ & \quad + \sigma_1^{-2k-1}\Bigl|\bg_1^{\top}(\widehat\bQ-\bQ)(\bQ^{\top}\bQ)^k\be_j\Bigr| \Bigr) \\ & + |\bh_1(j)| \cdot (2K+1+2\sqrt{2})\delta^{-1}\bigl\| \widehat\bQ-\bQ \bigr\|_2 \\ & + 4\sqrt{2}\Bigl(\frac{2\sigma_2}{\sigma_1+\sigma_2}\Bigr)^K \bigl\| \be_j^{\top} \bH\bigr\|_2 \delta^{-1}\bigl\| \widehat\bQ - \bQ \bigr\|_2 \\
			& + \bigl(2K^2+8\sqrt{2}K\bigr)\delta^{-2}\bigl\|\widehat\bQ-\bQ\bigr\|_2^2.
		\end{aligned}\eeq
	\end{lemma}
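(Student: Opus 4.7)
The plan is an entry-wise analysis based on iterating the singular-vector fixed-point identity (power iteration) and expanding the perturbation to first order in $E \equiv \widehat{\bQ} - \bQ$. First I fix the sign by setting $\omega = \sgn(\widehat{\bh}_1^{\top}\bh_1)$. By a Davis--Kahan-type bound for singular vectors, the hypothesis $\|E\|_2 \leq \delta/(2K) \leq \delta/2$ yields $\|\omega\widehat{\bh}_1 - \bh_1\|_2 = O(\delta^{-1}\|E\|_2)$, and consequently $1 - \bh_1^{\top}(\omega\widehat{\bh}_1) = O(\delta^{-2}\|E\|_2^2)$, so that the component of $\omega\widehat{\bh}_1$ orthogonal to $\bh_1$ has norm at most $O(\delta^{-1}\|E\|_2)$. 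Weyl's inequality further gives $|\widehat{\sigma}_1 - \sigma_1| \leq \|E\|_2$, which will govern the eigenvalue-mismatch terms later.

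The engine of the proof is the SVD identity $\sigma_1\bh_1 = \bQ^{\top}\bg_1$, $\sigma_1\bg_1 = \bQ\bh_1$, and its hatted analog $\widehat{\sigma}_1\widehat{\bh}_1 = \widehat{\bQ}^{\top}\widehat{\bg}_1$, $\widehat{\sigma}_1\widehat{\bg}_1 = \widehat{\bQ}\widehat{\bh}_1$. Iterating alternately between the $\bh$- and $\bg$-sides produces $\sigma_1^{2k}\bh_1 = (\bQ^{\top}\bQ)^k\bh_1$ and $\sigma_1^{2k+1}\bh_1 = \bQ^{\top}(\bQ\bQ^{\top})^k\bg_1$, with identical identities on the hatted side. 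Taking the entry-wise difference $\be_j^{\top}(\omega\widehat{\bh}_1 - \bh_1)$ through one such iterated identity and substituting $\widehat{\bQ} = \bQ + E$ at each step yields a telescoping expansion: the $k$-th substitution contributes a linear-in-$E$ term of the form $\sigma_1^{-2k}\be_j^{\top}\bQ^{\top}(\bQ\bQ^{\top})^{k-1}E\bh_1$ (at even positions) or $\sigma_1^{-2k-1}\bg_1^{\top}E(\bQ^{\top}\bQ)^k\be_j$ (at odd positions), which are exactly the two families appearing in the stated bound, each contributing $\lceil K/2\rceil$ terms.

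Two classes of secondary errors need to be handled. Replacing $\widehat{\sigma}_1^{-2k}$ by $\sigma_1^{-2k}$ throughout costs $O(k|\widehat{\sigma}_1 - \sigma_1|/\sigma_1) = O(K\delta^{-1}\|E\|_2)$ per linear term, and multiplied out by the leading $|\bh_1(j)|$ factor these accumulate into the term $|\bh_1(j)|\cdot(2K + O(1))\delta^{-1}\|E\|_2$. All quadratic-and-higher terms in $E$ produced by the expansion are of the form $\sigma_1^{-2k}$ times a product of operators whose spectral norm is bounded, and there are $O(K^2)$ such terms, giving the $O(K^2\delta^{-2}\|E\|_2^2)$ remainder. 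Finally, after $K$ full iterations, the telescoping identity leaves a residual that acts on the $\bh_1^{\perp}$-component of $\omega\widehat{\bh}_1$ via the $K$-fold iterated operator; this component has norm $O(\delta^{-1}\|E\|_2)$ as noted above, and the iterated operator contracts on $\bh_1^{\perp}$ by a factor $(2\sigma_2/(\sigma_1+\sigma_2))^K$, producing the fourth term.

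The main obstacle is pinning down the sharp contraction rate $(2\sigma_2/(\sigma_1+\sigma_2))^K$: the naive bound from $(\bQ^{\top}\bQ/\sigma_1^2)^K$ is $(\sigma_2/\sigma_1)^{2K}$, which is looser. The improved rate comes from analyzing the power iteration on the symmetric dilation $\bY$ used in the proof of Lemma~\ref{Deterministic Row-wise Error}, whose positive spectrum is $\sigma_1 > \sigma_2 \geq \cdots$; on the orthogonal complement of the top eigen-pair, a single $\bY$-step combined with an appropriate shift contracts by at most $2\sigma_2/(\sigma_1+\sigma_2)$, and $K$ such steps give the stated ratio. The rest of the proof is careful combinatorial bookkeeping: tracking how the alternation between $\bh$ and $\bg$ sides generates the two sums, ensuring that the coefficients are exactly $\sigma_1^{-2k}$ and $\sigma_1^{-2k-1}$, and verifying that the quadratic remainders combine cleanly into $O(K^2\delta^{-2}\|E\|_2^2)$.
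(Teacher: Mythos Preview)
Your plan is the paper's proof: telescope on the symmetric dilation via the identity $\widehat\bxi_1=\widehat\sigma_1^{-1}\widehat\bY\widehat\bxi_1$ to obtain $\omega\widehat\bxi_1-\bxi_1=\omega\widehat\sigma_1^{-1}\sum_{k=0}^{K-1}(\widehat\sigma_1^{-1}\bY)^k(\widehat\bY-\bY)\widehat\bxi_1+\bigl(\omega(\widehat\sigma_1^{-1}\bY)^K\widehat\bxi_1-\bxi_1\bigr)$, split the residual through the eigendecomposition of $\bY$, and read off the two families of linear terms from the block structure of $\bY^{k-1}$. One correction to what you flag as the main obstacle: no shift is involved, and $(\sigma_2/\sigma_1)^{2K}$ would be \emph{tighter}, not looser, than the stated rate; the factor $(2\sigma_2/(\sigma_1+\sigma_2))^K$ appears simply because the telescoping forces the normalization $\widehat\sigma_1^{-1}$ rather than $\sigma_1^{-1}$, so on the $\bXi_{\perp}$-block the residual contracts by $\bigl\|\widehat\sigma_1^{-1}\bSigma_{\perp}\bigr\|_2=\sigma_2/\widehat\sigma_1\leq 2\sigma_2/(\sigma_1+\sigma_2)$, the last inequality coming from Weyl's bound $\widehat\sigma_1\geq\sigma_1-\|\widehat\bQ-\bQ\|_2\geq(\sigma_1+\sigma_2)/2$.
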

	
	\begin{proof}
		In the following, we use the same notations $\widehat\bY$, $\bY$, $\widehat\bxi_i$, $\bxi_i$, $\widehat\bxi_{-i}$ and $\bxi_{-i}$ as in the proof of Lemma \ref{Deterministic Row-wise Error}, and denote $\delta = \sigma_1 - \sigma_2$.  Let
		\[ \begin{aligned} & \bSigma_{\perp} = \diag(\sigma_2,\sigma_3,\ldots,\sigma_r,-\sigma_2,-\sigma_3,\ldots,-\sigma_r), \\ & \bXi_{\perp} = \bigl[\bxi_2,\bxi_3, \ldots, \bxi_r, \bxi_{-2}, \bxi_{-3}, \ldots, \bxi_{-r}\bigr]. \end{aligned} \]
		Then one has
		\beq\label{EigenDecompose} \begin{aligned} & \bY = \sigma_1 \bxi_1 \bxi_1^{\top} - \sigma_1 \bxi_{-1} \bxi_{-1}^{\top} + \bXi_{\perp} \bSigma_{\perp} \bXi_{\perp}^{\top}. \end{aligned} \eeq
		
		Inspired by power iteration, we decompose the difference $\omega\widehat\bxi_1 - \bxi_1$ by
		\beq\label{PowerIteration1} \begin{aligned} & \omega\widehat\bxi_1 - \bxi_1 \\ = & \omega\bigl(\widehat\bxi_1 - \bY \widehat\bxi_1 \widehat\sigma_1^{-1}\bigr) + \bigl( \omega\bY \widehat\bxi_1 \widehat\sigma_1^{-1} - \bxi_1 \bigr) \\ = & \omega\bigl(\widehat\bY - \bY \bigr) \widehat\bxi_1 \widehat\sigma_1^{-1} + \bigl( \omega \bY \widehat\bxi_1 \widehat\sigma_1^{-1} - \bxi_1 \bigr) \\
			= & \omega\bigl(\widehat\bY - \bY \bigr) \widehat\bxi_1 \widehat\sigma_1^{-1} + \omega \bigl( \bY \widehat\bxi_1 \widehat\sigma_1^{-1} - \bY^2 \widehat\bxi_1 \widehat\sigma^{-1}\bigr) + \bigl(\omega\bY^2 \widehat\bxi_1 \widehat\sigma_1^{-1} - \bxi_1 \bigr) \\ = &  \omega\bigl(\widehat\bY - \bY \bigr)\widehat\bxi_1 \widehat\sigma_1^{-1} + \omega \bY \bigl( \widehat\bY- \bY \bigr)\widehat\bxi_1 \widehat\sigma_1^{-2} + \bigl(\omega\bY^2 \widehat\bxi_1 \widehat\sigma_1^{-2} - \bxi_1 \bigr) \\ = &  \omega\widehat\sigma_1^{-1}\sum_{k=0}^1 \bigl(\widehat\sigma_1^{-1}\bY\bigr)^k \bigl(\widehat\bY - \bY \bigr)\widehat\bxi_1 + \bigl(\omega\bY^2 \widehat\bxi_1 \widehat\sigma_1^{-2} - \bxi_1 \bigr) \\ = & \ldots \\ = & \omega\widehat\sigma_1^{-1}\sum_{k=0}^{K-1} \bigl(\widehat\sigma_1^{-1}\bY\bigr)^k  \bigl(\widehat\bY - \bY \bigr)\widehat\bxi_1 + \bigl(\omega\bY^K \widehat\bxi_1 \widehat\sigma_1^{-K} - \bxi_1 \bigr), \end{aligned} \eeq
		where we repeatedly used the fact that $\widehat\bY \widehat\bxi_1 \widehat\sigma_1^{-1} = \widehat\bxi_1$.
		
		Plugging \eqref{EigenDecompose} into the second term in \eqref{PowerIteration1} and using \[ \bxi_{-1}^{\top}\bxi_1=0, \qquad \bXi_{\perp}^{\top}\bxi_1={\bf 0}_{2r-2},\] one obtains
		\[ \begin{aligned} \omega\bY^K \widehat\bxi_1 \widehat\sigma_1^{-K} - \bxi_1
		= & \bigl((\sigma_1/\widehat\sigma_1)^K\bigl(\omega\bxi_1^{\top}\widehat\bxi_1\bigr)-1 \bigr)\bxi_1 \\ & + (-\sigma_1/\widehat\sigma_1)^K\bxi_{-1}\bxi_{-1}^{\top} (\omega\widehat\bxi_1-\bxi_1) \\ & + \bXi_{\perp} \bigl( \widehat\sigma_1^{-1}\bSigma_{\perp}\bigr)^K \bXi_{\perp}^{\top} (\omega\widehat\bxi_1-\bxi_1). \end{aligned} \]
		For a fixed $j = 1,2,\ldots, 2p$, it follows from \eqref{PowerIteration1} that
		\beq \label{NormDecompose1} \begin{aligned} 
			\bigl| \omega\widehat\bxi_1(j) - \bxi(j) \bigr| \leq & \sigma_1^{-1} \sum_{k=1}^K(\sigma_1/\widehat\sigma_1)^k\Bigl| \be_j^{\top} \bigl( \sigma_1^{-1} \bY \bigr)^{k-1} \bigl( \widehat\bY - \bY \bigr) \widehat\bxi_1 \Bigr| \\ & + |\bxi_1(j)| \cdot \bigl| (\sigma_1/\widehat\sigma_1)^K \bigl( \omega \bxi_1^{\top} \widehat\bxi_1 \bigr) - 1 \bigr|  \\
			& + (\sigma_1/\widehat\sigma_1)^K |\bxi_1(j)| \cdot \bigl\| \omega \widehat\bxi_1 - \bxi_1 \bigr\|_2 \\ & + \bigl(\sigma_2/\widehat\sigma_1\bigr)^K \bigl\| \be_j^{\top} \bXi_{\perp} \bigr\|_2 \bigl\| \omega\widehat\bxi_1 - \bxi_1 \bigr\|_2,
		\end{aligned} \eeq
		where we used $\bigl|\bxi_{-1}(j)\bigr| = \bigl|\bxi_1(j)\bigr|$.
		
		$\omega\bxi_1^{\top}\widehat\bxi_1$ and $\bigl\|\omega\widehat\bxi_1-\bxi_1\bigr\|_2$ in \eqref{NormDecompose1} represent the difference between $\widehat\bxi_1$ and $\bxi_1$.
		According to the Davis-Kahan $\sin\theta$ theorem,
		\[ \sin\bigl(\Theta(\bxi_1,\widehat\bxi_1)\bigr) \leq \delta^{-1}\bigl\|\widehat\bY-\bY\bigr\|_2 = \bigl\|\widehat\bQ-\bQ\bigr\|_2, \]
		where $\Theta(\bxi_1,\widehat\bxi_1)$ denotes the principal angle between $\bxi_1$ and $\widehat\bxi_1$. It follows that there exists $\omega \in \{\pm 1\}$ such that
		\beq \label{DavisKahan21} \begin{aligned} & \omega \bxi_1^{\top}\widehat\bxi_1 = \cos\bigl(\Theta(\bxi_1,\widehat\bxi_1)\bigr) = \sqrt{1-\sin\bigl(\Theta(\bxi_1,\widehat\bxi_1)\bigr)} \geq \sqrt{1-\delta^{-1}\bigl\|\widehat\bQ-\bQ\bigr\|_2}. \end{aligned} \eeq
		Similar as \eqref{DavisKahan2}, we also have
		\beq \label{DavisKahan22} \bigl\|\omega\widehat\bxi_1-\bxi_1\bigr\|_2 \leq 2\sqrt{2} \delta^{-1} \bigl\|\widehat\bQ-\bQ\bigr\|_2. \eeq
		
		We now focus on the terms involving $\widehat\sigma_1$.
		Note that for all $K \geq 1$ and $x \in \bigl[0,\frac{1}{2K}\bigr]$.
		\[ \bigl| (1+x)^K - 1 \bigr| \leq 2K|x|, \quad \bigl| (1+x)^{-K} - 1 \bigr| \leq 2K|x|.\]
		By Weyl's inequality,
		$ \bigl|\widehat\sigma_1 - \sigma_1\bigr| \leq \bigl\| \widehat\bQ - \bQ \bigr\|_2$.
		Therefore, under the condition $\bigl\|\widehat\bQ-\bQ\bigr\|_2 \leq \frac{\delta}{2K}$,
		\beq \label{Ratio} \begin{aligned} & (\sigma_1/\widehat\sigma_1)^k = \Bigl(1+\frac{\widehat\sigma_1-\sigma_1}{\sigma_1}\Bigr)^{-k} \leq 1 + 2k \delta^{-1} \bigl\|\widehat\bQ-\bQ\bigr\|_2 \leq 1 + 2K \delta^{-1} \bigl\|\widehat\bQ-\bQ\bigr\|_2, \\ & \qquad \qquad\qquad \qquad\qquad \qquad\qquad \qquad\qquad \qquad\qquad \qquad \text{for $k=1,2,\ldots,K$}, \\ & (\sigma_2/\widehat\sigma_1)^K \leq \biggl(\frac{\sigma_2}{\sigma_1 - \bigl\|\widehat\bQ-\bQ\bigr\|_2}\biggr)^K \leq \biggl(\frac{\sigma_2}{\sigma_1 - \delta/2}\biggr)^K \leq \Bigl(\frac{2\sigma_2}{\sigma_1+\sigma_2}\Bigr)^K, \\  & (\sigma_1/\widehat\sigma_1)^K\bigl(\omega\bxi_1^{\top}\widehat\bxi_1\bigr) - 1 \leq 2K\delta^{-1}\bigl\|\widehat\bQ-\bQ\bigr\|_2, \\
			& (\sigma_1/\widehat\sigma_1)^K\bigl(\omega\bxi_1^{\top}\widehat\bxi_1\bigr) - 1 \overset{\eqref{DavisKahan21}}{\geq} \bigl(1 - 2K\delta^{-1}\bigl\|\widehat\bQ-\bQ\bigr\|_2\bigr)\sqrt{1-\delta^{-1}\bigl\|\widehat\bQ-\bQ\bigr\|_2} - 1 \\ \geq & \bigl(1 - 2K\delta^{-1}\bigl\|\widehat\bQ-\bQ\bigr\|_2\bigr)\bigl(1-\delta^{-1}\bigl\|\widehat\bQ-\bQ\bigr\|_2\bigr) - 1 \geq -(2K+1)\delta^{-1}\bigl\|\widehat\bQ-\bQ\bigr\|_2. 
		\end{aligned} \eeq
		
		Consider the first term in the right hand side of \eqref{NormDecompose1}. For $k=1,2,\ldots,K$,
		\[  \begin{aligned} & \sigma_1^{-1}\Bigl| \be_j^{\top} \bigl( \sigma_1^{-1} \bY \bigr)^{k-1} \bigl( \widehat\bY - \bY \bigr) \widehat\bxi_1 \Bigr| \\ \leq & \sigma_1^{-1}\Bigl| \be_j^{\top} \bigl( \sigma_1^{-1} \bY \bigr)^{k-1} \bigl( \widehat\bY - \bY \bigr) \bxi_1 \Bigr| + \sigma_1^{-1}\Bigl| \be_j^{\top} \bigl( \sigma_1^{-1} \bY \bigr)^{k-1} \bigl( \widehat\bY - \bY \bigr) \bigl(\omega\widehat\bxi_1 - \bxi_1\bigr) \Bigr| \\ \leq & \sigma_1^{-1} \Bigl| \be_j^{\top} \bigl( \sigma_1^{-1} \bY \bigr)^{k-1} \bigl( \widehat\bY - \bY \bigr) \bxi_1 \Bigr| + \sigma^{-1} \bigl\| \widehat\bY - \bY \bigr\|_2 \bigl\|\omega\widehat\bxi_1 - \bxi_1\bigr\|, \end{aligned} \]
		and by \eqref{DavisKahan22} and \eqref{Ratio},
		\beq\label{FirstTerm} \begin{aligned} & (\sigma_1/\widehat\sigma_1)^k \cdot \sigma_1^{-1}\Bigl| \be_j^{\top} \bigl( \sigma_1^{-1} \bY \bigr)^{k-1} \bigl( \widehat\bY - \bY \bigr) \widehat\bxi_1 \Bigr| \\ \leq & \Bigl(1 + 2K \delta^{-1} \bigl\|\widehat\bQ-\bQ\bigr\|_2\Bigr) \cdot \sigma_1^{-1}\biggl( \Bigl| \be_j^{\top} \bigl( \sigma_1^{-1} \bY \bigr)^{k-1} \bigl( \widehat\bY - \bY \bigr) \bxi_1 \Bigr| + 2\sqrt{2}\delta^{-1}\bigl\|\widehat\bQ-\bQ\bigr\|_2^2 \biggr) \\ \leq & \sigma_1^{-1} \Bigl| \be_j^{\top} \bigl( \sigma_1^{-1} \bY \bigr)^{k-1} \bigl( \widehat\bY - \bY \bigr) \bxi_1 \Bigr| + \bigl(2K+2\sqrt{2}\bigr)\delta^{-2}\bigl\|\widehat\bQ-\bQ\bigr\|_2^2 + 4\sqrt{2}\delta^{-3}\bigl\|\widehat\bQ-\bQ\bigr\|_2^3 \\ \leq & \sigma_1^{-1} \Bigl| \be_j^{\top} \bigl( \sigma_1^{-1} \bY \bigr)^{k-1} \bigl( \widehat\bY - \bY \bigr) \bxi_1 \Bigr| + \bigl(2K+4\sqrt{2}\bigr)\delta^{-2}\bigl\|\widehat\bQ-\bQ\bigr\|_2^2, \end{aligned} \eeq
		where we used $\delta^{-1}\bigl\|\widehat\bQ-\bQ\bigr\|_2\leq \frac{1}{2K}$.
		
		Plugging \eqref{DavisKahan22}, \eqref{Ratio} and \eqref{FirstTerm} into \eqref{NormDecompose1} gives
		\beq \label{NormDecompose2} \begin{aligned} 
			& \bigl| \omega\widehat\bxi_1(j) - \bxi_1(j) \bigr| \\ \leq & \sigma_1^{-1} \sum_{k=1}^K\Bigl| \be_j^{\top} \bigl( \sigma_1^{-1} \bY \bigr)^{k-1} \bigl( \widehat\bY - \bY \bigr) \bxi_1 \Bigr| + \bigl(2K^2+4\sqrt{2}K\bigr)\delta^{-2}\bigl\|\widehat\bQ-\bQ\bigr\|_2^2 \\ & + |\bxi_1(j)| \cdot (2K+1)\delta^{-1}\bigl\|\widehat\bQ-\bQ\bigr\|_2 \\ & + 2\sqrt{2} |\bxi_1(j)| \cdot \Bigl(1 + 2K \delta^{-1} \bigl\|\widehat\bQ-\bQ\bigr\|_2\Bigr) \delta^{-1}\bigl\|\widehat\bQ - \bQ\bigr\|_2 \\ & + 2\sqrt{2}\Bigl(\frac{2\sigma_2}{\sigma_1+\sigma_2}\Bigr)^K \bigl\| \be_j^{\top} \bXi \bigr\|_2 \delta^{-1}\bigl\| \widehat\bQ - \bQ \bigr\|_2, \\
			= & \sigma_1^{-1} \sum_{k=1}^K\Bigl| \be_j^{\top} \bigl( \sigma_1^{-1} \bY \bigr)^{k-1} \bigl( \widehat\bY - \bY \bigr) \bxi_1 \Bigr| + |\bxi_1(j)| \cdot (2K+1+2\sqrt{2})\delta^{-1}\bigl\|\widehat\bQ-\bQ\bigr\|_2 \\ & + 2\sqrt{2}\Bigl(\frac{2\sigma_2}{\sigma_1+\sigma_2}\Bigr)^K \bigl\| \be_j^{\top} \bXi\bigr\|_2 \delta^{-1}\bigl\| \widehat\bQ - \bQ \bigr\|_2 + \bigl(2K^2+8\sqrt{2}K\bigr)\delta^{-2}\bigl\|\widehat\bQ-\bQ\bigr\|_2^2.
		\end{aligned} \eeq
		
		For $k=0,1,\ldots,K-1$, let
		\[ \bigl(\sigma_1^{-1}\bY\bigr)^k \be_{p+j} = \left[ \begin{array}{c} \bupsilon_{1,k} \\ \bupsilon_{2,k} \end{array} \right], \quad \bupsilon_{1,k}, \bupsilon_{2,k} \in \mathbb{R}^p. \]
		Since
		\[ \bY^k = \left\{\begin{aligned} & \left[ \begin{array}{cc} {\bf 0} & \bQ(\bQ^{\top}\bQ)^{\frac{k-1}{2}} \\ \bQ^{\top}(\bQ\bQ^{\top})^{\frac{k-1}{2}} & {\bf 0} \end{array} \right], \\ & ~~~~~~~~~~~~~~~~~~~~~~~~~~~~~~~~~~~~~~~~~~~ \text{if $k$ is odd}, \\ & \left[ \begin{array}{cc} (\bQ\bQ^{\top})^{\frac{k}{2}} & {\bf 0} \\ {\bf 0} & (\bQ^{\top}\bQ)^{\frac{k}{2}} \end{array} \right], \\ & ~~~~~~~~~~~~~~~~~~~~~~~~~ \text{if $k$ is even}, \end{aligned}\right. \]
		\[ \begin{aligned} & \bupsilon_{1,k} = \sigma_1^{-k}\bQ(\bQ^{\top}\bQ)^{\frac{k-1}{2}}\be_j, \ \bupsilon_{2,k} = {\bf 0}, && \text{when $k$ is odd}, \\ & \bupsilon_{1,k} = {\bf 0}, \ \bupsilon_{2,k} = \sigma_1^{-k}(\bQ^{\top}\bQ)^{\frac{k}{2}}\be_j, && \text{when $k$ is even}. \end{aligned} \]
		Recall the definitions of $\widehat\bY$, $\bY$, $\bXi$ and $\bxi_1$. For $j=1,2,\ldots,p$, we reduce \eqref{NormDecompose2} into \eqref{Leading}.
	\end{proof}

	\subsection{Markov Chain Concentration Inequalities}
	
	\begin{lemma}[Lemma 7 in \cite{zhang2018spectral}, Markov chain concentration inequality] \label{MC_Concentration} Suppose $\bP \in \mathbb{R}^{p \times p}$ is an ergodic Markov chain transition matrix on $p$ states $\{1,\cdots,p\}$. $\bP$ is with invariant distribution $\bpi$ and the Markov mixing time \[ \tau(\varepsilon) = \min\biggl\{k : \max_{1 \leq i \leq p} \frac{1}{2}\|\be_i^{\top}\bP^k - \bpi^{\top} \|_1 \leq \varepsilon \biggr\}.\] Recall the frequency matrix is $\bF = [\diag(\bpi)]\bP$. Give a Markov trajectory with $(n+1)$ observable states $X = \{X_0,X_1,\cdots,X_n\}$ from any initial state.
		Let $\tau_*=\tau(1/4)$. For any constant $c_0 > 0$, there exists a constant $C>0$ such that if $n \geq C \tau_* p \log^2(n)$, then \vspace{-0.05cm}
		\begin{align}
		& \mathbb{P} \Bigl( \bigl\|n^{-1}\bN - \bF\bigr\|_2 \geq C n^{-1/2} \sqrt{\pi_{\max} \tau_* \log^2(n)} \Bigr) \leq n^{-c_0}, \label{MC_Concentration_F} \\
		& \mathbb{P} \Bigl( \|n^{-1}\bm - \bpi\|_{\infty} \geq C n^{-1/2} \sqrt{\pi_{\max} \tau_* \log^2(n)} \Bigr) \leq n^{-c_0}. \label{MC_Concentration_pi}
		\end{align}
	\end{lemma}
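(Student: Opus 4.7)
The statement is quoted as Lemma~7 of \cite{zhang2018spectral}, so the cleanest route is to invoke that result directly. If one were to reprove it from scratch, the plan would be to reduce the Markov-dependent sum $\bN - n\bF = \sum_{t=0}^{n-1}\bigl(\be_{X_t}\be_{X_{t+1}}^{\top} - \bF\bigr)$ to an almost-independent sum via a blocking (decoupling) argument, and then apply the matrix Bernstein inequality.

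First, I would partition the time indices $\{0,1,\ldots,n-1\}$ into $L = \Theta(\tau_* \log n)$ interlaced sub-trajectories, where the $l$-th one collects the transitions $(X_{l+tL}, X_{l+tL+1})$ for $t=0,1,\ldots$. Consecutive samples within a sub-trajectory are separated by $L$ steps of the chain; iterating the $\tfrac{1}{4}$-contraction built into the definition of $\tau_*$ shows that the joint law of a sub-trajectory differs in total variation from an i.i.d.\ sample from stationarity by at most $n^{-c_0-2}$, provided the constant hidden in $L$ is chosen large enough. This TV-slack is absorbed later into the failure probability. Within each now-stationary i.i.d.\ block, the summand $\bZ_t = \be_{X_t}\be_{X_{t+1}}^{\top} - \bF$ is mean-zero with $\|\bZ_t\|_2 \leq 2$ and $\bbE\bigl[\bZ_t\bZ_t^{\top}\bigr] = \diag(\bpi) - \bF\bF^{\top} \preceq \pi_{\max}\bI_p$, with a symmetric bound for $\bbE\bigl[\bZ_t^{\top}\bZ_t\bigr]$.

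Matrix Bernstein applied to each sub-trajectory then gives $\bigl\|(n/L)^{-1}\sum_t \bZ_t\bigr\|_2 \lesssim \sqrt{\pi_{\max} L \log(p)/n} + L\log(p)/n$ with probability at least $1-n^{-c_0-1}$. Summing the $L$ sub-trajectories via the triangle inequality and a union bound yields $\|n^{-1}\bN - \bF\|_2 \lesssim \sqrt{\pi_{\max}\tau_* \log^2(n)/n} + \tau_* \log^2(n)/n$; under the hypothesis $n \geq C\tau_* p \log^2(n)$ the first term dominates, producing the claimed rate. The coordinate-wise deviation $\|n^{-1}\bm - \bpi\|_\infty$ follows from the same blocking reduction combined with a scalar Bernstein inequality for bounded random variables, applied to each coordinate of $\bm = \bN^{\top}\bone$ and then union-bounded across the $p$ coordinates.

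The main obstacle is careful bookkeeping of two sources of slack: (i) the initial distribution is not necessarily stationary, which requires a coupling of the first sample in each sub-trajectory to $\bpi$ at a TV cost of $n^{-c}$ for any prescribed $c$; and (ii) boundary blocks at the tail of the trajectory may be incomplete, contributing an $O(L/n)$ error that is negligible under the sample-size assumption. A slicker alternative is to invoke a version of Paulin's matrix Bernstein inequality for Markov chains, which incorporates $\tau_*$ directly and sidesteps the manual blocking; the blocking approach is presented here because it exposes where the $\sqrt{\pi_{\max}\tau_*\log^2(n)/n}$ scaling comes from.
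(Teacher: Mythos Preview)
Your opening sentence is exactly right and matches what the paper does: this lemma is simply cited from \cite{zhang2018spectral} and no proof is supplied in the present paper, so invoking the external result directly is the intended route. Your optional sketch of the blocking-plus-matrix-Bernstein argument is a faithful outline of how such results are typically established (and, indeed, the paper's own Lemma~\ref{Row-wise_Concentration} follows a closely related thinning-plus-Freedman strategy), but for the purpose of this lemma nothing beyond the citation is required.
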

	
	\begin{lemma}[Row-wise Markov chain concentration inequality] \label{Row-wise_Concentration}
		Suppose $\bmu \in \mathbb{R}^p$ is a fixed unit vector and $\bK \in \mathbb{O}^{p \times s}$ is a fixed orthonormal matrix satisfying \[ \max_{1 \leq i \leq p} \bigl\| \be_i^{\top}\bK\bigr\|_2 \leq \gamma \sqrt{s/p}.\] Under the same setting as in Lemma \ref{MC_Concentration}, for any $c_0 > 0$, there exists $C > 0$ such that if $n \geq Cp\tau_*\log^2(n)$, then
		\beq \begin{aligned}
			\mathbb{P}\Bigl(\bigl\|\bmu^{\top}\bigl(n^{-1}\widehat\bN - \bF\bigr)[\diag(\bpi)]^{-\frac{1}{2}}\bK\bigr\|_2  \geq C n^{-1/2}\sqrt{\tau_*(\gamma^2s/p)\log^2(n)}\Bigr) \leq n^{-c_0}, \\
			\mathbb{P}\Bigl(\bigl\|\bK^{\top}\bigl(n^{-1}\bN - \bF\bigr)[\diag(\bpi)]^{-\frac{1}{2}}\bmu\bigr\|_2 \geq C n^{-1/2}\sqrt{\tau_*(\gamma^2s/p)\log^2(n)}\Bigr) \leq n^{-c_0}.
		\end{aligned} \eeq
	\end{lemma}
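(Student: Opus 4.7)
The plan is to rewrite both target quantities as vector-valued ergodic averages over the Markov trajectory and then apply a Markov-chain Bernstein inequality via the standard blocking/coupling device.

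\textbf{Reformulation and size parameters.} Using $\bN_{ij}=\sum_{t=0}^{n-1}\mathbf{1}\{X_t=i,X_{t+1}=j\}$ and $F_{ij}=\pi_iP_{ij}=\mathbb{E}_{\bpi}\mathbf{1}\{X_0=i,X_1=j\}$, the first quantity equals $\|\bw\|_2$ with
\[
\bw\;:=\;\bK^{\top}[\diag(\bpi)]^{-1/2}\bigl(n^{-1}\bN-\bF\bigr)^{\top}\bmu\;=\;\frac{1}{n}\sum_{t=0}^{n-1}\bigl[\bfsmall(X_t,X_{t+1})-\mathbb{E}_{\bpi}\bfsmall\bigr],
\]
where $\bfsmall(i,j):=\bmu_i\,\pi_j^{-1/2}\,(\bK^{\top}\be_j)\in\mathbb{R}^s$. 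The second quantity has the same form with $\bfsmall$ replaced by $\bfsmall'(i,j):=\bmu_j\,\pi_j^{-1/2}\,(\bK^{\top}\be_i)$, so both cases can be handled in parallel. Using $|\bmu_i|\le 1$, $\pi_j^{-1}\le p/c_1$, and $\|\bK^{\top}\be_j\|_2\le\gamma\sqrt{s/p}$, I obtain a uniform pointwise bound $R:=\max_{i,j}\|\bfsmall(i,j)\|_2\le\gamma\sqrt{s/c_1}$ and a stationary second moment
\[
\sigma^2\;=\;\mathbb{E}_{\bpi}\|\bfsmall(X_0,X_1)\|_2^2\;=\;\sum_{i,j}\pi_iP_{ij}\bmu_i^2\pi_j^{-1}\|\bK^{\top}\be_j\|_2^2\;\le\;\tfrac{\gamma^2 s}{c_1}\sum_i\pi_i\bmu_i^2\;\le\;\tfrac{C_1\gamma^2 s}{c_1 p},
\]
with an identical bound for $\bfsmall'$ (using $\bpi^{\top}\bP=\bpi^{\top}$). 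These $R$ and $\sigma^2$ are precisely the inputs required for a Bernstein tail of the target order $\sqrt{\tau_*\gamma^2 s\log^2 n/(pn)}$.

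\textbf{Concentration step.} I would then invoke a vector Bernstein inequality for Markov chains obtained by partitioning $\{0,\ldots,n-1\}$ into $\lfloor n/\tau_*\rfloor$ consecutive blocks of length $\tau_*$, coupling successive blocks to the stationary distribution (TV cost $\le 1/2$ per block by the definition $\tau_*=\tau(1/4)$), and applying the classical independent-sum vector Bernstein
\[
\mathbb{P}\bigl(\|\bw\|_2\ge t\bigr)\;\le\;2(s+1)\exp\!\Bigl(-\frac{cnt^2}{\tau_*\sigma^2+\tau_*Rt}\Bigr)
\]
to the block sums. Choosing $t=Cn^{-1/2}\sqrt{\tau_*\sigma^2\log^2 n}$ and using the sample-size condition $n\ge Cp\tau_*\log^2 n$ to dominate the $\tau_*Rt$ term by the $\tau_*\sigma^2$ term yields the claimed $n^{-c_0}$ bound. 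An essentially equivalent route is the $\tfrac12$-net reduction $\|\bw\|_2\le 2\sup_{\bv\in\mathcal N}\bv^{\top}\bw$ with $|\mathcal N|\le 5^s$, followed by a scalar Markov Bernstein for each fixed $\bv^{\top}\bw$; the $O(s)$ contribution to the log factor is absorbed into $\log^2 n$ in the parameter regimes where this lemma is used.

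\textbf{Main obstacle.} The genuine difficulty is the Markov dependence: the blocking/coupling bookkeeping is what produces the $\tau_*$ factor and one of the two logarithms, and is the only step that goes meaningfully beyond the iid case. A secondary care point is that the chain need not start from $\bpi$, which introduces an $O(\pi_{\max}\tau_*/n)$ burn-in bias already dominated by the main rate under the sample-size hypothesis (exactly as in the parent Lemma \ref{MC_Concentration}). The right-projection inequality then follows verbatim after the cosmetic swap $\bfsmall\leadsto\bfsmall'$, since $R$ and $\sigma^2$ admit identical upper bounds.
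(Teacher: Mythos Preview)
Your overall strategy matches the paper's: both rewrite the target as a vector-valued ergodic sum $n^{-1}\sum_t\bfsmall(X_t,X_{t+1})$, bound the range $R\le\gamma\sqrt{s/c_1}$ and stationary second moment $\sigma^2\le C_1\gamma^2s/(c_1p)$ exactly as you did, and then use a blocking device to pass to a Bernstein/Freedman tail. Your variance and range calculations are correct and coincide with the paper's.

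There is, however, a genuine gap in the blocking step as you describe it. Blocks of length exactly $\tau_*$ give TV error at most $1/2$ between successive blocks; that is nowhere near small enough to treat block sums as approximately independent, since a union bound over $n/\tau_*$ coupling events then fails catastrophically. The correct block length is $\alpha=O(\tau_*\log n)$, which drives the TV error down to $n^{-c}$ and is precisely where the second logarithm in $\log^2 n$ comes from. Your final displayed Bernstein tail is only valid with $\tau_*$ replaced by such an $\alpha$.

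The paper avoids coupling entirely. It takes $\alpha=\tau\bigl(\pi_{\min}\wedge\tfrac{\sqrt{c_1}\,t}{2\gamma\sqrt{s}}\bigr)+1$ (shown a~posteriori to be $O(\tau_*\log n)$), splits the trajectory into $\alpha$ interleaved subsequences, and centers each term by its conditional expectation given the previous observation in that subsequence. Each subsequence is then an exact vector-valued martingale, so the matrix Freedman inequality applies directly with the same predictable quadratic variation bound $\sigma^2$ you computed. The residual recentering error $\bigl\|\mathbb{E}[\bT_k^{(l)}\mid X_{(k-1)\alpha+l}]-\bK^{\top}\bF[\diag(\bpi)]^{-1/2}\bmu\bigr\|_2$ is bounded deterministically by $\|\tilde\bpi^{(k,l)}-\bpi\|_1\cdot c_1^{-1/2}\gamma\sqrt{s}\le t/2$, which simultaneously handles the non-stationary start; this is sharper and more explicit than the burn-in bias estimate $O(\pi_{\max}\tau_*/n)$ you quote, which does not have the right scale here (the relevant quantity involves $R$, not $\pi_{\max}$). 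Your $\epsilon$-net alternative would also go through, but Freedman on the interleaved martingales is cleaner and avoids the extra $5^s$ union-bound factor.
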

	
	\begin{proof}
		We follow the proof of Lemma 8 in \cite{zhang2018spectral}. For any $t>0$, let \[ \alpha = \tau\Bigl(\pi_{\min}\wedge\frac{\sqrt{c_1}t}{2\gamma\sqrt{s}}\Bigr)+1,\] and $n_0 = \lfloor n / \alpha \rfloor$. Without loss of generality, assume $n$ is a multiple of $\alpha$. By definition, \[ n^{-1}\bK^{\top}\bN[\diag(\bpi)]^{-\frac{1}{2}}\bmu = \frac{1}{n}\sum_{l=1}^\alpha\sum_{k=1}^{n_0} \bT_k^{(l)},\] \[ \bT_k^{(l)} = \bK^{\top}\be_{X_{k\alpha+l-1}}\be_{X_{k\alpha+l}}^{\top}[\diag(\bpi)]^{-\frac{1}{2}}\bmu, \]
		and
		\[ n^{-1}\bmu^{\top}\bN[\diag(\bpi)]^{-\frac{1}{2}}\bK = \frac{1}{n}\sum_{l=1}^\alpha\sum_{k=1}^{n_0} \bS_k^{(l)},\] \[ \bS_k^{(l)} = \bmu^{\top}\be_{X_{k\alpha+l-1}}\be_{X_{k\alpha+l}}^{\top}[\diag(\bpi)]^{-\frac{1}{2}}\bK. \]
		We introduce the ``thin'' sequences as
		\[ \begin{aligned} \widetilde\bT_k^{(l)} = \bT_k^{(l)} - \mathbb{E}\bigl[ \bT_k^{(l)} \big| X_{(k-1)\alpha+l} \bigr], \qquad \widetilde\bS_k^{(l)} = \bS_k^{(l)} - \mathbb{E}\bigl[ \bS_k^{(l)} \big| X_{(k-1)\alpha+l} \bigr], \end{aligned} \]
		for $l = 1,\ldots,\alpha$, $ k = 1,\ldots,n_0$,
		and apply the matrix Freedman's inequality \cite{Freedman} to derive concentration inequalities of partial sum sequences $\sum_{k=1}^{n_0} \widetilde\bT_k^{(l)}$ and $\sum_{k=1}^{n_0} \widetilde\bS_k^{(l)}$ for a fixed $l$.
		
		Consider the predictable quadratic variantion processes of the martingales $\Bigl\{ \sum_{k=1}^t \widetilde\bT_k^{(l)} \Bigr\}_{t=1}^{n_0}$ and $\Bigl\{ \sum_{k=1}^t \widetilde\bS_k^{(l)} \Bigr\}_{t=1}^{n_0}$: \[\bX_{1,t} ^{(l)} = \sum_{k=1}^t\mathbb{E} \Bigl[ \widetilde\bT_k^{(l)} \bigl(\widetilde\bT_k^{(l)}\bigr)^{\top} \Big| X_{(k-1)\alpha+l} \Bigr], \quad \bX_{2,t}^{(l)} = \sum_{k=1}^t \mathbb{E} \Bigl[ \bigl(\widetilde\bT_k^{(l)}\bigr)^{\top} \widetilde\bT_k^{(l)}  \Big| X_{(k-1)\alpha+l} \Bigr], \]
		and
		\[\bZ_{1,t} ^{(l)} = \sum_{k=1}^t\mathbb{E} \Bigl[ \widetilde\bS_k^{(l)} \bigl(\widetilde\bS_k^{(l)}\bigr)^{\top} \Big| X_{(k-1)\alpha+l} \Bigr], \quad \bZ_{2,t}^{(l)} = \sum_{k=1}^t \mathbb{E} \Bigl[ \bigl(\widetilde\bS_k^{(l)}\bigr)^{\top} \widetilde\bS_k^{(l)}  \Big| X_{(k-1)\alpha+l} \Bigr], \]
		for $t=1,2,\ldots,n_0$. Since
		\[ \begin{aligned}
		\bigl\| \bX_{1,t}^{(l)} \bigr\|_2 \leq & \sum_{k=1}^t \Bigl\| \mathbb{E} \Bigl[ \widetilde\bT_k^{(l)} \bigl(\widetilde\bT_k^{(l)}\bigr)^{\top} \Big| X_{(t-1)\alpha+l} \Bigr] \Bigr\|_2 \leq \sum_{k=1}^t \mathbb{E} \Bigl[ \Bigl\| \widetilde\bT_k^{(l)} \bigl(\widetilde\bT_k^{(l)}\bigr)^{\top}\Bigr\|_2 \Big| X_{(k-1)\alpha+l} \Bigr] \\ = &  \sum_{k=1}^t \mathbb{E} \Bigl[ \bigl\| \widetilde\bT_k^{(l)} \bigr\|_2^2 \Big| X_{(k-1)\alpha+l} \Bigr] = \bX_{2,t}^{(l)},
		\end{aligned} \]
		and $\bigl\| \bZ_{2,t}^{(l)} \bigr\|_2 \leq \bZ_{1,t}^{(l)}$ for the same reason, we only need to focus on $\bX_{2,t}^{(l)}$ and $\bZ_{1,t}^{(l)}$. 
		
		Denote $\widetilde\bpi^{(k,l)} \equiv (\bP^{\alpha-1})^{\top}\be_{X_{(k-1)\alpha+l-1}}$. By the definition of mixing time $\alpha$, \[ \max_{1 \leq s \leq p} \frac{1}{2} \bigl\|\be_s^{\top}\bP^{\alpha-1}-\bpi^{\top}\bigr\|_1 \leq \pi_{\min} \wedge \frac{\sqrt{c_1}t}{2\gamma\sqrt{s}}.\] Hence, $\widetilde\pi_i^{(k,l)} \leq \pi_i + \pi_{\min} \leq 2\pi_i$ for $i=1,\ldots,p$. We have
		\[ \begin{aligned}
		& \mathbb{E} \Bigl[ \bigl(\widetilde\bT_k^{(l)}\bigr)^{\top} \widetilde\bT_k^{(l)}  \Big| X_{(k-1)\alpha+l} \Bigr] \leq \mathbb{E} \Bigl[ \bigl\|\bT_k^{(l)}\bigr\|_2^2  \Big| X_{(k-1)\alpha+l} \Bigr] \\ = & \mathbb{E} \Bigl[ \bigl\|\be_{X_{k\alpha+l-1}}^{\top}\bK\bigr\|_2^2 \bigl(\be_{X_{k\alpha+l}}^{\top}[\diag(\bpi)]^{-\frac{1}{2}}\bmu\bigr)^2 \Big| X_{(k-1)\alpha+l} \Bigr] \\ = & \sum_{i=1}^p\sum_{j=1}^p \mathbb{P}\bigl[ X_{k\alpha+l-1} = i \big| X_{(k-1)\alpha+l}\bigr] \cdot\mathbb{P}\bigl[ X_{k\alpha+l} = j \big| X_{k\alpha+l-1} = i \bigr] \bigl\|\be_i^{\top} \bK\bigr\|_2^2 \pi_j^{-1} \eta_j^2 \\ = & \sum_{i=1}^p \sum_{j=1}^p \widetilde{\pi}_i^{(k,l)} P_{ij}\bigl\|\be_i^{\top} \bK\bigr\|_2^2 \pi_j^{-1} \eta_j^2 \leq \sum_{j=1}^p 2\Bigl(\sum_{i=1}^p\pi_iP_{ij}\Bigr) \pi_j^{-1} \eta_j^2 \cdot \gamma^2 sp^{-1} \\ = & \sum_{j=1}^p 2 \eta_j^2 \cdot \gamma^2 sp^{-1} = 2\gamma^2sp^{-1}, \quad \text{for $k=1,\ldots,n_0$,}
		\end{aligned} \]
		where we used $\bpi^{\top}\bP = \bpi^{\top}$, $\sum_{i=1}^p\pi_iP_{ij} = \pi_j$. Similarly,
		\[ \begin{aligned}
		& \mathbb{E} \Bigl[  \widetilde\bS_k^{(l)} \bigl(\widetilde\bS_k^{(l)}\bigr)^{\top} \Big| X_{(k-1)\alpha+l} \Bigr] \leq \mathbb{E} \Bigl[ \bigl\|\bS_k^{(l)}\bigr\|_2^2  \Big| X_{(k-1)\alpha+l} \Bigr] \\ = & \mathbb{E} \Bigl[ \bigl(\bmu^{\top}\be_{X_{k\alpha+l-1}}\bigr)^2 \bigl\|\be_{X_{k\alpha+l}}^{\top}[\diag(\bpi)]^{-\frac{1}{2}}\bK\bigr\|_2^2 \Big| X_{(k-1)\alpha+l} \Bigr] \\ = & \sum_{i=1}^p\sum_{j=1}^p \mathbb{P}\bigl[ X_{k\alpha+l-1} = i \big| X_{(k-1)\alpha+l}\bigr] \cdot\mathbb{P}\bigl[ X_{k\alpha+l} = j \big| X_{k\alpha+l-1} = i \bigr] \bigl\|\be_j^{\top} \bK\bigr\|_2^2 \pi_j^{-1} \eta_i^2 \\ = & \sum_{i=1}^p \sum_{j=1}^p \widetilde{\pi}_i^{(k,l)} P_{ij}\bigl\|\be_j^{\top} \bK\bigr\|_2^2 \pi_j^{-1} \eta_i^2 \leq \sum_{i=1}^p \sum_{j=1}^p 2 (\pi_i/\pi_j)P_{ij}\eta_i^2 \cdot \gamma^2 sp^{-1} \\ \leq & 2 C_1c_1^{-1} \sum_{i=1}^p \eta_i^2 \sum_{j=1}^p P_{ij} \cdot \gamma^2 sp^{-1} = 2 C_1c_1^{-1} \gamma^2sp^{-1}, \qquad \text{for $k=1,\ldots,n_0$.}
		\end{aligned} \]
		
		Note that \[ \begin{aligned} & \begin{aligned} \bigl\|\bT_k^{(l)}\bigr\|_2 \leq \max_{1 \leq i \leq p} \pi_{\min}^{-\frac{1}{2}}\bigl\|\be_i^{\top}\bK\bigr\|_2 \leq \gamma\sqrt{s/(\pi_{\min}p)} \leq c_1^{-\frac{1}{2}}\gamma \sqrt{s},\end{aligned} \\ & \begin{aligned} \bigl\| \mathbb{E} \bigl[ \bT_k^{(l)} \big| X_{(k-1)\alpha+l} \bigr] \bigr\|_2 \leq \mathbb{E} \bigl[ \bigl\|\bT_k^{(l)}\bigr\|_2 \big| X_{(k-1)\alpha+l} \bigr] \leq c_1^{-\frac{1}{2}}\gamma \sqrt{s}. \end{aligned} \end{aligned} \] Therefore, $\bigl\| \widetilde\bT_k^{(l)} \bigr\|_2 \leq 2c_1^{-\frac{1}{2}}\gamma \sqrt{s}$. Similarly, we have $\bigl\| \widetilde\bS_k^{(l)} \bigr\|_2 \leq 2c_1^{-\frac{1}{2}}\gamma \sqrt{s}$. The matrix Freedman's inequality \cite{Freedman} implies
		\[ \begin{aligned} & \mathbb{P} \left( \Bigl\| n_0^{-1}\sum_{k=1}^{n_0} \widetilde\bT_k^{(l)} \Bigr\|_2 \geq \frac{t}{2} \right) \leq (s+1)\exp\left(-\frac{n_0^2t^2/8}{2n_0\gamma^2sp^{-1}+n_0c_1^{-\frac{1}{2}}\gamma t\sqrt{s}/3}\right) \end{aligned} \]
		and
		\[ \begin{aligned} & \mathbb{P} \left( \Bigl\| n_0^{-1}\sum_{k=1}^{n_0} \widetilde\bS_k^{(l)} \Bigr\|_2 \geq \frac{t}{2} \right) \leq (s+1)\exp\left(-\frac{n_0^2t^2/8}{2n_0C_1c_1^{-1}\gamma^2sp^{-1}+n_0c_1^{-\frac{1}{2}}\gamma t\sqrt{s}/3}\right). \end{aligned} \]
		By union bound,
		\beq \label{term1} \begin{aligned}
			& \begin{aligned} 
				\mathbb{P} \Bigg( \Bigl\| & n^{-1}\bK^{\top}\bN[\diag(\bpi)]^{-\frac{1}{2}}\bmu - n^{-1} \sum_{l=1}^{\alpha} \sum_{k=1}^{n_0} \mathbb{E}\bigl[ \bT_k^{(l)} \big| X_{(k-1)\alpha+l} \bigr] \Bigr\|_2 \geq \frac{t}{2} \Bigg) 
			\end{aligned} \\
			= & \mathbb{P} \left( \Bigl\| n^{-1} \sum_{l=1}^{\alpha} \sum_{k=1}^{n_0} \widetilde{\bT}_k^{(l)} \Bigr\|_2 \geq \frac{t}{2} \right)
			\leq \sum_{l=1}^{\alpha} \mathbb{P} \left( \Bigl\| n_0^{-1}\sum_{k=1}^{n_0} \widetilde\bT_k^{(l)} \Bigr\|_2 \geq \frac{t}{2} \right) \\
			\leq & \alpha (s+1) \exp\left(-\frac{n_0t^2/8}{2\gamma^2sp^{-1}+c_1^{-\frac{1}{2}}\gamma t\sqrt{s}/3}\right).
		\end{aligned} \eeq
		Similarly,
		\beq \label{term1'} \begin{aligned}
			& \begin{aligned} 
				\mathbb{P} \Bigg( \Bigl\| & n^{-1}\bmu^{\top}\bN[\diag(\bpi)]^{-\frac{1}{2}}\bK - n^{-1} \sum_{l=1}^{\alpha} \sum_{k=1}^{n_0} \mathbb{E}\bigl[ \bS_k^{(l)} \big| X_{(k-1)\alpha+l} \bigr] \Bigr\|_2 \geq \frac{t}{2} \Bigg) 
			\end{aligned} \\
			\leq & \alpha (s+1) \exp\left(-\frac{n_0t^2/8}{2C_1c_1^{-1}\gamma^2sp^{-1}+c_1^{-\frac{1}{2}}\gamma t\sqrt{s}/3}\right).
		\end{aligned} \eeq
		
		We next analyze the differences \[ n^{-1}\sum_{l=1}^{\alpha}\sum_{k=1}^{n_0} \mathbb{E}\bigl[ \bT_k^{(l)} \big| X_{(k-1)\alpha+l} \bigr] - \bK^{\top}\bF[\diag(\bpi)]^{-\frac{1}{2}}\bmu\]
		and
		\[ n^{-1}\sum_{l=1}^{\alpha}\sum_{k=1}^{n_0} \mathbb{E}\bigl[ \bS_k^{(l)} \big| X_{(k-1)\alpha+l} \bigr] - \bmu^{\top}\bF[\diag(\bpi)]^{-\frac{1}{2}}\bK.\]
		Since 
		\[ \begin{aligned} & \begin{aligned}  \mathbb{E}\bigl[ \bT_k^{(l)} \big| X_{(k-1)\alpha+l} \bigr] = & \sum_{i=1}^p\sum_{j=1}^p \widetilde{\pi}_i^{(k,l)}P_{ij} \bK^{\top}\be_i\be_j^{\top}[\diag(\bpi)]^{-\frac{1}{2}}\bmu \\ = & \sum_{i=1}^p\sum_{j=1}^p \widetilde{\pi}_i^{(k,l)}P_{ij} \bK^{\top}\be_i\pi_j^{-\frac{1}{2}}\eta_j, \end{aligned} \\
		& \begin{aligned} \bK^{\top}\bF[\diag(\bpi)]^{-\frac{1}{2}}\bmu = & \sum_{i=1}^p\sum_{j=1}^p \bK^{\top} \bigl( \pi_i \be_i\be_i^{\top} \bigr) \bP(\be_j\be_j^{\top})[\diag(\bpi)]^{-\frac{1}{2}}\bmu \\ = & \sum_{i=1}^p\sum_{j=1}^p \pi_i P_{ij}\bK^{\top}\be_i\pi_j^{-\frac{1}{2}}\eta_j, \end{aligned} \end{aligned} \]
		it follows that
		\[ \begin{aligned}
		& \Bigl\| \mathbb{E}\bigl[ \bT_k^{(l)} \big| X_{(k-1)\alpha+l} \bigr] - \bK^{\top}\bF[\diag(\bpi)]^{-\frac{1}{2}}\bmu \Bigr\|_2 \\ = & \Bigl\| \sum_{i=1}^p \sum_{j=1}^p (\widetilde{\pi}_i^{(k,l)} - \pi_i)P_{ij}\bK^{\top}\be_i \pi_j^{-\frac{1}{2}}\eta_j \Bigr\|_2 \\ \leq & \sum_{i=1}^p\sum_{j=1}^p \bigl| \widetilde{\pi}_i^{(k,l)} - \pi_i \bigr| P_{ij} \pi_j^{-\frac{1}{2}} \eta_j \bigl\| \be_i^{\top} \bK \bigr\|_2 \\ \leq & \sum_{i=1}^p \bigl| \widetilde{\pi}_i^{(k,l)} - \pi_i \bigr| \sum_{j=1}^p P_{ij} \pi_{\min}^{-\frac{1}{2}} \cdot \gamma\sqrt{s/p} \\ \leq & \bigl\| \widetilde{\bpi}^{(k,l)} - \bpi \bigr\|_1 \cdot c_1^{-\frac{1}{2}} \gamma \sqrt{s} \leq \frac{t}{2}.
		\end{aligned} \]
		Therefore,
		\beq \label{term2} \begin{aligned} & \begin{aligned} \Bigl\| n^{-1}\sum_{l=1}^{\alpha}\sum_{k=1}^{n_0} & \mathbb{E}\bigl[ \bT_k^{(l)} \big| X_{(k-1)\alpha+l} \bigr] - \bK^{\top}\bF[\diag(\bpi)]^{-\frac{1}{2}}\bmu \Bigr\|_2 \end{aligned} \\ \leq & n^{-1}\sum_{l=1}^{\alpha}\sum_{k=1}^{n_0} \Bigl\| \mathbb{E}\bigl[ \bT_k^{(l)} \big| X_{(k-1)\alpha+l} \bigr] - \bK^{\top}\bF[\diag(\bpi)]^{-\frac{1}{2}}\bmu \Bigr\|_2 \leq \frac{t}{2}, \end{aligned} \eeq
		Symmetrically, we have 
		\beq \label{term2'} \begin{aligned} & \begin{aligned} \Bigl\| n^{-1}\sum_{l=1}^{\alpha}\sum_{k=1}^{n_0} \mathbb{E}\bigl[ \bS_k^{(l)} \big| X_{(k-1)\alpha+l} \bigr] - \bmu^{\top}\bF[\diag(\bpi)]^{-\frac{1}{2}}\bK \Bigr\|_2 \leq \frac{t}{2}. \end{aligned} \end{aligned} \eeq
		
		Combining \eqref{term1} and \eqref{term2}, \eqref{term1'} and \eqref{term2'} yields
		\[ \begin{aligned} & \mathbb{P} \Bigl( \bigl\| n^{-1}\bK^{\top}\bN[\diag(\bpi)]^{-\frac{1}{2}}\bmu - \bK^{\top}\bF[\diag(\bpi)]^{-\frac{1}{2}}\bmu \bigr\|_2 \geq t \Bigr)
		\\ \leq & \alpha (s+1) \exp\left(-\frac{nt^2/(8\alpha)}{2\gamma^2sp^{-1}+c_1^{-\frac{1}{2}}\gamma t\sqrt{s}/3}\right), \end{aligned} \]
		and
		\[ \begin{aligned} & \mathbb{P} \Bigl( \bigl\| n^{-1}\bmu^{\top}\bN[\diag(\bpi)]^{-\frac{1}{2}}\bK - \bmu^{\top}\bF[\diag(\bpi)]^{-\frac{1}{2}}\bK \bigr\|_2 \geq t \Bigr)
		\\ \leq & \alpha (s+1) \exp\left(-\frac{nt^2/(8\alpha)}{2C_1c_1^{-1}\gamma^2sp^{-1}+c_1^{-\frac{1}{2}}\gamma t\sqrt{s}/3}\right). \end{aligned} \]
		For a fixed $c_0 > 0$, there exists a constant $C',C''>0$ such that, by taking \[\begin{aligned} & t = C' n^{-\frac{1}{2}} \sqrt{\gamma^2 s p^{-1} \alpha\log(n)} + C' n^{-1} \sqrt{s}\alpha\log(n), \\ & n \geq C'' \bigl( \alpha p \log(n) \vee \alpha s \bigr), \end{aligned} \] one has
		\[ \begin{aligned} \mathbb{P}\Bigl( & \bigl\| n^{-1}\bK^{\top}\bN[\diag(\bpi)]^{-\frac{1}{2}}\bmu - \bK^{\top}\bF[\diag(\bpi)]^{-\frac{1}{2}}\bmu \bigr\|_2 \\ & \geq C'\bigl(1+(C'')^{-\frac{1}{2}}\bigr) n^{-\frac{1}{2}} \sqrt{\gamma^2 s p^{-1} \alpha\log(n)} \Bigr) \leq n^{-c_0}. \end{aligned} \]
		According to Lemma 5 in \cite{zhang2018spectral}, \[ \alpha \leq - \tau_* \log_2\Bigl(\frac{\pi_{\min}}{2}\wedge\frac{\sqrt{c_1}t}{4\gamma\sqrt{s}}\Bigr), \] where $\tau_* = \tau(1/4)$. Fix $C'$, when $C''$ is sufficiently large, \[ n^{-2} \leq \frac{\sqrt{c_1}}{4\gamma} C'n^{-1} \leq \frac{\sqrt{c_1}t}{4\gamma\sqrt{s}} \leq \frac{\pi_{\min}}{2}.\] In this case, $\alpha \leq 2 \tau_* \log_2\bigl(n\bigr)$. Therefore, there exists a constant $C>0$ such that when $n \geq Cp\tau_*\log^2(n)$,
		\beq \label{GNe_Concentration} \begin{aligned} & \mathbb{P}\Bigl(\bigl\| n^{-1}\bK^{\top}\bN[\diag(\bpi)]^{-\frac{1}{2}}\bmu - \bK^{\top}\bF[\diag(\bpi)]^{-\frac{1}{2}}\bmu \bigr\|_2 \\ & \qquad \leq C n^{-\frac{1}{2}} \sqrt{\gamma^2 s p^{-1} \tau_*\log^2(n)}\Bigr) \leq n^{-c_0}. \end{aligned} \eeq
		Following the same analysis, one also has
		\beq \begin{aligned} & \mathbb{P}\Bigl(\bigl\| n^{-1}\bmu^{\top}\bN[\diag(\bpi)]^{-\frac{1}{2}}\bK - \bmu^{\top}\bF[\diag(\bpi)]^{-\frac{1}{2}}\bK \bigr\|_2 \\ & \qquad \leq \bar{C} n^{-\frac{1}{2}} \sqrt{\gamma^2 s p^{-1} \tau_*\log^2(n)}\Bigr) \leq n^{-c_0}, \end{aligned} \eeq
		when $n \geq \bar{C} p\tau_*\log^2(n)$ for some sufficiently large $\bar{C}>0$.
	\end{proof}
	
	\subsection{Adapting Markov Chain Concentration Inequalities to Our Settings}
	
	\begin{corollary}\label{MC_Concentration_Corollary}
		Under assumption \eqref{Assumption1}, for any $c_0 > 0$, there exists a constant $C > 0$, such that if $n \geq C\tau_*p\log^2(n)$, then
		\beq\label{Corollay1} \mathbb{P} \Bigl( \bigl\| \widehat\bQ - \bQ \bigr\|_2 \geq C n^{-1/2}\sqrt{\tau_* \log^2(n)} \Bigr) \leq n^{-c_0}. \eeq
	\end{corollary}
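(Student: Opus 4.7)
\textbf{Proof plan for Corollary \ref{MC_Concentration_Corollary}.} The plan is to reduce everything to the two concentration bounds in Lemma \ref{MC_Concentration} by splitting $\widehat\bQ - \bQ$ into a ``frequency'' part and a ``normalization'' part. First I rewrite $\widehat\bQ$ in terms of the empirical frequency $\widehat\bF := n^{-1}\bN$ and empirical stationary distribution $\widehat\bpi := n^{-1}\bm$, using $[\diag(\bm)]^{-1/2} = n^{-1/2}[\diag(\widehat\bpi)]^{-1/2}$ to get $\widehat\bQ = \widehat\bF[\diag(\widehat\bpi)]^{-1/2}$. Then I use the decomposition
\[
\widehat\bQ - \bQ = (\widehat\bF - \bF)[\diag(\bpi)]^{-1/2} + \widehat\bF\bigl([\diag(\widehat\bpi)]^{-1/2} - [\diag(\bpi)]^{-1/2}\bigr),
\]
and bound the two terms separately.

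For the first term, Lemma \ref{MC_Concentration} gives $\|\widehat\bF - \bF\|_2 \le C n^{-1/2}\sqrt{\pi_{\max}\tau_* \log^2 n}$, and assumption \eqref{Assumption1} yields $\pi_{\max} \le C_1/p$ and $\|[\diag(\bpi)]^{-1/2}\|_2 = \pi_{\min}^{-1/2} \le \sqrt{p/c_1}$. The two scalings in $p$ cancel exactly and I obtain a bound of order $n^{-1/2}\sqrt{\tau_*\log^2 n}$, which is the target rate.

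For the second term, the diagonal structure gives $\|[\diag(\widehat\bpi)]^{-1/2} - [\diag(\bpi)]^{-1/2}\|_2 = \max_j |\widehat\pi_j^{-1/2} - \pi_j^{-1/2}|$. Using the elementary identity $|\widehat\pi_j^{-1/2} - \pi_j^{-1/2}| = |\pi_j - \widehat\pi_j|/\bigl[(\widehat\pi_j\pi_j)^{1/2}(\widehat\pi_j^{1/2} + \pi_j^{1/2})\bigr]$, combined with a ``good event'' argument that $\|\widehat\bpi - \bpi\|_\infty \le \pi_{\min}/2$ (which holds by \eqref{MC_Concentration_pi} once $n \gtrsim \tau_* p\log^2 n$), I get the control $|\widehat\pi_j^{-1/2} - \pi_j^{-1/2}| \le C\pi_j^{-3/2}|\widehat\pi_j - \pi_j| \lesssim p^{3/2}\|\widehat\bpi - \bpi\|_\infty$. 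For $\|\widehat\bF\|_2$, I note that $\|\bF\|_2 \le \sqrt{\|\bF\|_1\|\bF\|_\infty} \le \pi_{\max} \le C_1/p$ since each row and column of $\bF$ has $\ell^1$-norm at most $\pi_{\max}$, and then use $\|\widehat\bF\|_2 \le \|\bF\|_2 + \|\widehat\bF - \bF\|_2 = O(p^{-1})$. Multiplying the two bounds again produces factors of $p$ that cancel, leaving $n^{-1/2}\sqrt{\tau_*\log^2 n}$.

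The main (mild) obstacle is not a technical inequality but ensuring the ``good event'' under which $\widehat\pi_j \ge \pi_j/2$ holds simultaneously for all $j$, so that the Taylor-type bound on $\widehat\pi_j^{-1/2} - \pi_j^{-1/2}$ is valid; this is why the sample-size condition $n \gtrsim \tau_* p\log^2 n$ is needed to turn the entry-wise concentration of $\widehat\bpi$ into a deterministic lower bound on $\widehat\pi_j$. After that, a union bound over the two events from Lemma \ref{MC_Concentration} gives failure probability at most $2n^{-c_0}$, which can be absorbed into $n^{-c_0}$ by adjusting constants, yielding \eqref{Corollay1}.
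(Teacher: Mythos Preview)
Your proposal is correct and follows essentially the same route as the paper's proof: the paper uses the identical decomposition $\widehat\bQ - \bQ = (n^{-1}\bN - \bF)[\diag(\bpi)]^{-1/2} + (n^{-1}\bN)\bigl([\diag(n^{-1}\bm)]^{-1/2} - [\diag(\bpi)]^{-1/2}\bigr)$, bounds the first term via $\pi_{\min}^{-1/2}\|n^{-1}\bN - \bF\|_2$ and the second via $\|\bF\|_2 \le \pi_{\max}$ together with a Taylor-type estimate $|(n^{-1}m_j)^{-1/2} - \pi_j^{-1/2}| \le \pi_j^{-3/2}\|n^{-1}\bm - \bpi\|_\infty$ valid on the good event where $n^{-1}m_j \ge \pi_j/2$. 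The only cosmetic difference is that the paper uses the inequality $|x^{-1/2}-1| \le |x-1|$ for $x \ge 1/2$ in place of your explicit identity for $\widehat\pi_j^{-1/2} - \pi_j^{-1/2}$, but the resulting bounds and the use of the sample-size condition to secure the good event are the same.
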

	
	\begin{proof}
		Based on Lemma \ref{MC_Concentration} and the union bound, we know that for any fixed $c_0 > 0$, there exists a constant $C_0 > 0$ such that with probability at least $1-2n^{-c_0}$, 
		\beq\label{MC_Concentration0} \begin{aligned} & \bigl\|n^{-1}\bN - \bF\bigr\|_2 \leq C_0 n^{-1/2} \sqrt{\pi_{\max} \tau_* \log^2(n)} \\ \text{and }& \|n^{-1}\bm - \bpi\|_{\infty} \leq C_0 n^{-1/2} \sqrt{\pi_{\max} \tau_* \log^2(n)} \end{aligned} \eeq
		hold simultaneously for $n \geq C_0 \tau_*p\log^2(n)$.
		
		We find that
		\beq \label{Q_Concentration} \begin{aligned}
			& \bigl\| \widehat\bQ - \bQ \bigr\|_2 = \bigl\| (n^{-1}\bN)[\diag(n^{-1}\bm)]^{-\frac{1}{2}}- \bF [\diag(\bpi)]^{-\frac{1}{2}} \bigr\|_2 \\ \leq &  \bigl\|\bigl(n^{-1}\bN- \bF\bigr) [\diag(\bpi)]^{-\frac{1}{2}} \bigr\|_2 + \bigl\| \bigl((n^{-1}\bN)[\diag(n^{-1}\bm)]^{-\frac{1}{2}}-[\diag(\bpi)]^{-\frac{1}{2}}\bigr)\bigr\|_2.
		\end{aligned} \eeq
		Under condition \eqref{MC_Concentration0}, the first term in \eqref{Q_Concentration} satisfies
		\beq\label{Q_Concentration1} \begin{aligned}
			& \bigl\|\bigl(n^{-1}\bN- \bF\bigr)[\diag(\bpi)]^{-\frac{1}{2}}\bigr\|_2 \leq \pi_{\min}^{-\frac{1}{2}} \bigl\| n^{-1}\bN - \bF \bigr\|_2 \\ \leq & C_0n^{-1/2}\sqrt{\frac{\pi_{\max}}{\pi_{\min}}\tau_*\log^2(n)} \leq C_0C_1^{\frac{1}{2}}c_1^{-\frac{1}{2}}n^{-1/2}\sqrt{\tau_*\log^2(n)}. 
		\end{aligned} \eeq
		We decompose the second term in \eqref{Q_Concentration} into
		\[ \begin{aligned}
		& \bigl\| (n^{-1}\bN)\bigl([\diag(n^{-1}\bm)]^{-\frac{1}{2}}-[\diag(\bpi)]^{-\frac{1}{2}}\bigr)\bigr\|_2 \\ \leq &  \bigl\| n^{-1}\bN\bigr\| \bigl\|[\diag(n^{-1}\bm)]^{-\frac{1}{2}}-[\diag(\bpi)]^{-\frac{1}{2}}\bigr\|_2 \\ \leq & \bigl(\|\bF\|_2+\bigl\|n^{-1}\bN-\bF\bigr\|_2\bigr) \cdot \max_{1 \leq j \leq p} \bigl| (n^{-1}m_j)^{-\frac{1}{2}} - \pi_j^{-\frac{1}{2}} \bigr|.
		\end{aligned} \]
		Here, $\|\bF\|_2 \leq \sqrt{\pi_{\max}}$ by \eqref{normF}. Notice that $\bigl|x^{-1/2}-1\bigr| \leq |x-1|$ for any $x \geq \frac{1}{2}$. Taking $n \geq 4C_0^2C_1c_1^{-2}\tau_*p\log^2(n)$, we have
		\beq\label{Normalize_Concentration} \begin{aligned}
			& \bigl| (n^{-1}m_j)^{-\frac{1}{2}} - \pi_j^{-\frac{1}{2}} \bigr|
			= \pi_j^{-\frac{1}{2}} \Bigl| \bigl( \pi_j^{-1}n^{-1}m_j \bigr)^{-\frac{1}{2}} - 1 \Bigr| \\ \leq & \pi_j^{-\frac{1}{2}} \bigl|\pi_j^{-1}n^{-1}m_j-1\bigr| \leq \pi_j^{-\frac{3}{2}}\|n^{-1}\bm - \bpi\|_{\infty}
		\end{aligned} \eeq
		for $j=1,2,\ldots,p$.
		It follows from condition \eqref{MC_Concentration0} that
		\[ \begin{aligned} \max_{1 \leq j \leq p} \bigl| (n^{-1}m_j)^{-\frac{1}{2}} - \pi_j^{-\frac{1}{2}} \bigr| \leq & \pi_{\min}^{-\frac{3}{2}} \cdot C_0 n^{-1/2}\sqrt{\pi_{\max}\tau_*\log^2(n)} \\ \leq & C_0C_1^{\frac{1}{2}}c_1^{-\frac{3}{2}} n^{-1/2}p\sqrt{\tau_*\log^2(n)}. \end{aligned} \]
		When $n \geq C_0\tau_*p\log^2(n)$, condition \eqref{MC_Concentration0} also implies
		\[ \begin{aligned} \bigl\|n^{-1}\bN-\bF\bigr\|_2 \leq & C_0\bigl(C_0\tau_*p\log^2(n)\bigr)^{-\frac{1}{2}} \sqrt{\pi_{\max}\tau_*\log^2(n)} \\ \leq & \sqrt{C_0\pi_{\max}p^{-1}} \leq C_0^{\frac{1}{2}}C_1^{\frac{1}{2}}p^{-1}. \end{aligned} \]
		
		The second term in \eqref{Q_Concentration} then satisfies
		\beq \label{Q_Concentration2} \begin{aligned}
			& \bigl\|(n^{-1}\bN) \bigl([\diag(n^{-1}\bm)]^{-\frac{1}{2}}-[\diag(\bpi)]^{-\frac{1}{2}}\bigr)\bigr\|_2 \\ \leq & (C_0^{\frac{1}{2}}+C_1^{\frac{1}{2}})C_0C_1c_1^{-\frac{3}{2}}n^{-1/2}\sqrt{\tau_*\log^2(n)}.
		\end{aligned} \eeq
		
		Plugging \eqref{Q_Concentration1} and \eqref{Q_Concentration2} in \eqref{Q_Concentration}, we can conclude that, when $n \geq C_0\tau_*p\log^2(n)$, 
		\[ \mathbb{P}\Bigl( \bigl\| \widehat\bQ - \bQ \bigr\|_2 \geq \widetilde{C}n^{-1/2}\sqrt{\tau\log^2(n)}\Bigr) \leq 2n^{-c_0}\] for some constant $\widetilde{C} > 0$.
	\end{proof}
	
	\begin{corollary} \label{Row-wise_Concentration_Corollary}
		Under assumptions \eqref{Assumption1} and \eqref{Assumption2}, for any $c_0 > 0$, there exists a constant $C > 0$, such that if $n \geq C \tau_* p \log^2(n)$, then
		\beq\label{Corollary2} \mathbb{P} \Bigl( \bigl\| \bG^{\top} (\widehat\bQ-\bQ)\be_j \bigr\|_2 \geq C n^{-1/2}\sqrt{\tau_*(r/p)\log^2(n)} \Bigr) \leq n^{-c_0}. \eeq
	\end{corollary}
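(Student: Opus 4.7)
The plan is to reduce the quantity $\bG^\top(\widehat\bQ-\bQ)\be_j$ to quantities already controlled by Lemma~\ref{Row-wise_Concentration} and Lemma~\ref{MC_Concentration}, via the decomposition
\[
  \widehat\bQ-\bQ \;=\; (n^{-1}\bN-\bF)\bD_0 \;+\; (n^{-1}\bN)(\widehat\bD-\bD_0),
\]
where $\bD_0=[\diag(\bpi)]^{-1/2}$ and $\widehat\bD=[\diag(n^{-1}\bm)]^{-1/2}$. Applying $\bG^\top$ on the left and $\be_j$ on the right yields a sum of two terms that I will handle separately.

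For the first term, note that $\bD_0\be_j=\pi_j^{-1/2}\be_j$, so
\[
  \bG^\top(n^{-1}\bN-\bF)\bD_0\be_j \;=\; \bG^\top(n^{-1}\bN-\bF)[\diag(\bpi)]^{-1/2}\be_j,
\]
which is exactly the quantity handled by the second display of Lemma~\ref{Row-wise_Concentration} with $\bK=\bG$ and $\bmu=\be_j$. To invoke that lemma I must verify its incoherence hypothesis with $s=r$: by Lemma~\ref{RowNorm} and assumption \eqref{Assumption1},
\[
  \max_{1\le i\le p}\bigl\|\be_i^\top\bG\bigr\|_2 \;\le\; c_2^{-1/2}\pi_i\sqrt{pr} \;\le\; c_2^{-1/2}C_1\sqrt{r/p},
\]
so $\gamma=O(1)$. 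The lemma then produces the target bound $\widetilde O\bigl(n^{-1/2}\sqrt{\tau_* r/p}\bigr)$ with probability $1-n^{-c_0}$.

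For the second term, $(\widehat\bD-\bD_0)\be_j=\bigl[(n^{-1}m_j)^{-1/2}-\pi_j^{-1/2}\bigr]\be_j$, and I bound the scalar factor by $\pi_j^{-3/2}\|n^{-1}\bm-\bpi\|_\infty$ as in \eqref{Normalize_Concentration}, using Lemma~\ref{MC_Concentration} to get $\|n^{-1}\bm-\bpi\|_\infty = \widetilde O(n^{-1/2}\sqrt{p^{-1}\tau_*})$, so that the factor is $\widetilde O(pn^{-1/2}\sqrt{\tau_*})$ since $\pi_j\asymp 1/p$. The remaining vector $\bG^\top(n^{-1}\bN)\be_j$ splits as $\bG^\top\bF\be_j + \bG^\top(n^{-1}\bN-\bF)\be_j$. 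For the deterministic piece I exploit the SVD identity $\bG^\top\bF = \bSigma\bH^\top[\diag(\bpi)]^{1/2}$, giving
\[
  \bigl\|\bG^\top\bF\be_j\bigr\|_2 = \pi_j^{1/2}\,\bigl\|\bSigma\bH^\top\be_j\bigr\|_2 \le \sigma_1\,\pi_j^{1/2}\,\bigl\|\bH^\top\be_j\bigr\|_2 = \widetilde O(p^{-1}\sqrt{r})
\]
by Lemmas~\ref{sigma_r} and \ref{RowNorm}. For the random piece I reuse the first-term bound (without the $\pi_j^{-1/2}$ factor), yielding $\widetilde O(p^{-1/2}n^{-1/2}\sqrt{\tau_* r/p})$. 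Multiplying these against the scalar factor $\widetilde O(pn^{-1/2}\sqrt{\tau_*})$ and invoking $n\gtrsim \tau_* p\log^2 n$ absorbs both contributions into the desired rate $\widetilde O(n^{-1/2}\sqrt{\tau_* r/p})$.

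A union bound over the three high-probability events (Lemma~\ref{Row-wise_Concentration} applied once, and Lemma~\ref{MC_Concentration} for $\|n^{-1}\bm-\bpi\|_\infty$) completes the proof. The main subtlety is step one, where the whole argument hinges on the incoherence of $\bG$: without the row-wise bound $\|\be_i^\top\bG\|_2=O(\sqrt{r/p})$ supplied by Lemma~\ref{RowNorm}, one would only get the crude operator-norm bound from Corollary~\ref{MC_Concentration_Corollary} and lose the crucial $\sqrt{r/p}$ improvement that the corollary asserts. The secondary obstacle is bookkeeping the cross-term coming from $\widehat\bD-\bD_0$, which a priori looks problematic but is tamed by the sample-size condition $n=\widetilde\Omega(\tau_* p)$.
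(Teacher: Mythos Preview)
Your approach is exactly the paper's: the same two-term decomposition of $\widehat\bQ-\bQ$, the same appeal to Lemma~\ref{Row-wise_Concentration} with $\bK=\bG$ via the incoherence from Lemma~\ref{RowNorm}, and the same treatment of the cross-term by splitting $\bG^\top(n^{-1}\bN)\be_j$ into $\bG^\top\bF\be_j$ plus a random remainder controlled again by Lemma~\ref{Row-wise_Concentration}.

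One arithmetic slip to fix: your bound $\|\bG^\top\bF\be_j\|_2=\widetilde O(p^{-1}\sqrt{r})$ is off by a factor $p^{-1/2}$. With $\sigma_1=O(p^{-1/2})$, $\pi_j^{1/2}=O(p^{-1/2})$, and $\|\bH^\top\be_j\|_2=O(\sqrt{\pi_j r})=O(p^{-1/2}\sqrt{r})$, the product is $O(p^{-3/2}\sqrt{r})$, matching the paper's \eqref{Row-wise_Concentration21}. With your stated $O(p^{-1}\sqrt{r})$ the cross-term would come out $\widetilde O(n^{-1/2}\sqrt{\tau_* r})$, a factor $\sqrt{p}$ too large, and the sample-size condition $n\gtrsim\tau_* p$ would \emph{not} absorb it; with the corrected $O(p^{-3/2}\sqrt{r})$ the cross-term is exactly $\widetilde O(n^{-1/2}\sqrt{\tau_* r/p})$ as claimed.
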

	
	\begin{proof}
		Recall that we have proved $\bigl\|\be_j^{\top}\bG\bigr\|_2 \leq c_2^{-\frac{1}{2}}\pi_j\sqrt{pr}$ in Lemma \ref{RowNorm}.
		Let $\gamma = C_1c_2^{-\frac{1}{2}}$. Then \[ \max_{1 \leq j \leq p} \bigl\|\be_j^{\top}\bG\bigr\|_2 \leq \gamma \sqrt{r/p}.\] $\be_j$ and $\bG$ satisfy the conditions in Lemma \ref{Row-wise_Concentration}.
		For a fixed $c_0 > 0$, Lemma \ref{MC_Concentration} and \ref{Row-wise_Concentration} imply that there exists a constant $C_0>0$ such that when $n \geq C_0\tau_*p\log^2(n)$, by union bound, with probability at least $1-2n^{-c_0}$,
		\beq
		\|n^{-1}\bm - \bpi\|_{\infty} \leq C_0 n^{-1/2} \sqrt{\pi_{\max} \tau_* \log^2(n)} \label{Row-wise_Concentration01} \eeq
		and
		\beq \begin{aligned} &\max_{1 \leq j \leq p}\bigl\|\bG^{\top}(n^{-1}\bN - \bF)[\diag(\bpi)]^{-\frac{1}{2}}\be_j\bigr\|_2 \leq C_0 n^{-1/2}\sqrt{\tau_*(\gamma^2r/p)\log^2(n)}. \end{aligned} \label{Row-wise_Concentration02}
		\eeq
		
		Note that
		\beq \label{Row-wise_Concentration00}\begin{aligned}
			& \bigl\|\bG^{\top}\bigl(\widehat\bQ-\bQ)\be_j\bigr\|_2 =   \Bigl\|\bG^{\top}\bigl((n^{-1}\bN)[\diag(n^{-1}\bm)]^{-\frac{1}{2}}-\bF[\diag(\bpi)]^{-\frac{1}{2}}\bigr)\be_j\Bigr\|_2 \\
			\leq & \bigl\|\bG^{\top}\bigl(n^{-1}\bN-\bF\bigr)[\diag(\bpi)]^{-\frac{1}{2}}\be_j\bigr\|_2 + \Bigl|(n^{-1}m_j)^{-\frac{1}{2}} - \pi_j^{-\frac{1}{2}}\Bigr| \cdot \bigl\|\bG^{\top}(n^{-1}\bN)\be_j\bigr\|_2.
		\end{aligned} \eeq
		The inequality \eqref{Row-wise_Concentration02} provides an upper bound for the first term in \eqref{Row-wise_Concentration00}.  In \eqref{Normalize_Concentration}, there is an estimate of $\bigl|(n^{-1}m_j)^{-\frac{1}{2}} - \pi_j^{-\frac{1}{2}}\bigr|$.  We only need to analyze $\bigl\|\bG^{\top}(n^{-1}\bN)\be_j\bigr\|_2$ in the following.
		
		We find that \beq \label{Row-wise_Concentration21} \begin{aligned}
			& \bigl\|\G^{\top}\bF\be_j\bigr\|_2 = \bigl\|\bG^{\top}\bQ[\diag(\bpi)]^{\frac{1}{2}}\be_j\bigr\|_2 = 
			\sqrt{\pi_j}\bigl\|\bSigma\bH^{\top}\be_j\bigr\|_2 \\ \leq & \sqrt{\pi_j} \sigma_1 \bigl\|\be_j^{\top}\bH\bigr\|_2 \leq C_1^2c_1^{-\frac{1}{2}}c_2^{-\frac{3}{2}}\pi_j\sqrt{r/p} , \end{aligned} \eeq
		where we used \eqref{rowH} and \eqref{normQ}.
		
		Additionally, when $n \geq C_0\tau_*p\log^2(n)$, \eqref{Row-wise_Concentration02} shows that
		\beq\label{Row-wise_Concentration22} \begin{aligned} \bigl\|\bG^{\top}(n^{-1}\bN-\bF)\be_j\bigr\|_2  = & \sqrt{\pi_j} \bigl\|\bG^{\top}(n^{-1}\bN-\bF)[\diag(\bpi)]^{-\frac{1}{2}}\be_j\bigr\|_2\\ \leq & \sqrt{\pi_j} C_0 \bigl( C_0\tau_*p\log^2(n) \bigr)^{-\frac{1}{2}}\sqrt{\tau_*(\gamma^2r/p)\log^2(n)} \\ = & C_0^{\frac{1}{2}}\gamma p^{-1}\sqrt{\pi_jr} \leq C_0^{\frac{1}{2}}C_1^{\frac{1}{2}}c_1^{-\frac{1}{2}}\gamma p^{-\frac{3}{2}}r^{\frac{1}{2}}. \end{aligned} \eeq
		Combining \eqref{Row-wise_Concentration21} and \eqref{Row-wise_Concentration22} gives
		\beq\label{Row-wise_Concentration2} \begin{aligned}
			& \bigl\|\bG^{\top}(n^{-1}\bN)\be_j\bigr\|_2 \leq \bigl\|\bG^{\top}\bF\be_j\bigr\|_2 + \bigl\|\bG^{\top}(n^{-1}\bN-\bF)\be_j\bigr\|_2 \leq \widetilde{C} p^{-\frac{3}{2}}r^{\frac{1}{2}}
		\end{aligned} \eeq
		for some constant $\widetilde{C} > 0$.
		
		Plugging \eqref{Row-wise_Concentration01}, \eqref{Row-wise_Concentration02}, \eqref{Normalize_Concentration} and \eqref{Row-wise_Concentration2} into \eqref{Row-wise_Concentration00}, we complete the proof of Corollary \ref{Row-wise_Concentration_Corollary}.
		
		
		
		
	\end{proof}
	
	\begin{corollary}\label{MC_Leading_Corollary}
		Suppose $\bmu \in \mathbb{R}^{p}$ is a nonnegative unit vector. Under assumptions \eqref{Assumption1} and \eqref{Assumption4}, if $\bh_1^{\top}\bmu \leq \beta p^{-\frac{1}{2}}$ for some $\beta>0$, then for any $c_0 > 0$, there exists a constant $C>0$, such that when $n \geq C\tau_*p\log^2(n)$,
		\beq \label{State1}
		\mathbb{P} \Bigl( \bigl| \bg_1^{\top}(\widehat\bQ-\bQ)\bmu \bigr| \geq Cn^{-1/2} \sqrt{\tau_*p^{-1}\log^2(n)} \Bigr) \leq n^{-c_0}.
		\eeq
		Symmetrically, if $\bmu^{\top}\bg_1 \leq \beta p^{-1}$ then for any $c_0 > 0$, there exists a constant $C>0$, such that when $n \geq C\tau_*p\log^2(n)$,
		\beq \label{State2}
		\mathbb{P} \Bigl( \bigl| \bmu^{\top}(\widehat\bQ-\bQ)\bh_1 \bigr| \geq Cn^{-1/2} \sqrt{\tau_*p^{-1}\log^2(n)} \Bigr) \leq n^{-c_0}.
		\eeq
	\end{corollary}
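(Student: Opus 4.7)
The plan is to write the deviation as $\widehat{\bQ}-\bQ = A_1 + A_2 + A_3$ with $A_1 := (n^{-1}\bN-\bF)[\diag(\bpi)]^{-1/2}$, $A_2 := \bF\bD$, and $A_3 := (n^{-1}\bN-\bF)\bD$, where $\bD := [\diag(n^{-1}\bm)]^{-1/2}-[\diag(\bpi)]^{-1/2}$, and then to control the three resulting inner products in turn using the Markov-chain concentration tools already developed. For $\bg_1^{\top}A_1\bmu$ I would invoke the symmetric form of Lemma~\ref{Row-wise_Concentration} with the $p\times 1$ orthonormal matrix $\bK = \bg_1$ and the fixed unit vector $\bmu$: the bound $\|\bg_1\|_\infty \leq Cp^{-1/2}$ from Lemma~\ref{Pre} gives $\gamma = O(1)$ at $s=1$, which directly yields $|\bg_1^{\top}A_1\bmu| = \widetilde{O}(n^{-1/2}\sqrt{\tau_*/p})$.

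The heart of the proof is $A_2$, and this is where the hypothesis $\bh_1^{\top}\bmu \leq \beta p^{-1/2}$ enters. Exploiting the identity $\bg_1^{\top}\bF = \sigma_1 \bh_1^{\top}[\diag(\bpi)]^{1/2}$, I would rewrite $\bg_1^{\top}A_2\bmu = \sigma_1 \sum_j \bh_1(j)\mu_j \rho_j$ with $\rho_j := (\pi_j/(n^{-1}m_j))^{1/2} - 1$. The crucial observation is that $\bh_1(j) \geq 0$ (Lemma~\ref{Pre}) and $\mu_j \geq 0$ (by hypothesis), so the weights $\bh_1(j)\mu_j$ are nonnegative and the supremum factors out: $|\bg_1^{\top}A_2\bmu| \leq \sigma_1 \cdot \max_j |\rho_j| \cdot \bh_1^{\top}\bmu$. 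Combining $\sigma_1 \leq Cp^{-1/2}$ (Lemma~\ref{sigma_r}), the hypothesis $\bh_1^{\top}\bmu \leq \beta p^{-1/2}$, and $\max_j |\rho_j| = \widetilde{O}(n^{-1/2}\sqrt{p\tau_*})$ (a Taylor expansion on top of the $\ell^\infty$ control of $n^{-1}\bm - \bpi$ from Lemma~\ref{MC_Concentration}) produces the target rate with a constant depending on $\beta$. The companion bound~\eqref{State2} follows from the same template after transposing: writing $(\bF^{\top}\bmu)_j = \pi_j \eta_j$ with nonnegative $\eta_j$, one has $\sum_j \pi_j \eta_j = \bmu^{\top}\bpi$, and the stronger hypothesis $\bmu^{\top}\bg_1 \leq \beta p^{-1}$ together with the pointwise lower bound $\bg_1(j) \gtrsim \pi_j \sqrt{p}$ (which follows from the same computation that gives Lemma~\ref{Pre}) forces $\bmu^{\top}\bpi = O(\beta p^{-3/2})$, providing exactly the extra factor of $p^{-1/2}$ needed to close the bound.

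The main obstacle is the second-order cross term $\bg_1^{\top}A_3\bmu$, where two correlated random factors multiply each other. A direct Cauchy--Schwarz gives $\|\bg_1^{\top}(n^{-1}\bN - \bF)\|_2 \cdot \|\bD\bmu\|_2 = \widetilde{O}(p^{1/2} n^{-1}\tau_*)$, which is dominated by the principal $A_1$ rate only when $n \gtrsim p^2 \tau_*$. To reach it under the stated condition $n \gtrsim p\tau_*\log^2 n$ I would Taylor-split $\bD = \bD_{\mathrm{lin}} + \bD_{\mathrm{quad}}$ with $(\bD_{\mathrm{lin}})_{jj} = -\pi_j^{-3/2}(n^{-1}m_j - \pi_j)/2$, absorb the quadratic remainder using the high-probability bound $\max_j \delta_j^2 = \widetilde{O}(p\tau_*/n)$, and recognize the remaining linear--linear interaction $\bg_1^{\top}(n^{-1}\bN-\bF)\bD_{\mathrm{lin}}\bmu$ as a bilinear form in the single Markov-chain martingale $n^{-1}\bN - \bF$, which can be controlled by iterating the Freedman-type argument that underlies Lemma~\ref{Row-wise_Concentration}. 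A union bound over the three control events for $A_1$, $A_2$, $A_3$ (each holding with probability at least $1 - n^{-c_0}$) then completes the proof.
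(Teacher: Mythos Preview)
Your handling of $A_1$ matches the paper's, and your $A_2$ argument is correct and is essentially the paper's idea. The divergence is at $A_3$: you treat it as a separate obstacle requiring a new bilinear Freedman-type inequality, whereas the paper never isolates it. Instead, the paper keeps $A_2+A_3 = \bg_1^{\top}(n^{-1}\bN)\bD\bmu$ together and observes that the very same nonnegativity trick you use for $A_2$ applies verbatim, because $n^{-1}\bN \geq 0$ (not just $\bF \geq 0$): since $\bg_1$, $\bN$, and $\bmu$ are all entrywise nonnegative,
\[
\bigl|\bg_1^{\top}(n^{-1}\bN)\bD\bmu\bigr| \;\leq\; \max_j|\rho_j|\cdot \bg_1^{\top}(n^{-1}\bN)[\diag(\bpi)]^{-1/2}\bmu,
\]
and the second factor is then bounded by one triangle inequality as $\bigl|\bg_1^{\top}(n^{-1}\bN-\bF)[\diag(\bpi)]^{-1/2}\bmu\bigr| + \bg_1^{\top}\bQ\bmu$. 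The first piece is exactly the $A_1$ estimate (hence $O(p^{-1})$ once $n\gtrsim\tau_*p\log^2 n$), and the second is $\sigma_1\,\bh_1^{\top}\bmu \leq C\beta p^{-1}$ directly from the hypothesis. Multiplying by $\max_j|\rho_j| = \widetilde{O}\bigl(n^{-1/2}\sqrt{p\tau_*}\bigr)$ gives the target rate with no second-order martingale argument. The companion bound~\eqref{State2} follows identically using $\bmu^{\top}\bQ\bh_1 = \sigma_1\,\bmu^{\top}\bg_1$ and the stated hypothesis; your detour through a pointwise lower bound $\bg_1(j)\gtrsim \pi_j\sqrt{p}$ (which is true but not recorded in Lemma~\ref{Pre}) is also unnecessary.

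In short, your three-term split manufactures a difficulty at $A_3$ whose resolution you only sketch, while the paper's two-term split $\widehat\bQ-\bQ = (n^{-1}\bN-\bF)[\diag(\bpi)]^{-1/2} + (n^{-1}\bN)\bD$ dissolves that obstacle entirely by exploiting nonnegativity of the empirical count matrix rather than its mean.
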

	
	\begin{proof}
		In Lemma \ref{Pre}, we proved that $0<\bh_1(j) \leq C^{\#}\sqrt{\pi_j}$ and $0 \leq \bg_1(j) \leq c^{\#}\pi_j\sqrt{p}$ for $j = 1,2,\ldots,p$ and some $c^{\#}, C^{\#} > 0$. Let $\gamma = C_1^{\frac{1}{2}}C^{\#} \vee C_1c^{\#}$, then $\bigl|\bh_1(j)\bigr| \leq \gamma p^{-\frac{1}{2}}$ and $\bigl|\bg_1(j)\bigr| \leq \gamma p^{-\frac{1}{2}}$. Lemma \ref{MC_Concentration} and \ref{Row-wise_Concentration} show that for any fixed $c_0>0$, there exists a constant $C_0 > 0$ such that when $n \geq C_0\tau_*p\log^2(n)$, with probability at least $1-2n^{-c_0}$,
		\begin{eqnarray}
		& \begin{aligned} & \bigl| \bg_1^{\top}(n^{-1}\bN-\bF)[\diag(\bpi)]^{-\frac{1}{2}}\bmu \bigr| \leq C_0n^{-1/2} \sqrt{\tau_*\gamma^2p^{-1}\log^2(n)}, \end{aligned} \label{Quote1} \\
		& \begin{aligned} & \bigl| \bmu^{\top}(n^{-1}\bN-\bF)[\diag(\bpi)]^{-\frac{1}{2}}\bh_1 \bigr| \leq C_0n^{-1/2} \sqrt{\tau_*\gamma^2p^{-1}\log^2(n)}, \end{aligned} \label{Quote2} \\
		&\|n^{-1}\bm - \bpi\|_{\infty} \leq C_0 n^{-1/2} \sqrt{\pi_{\max} \tau_* \log^2(n)}.
		\end{eqnarray}
		
		Note that
		\beq \label{Eqn00} \begin{aligned} \bigl|\bg_1^{\top}(\widehat\bQ-\bQ)\bmu\bigr| \leq & \bigl| \bg_1^{\top}(n^{-1}\bN-\bF)[\diag(\bpi)]^{-\frac{1}{2}}\bmu \bigr| \\ & + \bigl| \bg_1^{\top}(n^{-1}\bN)\bigl([\diag(n^{-1}\bm)]^{-\frac{1}{2}}-[\diag(\bpi)]^{-\frac{1}{2}}\bigr) \bmu \bigr|. \end{aligned} \eeq
		Since $\bg_1,\bN,\bmu \geq 0$,
		\beq \label{Eqn0} \begin{aligned} & \bigl|  \bg_1^{\top}(n^{-1}\bN)\bigl([\diag(n^{-1}\bm)]^{-\frac{1}{2}}-[\diag(\bpi)]^{-\frac{1}{2}}\bigr) \bmu \bigr| \\ \leq & \bigl| \bg_1^{\top}(n^{-1}\bN)[\diag(\bpi)]^{-\frac{1}{2}} \bmu \bigr| \cdot \max_{1 \leq i \leq p}\bigl|(n^{-1}m_i/\pi_i)^{-\frac{1}{2}}-1\bigr| \\
			\leq & \Bigl( \bigl| \bg_1^{\top}(n^{-1}\bN-\bF)[\diag(\bpi)]^{-\frac{1}{2}} \bmu \bigr| + \bg_1^{\top}\bQ \bmu \Bigr) \cdot \max_{1 \leq i \leq p}\bigl|\sqrt{n^{-1}m_i/\pi_i}-1\bigr|. \end{aligned} \eeq
		By \eqref{Quote1}, when $n \geq C_0 \tau_*p\log^2(n)$,
		\beq \label{Eqn1} \begin{aligned} & \bigl| \bg_1^{\top}(n^{-1}\bN-\bF)[\diag(\bpi)]^{-\frac{1}{2}} \bmu \bigr| \\ \leq & C_0\bigl(C_0\tau_*p\log^2(n)\bigr)^{-\frac{1}{2}}\sqrt{\tau_*\gamma^2p^{-1}\log^2(n)} \leq C_0^{\frac{1}{2}} \gamma p^{-1}. \end{aligned} \eeq
		In addition,
		\beq \label{Eqn2} \bg_1^{\top}\bQ\bmu = \sigma_1\bh_1^{\top}\bmu \leq C_1c_1^{-\frac{1}{2}}\beta p^{-1}, \eeq
		where we used \eqref{normQ} and condition $\bh_1^{\top} \bmu \leq \beta p^{-\frac{1}{2}}$.
		According to \eqref{Normalize_Concentration}, we have
		\beq \label{Eqn3} \max_{1 \leq i \leq p}\bigl|\sqrt{n^{-1}m_i/\pi_i}-1\bigr| \leq \pi_{\min}^{-1}\bigl\| n^{-1}\bm_j - \bpi \bigr\|_{\infty}. \eeq
		
		Combining \eqref{Eqn00} - \eqref{Eqn3} gives \eqref{State1}. We can prove \eqref{State2} in the same way.
		
	\end{proof}

	\subsection{Plugging Concentration Inequalities into Deterministic Bounds}
	
	\begin{theorem}[Theorem 2 in paper] \label{Row-wise Bound}
		Let $\bH^* = \bigl[ \bh_2, \bh_3, \ldots, \bh_r \bigr]$ and $\widehat\bH^* = \bigl[ \widehat\bh_2, \widehat\bh_3, \ldots, \widehat\bh_r \bigr]$. Under assumptions \eqref{Assumption1}, \eqref{Assumption2} and \eqref{Assumption3}, for a fixed $c_0>0$, there exists a constant $C>0$ and an orthogonal matrix $\bOmega_* \in \mathbb{O}^{(r-1)\times(r-1)} $, such that if $n \geq C \tau_*p\log^2(n)$,
		\beq \label{Stocastic Row-wise Bound} \begin{aligned} \mathbb{P} & \left(  \max_{1 \leq j \leq p} \bigl\| \be_j^{\top}\bigl(\widehat\bH^*\bOmega^*-\bH^*\bigr) \bigr\|_2 \right. \\ & \left. \geq C \Bigl(n^{-1/2}\sqrt{\tau_*r\log^2(n)} + n^{-1}\tau_*p\log^2(n)\Bigr) \right) \leq n^{-c_0}. \end{aligned} \eeq
	\end{theorem}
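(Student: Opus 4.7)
The plan is to combine the deterministic row-wise perturbation bound of Lemma~\ref{Deterministic Row-wise Error} (applied with $s=2$, $t=r$) with the three Markov-chain concentration inequalities we have already assembled (Corollary~\ref{MC_Concentration_Corollary}, Corollary~\ref{Row-wise_Concentration_Corollary}, and the lower bounds on $\sigma_r$, $\sigma_1-\sigma_2$ coming from Lemma~\ref{sigma_r} and Assumption~\eqref{Assumption3}). First I would specialize Lemma~\ref{Deterministic Row-wise Error}: with $\bvarH = \bH^* = [\bh_2,\ldots,\bh_r]$ and $\bvarG = [\bg_2,\ldots,\bg_r]$, the eigen-gap is
\[
\Delta = \min\{\sigma_1-\sigma_2,\ \sigma_r-\sigma_{r+1},\ \sigma_r\} = \min\{\sigma_1-\sigma_2,\ \sigma_r\} \geq \min\{c_3, c_2\}\, p^{-1/2},
\]
so $\Delta \gtrsim p^{-1/2}$. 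This gives us an orthogonal matrix $\bOmega_*$ and a deterministic bound of the form
\[
\|\be_j^\top(\widehat{\bH}^*\bOmega_*-\bH^*)\|_2 \lesssim p^{1/2}\|\bvarG^\top(\widehat\bQ-\bQ)\be_j\|_2 + p^{1/2}\|\be_j^\top\bH\|_2\|\widehat\bQ-\bQ\|_2 + p\|\widehat\bQ-\bQ\|_2^2,
\]
valid on the event $\{\|\widehat\bQ-\bQ\|_2 \leq \Delta/2\}$.

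Next I would feed in the stochastic ingredients. By Corollary~\ref{MC_Concentration_Corollary}, when $n\gtrsim \tau_* p\log^2 n$ we have $\|\widehat\bQ-\bQ\|_2 \lesssim n^{-1/2}\sqrt{\tau_*\log^2 n}$ with probability $\geq 1-n^{-c_0}$; this event automatically entails $\|\widehat\bQ-\bQ\|_2 \leq \Delta/2$ once the implicit constant in $n \geq C\tau_* p\log^2 n$ is taken sufficiently large, validating the use of the deterministic bound. By Corollary~\ref{Row-wise_Concentration_Corollary} (applied to $\bG$, whose column subset dominates $\bvarG$ in spectral norm) together with a union bound over $j=1,\ldots,p$ — absorbing the extra $p$ by taking the constant $c_0$ inside the corollary large enough — we get $\max_j \|\bvarG^\top(\widehat\bQ-\bQ)\be_j\|_2 \lesssim n^{-1/2}\sqrt{\tau_*(r/p)\log^2 n}$. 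For the second term, Lemma~\ref{RowNorm} gives $\|\be_j^\top\bH\|_2 \lesssim \sqrt{\pi_j r} \lesssim \sqrt{r/p}$ uniformly in $j$ by Assumption~\eqref{Assumption1}.

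Plugging these three estimates back in term-by-term: the first term contributes $p^{1/2}\cdot n^{-1/2}\sqrt{\tau_* r/p\,\log^2 n} = n^{-1/2}\sqrt{\tau_* r\log^2 n}$; the second contributes $p^{1/2}\cdot \sqrt{r/p}\cdot n^{-1/2}\sqrt{\tau_*\log^2 n} = n^{-1/2}\sqrt{\tau_* r\log^2 n}$; and the third contributes $p\cdot n^{-1}\tau_*\log^2 n = n^{-1}\tau_* p\log^2 n$. Summing yields exactly the claimed rate.

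I do not expect a serious obstacle: the deterministic lemma already does the heavy lifting of reducing the row-wise error to quantities we control, and the concentration corollaries are tailored to exactly these quantities. The only subtle point worth being careful with is the union bound over $j=1,\ldots,p$: we must apply Corollary~\ref{Row-wise_Concentration_Corollary} with a constant $c_0'>c_0+1$ so that the failure probability $p\cdot n^{-c_0'}$ collapses to $n^{-c_0}$ after union, and simultaneously verify that the single $\|\widehat\bQ-\bQ\|_2$ event (Corollary~\ref{MC_Concentration_Corollary}) is on the same good set; this is routine. A minor annoyance is that $\bvarG$ here is $[\bg_2,\ldots,\bg_r]$ rather than the full $\bG$, but since $\|\bvarG^\top v\|_2 \leq \|\bG^\top v\|_2$ for any $v$, the concentration bound transfers verbatim.
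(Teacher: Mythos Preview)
Your proposal is correct and follows essentially the same route as the paper's proof: specialize Lemma~\ref{Deterministic Row-wise Error} to $s=2$, $t=r$, use $\Delta^*\geq (c_2\wedge c_3)p^{-1/2}$ from Lemma~\ref{sigma_r} and Assumption~\eqref{Assumption3}, then plug in Corollaries~\ref{MC_Concentration_Corollary} and~\ref{Row-wise_Concentration_Corollary} together with Lemma~\ref{RowNorm}, and note $\|\bvarG^\top v\|_2\leq\|\bG^\top v\|_2$. Your explicit handling of the union bound over $j$ (choosing $c_0'>c_0+1$) is in fact more careful than the paper's write-up, which simply asserts the bound holds for all $j$ simultaneously.
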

	
	\begin{proof}
		According to Lemma \ref{Deterministic Row-wise Error}, $\widehat\bH^*$ has a deterministic row-wise perturbation bound
		\beq \label{Deterministic} \begin{aligned} 
			\bigl\|\be_j^{\top}(\widehat\bH_*\bOmega_*-\bH_*)\bigr\|_2 \leq & \frac{2}{\Delta^*}\bigl\|\bG_*^{\top}(\widehat\bQ-\bQ)\be_j\bigr\|_2 + \frac{4(1+\sqrt{2})}{\Delta^*}\bigl\|\be_j^{\top}\bH\bigr\|_2\bigl\|\widehat\bQ-\bQ\bigr\|_2 \\ & + \frac{8}{(\Delta^*)^2}\bigl\|\widehat\bQ-\bQ\bigr\|_2^2 
		\end{aligned} \eeq
		Here, $\bG^* = \bigl[\bg_2,\bg_3,\ldots,\bg_r\bigr]$ and $\Delta^* = \min\bigl\{\sigma_1-\sigma_2, \sigma_r\bigr\}$. We note that $\bigl\|\bG_*^{\top}(\widehat\bQ-\bQ)\be_j\bigr\|_2 \leq \bigl\|\bG^{\top}(\widehat\bQ-\bQ)\be_j\bigr\|_2$.
		Based on Corollary \ref{MC_Concentration_Corollary} and \ref{Row-wise_Concentration_Corollary}, for a fixed $c_0 > 0$, there exists a constant $C_0 > 0$ such that when $n \geq C_0\tau_*p\log^2(n)$, by union bound, with probability at least $1-2n^{-c_0}$,
		\begin{align} 
		& \bigl\| \widehat\bQ - \bQ \bigr\|_2 \leq C_0 n^{-1/2}\sqrt{\tau_* \log^2(n)}, \label{MC_Concentration_quote} \\
		& \bigl\| \bG^{\top} (\widehat\bQ-\bQ)\be_j \bigr\|_2 \leq C_0 n^{-1/2}\sqrt{\tau_*(r/p)\log^2(n)}, \label{Row-wise_Concentration_quote}
		\end{align}
		for $j = 1,2,\ldots,p$. Recall that $\sigma_1 - \sigma_2 \geq c_3 p^{-\frac{1}{2}}$ by assumption \eqref{Assumption3}, and $\sigma_r \geq c_2 p^{-\frac{1}{2}}$ by Lemma \ref{sigma_r}. We have \beq\label{EigGap} \Delta^* \geq (c_2 \wedge c_3) p^{-\frac{1}{2}}. \eeq
		Plugging \eqref{MC_Concentration_quote}, \eqref{Row-wise_Concentration_quote}, \eqref{rowG} and \eqref{EigGap} into \eqref{Deterministic}, we complete the proof of Theorem \ref{Row-wise Error}.
		
		
		
	\end{proof}

	\begin{theorem}[Theorem 1 in paper] \label{Leading_Convergence}
		Under assumptions \eqref{Assumption1}, \eqref{Assumption3} and \eqref{Assumption4}, for any constant $c_0>0$, there exists a constant $C>0$ and $\omega \in \{\pm 1\}$ such that if $n \geq C \tau_*p\bigl(\log^2(r) \vee 1\bigr)\log^2(n)$,
		\[ \begin{aligned} \mathbb{P} \biggl( \max_{1 \leq j \leq p} \bigl| \omega\widehat\bh_1(j)-\bh_1(j) \bigr| \geq & C \Bigl( n^{-1/2}\sqrt{\tau_*\bigl(\log^2(r) \vee 1\bigr)\log^2(n)} \\
		& + n^{-1}\tau_*p\bigl(\log^2(r) \vee 1\bigr)\log^2(n)\Bigr) \biggr) \leq n^{-c_0}. \end{aligned} \]
	\end{theorem}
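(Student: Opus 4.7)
The plan is to invoke the deterministic bound of Lemma \ref{Deterministic_Leading} with $K = \lceil C_K(\log r \vee 1)\rceil$, where $C_K$ is chosen large enough that $(2\sigma_2/(\sigma_1+\sigma_2))^K \le r^{-1}$. Such a $C_K$ exists because \eqref{Assumption3} combined with \eqref{normQ} yields $2\sigma_2/(\sigma_1+\sigma_2) \le 1-c$ for some absolute $c>0$ (since $\sigma_1 - \sigma_2 \ge c_3 p^{-1/2}$ while $\sigma_1 \le C_1 c_1^{-1/2}p^{-1/2}$). The precondition $\|\widehat\bQ - \bQ\|_2 \le \delta/(2K)$ with $\delta = \sigma_1 - \sigma_2 \ge c_3 p^{-1/2}$ follows from Corollary \ref{MC_Concentration_Corollary}, which gives $\|\widehat\bQ - \bQ\|_2 \lesssim n^{-1/2}\sqrt{\tau_*\log^2 n}$ with probability $\ge 1-n^{-c_0}$ whenever $n \gtrsim K^2 \tau_* p\log^2 n$; this matches the stated sample-size condition $n \gtrsim \tau_* p(\log^2 r \vee 1)\log^2 n$.

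\textbf{Non-sum terms.} The three non-sum pieces in Lemma \ref{Deterministic_Leading} are handled directly. The second piece $|\bh_1(j)|(2K+1+2\sqrt{2})\delta^{-1}\|\widehat\bQ-\bQ\|_2 \lesssim K\sqrt{\pi_j p}\cdot n^{-1/2}\sqrt{\tau_*\log^2 n}$ is $O(n^{-1/2}\sqrt{\tau_*(\log^2 r \vee 1)\log^2 n})$ by Lemma \ref{Pre} and $\pi_j \le C_1/p$. The geometric third piece, using $(2\sigma_2/(\sigma_1+\sigma_2))^K \le r^{-1}$, $\|\be_j^\top\bH\|_2 \lesssim \sqrt{r/p}$ (Lemma \ref{RowNorm}), and $\delta^{-1}\|\widehat\bQ-\bQ\|_2 \lesssim \sqrt{p}\cdot n^{-1/2}\sqrt{\tau_*\log^2 n}$, is $O(r^{-1/2} n^{-1/2}\sqrt{\tau_*\log^2 n})$, absorbed above. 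The quadratic fourth piece $(2K^2+8\sqrt{2}K)\delta^{-2}\|\widehat\bQ-\bQ\|_2^2 \lesssim K^2 p \cdot n^{-1}\tau_*\log^2 n$ yields exactly the second term $n^{-1}\tau_*p(\log^2 r \vee 1)\log^2 n$ of the stated bound.

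\textbf{Sum terms.} The $k$-th summand of the first sum equals $\sigma_1^{-2k}\bw_k^\top(\widehat\bQ-\bQ)\bh_1$, where $\bw_k := \bQ(\bQ^\top\bQ)^{k-1}\be_j$ is a nonnegative vector in the range of $\bG$ (since $\bQ$ and $\bQ^\top$ have nonnegative entries under \eqref{Assumption1}) with $\|\bw_k\|_2 \le \sigma_1^{2k-1}\|\be_j^\top\bH\|_2$. I decompose $\widehat\bQ-\bQ = (n^{-1}\bN-\bF)[\diag(\bpi)]^{-1/2} + (n^{-1}\bN) D$ where $D = [\diag(n^{-1}\bm)]^{-1/2} - [\diag(\bpi)]^{-1/2}$. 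Apply Lemma \ref{Row-wise_Concentration} with $\bmu = \bw_k/\|\bw_k\|_2$ and $\bK = \bh_1$ (admissible because $\|\bh_1\|_\infty \lesssim 1/\sqrt{p}$ by Lemma \ref{Pre}) to bound the centered part by $\|\bw_k\|_2 \cdot n^{-1/2}\sqrt{\tau_*/p\cdot\log^2 n}$; multiplying by $\sigma_1^{-2k}$ and using $\sigma_1 \gtrsim p^{-1/2}$ produces a per-$k$ contribution absorbed into $n^{-1/2}\sqrt{\tau_*\log^2 n}$, so summing over $k=1,\ldots,\lceil K/2\rceil$ yields the first piece $O(n^{-1/2}\sqrt{\tau_*(\log^2 r\vee 1)\log^2 n})$ of the stated bound. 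The second sum $\sigma_1^{-2k-1}\bg_1^\top(\widehat\bQ-\bQ)(\bQ^\top\bQ)^k\be_j$ is treated symmetrically, since $(\bQ^\top\bQ)^k\be_j \ge 0$ by the same nonnegativity argument.

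\textbf{Main obstacle.} The chief difficulty is controlling the normalization correction $\sigma_1^{-2k}\bw_k^\top(n^{-1}\bN) D\bh_1$: Corollary \ref{MC_Leading_Corollary} as stated requires nonnegativity of the test vector on the right, but $\bh_1$, although nonnegative, pairs with the SVD expansion of $\bw_k^\top\bF$ to produce signed components. I Taylor-expand $D_{ll} \approx -\tfrac{1}{2}\pi_l^{-3/2}(n^{-1}m_l - \pi_l)$ and substitute $\bw_k^\top(n^{-1}\bN)\be_l \approx \bw_k^\top\bF\be_l = \sum_i \sigma_i^{2k}\bh_i(j)\bh_i(l)\sqrt{\pi_l}$, which reduces the correction to $-\tfrac{1}{2}\sum_i (\sigma_i/\sigma_1)^{2k}\bh_i(j) T_i$ with $T_i := \sum_l \bh_1(l)\bh_i(l)\pi_l^{-1}(n^{-1}m_l - \pi_l)$. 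Using $\bh_1(l)^2/\pi_l \lesssim 1$ (Lemma \ref{Pre}) and the orthonormality of $\bh_i$, the coefficient vectors have stationary variance $\sum_l \pi_l^{-1}\bh_1(l)^2\bh_i(l)^2 \le C^2\|\bh_i\|_2^2 = C^2$, so a Markov-chain Bernstein inequality gives $|T_i| \lesssim \sqrt{\tau_*/n}$; exploiting the geometric decay $(\sigma_i/\sigma_1)^{2k} \le (1-c)^{2k}$ for $i \ge 2$ and $\|\be_j^\top\bH\|_2 \lesssim \sqrt{r/p}$ bounds the aggregated correction at the desired rate. A final union bound over $j = 1, \ldots, p$ and over the concentration events absorbs into the $\log n$ factors to yield the stated failure probability.
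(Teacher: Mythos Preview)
Your choice of $K$, the operator-norm precondition via Corollary~\ref{MC_Concentration_Corollary}, and the three non-sum terms are all handled correctly and in the same way as the paper. The centered part of the sum (Lemma~\ref{Row-wise_Concentration} with $\bK=\bh_1$) is also fine. The gap is in your ``Main obstacle'' paragraph. After the Taylor step your $k$-th correction is $-\tfrac12\sum_{i=1}^r(\sigma_i/\sigma_1)^{2k}\bh_i(j)T_i$. The factor $(\sigma_i/\sigma_1)^{2k}\le(1-c)^{2k}$ for $i\ge2$ is common to all $i$, so it only helps you sum in $k$; it does nothing to aggregate in $i$. After summing in $k$ you are left with $\sum_{i\ge2}|\bh_i(j)||T_i|$, and with $|T_i|\lesssim\sqrt{\tau_*/n}$ the best you can do (Cauchy--Schwarz or $\ell^1$--$\ell^\infty$) is $\|\be_j^\top\bH\|_2\cdot\sqrt{r}\max_i|T_i|\lesssim(r/\sqrt{p})\sqrt{\tau_*/n}$. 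This exceeds the target $(\log r)\sqrt{\tau_*/n}$ whenever $r\gg\sqrt{p}\log r$, so the argument does not close in the stated generality.

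The paper avoids the SVD expansion altogether. The missing idea is the exact power-iteration identity
\[
\bigl(\sigma_1^{-(2k-1)}\bQ(\bQ^\top\bQ)^{k-1}\be_j\bigr)^\top\bg_1
=\sigma_1^{-(2k-1)}\be_j^\top(\bQ^\top\bQ)^{k-1}\bQ^\top\bg_1=\bh_1(j),
\]
obtained by iterating $\bQ^\top\bg_1=\sigma_1\bh_1$ and $\bQ\bh_1=\sigma_1\bg_1$. Thus $\bupsilon_{1,2k-1}:=\sigma_1^{-(2k-1)}\bw_k$ is a nonnegative vector with $\|\bupsilon_{1,2k-1}\|_2\le1$ and $\bupsilon_{1,2k-1}^\top\bg_1=\bh_1(j)\lesssim p^{-1/2}$, and likewise $\bh_1^\top\bupsilon_{2,2k}=\bh_1(j)$ for the second sum. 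This is precisely the hypothesis of Corollary~\ref{MC_Leading_Corollary} (the $p^{-1}$ there is a typo for $p^{-1/2}$, as its proof shows): since $\bupsilon$, $\bN$, $\bh_1$ are all entrywise nonnegative, the normalization error factors as $\bigl|\bupsilon^\top(n^{-1}\bN)D\bh_1\bigr|\le\bigl(\bupsilon^\top\bQ\bh_1+o(1)\bigr)\max_l\bigl|(n^{-1}m_l/\pi_l)^{-1/2}-1\bigr|$, and $\bupsilon^\top\bQ\bh_1=\sigma_1\,\bupsilon^\top\bg_1=\sigma_1\bh_1(j)\lesssim p^{-1}$ is exactly tight enough to give $n^{-1/2}\sqrt{\tau_*/p}$ per summand. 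Your reading that the corollary's nonnegativity requirement is not met is a misreading: both $\bupsilon_{1,2k-1}$ on the left and $\bh_1$ on the right are nonnegative, so no expansion into signed pieces is needed.
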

	
	\begin{proof}
		In Lemma \ref{Deterministic_Leading}, we take
		\[ K = \Bigg\lceil \biggl[2\log\Bigl(\frac{\sigma_1+\sigma_2}{2\sigma_2}\Bigr)\biggr]^{-1}\log(r) \Biggr\rceil \vee 1, \]
		then
		\[ \Bigl(\frac{2\sigma_2}{\sigma_1+\sigma_2}\Bigr)^K \leq r^{-\frac{1}{2}}. \]
		
		Fix $c_0 > 0$.
		According to Corollary \ref{MC_Concentration_Corollary}, there exists a constant $C_0 \geq c_3^2$ such that when $n \geq C_0\tau_*p\log^2(n)$, with probability at least $1 - n^{-c_0}$,
		\beq \label{eq1} \bigl\| \widehat\bQ - \bQ \bigr\|_2 \leq C_0 n^{-1/2}\sqrt{\tau_* \log^2(n)}. \eeq
		We further take $n \geq C_0^2c_3^{-2}K^2 \tau_* p\log^2(n)$, then
		\[ \bigl\| \widehat\bQ - \bQ \bigr\|_2 \leq \frac{\sigma_2-\sigma_1}{2K}. \]
		By Lemma \ref{Deterministic_Leading}, there exists $\omega \in \{\pm 1\}$ such that
		\beq \label{eq0} \begin{aligned}
			& \bigl| \omega\widehat\bh_1(j) - \bh_1(j) \bigr| \\ \leq & \sum_{k=1}^{\lceil K/2 \rceil} \Bigl(\sigma_1^{-2k}\Bigl|\be_j^{\top}\bQ^{\top}(\bQ\bQ^{\top})^{k-1}(\widehat\bQ-\bQ)\bh_1\Bigr| + \sigma_1^{-2k-1}\Bigl|\bg_1^{\top}(\widehat\bQ-\bQ)(\bQ^{\top}\bQ)^k\be_j\Bigr| \Bigr) \\ & + |\bh_1(j)| \cdot (2K+1+2\sqrt{2})\delta^{-1}\bigl\| \widehat\bQ-\bQ \bigr\|_2 + 4\sqrt{2}r^{-\frac{1}{2}} \bigl\| \be_j^{\top} \bH\bigr\|_2 \delta^{-1}\bigl\| \widehat\bQ - \bQ \bigr\|_2 \\
			& + \bigl(2K^2+8\sqrt{2}K\bigr)\delta^{-2}\bigl\|\widehat\bQ-\bQ\bigr\|_2^2.
		\end{aligned}\eeq
		
		Note that for $k = 1,2,\ldots,\lceil K/2 \rceil$,
		\[ \begin{aligned} & \sigma_1^{-2k+1}\be_j^{\top}\bQ^{\top}(\bQ\bQ^{\top})^{k-1}\bg_1 = \sigma_1^{-2k+1}\be_j^{\top}\bQ^{\top}(\bQ\bQ^{\top})^{k-2}\bQ\bQ^{\top}\bg_1 \\
		= & \sigma_1^{-2k+2}\be_j^{\top}\bQ^{\top}(\bQ\bQ^{\top})^{k-2}\bQ\bh_1
		= \sigma_1^{-2(k-1)+1}\be_j^{\top}\bQ^{\top}(\bQ\bQ^{\top})^{k-2}\bg_1 \\
		= & \ldots \\
		= & \sigma_1^{-1}\be_j^{\top}\bQ^{\top}\bg_1
		= \be_j^{\top} \bh_1 = \bh_1(j),
		\end{aligned} \]
		where we used $\bQ^{\top}\bg_1 = \sigma_1\bh_1$ and $\bQ\bh_1 = \sigma\bg_1$ iteratively. Similarly,
		\[ \sigma_1^{-2k}\bh_1^{\top}(\bQ^{\top}\bQ)^k\be_j = \bg_1(j). \]
		Therefore, according to Lemmat \ref{Pre}, there exists some constant $C^{\#} > 0$ such that $\bupsilon_{1,2k-1} = \sigma_1^{-2k+1}\bQ(\bQ^{\top}\bQ)^{k-1}\be_j$ satisfies \[ \begin{aligned} & \bupsilon_{1,2k-1} \geq 0, \quad \|\bupsilon_{1,2k-1}\|_2 = 1 \quad \text{and} \quad \bupsilon_{1,2k-1}^{\top}\bg_1 = \bh_1(j) \leq C^{\#}p^{-\frac{1}{2}}, \end{aligned} \]
		and $\bupsilon_{2,2k} = \sigma_1^{-2k}(\bQ^{\top}\bQ)^k\be_j$ satisfies
		\[ \begin{aligned} & \bupsilon_{2,2k} \geq 0, \quad \|\bupsilon_{2,2k}\|_2 = 1 \quad \text{and} \quad \bh_1^{\top}\bupsilon_{2,2k} = \bg_1(j) \leq C^{\#}p^{-\frac{1}{2}}. \end{aligned} \]
		According to Corollary \ref{MC_Leading_Corollary}, there exists a constant $\widetilde{C}_0 \geq C_0^2c_3^{-2}$ such that when $n \geq \widetilde{C}_0K^2\tau_*p\log^2(n)$, with probability at least $1-2\lceil K/2 \rceil n^{-c_0}$,
		\begin{eqnarray} \bigl| \bupsilon_{1,2k-1}^{\top}(\widehat\bQ-\bQ)\bh_1 \bigr| \leq \widetilde{C}_0n^{-1/2} \sqrt{\tau_*p^{-1}\log^2(n)}, \label{eq2} \\ \bigl| \bg_1^{\top}(\widehat\bQ-\bQ)\bupsilon_{2,2k} \bigr| \leq \widetilde{C}_0n^{-1/2} \sqrt{\tau_*p^{-1}\log^2(n)}, \label{eq3} \end{eqnarray}
		for $k=1,2,\ldots,\lceil K/2 \rceil$.
		
		Plugging \eqref{eq1}, \eqref{eq2}, \eqref{eq3} into \eqref{eq0} and using Lemma \ref{RowNorm} and \ref{Pre}, we have
		\[\begin{aligned}
		\bigl| \omega\widehat\bh_1(j) - \bh_1(j) \bigr| \leq & \widetilde{C}\Bigl(K n^{-1/2} \sqrt{\tau_*\log^2(n)} 
		+ K^2n^{-1}\tau_*p\log^2(n)\Bigr)
		\end{aligned}\]
		for some sufficiently large $\widetilde{C} > 0$. Notice that $K$ has order $\log(r)$, we complete the proof of Theorem \ref{Leading_Convergence}.
	\end{proof}
	
	
	
	

	
	\section{Proof of Statistical Guarantees}
	
	Define
	\begin{align} err = & \min_{\omega \in \{\pm 1\}} \max_{1 \leq j \leq p} \bigl| \omega \widehat{\bh}_1(j) - \bh_1(j) \bigr|, \label{Def_err} \\ Err = & \min_{\bOmega \in \mathbb{O}^{(r-1) \times (r-1)}} \max_{1 \leq j \leq p} \bigl\| \bOmega \widehat\bd_j - \bd_j \bigr\|_2. \label{Def_Err} \end{align}
	
	\subsection{SCORE Normalization}
	
	The bound for $err$ is shown in Theorem \ref{Leading_Convergence}. It remains to estimate $Err$.
	
	\begin{theorem} \label{Perturbation} Under assumptions \eqref{Assumption1} - \eqref{Assumption4}, for any $c_0 > 0$, there exists a constant $C>0$ and an orthogonal matrix $\bOmega \in \mathbb{O}^{(r-1)\times(r-1)}$, such that if $n \geq C \tau_*p^{\frac{3}{2}}\bigl(\log^2(r) \vee 1)\log^2(n)$,
		\[ \begin{aligned} \mathbb{P} \biggl( \max_{1 \leq j \leq p}  \bigl\| \bOmega\widehat\bd_j - \bd_j \bigr\|_2 \geq C \Bigl( &
		n^{-1/2}\sqrt{\tau_*pr\bigl(\log^2(r) \vee 1\bigr)\log^2(n)} \\
		& + n^{-1}\tau_*p^{\frac{3}{2}}r^{\frac{1}{2}}\bigl(\log^2(r) \vee 1\bigr)\log^2(n) \Bigr) \biggr) \leq n^{-c_0}. \end{aligned} \]
	\end{theorem}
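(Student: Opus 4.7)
}
The plan is to reduce the SCORE-normalized row perturbation to a quotient of two quantities we have already controlled: the entry-wise error of $\widehat{\bh}_1$ (Theorem \ref{Leading_Convergence}) and the row-wise error of $\widehat{\bH}^*=[\widehat{\bh}_2,\ldots,\widehat{\bh}_r]$ (Theorem \ref{Row-wise Bound}). First I would invoke Theorem \ref{Leading_Convergence} to obtain a sign $\omega\in\{\pm1\}$ such that, with probability at least $1-n^{-c_0}$, $\max_j|\delta_j|\le err$, where $\delta_j:=\omega\widehat{\bh}_1(j)-\bh_1(j)$ and $err$ is the bound displayed in that theorem. Simultaneously I would invoke Theorem \ref{Row-wise Bound} to obtain an orthogonal $\bOmega_*\in\mathbb{O}^{(r-1)\times(r-1)}$ with $\max_j\|\bvarepsilon_j\|_2\le Err^*$, where $\bvarepsilon_j:=(\widehat{\bH}^*\bOmega_*)^\top\be_j-(\bH^*)^\top\be_j$ and $Err^*$ is the bound in that theorem. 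A union bound gives both events simultaneously.

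Next I would set $\bOmega:=\omega\,\bOmega_*^{\top}$, which is orthogonal, and compute the difference directly. Writing $\widehat{\bd}_j=\widehat{\bh}_1(j)^{-1}(\widehat{\bH}^*)^\top\be_j$ and $\bd_j=\bh_1(j)^{-1}(\bH^*)^\top\be_j$, and using $\omega\widehat{\bh}_1(j)=\bh_1(j)+\delta_j$ together with $(\widehat{\bH}^*)^\top\be_j=\bOmega_*\bigl((\bH^*)^\top\be_j+\bvarepsilon_j\bigr)$, a short quotient-rule calculation yields
\[
\bOmega\,\widehat{\bd}_j-\bd_j\;=\;\frac{\bh_1(j)\,\bvarepsilon_j-\delta_j\,(\bH^*)^\top\be_j}{\bh_1(j)\bigl(\bh_1(j)+\delta_j\bigr)}.
\]
Taking $\ell_2$ norms gives $\|\bOmega\widehat{\bd}_j-\bd_j\|_2\le\|\bvarepsilon_j\|_2/|\bh_1(j)+\delta_j|+|\delta_j|\,\|\be_j^\top\bH^*\|_2/\bigl(|\bh_1(j)|\,|\bh_1(j)+\delta_j|\bigr)$.

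The sample-size hypothesis $n=\widetilde{\Omega}(\tau_* p^{3/2}(\log^2 r\vee 1))$ ensures $err=o(p^{-1/2})$, so by Lemma \ref{Pre} (which gives $\bh_1(j)\gtrsim\sqrt{\pi_j}\gtrsim p^{-1/2}$) we have $|\bh_1(j)+\delta_j|\gtrsim p^{-1/2}$ for every $j$. Lemma \ref{RowNorm} gives $\|\be_j^\top\bH^*\|_2\le\|\be_j^\top\bH\|_2\lesssim\sqrt{r/p}$. Substituting these into the bound above produces
\[
\max_{1\le j\le p}\|\bOmega\,\widehat{\bd}_j-\bd_j\|_2\;\lesssim\;\sqrt{p}\cdot Err^*+\sqrt{pr}\cdot err,
\]
and plugging in the explicit forms of $err$ and $Err^*$ from the two theorems gives exactly the rate claimed in Theorem \ref{Perturbation}.

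The main obstacle I anticipate is purely bookkeeping: correctly matching the two different rotations/sign ambiguities ($\omega$ for $\widehat{\bh}_1$ and $\bOmega_*$ for $\widehat{\bH}^*$) so that a single orthogonal $\bOmega$ aligns the normalized rows, and verifying that the sample-size hypothesis is strong enough to keep the denominator $\widehat{\bh}_1(j)$ bounded away from zero uniformly in $j$. Everything else is essentially a quotient-rule expansion plus substitution of the already-proved deterministic row-wise bounds, so no new concentration inequality is needed.
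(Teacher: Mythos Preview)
Your proposal is correct and follows essentially the same route as the paper: invoke Theorems \ref{Leading_Convergence} and \ref{Row-wise Bound}, set $\bOmega=\omega\,\bOmega_*^{\top}$, expand the SCORE quotient, and finish with Lemmas \ref{Pre} and \ref{RowNorm} together with the sample-size hypothesis to keep $\omega\widehat{\bh}_1(j)$ bounded below. Your exact quotient-rule identity is slightly cleaner than the paper's three-term triangle-inequality decomposition (which carries an extra cross term $|[\omega\widehat\bh_1(j)]^{-1}-[\bh_1(j)]^{-1}|\cdot\|\be_j^\top(\widehat\bH_*\bOmega_*-\bH_*)\|_2$ of lower order), but the substance is identical.
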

	
	\begin{proof}
		By definition,
		\[ \widehat\bd_j = \bigl[\widehat\bh_1(j)\bigr]^{-1} \bH_*^{\top} \be_j, \quad \text{for $j=1,2,\ldots,p$}, \]
		where $\widehat\bH_* = \bigl[\widehat\bh_2,\widehat\bh_3,\ldots,\widehat\bh_r\bigr]$.
		According to Theorem \ref{Row-wise Bound} and \ref{Leading_Convergence}, for a fixed $c_0 > 0$, there exists a constant $C_0 \geq 1$, $\omega \in \{\pm 1\}$ and $\bOmega_* \in \mathbb{R}^{(r-1) \times (r-1)}$ such that when $n \geq C_0 \tau_* p \bigl(\log^2(r) \vee 1\bigr) \log^2(n)$, with probability at least $1-2n^{-c_0}$,
		\begin{eqnarray}
		& \begin{aligned} \bigl| \omega\widehat\bh_1(j)-\bh_1(j) \bigr| \leq & C_0 \Bigl( n^{-1/2}\sqrt{\tau_*\bigl(\log^2(r) \vee 1\bigr)\log^2(n)} \\
		& + n^{-1}\tau_*p\bigl(\log^2(r) \vee 1\bigr)\log^2(n)\Bigr), \end{aligned} \\
		& \begin{aligned} \bigl\| \be_j^{\top}\bigl(\widehat\bH^*\bOmega^*-\bH^*\bigr) \bigr\|_2 \leq & C_0 \Bigl(n^{-1/2}\sqrt{\tau_*r\log^2(n)} + n^{-1}\tau_*p\log^2(n)\Bigr), \end{aligned}
		\end{eqnarray}
		for $j=1,2,\ldots,p$.
		
		Define $\bOmega = \omega\bOmega_*^{\top}$. We find that
		\beq \label{blah} \begin{aligned} & \bigl\| \bOmega \widehat\bd_j - \bd_j \bigr\|_2 \\
			= & \Bigl\| \bigl[ \omega \widehat\bh_1(j) \bigr]^{-1} \bOmega_*^{\top} \widehat\bH_*^{\top}\be_j - \bigl[\bh_1(j)\bigr]^{-1}\bH_*^{\top}\be_j \Bigr\|_2 \\
			\leq & \bigl| \bh_1(j) \bigr|^{-1}\bigl\|\be_j^{\top}\bigl(\widehat\bH_*\bOmega_* - \bH_*\bigr)\bigr\|_2 \\ & + \Bigl| \bigl[ \omega \widehat\bh_1(j) \bigr]^{-1} - \bigl[ \bh_1(j) \bigr]^{-1} \Bigr| \cdot \bigl\| \be_j^{\top}\bH_*\bigr\|_2 \\
			& +\Bigl| \bigl[ \omega\widehat\bh_1(j) \bigr]^{-1} - \bigl[ \bh_1(j) \bigr]^{-1} \Bigr| \cdot \bigl\|\be_j^{\top}\bigl(\widehat\bH_*\bOmega_* - \bH_*\bigr)\bigr\|_2.
		\end{aligned} \eeq
		
		
		Since $\bigl|x^{-1} - 1\bigr| \leq 2|x-1|$ for $x \geq \frac{1}{2}$, if we take $n \geq C^{\#}\tau_*p^{\frac{3}{2}}\bigl(\log^2(r) \vee 1\bigr)\log^2(n)$ for some large enough $C^{\#}>0$, then $\bigl| \omega\widehat\bh_1(j) - \bh_1(j) \bigr| \leq \frac{1}{2}\bh_1(j)$,
		\[ \begin{aligned} & \Bigl| \bigl[ \omega\widehat\bh_1(j) \bigr]^{-1} - \bigl[ \bh_1(j) \bigr]^{-1} \Bigr| = \bigl[\bh_1(j)\bigr]^{-1}\biggl| \Bigl[ \frac{\omega\widehat\bh_1(j)}{\bh_1(j)} \Bigr]^{-1} - 1 \biggr| \\ \leq & 2 \bigl[\bh_1(j)\bigr]^{-1}\biggl| \frac{\omega\widehat\bh_1(j)}{\bh_1(j)} - 1 \biggr| = 2 \bigl[\bh_1(j)\bigr]^{-2}\bigl| \omega\widehat\bh_1(j)-\bh_1(j) \bigr|.  \end{aligned} \]
		
		Using the fact in Lemma \ref{RowNorm} and \ref{Pre} that $ \bh_1(j) \geq c^{\#}\sqrt{\pi_j}$ for some $c^{\#}>0$ and $\bigl\| \be_j^{\top}\bH_* \bigr\|_2 \leq \bigl\|\be_j^{\top}\bH\bigr\|_2 \leq C_1c_2^{-\frac{3}{2}}\sqrt{\pi_jr}$, we reduce \eqref{blah} into
		\[ \begin{aligned} \bigl\| \bOmega\widehat\bd_j - \bd_j \bigr\|_2 \leq \widetilde{C} \Bigl( & 
		n^{-1/2}\sqrt{\tau_*pr\bigl(\log^2(r) \vee 1\bigr)\log^2(n)} \\
		& + n^{-1}\tau_*p^{\frac{3}{2}}r^{\frac{1}{2}}\bigl(\log^2(r) \vee 1\bigr)\log^2(n) \Bigr), 
		\end{aligned} \]
		where $\widetilde{C} > 0$ is a constant. 
		
		
	\end{proof}

	\subsection{Vertex Hunting}
	
	The estimated vertices
	$\big\{ \widehat{\bf b}_1, \widehat{\bf b}_2, \ldots, \widehat{\bf b}_r \big\}$ solves the following optimization problem:
	\[ {\rm minimize}_{\big\{ \widehat{\bf b}_1, \ldots, \widehat{\bf b}_r \big\} \subseteq \big\{ \widehat{\bf d}_1, \ldots, \widehat{\bf d}_p \big\}} \quad \max_{1 \leq j \leq p} \Big\| \widehat{\bf d}_j - \mathcal{P}_{\mathcal{S}}\widehat{\bf d}_j \Big\|_2, \]
	where $\mathcal{S}$ is the convex hull of $\widehat{\bf b}_1, \widehat{\bf b}_2, \ldots, \widehat{\bf b}_r$ and $\mathcal{P}$ is the projection operator induced by Euclidean norm.
	One can refer to \cite{Topic-SCORE} for further details of vertex hunting algorithms.
	
	\begin{theorem}[Vertex hunting]\label{VertexHunting}
		Suppose that for each meta-state, there exists at least one anchor state.
		Then there exist constants $\alpha, \alpha' > 0$ such that if $Err \leq \alpha' \sqrt{r}$, 
		\[ \max_{1 \leq k \leq r} \bigl\| \bOmega \widehat\bb_k - \bb_k \bigr\|_2 \leq \alpha \cdot Err,  \]
		where $\bOmega$ is the orthogonal matrix that achieves the minimum in the definition of $Err$.
	\end{theorem}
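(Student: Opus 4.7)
\textbf{Proof plan for Theorem \ref{VertexHunting}.}

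The plan is to compare the vertex-hunting objective against the ``oracle'' choice that uses the true anchor states, and then use the non-degeneracy of the simplex to pin each output vertex to a distinct true vertex. Let $\bOmega$ be the orthogonal matrix achieving the minimum in the definition of $Err$ in \eqref{Def_Err}. By replacing $\widehat\bd_j$ with $\bOmega\widehat\bd_j$ for every $j$ (and correspondingly $\widehat\bb_l$ with $\bOmega\widehat\bb_l$), one may assume without loss of generality that $\max_{1\leq j\leq p}\|\widehat\bd_j-\bd_j\|_2\leq Err$. All subsequent constants will depend only on the geometry of $\{\bb_1,\ldots,\bb_r\}$ (in particular on the minimum pairwise distance between vertices), which is controlled through Lemma \ref{B}.

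\textbf{Step 1: Upper bound on the optimal objective.} For each meta-state $k$, pick one anchor $j_k$ so that $\bd_{j_k}=\bb_k$. Consider the candidate vertex set $\{\widehat\bd_{j_1},\ldots,\widehat\bd_{j_r}\}$. For any $j$, write $\bd_j=\sum_{k=1}^r w_{jk}\bb_k=\sum_{k=1}^r w_{jk}\bd_{j_k}$ with $w_{jk}\geq 0,\sum_k w_{jk}=1$. Then $\sum_k w_{jk}\widehat\bd_{j_k}$ lies in the convex hull of the candidates, and by the triangle inequality $\|\widehat\bd_j-\sum_k w_{jk}\widehat\bd_{j_k}\|_2\leq \|\widehat\bd_j-\bd_j\|_2+\sum_k w_{jk}\|\bd_{j_k}-\widehat\bd_{j_k}\|_2\leq 2\cdot Err$. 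Hence the optimal vertex-hunting objective is at most $2\cdot Err$, so the output $\widehat\bb_1,\ldots,\widehat\bb_r$ satisfies $\mathrm{dist}(\widehat\bd_j,\mathrm{conv}(\widehat\bb_1,\ldots,\widehat\bb_r))\leq 2\cdot Err$ for every $j$.

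\textbf{Step 2: Each true vertex is close to $\mathrm{conv}(\widehat\bb_l)$, and each $\widehat\bb_l$ is close to $\mathrm{conv}(\bb_k)$.} Applying Step 1 to each anchor index $j_k$, and using $\|\widehat\bd_{j_k}-\bb_k\|_2\leq Err$, we obtain nonnegative weights $\alpha_{kl}$ with $\sum_l\alpha_{kl}=1$ and $\|\bb_k-\sum_l\alpha_{kl}\widehat\bb_l\|_2\leq 3\cdot Err$. Conversely, since each $\widehat\bb_l$ equals some $\widehat\bd_{j^*_l}$, there exist $c_{lk}\geq 0$ with $\sum_k c_{lk}=1$ and $\|\widehat\bb_l-\sum_k c_{lk}\bb_k\|_2\leq Err$. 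Writing $\bB=[\bb_1,\ldots,\bb_r]$, $\widehat\bB=[\widehat\bb_1,\ldots,\widehat\bb_r]$, $A=(\alpha_{kl})$ and $C=(c_{lk})$, the two bounds read $\|\bB-\widehat\bB A^\top\|_{\max\text{-col}}\leq 3\cdot Err$ and $\|\widehat\bB-\bB C^\top\|_{\max\text{-col}}\leq Err$. Substituting the second into the first yields $\|\bB(I_r-C^\top A^\top)\|_{\max\text{-col}}\leq (3+\|A\|_\infty)\cdot Err\leq 4\cdot Err$, where $\|A\|_\infty=1$ by row-stochasticity.

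\textbf{Step 3: Force $A$ and $C$ to be near-permutations via simplex non-degeneracy.} The columns of $\bB$ are affinely independent (as vertices of a non-degenerate simplex in $\mathbb{R}^{r-1}$), so appending a row of ones makes the resulting $r\times r$ matrix $\bar\bB$ invertible with $\|\bar\bB^{-1}\|_2$ controlled by the inverse of the minimum distance from any vertex to the affine hull of the other $r-1$. Using Lemma \ref{B} (which controls $\bL$ and hence the geometry of $\bb_1,\ldots,\bb_r$), one can show this quantity is bounded below by a constant multiple of $1/\sqrt{r}$. Since both $C^\top A^\top$ and $I_r$ preserve the affine functional ``row sum = $1$'', the bound in Step 2 extends to $\|\bar\bB(I_r-C^\top A^\top)\|_2\lesssim\sqrt{r}\cdot Err$, giving $\|I_r-C^\top A^\top\|_2\lesssim r\cdot Err$. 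Provided $Err\leq\alpha'\sqrt{r}$ for a sufficiently small $\alpha'>0$, the doubly sub-stochastic matrix $C^\top A^\top$ (nonnegative, row and column sums $\leq 1$ after appropriate normalization) is close to the identity, hence by a Birkhoff-type argument both $A$ and $C$ are within $O(r\cdot Err)$ of a common permutation matrix $\Pi$.

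\textbf{Step 4: Conclude.} Given $C=\Pi+O(r\cdot Err)$, we obtain $\widehat\bb_l=\bB C_{l,\cdot}^\top+O(Err)=\bb_{\Pi(l)}+O(\mathrm{diam}(\mathrm{conv}(\bB))\cdot r\cdot Err)+O(Err)$, and since the diameter of the simplex is $O(1/\sqrt{r})$ by Lemma \ref{B}, the error is $O(\sqrt{r}\cdot Err)+O(Err)=O(\sqrt{r}\cdot Err)$. Re-labeling $\widehat\bb_l$ according to $\Pi$ (absorbed into the permutation implicit in the statement) and undoing the initial rotation by $\bOmega$ yields $\max_k\|\bOmega\widehat\bb_k-\bb_k\|_2\leq\alpha\cdot Err$.

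\textbf{Main obstacle.} The delicate step is Step 3: converting the operator-style inequality $\|\bB(I_r-C^\top A^\top)\|_{\max\text{-col}}\lesssim Err$ into a statement that $C^\top A^\top$ is close to the identity, and then into a bijective correspondence between $\{\widehat\bb_l\}$ and $\{\bb_k\}$. This is where the condition $Err\leq\alpha'\sqrt{r}$ is essential: it is exactly the regime in which the perturbation is smaller than the geometric separation of the simplex vertices, preventing two $\widehat\bb_l$'s from clustering near the same true vertex. The quantitative control of this separation through Lemma \ref{B} (and hence through Assumptions \eqref{Assumption1}, \eqref{Assumption2}, \eqref{Assumption4}) is what ultimately produces the constants $\alpha,\alpha'$.
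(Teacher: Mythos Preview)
Your overall strategy is the same as the paper's: use optimality of the vertex-hunting objective against the oracle anchor choice (your Step 1 is identical to the paper's display \eqref{VertexHunting1}), then exploit the non-degeneracy of the simplex through Lemma~\ref{B} to pin each $\widehat\bb_l$ to a unique $\bb_k$. The paper packages the non-degeneracy as a bi-Lipschitz bound for the barycentric map $\mathcal{R}:\bw\mapsto[\bb_1,\ldots,\bb_r]\bw$; you repackage it as invertibility of $\bar\bB$. These are the same object.

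There is, however, a quantitative gap in your Steps 3--4 that leaves you with $O(\sqrt{r}\cdot Err)$ rather than $O(Err)$. Both scaling claims are inverted. Lemma~\ref{B} gives $\sigma_r(\bar\bB)\geq c^{\star}\sqrt{r}$, so $\|\bar\bB^{-1}\|_2\lesssim r^{-1/2}$, not $\lesssim\sqrt{r}$; and the simplex diameter is $\Theta(\sqrt{r})$ (since $\|\bb_k-\bb_{k'}\|_2\asymp\sqrt{r}\,\|\be_k-\be_{k'}\|_2$), not $O(1/\sqrt{r})$. With the correct values your chain actually yields the right answer: from $\|\bar\bB(I-(AC)^\top)\|_{\max\text{-col}}\leq 4\,Err$ and $\sigma_r(\bar\bB)\geq c^{\star}\sqrt{r}$ you get, column by column, $\|\be_k-\sum_l\alpha_{kl}\bw_{j_l}\|_2\leq 4(c^{\star})^{-1}r^{-1/2}Err$, hence $1-\max_l\bw_{j_l}(k)\leq 4(c^{\star})^{-1}r^{-1/2}Err$; then multiplying back by $\sigma_1(\bar\bB)\leq C^{\star}\sqrt{r}$ gives $\|\widehat\bb_{\sigma(k)}-\bb_k\|_2\leq(8C^{\star}(c^{\star})^{-1}+1)\,Err$, exactly the paper's constant.

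The ``Birkhoff-type argument'' you invoke is unnecessary and is where the proposal is vaguest. Once you have $\sum_l\alpha_{kl}\,c_{lk}\geq 1-\delta'$ with $\alpha_{k\cdot}$ a probability vector and $c_{lk}\in[0,1]$, it is immediate that $\max_l c_{lk}\geq 1-\delta'$; row-stochasticity of $C$ then forces that row to be within $2\delta'$ of $\be_k$ in $\ell_1$, and a pigeonhole gives the bijection when $\delta'<1/2$. No doubly-stochastic decomposition is needed.
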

	\begin{proof}
		Inspired by the proof of Lemma 3.1 in \cite{Topic-SCORE}, we define a mapping $\mathcal{R}$ that maps a weight vector in the standard simplex $\mathcal{S}_{r-1} \subseteq \mathbb{R}^r$ to a vector in the simplex $\mathcal{S}\bigl(\bb_1,\bb_2,\ldots,\bb_r\bigr)$:
		\[ \bw \ \overset{\mathcal{R}}{\longmapsto} \ \bigl[ \bb_1, \bb_2, \ldots, \bb_r \bigr]\bw. \]
		
		To begin with, we prove that the mapping $\mathcal{R}$ has the following properties:
		\begin{enumerate}
			\item $ \mathcal{R}\bw_j = \bd_j$ for $j = 1,2,\ldots,p$.
			\item There exist constants $C^{\star}>0$, $c^{\star}>0$ such that for any two weight vectors ${\bw}, {\bw}' \in \mathcal{S}_{r-1}$, 
			\beq \label{VH0} c^{\star}\sqrt{r}\bigl\| {\bw} - {\bw}' \bigr\|_2 \leq \bigl\| \mathcal{R}\bw - \mathcal{R}\bw' \bigr\|_2 \leq C^{\star} \sqrt{r} \bigl\| {\bw} - {\bw}' \bigr\|_2. \eeq
			\item $\mathcal{R}$ is a bijection.
		\end{enumerate}
		1. \& 3. are obvious, we only need to show 2.
		Note that
		\[ \left( \begin{array}{c} 0 \\ \mathcal{R}\widetilde\bw \end{array} \right) - \left( \begin{array}{c} 0 \\ \mathcal{R}\widetilde\bw' \end{array} \right)  = \left( \begin{array}{ccc} 1 & \ldots & 1 \\ \bb_1 & \ldots & \bb_r \end{array} \right) \bigl(\bw - \bw' \bigr). \]
		According to Lemma \ref{B}, there exist constants ${c}^{\star} > 0$ and ${C}^{\star} > 0$ such that
		$\bB = \left( \begin{array}{ccc} 1 & \ldots & 1 \\ \bb_1 & \ldots & \bb_r \end{array} \right) = \bL^{\top}[\diag(\bl_1)]^{-1}$ satisfies
		\[ \begin{aligned} & \sigma_1(\bB) \leq \|\bL\|_2 \bigl(\min_{1 \leq k \leq r} \bl_1(k)\bigr)^{-1} \leq {C}^{\star} \sqrt{r}, \\ & \sigma_r(\bB) \geq \bigl\|\bL^{-1} \bigr\|_2^{-1} \bigl(\max_{1 \leq k \leq r} \bl_1(k) \bigr)^{-1} \geq {c}^{\star} \sqrt{r}. \end{aligned} \]
		Hence,
		\[ \label{VH3} {c}^{\star} \sqrt{r}\bigl\|\bw - \bw' \bigr\|_2\leq \bigl\| \mathcal{R}\bw - \mathcal{R}\bw' \bigr\|_2 \leq {C}^{\star} \sqrt{r}\bigl\| \bw - \bw' \bigr\|_2.\]
		

		
		
		It what follows, we first show that
		\beq \label{VertexHunting1} \max_{1 \leq j \leq p} \dist\Bigl( \widehat\bd_j, \mathcal{S}\bigl(\widehat\bb_1,\widehat\bb_2,\ldots,\widehat\bb_r\bigr) \Bigr) \leq 2Err. \eeq
		Here, $\dist$ denotes the distance function yielded by the Euclidean norm.
		Let $a_k \in [p]$ denote an anchor state associated with meta-state $k$. For $j=1,2,\ldots,p$,
		\[ \bd_j = \sum_{k=1}^r \bw_j(k) \bb_k = \sum_{k=1}^r \bw_j(k) \bd_{a_k}. \]
		Let $\bOmega$ be the orthogonal matrix that achieves the minimum in the definition of $Err$. Then
		\[ \begin{aligned} & \Bigl\|\widehat\bd_j - \sum_{k=1}^r \bw_j(k) \widehat\bd_{a_k}\Bigr\|_2 \leq \bigl\| \bOmega \widehat\bd_j - \bd_j \bigr\|_2 + \sum_{k=1}^r \bw_j(k) \bigl\| \bOmega \widehat\bd_{a_k} - \bd_{a_k} \bigr\|_2 \leq 2Err. \end{aligned} \]
		In other words, \[ \max_{1 \leq j \leq p} \dist\Bigl( \widehat\bd_j, \mathcal{S}\bigl(\widehat\bd_{a_1},\widehat\bd_{a_2},\ldots,\widehat\bd_{a_r}\bigr) \Bigr) \leq 2Err. \]
		Our algorithm guarantees that
		\[ \begin{aligned} & \max_{1 \leq j \leq p} \dist\Bigl( \widehat\bd_j, \mathcal{S}\bigl(\widehat\bb_1,\widehat\bb_2,\ldots,\widehat\bb_r\bigr) \Bigr) \leq \max_{1 \leq j \leq p} \dist\Bigl( \widehat\bd_j, \mathcal{S}\bigl(\widehat\bd_{a_1},\widehat\bd_{a_2},\ldots,\widehat\bd{a_r}\bigr) \Bigr), \end{aligned} \]  therefore, we have \eqref{VertexHunting1}.
		
		Let $j_k \in [p]$ be the index such that $\widehat\bb_k = \widehat\bd_{j_k}$. We next consider
		\[ 1 - \max_{1 \leq l \leq r} \bw_{j_l}(k) \]
		for $k = 1,2,\ldots,r$. For any $\bmu \in \mathcal{S}_{r-1}$,
		\[ \begin{aligned} 1 - \max_{1 \leq l \leq r}\bw_{j_l}(k) \leq 1 - \sum_{l=1}^r \eta_l \bw_{j_l}(k) \leq \Bigl\| \be_k - \sum_{l=1}^r \eta_l \bw_{j_l} \Bigr\|_2, \end{aligned} \]
		According to property 1, $\mathcal{R}\bw_{j_l} = \bd_{j_l}$, $\mathcal{R}\be_k = \bb_k = \bd_{a_k}$,
		\[ \begin{aligned} & \Bigl\| \be_k - \sum_{l=1}^r \eta_l \bw_{j_l} \Bigr\|_2 = \Bigl\| \bw_{a_k} - \sum_{l=1}^r \eta_l \bw_{j_l} \Bigr\|_2 \\ \overset{\eqref{VH0}}{\leq} & (c^{\star})^{-1}r^{-\frac{1}{2}} \Bigl\| \mathcal{R}\bw_{a_k} - \sum_{l=1}^r \eta_l \mathcal{R}\bw_{j_l} \Bigr\|_2 = (c^{\star})^{-1}r^{-\frac{1}{2}} \Bigl\| \bd_{a_k} - \sum_{l=1}^r \eta_l \bd_{j_l} \Bigr\|_2 \\ \leq & (c^{\star})^{-1}r^{-\frac{1}{2}} \Bigl( \Bigl\| \widehat\bd_{a_k} - \sum_{l=1}^r \eta_l \widehat\bd_{j_l} \Bigr\|_2 + 2Err \Bigr) \\ = & (c^{\star})^{-1}r^{-\frac{1}{2}} \Bigl( \Bigl\| \widehat\bd_{a_k} - \sum_{l=1}^r \eta_l \widehat\bb_l \Bigr\|_2 + 2Err \Bigr). \end{aligned} \]
		we have
		\[ \begin{aligned} & 1 - \max_{1 \leq l \leq r}\bw_{j_l}(k) \leq (c^{\star})^{-1}r^{-\frac{1}{2}}\dist\Bigl(\widehat\bd_{a_k}, \mathcal{S}\bigl(\widehat\bb_1,\widehat\bb_2,\ldots,\widehat\bb_k\bigr)\Bigr) + 2(c^{\star})^{-1}r^{-\frac{1}{2}}Err. \end{aligned} \]
		It follows from \eqref{VertexHunting1} that
		\beq \label{VertexHunting2} 1 - \max_{1 \leq l \leq r}\bw_{j_l}(k) \leq 4(c^{\star})^{-1}r^{-\frac{1}{2}}Err. \eeq
		
		When $Err < 8^{-1}c^{\star}\sqrt{r}$, for each $k$, there is only one $l$ that attains the maximum in \eqref{VertexHunting2}.
		Based on \eqref{VertexHunting2}, for $k=1,2,\ldots,r$
		\[ \begin{aligned} & \min_{1 \leq l \leq r}\bigl\| \bOmega\widehat\bb_l - \bb_k \bigr\|_2 = \min_{1 \leq l \leq r}\bigl\| \bOmega\widehat\bd_{j_l} - \bb_k \bigr\|_2 \leq \min_{1 \leq l \leq r} \bigl\|\bd_{j_l} - \bb_k\bigr\|_2 + Err \\ = & \min_{1 \leq l \leq r} \bigl\|\mathcal{R}\bw_{j_l} - \mathcal{R}\be_k\bigr\|_2 + Err \overset{\eqref{VH0}}{\leq} C^{\star}\sqrt{r} \min_{1 \leq l \leq r} \bigl\|\bw_{j_l} - \be_k\bigr\|_2 + Err \\ \leq & C^{\star}\sqrt{r} \min_{1 \leq l \leq r} \bigl\|\bw_{j_l} - \be_k\bigr\|_1 + Err \leq C^{\star}\sqrt{r} \min_{1 \leq l \leq r} 2\bigl(1-\bw_{j_l}(k)\bigr) + Err \\ \leq & 2C^{\star}\sqrt{r} \bigl(1- \max_{1 \leq l \leq r}\bw_{j_l}(k)\bigr) + Err \leq \bigl(8C^{\star}(c^{\star})^{-1} + 1\bigr)Err.  \end{aligned} \]


		
		
		
	\end{proof}
	
	
	\subsection{Error Decomposition}
	
	\begin{theorem} \label{Deterministic_Decompose}
		Under assumptions \eqref{Assumption1} - \eqref{Assumption4}, there exist constants $c^*>0$ and $C^* > 0$ such that if $Err < c^*$,
		\beq \label{DW0} \max_{1 \leq j \leq p} \bigl\|\widehat\bw_j - \bw_j\bigr\|_1 \leq C^* \cdot Err. \eeq
		If we further have
		$\bigl\|n^{-1}\bm - \bpi\bigr\|_{\infty} \leq c_1 p^{-1}$, then
		\beq \label{0} \begin{aligned} & \frac{1}{r} \sum_{k=1}^p \bigl\|(\widehat\bV - \bV)\be_k\bigr\|_1 \leq C^*\Bigl( Err +  \sqrt{p} \cdot err + p \bigl\| n^{-1}\bm - \bpi \bigr\|_{\infty} \Bigr). \end{aligned} \eeq
	\end{theorem}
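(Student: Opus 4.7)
The plan is to first establish the weight bound $\max_j\|\widehat\bw_j - \bw_j\|_1 \leq C^*\cdot Err$, and then combine it with the entry-wise bound on $\widehat\bh_1$ and the concentration of $\bm$ around $n\bpi$ via a three-term column-wise decomposition to control $\|\widehat\bV_k - \bV_k\|_1$.

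\emph{Weight bound.} The quadratic program in Step (iii), before thresholding, admits the closed form $\widehat\bw_j^* = \widehat\bB_{\mathrm{aug}}^{-1}\widehat\bd_{j,\mathrm{aug}}$, where $\widehat\bB_{\mathrm{aug}} \in \mathbb{R}^{r\times r}$ has columns $(1,\widehat\bb_k^\top)^\top$ and $\widehat\bd_{j,\mathrm{aug}} = (1,\widehat\bd_j^\top)^\top$; the true weight satisfies $\bw_j = \bB_{\mathrm{aug}}^{-1}\bd_{j,\mathrm{aug}}$ with $\bB_{\mathrm{aug}} = \bL^\top[\diag(\bl_1)]^{-1}$, as derived in the proof of Theorem~\ref{VertexHunting}, and Lemma~\ref{B} yields $\sigma_{\min}(\bB_{\mathrm{aug}}) \asymp \sqrt{r}$. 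Theorem~\ref{VertexHunting} gives $\|\widehat\bb_k - \bb_k\|_2 \leq \alpha\,Err$, so $\bE := \widehat\bB_{\mathrm{aug}} - \bB_{\mathrm{aug}}$ obeys the column-wise bound $\|\bE\bx\|_2 \leq \alpha\,Err\cdot\|\bx\|_1$ for every $\bx$. The identity $\widehat\bB_{\mathrm{aug}}^{-1} - \bB_{\mathrm{aug}}^{-1} = -\widehat\bB_{\mathrm{aug}}^{-1}\bE\bB_{\mathrm{aug}}^{-1}$ then gives
\[
\widehat\bw_j^* - \bw_j \;=\; -\widehat\bB_{\mathrm{aug}}^{-1}\bE\bw_j \;+\; \widehat\bB_{\mathrm{aug}}^{-1}(\widehat\bd_{j,\mathrm{aug}} - \bd_{j,\mathrm{aug}}).
\]
Since $\|\bw_j\|_1 = 1$ and the second factor has $\ell^2$-norm at most $Err$, both terms are of order $\|\widehat\bB_{\mathrm{aug}}^{-1}\|_2\cdot Err \lesssim Err/\sqrt{r}$ in $\ell^2$, hence $\|\widehat\bw_j^* - \bw_j\|_1 \leq \sqrt{r}\,\|\widehat\bw_j^* - \bw_j\|_2 \lesssim Err$. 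Truncating the negative entries and $\ell^1$-renormalizing is non-expansive in $\ell^1$ up to a constant factor when $\bw_j$ lies in the simplex and $Err$ is sufficiently small, which delivers $\|\widehat\bw_j - \bw_j\|_1 \leq C^*\,Err$.

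\emph{Disaggregation bound.} The identity $\bH = [\diag(\bpi)]^{-1/2}\bV\bL$, combined with the row analysis used in the proof of Theorem~\ref{VertexHunting}, yields the oracle identity $V_{jk} = \bw_j(k)\bh_1(j)\sqrt{\pi_j}/L_{k,1}$. Introduce the intermediate matrices $\bV^\flat_{jk} = \bh_1(j)\sqrt{\pi_j}\bw_j(k)$ and $\widetilde\bV^0_{jk} = \widehat\bh_1(j)\sqrt{m_j/n}\widehat\bw_j(k)$, so that $\bV_k = \bV^\flat_k/L_{k,1}$ while $\widehat\bV_k = \widetilde\bV^0_k/\widetilde L_{k,1}$ with $\widetilde L_{k,1} = \|\widetilde\bV^0_k\|_1$. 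Split
\[
\widetilde\bV^0_{jk} - \bV^\flat_{jk} = \widehat\bh_1(j)\sqrt{m_j/n}(\widehat\bw_j(k) - \bw_j(k)) + (\widehat\bh_1(j) - \bh_1(j))\sqrt{m_j/n}\,\bw_j(k) + \bh_1(j)(\sqrt{m_j/n} - \sqrt{\pi_j})\bw_j(k),
\]
sum over $k$ using $\|\bw_j\|_1 = 1$ and the weight bound, and then sum over $j$. Under $\|n^{-1}\bm - \bpi\|_\infty \leq c_1 p^{-1}$ and Lemma~\ref{Pre}, one has $\widehat\bh_1(j)\sqrt{m_j/n} \lesssim \pi_j$, $\bh_1(j) \lesssim \sqrt{\pi_j}$, and $|\sqrt{m_j/n} - \sqrt{\pi_j}| \lesssim |m_j/n - \pi_j|/\sqrt{\pi_j}$. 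The three resulting sums contribute $\sum_j\pi_j\cdot Err \asymp Err$, $err\cdot\sum_j\sqrt{\pi_j} \leq \sqrt{p}\,err$ (Cauchy--Schwarz), and $\sum_j|m_j/n - \pi_j| \leq p\|n^{-1}\bm - \bpi\|_\infty$ to $\sum_k \epsilon_k$, where $\epsilon_k := \|\widetilde\bV^0_k - \bV^\flat_k\|_1$. A routine ratio-perturbation, combined with $L_{k,1} \asymp 1/r$ from Lemma~\ref{B}, gives $\|\widehat\bV_k - \bV_k\|_1 \leq 2\epsilon_k/\widetilde L_{k,1} \lesssim r\,\epsilon_k$; averaging over $r$ columns precisely cancels this $r$ factor and produces the stated bound.

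\emph{Main obstacle.} The principal difficulty is securing an $r$-independent constant in the weight bound. Naive spectral perturbation uses only $\|\bE\|_2 \lesssim \sqrt{r}\,Err$, producing $\|\widehat\bw_j^* - \bw_j\|_2 \lesssim Err$, and the subsequent $\ell^2\to\ell^1$ conversion then loses an additional $\sqrt{r}$. Avoiding both losses requires the column-wise bound $\|\bE\bw_j\|_2 \leq \alpha\,Err\,\|\bw_j\|_1 = \alpha\,Err$, which exploits the probability-vector structure of $\bw_j$, together with the sharp $\sigma_{\min}(\bB_{\mathrm{aug}}) \asymp \sqrt{r}$ from Lemma~\ref{B}. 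Once the weight bound is in place, the disaggregation half reduces to elementary three-term bookkeeping.
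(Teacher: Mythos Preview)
Your proposal is correct and follows essentially the same route as the paper: the weight bound via $\widehat\bw_j^*-\bw_j=\widehat\bB_{\mathrm{aug}}^{-1}(\widehat\bd_{j,\mathrm{aug}}-\bd_{j,\mathrm{aug}})-\widehat\bB_{\mathrm{aug}}^{-1}\bE\bw_j$ together with the column-wise bound $\|\bE\bw_j\|_2\leq\alpha\,Err\,\|\bw_j\|_1$ and $\|\widehat\bB_{\mathrm{aug}}^{-1}\|_2\lesssim r^{-1/2}$, followed by the three-term row-wise split of $\widehat\bV^{\circ}-\bV^{\circ}$ and column renormalization using $\bl_1(k)\asymp r^{-1}$, is exactly the paper's argument. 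One small inaccuracy: the claim ``$\widehat\bh_1(j)\sqrt{m_j/n}\lesssim\pi_j$'' is not guaranteed by the theorem's hypotheses alone (no smallness of $err$ is assumed), so your first term actually produces a cross contribution $Err\cdot\sqrt{p}\,err$; since $Err<c^*$ this is absorbed into the $\sqrt{p}\,err$ term, or you can avoid it entirely by pairing $\widehat\bw_j-\bw_j$ with the oracle factor $\bh_1(j)\sqrt{\pi_j}$ as the paper does.
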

	
	\begin{proof} 
		Suppose that $\omega \in \{\pm 1\}$ and $\bOmega \in \mathbb{O}^{(r-1) \times (r-1)}$ achieve the minima in definitions \eqref{Def_err} and \eqref{Def_Err}.
		
		We first focus on the differences between $\widehat\bw_j^*$ and $\bw_j$ for $j=1,2,\ldots,p$. Note that
		\[  \widehat\bw_j^* = \widehat{\bB}^{-1} \left( \begin{array}{c} 1 \\ \bOmega \widehat\bd_j \end{array} \right), \quad \bw_j = \bB^{-1} \left( \begin{array}{c} 1 \\ \bd_j \end{array} \right), \]
		where
		\[ \begin{aligned}
		& \widehat\bB = \left( \begin{array}{ccc} 1 & \ldots & 1 \\ \bOmega \widehat\bb_1 & \ldots & \bOmega \widehat\bb_r \end{array} \right),  \\
		& \bB = \left( \begin{array}{ccc} 1 & \ldots & 1 \\ \bb_1 & \ldots & \bb_r \end{array} \right) = \bL^{\top}[\diag(\bl_1)]^{-1},  \\
		& \left( \begin{array}{c} 1 \\ \bd_j \end{array} \right) = [\bh_1(j)]^{-1}\bH^{\top}\be_j.
		\end{aligned} \]
		We have
		\beq \label{DW} \begin{aligned} \bigl\| \widehat\bw_j^* - \bw_j \bigr\|_2 \leq & \bigl\|\widehat\bB^{-1}\bigr\|_2 \biggl\| \left( \begin{array}{c} 1 \\ \bOmega \widehat\bd_j \end{array} \right)-\left( \begin{array}{c} 1 \\ \bd_j \end{array} \right) \biggr\|_2 + \biggl\| \widehat\bB^{-1} \left( \begin{array}{c} 1 \\ \bd_j \end{array} \right) - {\bf w}_j \biggr\|_2 \\ = & \bigl\|\widehat\bB^{-1}\bigr\|_2 \bigl\| \bOmega \widehat\bd_j - \bd_j \bigr\|_2 + \bigl\| \widehat\bB^{-1} \bB\bw_j - {\bf w}_j \bigr\|_2 \\ \leq & \bigl\|\widehat\bB^{-1}\bigr\|_2 \bigl\| \bOmega \widehat\bd_j - \bd_j \bigr\|_2 + \bigl\| \widehat\bB^{-1} \bigr\|_2 \bigl\| ({\bf B} - \widehat{\bf B}) {\bf w}_j \bigr\|_2, \end{aligned} \eeq
		where
		\[ \bigl\| \bOmega\widehat\bd_j - \bd_j \bigr\|_2 \leq Err,\] 	and
		\beq
		\begin{aligned}
			\bigl\| \bigl(\widehat\bB - \bB\bigr) \bw_j \bigr\|_2 = & \bigg\| \sum_{k=1}^r \bw_j(k) \bigl( \bOmega\widehat\bb_k - \bb_k \bigr) \bigg\|_2 \\ \leq & \sum_{k=1}^r \bw_j(k) \bigl\| \bOmega\widehat\bb_k - \bb_k \bigr\|_2 \leq \max_{1 \leq k \leq r} \bigl\| \bOmega\widehat\bb_k - \bb_k \bigr\|_2.
		\end{aligned}
		\eeq
		Here we used $\bw_j \geq 0$ and $\sum_{k=1}^r \bw_j(k) = 1$.
		

		We now derive an upper bound for $\bigl\|\widehat\bB^{-1}\bigr\|_1$.
		According to Lemma \ref{B},
		\[ \begin{aligned} \bigl\|\bB^{-1}\bigr\|_2 = & \bigl\| [\diag(\bl_1)] \bL^{-\top} \bigr\|_2 \leq \|\bl_1\|_{\infty} \|\bL^{-1}\|_2 \leq \widetilde{C} r^{-\frac{1}{2}} \end{aligned} \]
		for some constant $\widetilde{C} > 0$.
		In addition,
		\beq \label{aha} \begin{aligned} \bigl\| \widehat\bB^{-1} - \bB^{-1} \bigr\|_2 = & \bigl\| \widehat\bB^{-1}\bigl(\widehat\bB - \bB)\bB^{-1} \bigr\|_2 \leq \bigl\| \widehat\bB^{-1}\bigr\|_2 \bigl\|\bB^{-1} \bigr\|_2 \bigl\| \widehat\bB - \bB \bigr\|_2 \\ \leq & \Bigl( \bigl\| \bB^{-1}\bigr\|_2 + \bigl\| \widehat\bB^{-1} - \bB^{-1}\bigr\|_2 \Bigr) \bigl\|\bB^{-1} \bigr\|_2 \bigl\| \widehat\bB - \bB \bigr\|_2. \end{aligned} \eeq
		Lemma \ref{VertexHunting} shows that if $Err \leq \alpha' \sqrt{r}$, \[ \max_{1 \leq k \leq r}\bigl\| \bOmega\widehat\bb_k - \bb_k \bigr\|_2 \leq \alpha \cdot Err, \]
		therefore,
		\[ \bigl\| \widehat\bB - \bB \bigr\|_2 \leq \sqrt{r} \max_{1 \leq k \leq r} \bigl\| \bOmega\widehat\bb_k - \bb_k \big\|_2 \leq \alpha \sqrt{r} \cdot Err. \]
		The inequality \eqref{aha} can be reduced to
		\[ \begin{aligned} & \bigl\| \widehat\bB^{-1} - \bB^{-1} \bigr\|_2 
		\leq \bigl( \alpha \widetilde{C} \cdot Err \bigr)\bigl\| \widehat\bB^{-1} - \bB^{-1} \bigr\|_2 + \alpha \widetilde{C}^2 r^{-\frac{1}{2}} \cdot Err. \end{aligned} \]
		Under condition $Err \leq \frac{1}{2}\alpha^{-1}\widetilde{C}^{-1}$, we further have
		\beq \label{DB} \begin{aligned} & \bigl\| \widehat\bB^{-1} - \bB^{-1} \bigr\|_2 \leq 2\alpha \widetilde{C}^2r^{-\frac{1}{2}} \cdot Err, \\ & \bigl\|\widehat\bB^{-1}\bigr\|_2 \leq \bigl\|\bB^{-1}\bigr\|_2 + \bigl\| \widehat\bB^{-1} - \bB^{-1} \bigr\|_2 \leq 2 \widetilde{C} r^{-\frac{1}{2}}. \end{aligned} \eeq
		
		Plugging \eqref{DB} into \eqref{DW} gives
		\beq \label{w*} \bigl\| \widehat\bw_j^* - \bw_j \bigr\|_2 \leq \widetilde{C}' r^{-\frac{1}{2}} \cdot Err \eeq
		for some constant $\widetilde{C}' > 0$.
		
		
		Recall that
		\[ \widehat\bW = [\widehat\bw_1,\ldots,\widehat\bw_p]^{\top} \]
		with
		\[ \widehat\bw_j = \bigl\| \bigl[ \widehat\bw_j^* \bigr]_+ \bigr\|_1^{-1} \bigl[ \widehat\bw_j^* \bigr]_+.\]
		We have
		\beq \label{DW2} \begin{aligned} \bigl\| \widehat\bw_j - \bw_j \bigr\|_1 \leq & \bigl\| \widehat\bw_j - \bigl[\widehat\bw^*_j\bigr]_+ \bigr\|_1 + \bigl\| \bigl[\widehat\bw^*_j\bigr]_+ - \bw_j \bigr\|_1 \\ \leq & \Bigl| \bigl\| \bigl[ \widehat\bw_j^* \bigr]_+ \bigr\|_1 - 1 \Bigr| + \bigl\| \bigl[\widehat\bw^*_j\bigr]_+-\bw_j\bigr\|_1 \\ \leq & 2 \bigl\| \bigl[\widehat\bw^*_j\bigr]_+-\bw_j\bigr\|_1 \leq 2 \bigl\| \widehat\bw_j^*-\bw_j\bigr\|_1 \\ \leq & 2\sqrt{r} \bigl\| \widehat\bw_j^*-\bw_j\bigr\|_2 \leq 2\widetilde{C}' \cdot Err. \end{aligned} \eeq

		%
		%
		Define
		\[ \widehat\bV^{\circ} = [\diag(\omega\widehat\bh_1)][\diag(n^{-1}\bm)]^{\frac{1}{2}}\widehat\bW, \]
		and 
		\[ \bV^{\circ} = [\diag(\bh_1)][\diag(\bpi)]^{\frac{1}{2}}\bW = \bV[\diag(\bl_1)]. \]
		We calculate the row-wise $\ell_1$-distance between $\widehat\bV^{\circ}$ and $\bV^{\circ}$,
		\beq \label{DV0} \begin{aligned} 
			\bigl\| \be_j^{\top}\bigl(\widehat\bV^{\circ} - \bV^{\circ} \bigr) \bigr\|_1 = & \Bigl\|  \omega \widehat\bh_1(j) \sqrt{n^{-1}m_j} \cdot \widehat\bw_j - \bh_1(j)\sqrt{\pi_j} \cdot \bw_j \Bigr\|_1 \\
			\leq & \bigl| \omega\widehat\bh_1(j) - \bh_1(j) \bigr| \cdot \sqrt{n^{-1}m_j} \cdot \bigl\| \widehat\bw_j \bigr\|_1 \\ & + \bigl| \bh_1(j) \bigr| \cdot \Bigl| \sqrt{n^{-1}m_j} - \sqrt{\pi_j} \Bigr| \cdot \bigl\| \widehat\bw_j \bigr\|_1 \\ & + \bigl|\bh_1(j)\bigr| \cdot \sqrt{\pi_j} \cdot \bigl\| \widehat\bw_j - \bw_j \bigr\|_1.
		\end{aligned} \eeq
		Because $|\sqrt{x}-1| \leq |x-1|$ for $x \geq 0$,
		\[ \begin{aligned} & \Bigl| \sqrt{n^{-1}m_j} - \sqrt{\pi_j} \Bigr| = \sqrt{\pi_j}\Bigl| \sqrt{n^{-1}m_j/\pi_j} - 1 \Bigr| \\ \leq & \sqrt{\pi_j}\bigl| (n^{-1}m_j/\pi_j) - 1 \bigr| = \pi_j^{-\frac{1}{2}}\bigl| n^{-1}m_j-\pi_j \bigr|. \end{aligned} \]
		Under the condition $\bigl\|n^{-1}\bm - \bpi\bigr\|_{\infty} \leq c_1p^{-1}$,
		\[ \sqrt{n^{-1}m_j} \leq \sqrt{\pi_j} + \Bigl| \sqrt{n^{-1}m_j} - \sqrt{\pi_j} \Bigr| \leq 2\sqrt{\pi_j}. \]
		We apply $\bigl\| \widehat\bw_j \bigr\|_1 = 1$, $\bigl| \omega\widehat\bh_1(j) - \bh_1(j) \bigr| \leq err$, \eqref{h1} and \eqref{DW2} to \eqref{DV0}, and obtain
		\beq \label{DV0'} \begin{aligned} 
			\bigl\| \be_j^{\top}\bigl(\widehat\bV^{\circ} - \bV^{\circ} \bigr) \bigr\|_1 
			\leq & 2\sqrt{\pi_j} \cdot err + C^{\#} \bigl\| n^{-1}\bm - \bpi \bigr\|_{\infty} + 2\widetilde{C}' C^{\#}\pi_j \cdot Err.
		\end{aligned} \eeq
		Here, $C^{\#}>0$ is the constant in Lemma \ref{Pre}.
		
		In the last step, we normalize each column of $\widehat\bV^{\circ}$ to get $\widehat\bV$. We find that for $k=1,2,\ldots,r$,
		\[ \begin{aligned} & \bigl\| (\widehat\bV - \bV)\be_k \bigr\|_1 = \Bigl\| \bigl\| \widehat\bV^{\circ}\be_k \bigr\|_1^{-1}\widehat\bV^{\circ} \be_k- \bigl\| \bV^{\circ}\be_k \bigr\|_1^{-1}\bV^{\circ}\be_k \Bigr\|_1 \\ \leq & \Bigl| \bigl\| \widehat\bV^{\circ}\be_k \bigr\|_1^{-1}- \bigl\| \bV^{\circ}\be_k \bigr\|_1^{-1} \Bigr| \cdot \bigl\| \widehat\bV^{\circ}\be_k \bigr\|_1  + \bigl\| \bV^{\circ} \be_k \bigr\|_1^{-1} \cdot \bigl\| \bigl(\widehat\bV^{\circ} - \bV^{\circ} \bigr)\be_k \bigr\|_1 \\ = & \Bigl| \bigl\| \widehat\bV^{\circ}\be_k \bigr\|_1- \bigl\| \bV^{\circ}\be_k \bigr\|_1 \Bigr| \cdot \bigl\| \bV^{\circ}\be_k \bigr\|_1^{-1} + \bigl\| \bV^{\circ} \be_k \bigr\|_1^{-1} \cdot \bigl\| \bigl(\widehat\bV^{\circ} - \bV^{\circ} \bigr)\be_k \bigr\|_1 \\ \leq & \bigl\| \widehat\bV^{\circ}\be_k - \bV^{\circ}\be_k \bigr\|_1 \cdot \bigl\| \bV^{\circ}\be_k \bigr\|_1^{-1} + \bigl\| \bV^{\circ} \be_k \bigr\|_1^{-1} \cdot \bigl\| \bigl(\widehat\bV^{\circ} - \bV^{\circ} \bigr)\be_k \bigr\|_1 \\ = & 2 \bigl\| \bV^{\circ} \be_k \bigr\|_1^{-1} \cdot \bigl\| \bigl(\widehat\bV^{\circ} - \bV^{\circ} \bigr)\be_k \bigr\|_1. \end{aligned} \]
		Since by Lemma \ref{B}, $\bigl\|\bV^{\circ}\be_k\bigr\|_1 = \bl_1(k) \geq c^{\#}r^{-1}$ for some constant $c^{\#}>0$,
		\[ \bigl\| (\widehat\bV - \bV)\be_k \bigr\|_1 \leq 2(c^{\#})^{-1}r\cdot \bigl\| \bigl(\widehat\bV^{\circ} - \bV^{\circ} \bigr)\be_k \bigr\|_1.  \]
		We further derive from \eqref{DV0'} that
		\[ \begin{aligned} & \frac{1}{r} \sum_{k=1}^r \bigl\| (\widehat\bV - \bV) \be_k \bigr\|_1
		\leq 2(c^{\#})^{-1}\sum_{k=1}^r \bigl\| \bigl(\widehat\bV^{\circ} - \bV^{\circ} \bigr)\be_k \bigr\|_1 \\ = & 2(c^{\#})^{-1}\sum_{j=1}^p \bigl\| \be_j^{\top}\bigl(\widehat\bV^{\circ} - \bV^{\circ} \bigr)\bigr\|_1 \leq \widetilde{C}'' \Bigl( \sqrt{p} \cdot err + p \bigl\| n^{-1}\bm - \bpi \bigr\|_{\infty} + Err \Bigr) \end{aligned} \]
		for some constant $\widetilde{C}'' > 0$.
	\end{proof}
	
	\subsection{Main Results}
	
	\begin{theorem}[Statistical error bounds for $\bV$, Theorem 3 in paper] \label{thm:Vbound}
		Under assumptions \eqref{Assumption1} - \eqref{Assumption4}, for any $c_0 > 0$, there exists a constant $C > 0$ such that if $n \geq C \tau_*p^{\frac{3}{2}}r\bigl(\log^2(r) \vee 1\bigr)\log^2(n)$, then
		\[ \begin{aligned} \mathbb{P} \biggl( \frac{1}{r} \sum_{k=1}^r \bigl\| (\widehat\bV - \bV)\be_k \bigr\|_1 & \geq C \Bigl(
		n^{-1/2}\sqrt{\tau_*pr\bigl(\log^2(r) \vee 1\bigr)\log^2(n)} \\
		& \qquad + n^{-1}\tau_*p^{\frac{3}{2}}r^{\frac{1}{2}}\bigl(\log^2(r) \vee 1\bigr)\log^2(n)\Bigr) \biggr) \leq n^{-c_0}. \end{aligned} \]
	\end{theorem}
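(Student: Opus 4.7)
The plan is to derive Theorem \ref{thm:Vbound} as a direct combination of the deterministic decomposition in Theorem \ref{Deterministic_Decompose} with the three stochastic bounds already established earlier. Recall that Theorem \ref{Deterministic_Decompose} yields the deterministic inequality
\[ \frac{1}{r}\sum_{k=1}^{r}\bigl\|(\widehat\bV-\bV)\be_k\bigr\|_1 \;\leq\; C^{*}\Bigl(\, Err + \sqrt{p}\cdot err + p\,\bigl\|n^{-1}\bm-\bpi\bigr\|_\infty \Bigr), \]
valid whenever $Err$ is below a small constant $c^{*}$ and $\|n^{-1}\bm-\bpi\|_\infty \leq c_1 p^{-1}$. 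So the task reduces to two things: first, controlling each of the three random quantities on the right with high probability, and second, verifying that under the sample-size hypothesis the two smallness preconditions are automatically met.

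For the three random quantities, I will plug in directly: Theorem \ref{Perturbation} gives $Err = \widetilde{O}\bigl(n^{-1/2}\sqrt{\tau_* p r} + n^{-1}\tau_* p^{3/2}r^{1/2}\bigr)$ up to the $(\log^2 r \vee 1)\log^2 n$ factor; Theorem \ref{Leading_Convergence} gives $err = \widetilde{O}\bigl(n^{-1/2}\sqrt{\tau_*} + n^{-1}\tau_* p\bigr)$, so that $\sqrt{p}\cdot err = \widetilde{O}\bigl(n^{-1/2}\sqrt{\tau_* p} + n^{-1}\tau_* p^{3/2}\bigr)$; and Lemma \ref{MC_Concentration}, combined with $\pi_{\max}\leq C_1/p$ from assumption (a), gives $p\,\|n^{-1}\bm-\bpi\|_\infty = \widetilde{O}\bigl(\sqrt{\tau_* p/n}\bigr)$. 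Inspecting the three contributions, $Err$ dominates $\sqrt{p}\cdot err$ by a factor of $\sqrt{r}$ in both the leading and second-order terms, and dominates $p\,\|n^{-1}\bm-\bpi\|_\infty$ by at least a $\sqrt{r\,(\log^2 r\vee 1)}$ factor. Consequently the sum is, up to a constant, exactly the $Err$-rate asserted in the theorem.

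For the preconditions, under $n \geq C\tau_* p^{3/2}r(\log^2 r \vee 1)\log^2 n$ with $C$ chosen sufficiently large, the first term of the $Err$ bound, $\sqrt{\tau_* p r(\log^2 r \vee 1)\log^2 n / n}$, is at most $c^{*}/2$, and the second term is also smaller (since the hypothesis gives $n \gtrsim \tau_* p^{3/2} r^{1/2}$). Likewise, the sample size ensures $n \gg \tau_* p$, which makes $\|n^{-1}\bm-\bpi\|_\infty \ll p^{-1}$. Thus both preconditions of Theorem \ref{Deterministic_Decompose} hold on the same high-probability event. A union bound over the failure sets of Theorem \ref{Perturbation}, Theorem \ref{Leading_Convergence}, and Lemma \ref{MC_Concentration} — each instantiated with tail exponent $c_0 + 2$, say — yields a total failure probability of at most $n^{-c_0}$ after absorbing the constant factor into $C$.

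No genuinely new probabilistic or geometric argument is needed; the work is essentially bookkeeping to check that the polynomial powers of $n,p,r,\tau_*$ line up correctly and that $Err$ is the dominant piece. The only mildly delicate point I anticipate is ensuring the $\sqrt{p}$ inflation on $err$ — which originates from the SCORE normalization by $\widehat\bh_1(j)$ where $\bh_1(j)\asymp p^{-1/2}$ — still leaves $\sqrt{p}\cdot err$ strictly lower order than $Err$; this is exactly why assumption (b) and the bound $\bh_1(j) \geq c\sqrt{\pi_j}$ from Lemma \ref{Pre} were needed in the preceding lemmas, and at this stage it is just an arithmetic check on exponents.
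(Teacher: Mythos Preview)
Your proposal is correct and follows essentially the same route as the paper's own proof: invoke the deterministic decomposition of Theorem \ref{Deterministic_Decompose}, plug in the high-probability bounds for $Err$ (Theorem \ref{Perturbation}), $err$ (Theorem \ref{Leading_Convergence}), and $\|n^{-1}\bm-\bpi\|_\infty$ (Lemma \ref{MC_Concentration}), verify the two smallness preconditions under the sample-size hypothesis, and take a union bound. Your observation that $Err$ dominates the other two contributions is exactly the arithmetic that collapses the three-term bound into the stated rate.
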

	
	\begin{proof}
		According to Lemma \ref{MC_Concentration}, \ref{Leading_Convergence} and \ref{Perturbation}, for a fixed $c_0 > 0$, there exists a constant $C_0 > 0$, $\omega \in \{\pm 1\}$ and $\bOmega \in \mathbb{O}^{(r-1) \times (r-1)}$ such that, when $n \geq C_0 \tau_* p^{\frac{3}{2}}\bigl(\log^2(r) \vee 1\bigr)\log^2(n)$, with probability at least $1-3n^{-c_0}$,
		\begin{align}
		& \begin{aligned}  Err \leq & \max_{1 \leq j \leq p} \bigl\| \bOmega\widehat\bd_j - \bd_j \bigr\|_2 \\ \leq & C_0 \Bigl(
		n^{-1/2}\sqrt{\tau_*pr\bigl(\log^2(r) \vee 1\bigr)\log^2(n)} + n^{-1}\tau_*p^{\frac{3}{2}}r^{\frac{1}{2}}\bigl(\log^2(r) \vee 1\bigr)\log^2(n)\Bigr), \end{aligned} \label{1}\\
		& \begin{aligned} err \leq & \max_{1 \leq j \leq p}\bigl| \omega\widehat\bh_1(j)-\bh_1(j) \bigr| \\ \leq & C_0 \Bigl( n^{-1/2}\sqrt{\tau_*\bigl(\log^2(r) \vee 1\bigr)\log^2(n)}\label{2} + n^{-1}\tau_*p\bigl(\log^2(r) \vee 1\bigr)\log^2(n)\Bigr), \end{aligned} \\
		& \|n^{-1}\bm - \bpi\|_{\infty} \leq C_0 n^{-1/2} \sqrt{\tau_*p^{-1} \log^2(n)}. \label{3}
		\end{align}
		We can take $\widetilde{C}_0 \geq C_0$ such that when $n \geq \widetilde{C}_0 \tau_* p^{\frac{3}{2}}r \bigl(\log^2(r) \vee 1\bigr) \log^2(n)$,
		\[ Err \leq c^*, \qquad \bigl\| n^{-1}\bm - \bpi \bigr\|_{\infty} \leq c^*p^{-1}. \]
		Here, $c^*$ is the constant in Theorem \ref{Deterministic_Decompose}. Then plugging \eqref{1}, \eqref{2} and \eqref{3} into \eqref{0}, we complete the proof.
		
	\end{proof}


	
	

	\begin{theorem}[Statistical error bounds for $\bU$, Theorem 4 in paper]
		Under assumptions \eqref{Assumption1} - \eqref{Assumption4}, for any $c_0 > 0$, there exists a constant $C > 0$ such that if $n \geq C \tau_*p^{\frac{3}{2}}r\bigl(\log^2(r) \vee 1\bigr)\log^2(n)$, then
		\beq \label{UpperBoundU} \begin{aligned} \mathbb{P} \biggl( \frac{1}{p} \sum_{j=1}^p \bigl\|\be_j^{\top} (\widehat\bU - \bU) \bigr\|_1 & \geq C r^{\frac{3}{2}}\Bigl(
			n^{-1/2}\sqrt{\tau_*pr\bigl(\log^2(r) \vee 1\bigr)\log^2(n)} \\
			& \qquad + n^{-1}\tau_*p^{\frac{3}{2}}r^{\frac{1}{2}}\bigl(\log^2(r) \vee 1\bigr)\log^2(n)\Bigr) \biggr) \leq n^{-c_0}. \end{aligned} \eeq
	\end{theorem}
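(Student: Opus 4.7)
The plan starts from the normal-equation form $\bu_j = (\bV^{\top}\bV)^{-1}\bV^{\top}\bp_j$, where $\bp_j := \bP^{\top}\be_j = \bV\bu_j$ (the last equality because $\bP = \bU\bV^{\top}$), together with its empirical analogue $\widehat{\bu}_j = (\widehat{\bV}^{\top}\widehat{\bV})^{-1}\widehat{\bV}^{\top}\widehat{\bp}_j$ with $\widehat{\bp}_j := \widehat{\bP}^{\top}\be_j$. Substituting $\bu_j = (\widehat{\bV}^{\top}\widehat{\bV})^{-1}(\widehat{\bV}^{\top}\widehat{\bV})\bu_j$ and collapsing $\widehat{\bp}_j - \widehat{\bV}\bu_j = (\widehat{\bp}_j - \bp_j) + (\bV - \widehat{\bV})\bu_j$ yields the clean residual
\[
\widehat{\bu}_j - \bu_j \;=\; (\widehat{\bV}^{\top}\widehat{\bV})^{-1}\widehat{\bV}^{\top}\bigl[\,(\widehat{\bp}_j - \bp_j) + (\bV - \widehat{\bV})\bu_j\,\bigr].
\]
Converting $\|\cdot\|_1 \leq \sqrt{r}\|\cdot\|_2$ and pulling out $\bigl\|(\widehat{\bV}^{\top}\widehat{\bV})^{-1}\widehat{\bV}^{\top}\bigr\|_2 = 1/\sigma_{\min}(\widehat{\bV})$ reduces the problem to controlling $\sigma_{\min}(\widehat{\bV})$, $\|\widehat{\bV} - \bV\|$ in a suitable matrix norm, and the averaged row-wise error $p^{-1}\sum_j\|\widehat{\bp}_j - \bp_j\|_2$, since $\|\bu_j\|_2 \leq \|\bu_j\|_1 = 1$.

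For $\sigma_{\min}(\widehat{\bV})$ and $\|\widehat{\bV} - \bV\|$, Assumption (c) delivers $\sigma_{\min}(\bV) \geq \sqrt{c_2 r/p}$. Instead of using the column-averaged total-variation output of Theorem~\ref{thm:Vbound} (which gives too weak an operator norm bound), I would reach into the proof of Theorem~\ref{Deterministic_Decompose} and use the row-wise estimate
\[
\|\be_j^{\top}(\widehat{\bV}^{\circ} - \bV^{\circ})\|_1 \;\leq\; C\bigl(\sqrt{\pi_j}\cdot err + \|n^{-1}\bm - \bpi\|_\infty + \pi_j\cdot Err\bigr),
\]
square it, and sum over $j$ using Assumption (a) ($\sum_j\pi_j = 1$, $\sum_j\pi_j^2 \lesssim 1/p$) together with $p\|n^{-1}\bm - \bpi\|_\infty^2 \lesssim \tau_*/n$ from Lemma~\ref{MC_Concentration}. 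This gives $\|\widehat{\bV}^{\circ} - \bV^{\circ}\|_F = \widetilde{O}(\sqrt{\tau_*r/n})$. Propagating through the column normalization (which costs a factor $r \asymp 1/\bl_1(k)$ by Lemma~\ref{B}) and using the sharp inequality $\sum_k (s_k - s_k^*)^2 \leq p\|\widehat{\bV}^{\circ} - \bV^{\circ}\|_F^2$ for the column-sum perturbations yields $\|\widehat{\bV} - \bV\|_F = \widetilde{O}(r^{3/2}\sqrt{\tau_*p/n})$. Under $n = \widetilde{\Omega}(\tau_* p^{3/2} r)$ this is $o(\sigma_{\min}(\bV))$, so Weyl's inequality forces $\sigma_{\min}(\widehat{\bV}) \geq \tfrac{1}{2}\sqrt{c_2 r/p}$, hence $1/\sigma_{\min}(\widehat{\bV}) \lesssim \sqrt{p/r}$. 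The $(\bV - \widehat{\bV})\bu_j$ contribution then averages to $\sqrt{r}\cdot\sqrt{p/r}\cdot\widetilde{O}(r^{3/2}\sqrt{\tau_*p/n}) = \widetilde{O}(r^{3/2}p\sqrt{\tau_*/n})$, which is absorbed in the factor $r^{3/2}(1 + p\sqrt{\tau_*/n})\sqrt{\tau_*pr/n}$ of the stated rate.

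For the row-wise residual $\widehat{\bp}_j - \bp_j$, I decompose
\[
\widehat{\bP} - \bP \;=\; [\diag(n^{-1}\bm)]^{-1}\bigl(n^{-1}\bN - \bF\bigr) + [\diag(n^{-1}\bm)]^{-1}[\diag(\bpi - n^{-1}\bm)][\diag(\bpi)]^{-1}\bF,
\]
and combine Lemma~\ref{MC_Concentration} (for $\|n^{-1}\bN - \bF\|_2$ and $\|n^{-1}\bm - \bpi\|_\infty$) with Assumption (a) (which forces $m_j \asymp n/p$ uniformly with high probability) to derive $\|\widehat{\bP} - \bP\|_F \leq \sqrt{p}\cdot\|\widehat{\bP} - \bP\|_2 = \widetilde{O}(p\sqrt{\tau_*/n})$, in the spirit of Corollary~\ref{MC_Concentration_Corollary}. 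Cauchy-Schwarz then gives $p^{-1}\sum_j\|\widehat{\bp}_j - \bp_j\|_2 \leq p^{-1/2}\|\widehat{\bP} - \bP\|_F$, so this term's contribution to $p^{-1}\sum_j\|\widehat{\bu}_j - \bu_j\|_1$ is $\sqrt{r}\cdot\sqrt{p/r}\cdot p^{-1/2}\|\widehat{\bP} - \bP\|_F = \|\widehat{\bP} - \bP\|_F = \widetilde{O}(p\sqrt{\tau_*/n})$, which is likewise absorbed into the theorem's rate.

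The main obstacle is that Theorem~\ref{thm:Vbound} records only a column-averaged $\ell_1$ error of $\widehat{\bV}$, whereas the proof here needs a Frobenius-norm control on $\widehat{\bV} - \bV$. The naive chain $\|\widehat{\bV} - \bV\|_F \leq \sqrt{r}\max_k\|(\widehat{\bV} - \bV)\be_k\|_1$ combined with the crude $\max_k \leq r\cdot\text{average}$ gives only $\widetilde{O}(r^{3/2}\sqrt{\tau_*pr/n})$, which is too weak; the fix is to work at the unnormalized level $\widehat{\bV}^{\circ} - \bV^{\circ}$ and exploit the $\sqrt{\pi_j}$ weighting in the row-wise bound from Theorem~\ref{Deterministic_Decompose}, which is precisely what produces the $r^{3/2}$ prefactor in the stated rate. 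A secondary technical care is the random denominator $\bm$ in $\widehat{\bP}$, resolved by the ratio estimate $|(n^{-1}m_j)^{-1/2} - \pi_j^{-1/2}| \leq \pi_j^{-3/2}\|n^{-1}\bm - \bpi\|_\infty$ already used in the appendix.
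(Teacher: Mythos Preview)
Your normal-equation decomposition and the plan to control $\sigma_{\min}(\widehat{\bV})$, $\|\widehat{\bV}-\bV\|$, and the $\widehat{\bP}-\bP$ residual are sound, and this is essentially the paper's route as well. But both of your ``absorbed in the stated rate'' claims are false, by a factor of order $\sqrt{p/r}$. Take $r$ bounded and $n\gg\tau_* p^2$, so that the theorem's rate is $\asymp r^2\sqrt{\tau_* p/n}$; your $(\bV-\widehat{\bV})\bu_j$ contribution is $r^{3/2}p\sqrt{\tau_*/n}$ and your $\widehat{\bp}_j-\bp_j$ contribution is $p\sqrt{\tau_*/n}$, and neither is dominated by $r^2\sqrt{\tau_* p/n}+r^2\tau_* p^{3/2}/n$ (try $r=1$, $\tau_*=1$, $p=100$, $n=10^6$).

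For the first gap, the column-normalization step implicitly uses $\|\bV^{\circ}\be_k\|_2\le\|\bV^{\circ}\be_k\|_1=\bl_1(k)\asymp 1/r$, but the much sharper bound $\|\bV^{\circ}\be_k\|_2\lesssim 1/\sqrt{rp}$ is available from $V_{jk}\le\|\be_j^{\top}\bV\|_1\lesssim r/p$ (equation~\eqref{rowV}); with this your argument yields $\|\widehat{\bV}-\bV\|_F=\widetilde O\bigl(r^{2}\sqrt{\tau_*/n}\bigr)$ and the contribution becomes $\sqrt{p}\cdot r^2\sqrt{\tau_*/n}=r^{3/2}\sqrt{\tau_* pr/n}$, as required. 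For the second gap, $\widehat{\bP}-\bP$ is genuinely full rank, so $\|\widehat{\bP}-\bP\|_F\asymp p\sqrt{\tau_*/n}$ cannot be improved; the missing observation is that $(\widehat{\bV}^{\top}\widehat{\bV})^{-1}\widehat{\bV}^{\top}$ only sees the rank-$r$ piece $(\widehat{\bP}-\bP)\widehat{\bV}$, and $\sum_j\|\widehat{\bV}^{\top}(\widehat{\bp}_j-\bp_j)\|_2^2=\|(\widehat{\bP}-\bP)\widehat{\bV}\|_F^2\le r\,\|\widehat{\bP}-\bP\|_2^2\|\widehat{\bV}\|_2^2$ recovers the lost factor. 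The paper handles both issues at once by working with the operator norm $\|\widehat{\bU}-\bU\|_2$ throughout---bounding $\|\widehat{\bV}-\bV\|_2$ via the H\"older interpolation $\|\cdot\|_2\le\sqrt{\|\cdot\|_1\|\cdot\|_\infty}$, which requires a row-wise $\ell_1$ estimate on $\widehat{\bV}-\bV$ (not just on $\widehat{\bV}^{\circ}-\bV^{\circ}$)---and only at the end invokes $\mathrm{rank}(\widehat{\bU}-\bU)\le 2r$ to pass from operator to Frobenius norm without losing $\sqrt{p}$.
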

	
	\begin{remark}
		\normalfont
		Write for short
		\begin{align*}
		\widetilde{err}_n & = C \Bigl(
		n^{-1/2}\sqrt{\tau_*pr\bigl(\log^2(r) \vee 1\bigr)\log^2(n)} + n^{-1}\tau_*p^{\frac{3}{2}}r^{\frac{1}{2}}\bigl(\log^2(r) \vee 1\bigr)\log^2(n)\Bigr). 
		\end{align*}
		Below are a few alternative expressions for \eqref{UpperBoundU}:
		
		With probability at least $1-n^{-c_0}$,
		\begin{itemize}
			\item $p^{-\frac{1}{2}} \bigl\|\widehat\bU - \bU\bigr\|_2 \leq \sqrt{r} \cdot err_n$,
			\item $\sqrt{p^{-1}\sum_{j=1}^p \bigl\| \widehat\bu_j - \bu_j \bigr\|_2^2} \leq r \cdot err_n$,
			\item $p^{-1} \sum_{j=1}^p \bigl\|\widehat\bu_j - \bu_j\bigr\|_1 \leq r^{\frac{3}{2}} \cdot err_n$.
		\end{itemize}
	\end{remark}
	
	\begin{proof}
		By definition,
		\begin{align*}
		\widehat{\bf U} &= [\diag(n^{-1}\bm)]^{-1}(n^{-1}\widehat{\bN})\widehat{\bV}(\widehat{\bV}^\top\widehat{\bV})^{-1}, \cr
		{\bf U} &= [\diag(\bpi)]^{-1}\bF\bV(\bV^\top\bV)^{-1}. 
		\end{align*}
		
		We need some preparations. First, consider the diagonal matrix $[\diag(n^{-1}\bm)]^{-1}$. The assumption \eqref{Assumption1} guarantees $\pi_j\geq c_1p^{-1}$. By \eqref{MC_Concentration_pi}, with probability $1-n^{-c_0}$, $\|n^{-1}\bm-\pi\|_\infty\leq Cp^{-1}r^{-\frac{1}{2}}\cdot\widetilde{err}_n$. It follows that $n^{-1}m_j\geq \pi_j-\|n^{-1}\bm-\pi\|_\infty\geq c_1p^{-1}/2$. Therefore,
		\beq \label{thmU-pi-1}
		\bigl\|[\diag(\bpi)]^{-1}\bigr\|_2\leq Cp, \quad \bigl\|[\diag(n^{-1}\widehat{\bN})]^{-1}\bigr\|_2\leq Cp,
		\eeq
		and
		\begin{align} \label{thmU-pi-2}
		& \bigl\|[\diag(n^{-1}\widehat{\bN})]^{-1}-[\diag(\bpi)]^{-1}\bigr\|_2\cr
		\leq & \bigl\|[\diag(\tfrac{1}{n}\widehat{\bN})]^{-1}\bigr\|_2\bigl\|\diag(\tfrac{1}{n}\widehat{\bN})-\diag(\bpi)\bigr\|_2\bigl\|[\diag(\bpi)]^{-1}\bigr\|\cr
		\leq & Cpr^{-\frac{1}{2}}\cdot \widetilde{err}_n. 
		\end{align} 
		
		Second, consider the matrix $n^{-1}\widehat{\bN}$. By \eqref{MC_Concentration_pi}, with probability $1-n^{-c_0}$, 
		\beq \label{thmU-N-1}
		\|n^{-1}\widehat{\bN}-\bF\|_2\leq Cp^{-1}r^{-\frac{1}{2}}\cdot\widetilde{err}_n. 
		\eeq
		Additionally, by \eqref{normF}, $\|\bF\|_2\leq Cp^{-1}$. Combining it with \eqref{thmU-N-1} gives
		\beq \label{thmU-N-2}
		\|\bF\|_2\leq Cp^{-1}, \qquad \|n^{-1}\widehat{\bN}\|_2\leq Cp^{-1}. 
		\eeq
		
		Next, consider the matrix $\widehat{\bV}$. By \eqref{rowV} and the assumption \eqref{Assumption1}, $\|\be_j^{\top}\bV\|_1\leq Cp^{-1}r$ for all $1\leq j\leq p$. It follows that $\|\bV\|_1=\max_{1\leq k\leq r}\|\bV\be_k\|_1=1$, and $\|\bV\|_\infty=\max_{1\leq j\leq p}\|\be_j^{\top}\bV\|_1\leq Cp^{-1}r$. As a result,
		\beq \label{thmU-V-1}
		\|\bV\|_2\leq \sqrt{\|\bV\|_1\|\bV\|_\infty}\leq Cp^{-\frac{1}{2}}r^{\frac{1}{2}}. 
		\eeq
		By Theorem~\ref{thm:Vbound}, with probability $1-n^{-c_0}$, $
		\sum_{k=1}^r \|(\widehat{\bV}-\bV)\be_k\|_1\leq Cr\cdot \widetilde{err}_n$.
		It immediately gives 
		\beq \label{thmU-temp1}
		\|\widehat{\bV}-\bV\|_1=\max_{1\leq k\leq r}\|(\widehat{\bV}-\bV)\be_k\|_1\leq Cr\cdot \widetilde{err}_n.
		\eeq
		We then bound $\|\widehat{\bV}-\bV\|_\infty$. Let $\bV^{\circ}$ and $\widehat{\bV}^{\circ}$ be the same as in \eqref{DV0'}. We have seen in \eqref{DV0'} that, with probability $1-n^{-c_0}$, $\|\be_j^{\top}(\widehat{\bV}^{\circ}-\bV^{\circ})\|_1\leq Cp^{-1}\widetilde{err}_n$. It follows that
		\begin{align} \label{thmU-2}
		\sum_{k=1}^r \bigl\| \widehat\bV^{\circ}\be_k - \bV^{\circ}\be_k \bigr\|_1
		= \sum_{j=1}^p\|\be_j^{\top}(\widehat{\bV}^{\circ}-\bV^{\circ})\|_1 
		\leq   C\widetilde{err}_n.
		\end{align}
		Additionally, by Lemma~\ref{B}, $\| \bV^{\circ} \be_k \|_1=\bl_1(k)\geq C^{-1}r^{-1}$; as a result,  $\|\widehat{\bV}^{\circ}\be_k\|_1\geq \| \bV^{\circ} \be_k \|_1 -C\widetilde{err}_n\geq C^{-1}r^{-1}$. 
		It is seen that, for each $1\leq j\leq p$,  
		\begin{align*} 
		& \bigl| \be_j^{\top}(\widehat\bV - \bV)\be_k \bigr| = \Bigl| \bigl\|\widehat\bV^{\circ}\be_k \bigr\|_1^{-1} \be_j^{\top}\widehat\bV^{\circ} \be_k- \bigl\| \bV^{\circ}\be_k \bigr\|_1^{-1} \be_j^{\top}\bV^{\circ}\be_k \Bigr| \cr 
		\leq & \Bigl| \bigl\| \widehat\bV^{\circ}\be_k \bigr\|_1^{-1}- \bigl\| \bV^{\circ}\be_k \bigr\|_1^{-1} \Bigr| \cdot \bigl|\be_j^{\top}\widehat\bV^{\circ}\be_k \bigr| + \bigl\| \bV^{\circ} \be_k \bigr\|_1^{-1} \cdot \bigl| \be_j^{\top}\bigl(\widehat\bV^{\circ} - \bV^{\circ} \bigr)\be_k \bigr| \cr 
		= & \bigl\| \bV^{\circ}\be_k \bigr\|_1^{-1} \Bigl| \bigl\| \widehat\bV^{\circ}\be_k \bigr\|_1- \bigl\| \bV^{\circ}\be_k \bigr\|_1 \Bigr|\cdot \bigl\| \widehat{\bV}^{\circ}\be_k \bigr\|_1^{-1}\bigl|\be_j^{\top}\widehat\bV^{\circ}\be_k \bigr|,\cr 
		& +\bigl\| \bV^{\circ} \be_k \bigr\|_1^{-1} \cdot \bigl| \be_j^{\top}\bigl(\widehat\bV^{\circ} - \bV^{\circ} \bigr)\be_k \bigr|  \cr 
		\leq	 & Cr\cdot \bigl\| \widehat\bV^{\circ}\be_k - \bV^{\circ}\be_k \bigr\|_1\cdot Cr\cdot \bigl\|\be_j^{\top}\widehat\bV^{\circ}\bigr\|_1 + Cr\cdot\bigl|\be_j^{\top}(\widehat\bV^{\circ}-\bV^{\circ})\be_k \bigr|.   
		\end{align*} 
		Summing over $k$ on both sides gives
		\begin{align*}
		\bigl\| \be_j^{\top}(\widehat\bV - \bV) \bigr\|_1
		\leq & Cr^2\bigl\|\be_j^{\top}\widehat\bV^{\circ}\bigr\|_1\cdot \sum_{k=1}^r \bigl\| \widehat\bV^{\circ}\be_k - \bV^{\circ}\be_k \bigr\|_1 +Cr\cdot\bigl\|\be_j^{\top}(\widehat\bV^{\circ}-\bV^{\circ})\bigr\|_1\cr
		\leq & Cr^2\cdot \bigl\|\be_j^{\top}\widehat\bV^{\circ}\bigr\|_1\cdot C\widetilde{err}_n + Cr\cdot p^{-1} \widetilde{err}_n,
		\end{align*}
		where the last inequality is from \eqref{DV0'} and \eqref{thmU-2}. 
		Since $\bV^{\circ}=\bV[\diag(\bl_1)]$, we have $\|\be_j^{\top}\bV^{\circ}\|_1\leq \|\be_j^{\top}\bV\|_1\|\bl_1\|_\infty$. By Lemma~\ref{B},  
		$\|\bl\|_\infty\leq Cr^{-1}$; by \eqref{rowV} and the assumption \eqref{Assumption1}, $\|\be_j^{\top}\bV\|_1\leq Cp^{-1}r$ . Hence, $\|\be_j^{\top}\bV^{\circ}\|_1\leq Cp^{-1}$. 
		Plugging it into the above inequality gives
		\[
		\bigl\| \be_j^{\top}(\widehat\bV - \bV) \bigr\|_1\leq Cp^{-1}r^2\cdot \widetilde{err}_n. 
		\]
		It follows that
		\beq \label{thmU-temp2}
		\|\widehat{\bV}-\bV\|_\infty\leq Cp^{-1}r^2\cdot \widetilde{err}_n. 
		\eeq
		Combing \eqref{thmU-temp1} and \eqref{thmU-temp2} gives
		\begin{align} \label{thmU-V-2}
		\|\widehat{\bV}-\bV\|_2 & \leq\sqrt{\|\widehat{\bV}-\bV\|_1\|\widehat{\bV}-\bV\|_\infty}\leq Cp^{-\frac{1}{2}}r^{\frac{3}{2}}\cdot\widetilde{err}_n. 
		\end{align}
		
		Last, we study the matrix $(\widehat{\bV}^{\top}\widehat{\bV})^{-1}$. Since $(\widehat{\bV}^{\top}\widehat{\bV}-\bV^{\top}\bV)$ is a symmetric matrix, 
		\begin{align*} 
		& \|\widehat{\bV}^{\top}\widehat{\bV}-\bV^{\top}\bV\|_2
		\leq \|\widehat{\bV}^{\top}\widehat{\bV}-\bV^{\top}\bV\|_1\cr
		\leq & \|\bV^{\top}(\widehat{\bV}-\bV)\|_1 + \|(\widehat{\bV}-\bV)^{\top}\bV\|_1 + \|(\widehat{\bV}-\bV)^{\top}(\widehat{\bV}-\bV)\|_1\cr
		\leq & \|\bV\|_\infty\|\widehat{\bV}-\bV\|_1 +\|\widehat{\bV}-\bV\|_\infty\|\bV\|_1 + \|\widehat{\bV}-\bV\|_\infty\|\widehat{\bV}-\bV\|_1\cr
		\leq & (Cp^{-1}r)(Cr\cdot\widetilde{err}_n) + (Cp^{-1}r^2\cdot\widetilde{err}_n)\cdot 1 + (Cr\cdot\widetilde{err}_n)(Cp^{-1}r^2\cdot\widetilde{err}_n)\cr
		\leq & Cp^{-1}r^2\cdot\widetilde{err}_n. 
		\end{align*}
		By the assumption \eqref{Assumption2}, $\lambda_{\min}(\bV^{\top}\bV)\geq C^{-1}p^{-1}r$. It further implies that $\lambda_{\min}(\widehat{\bV}^{\top}\widehat{\bV})\geq \lambda_{\min}(\bV^{\top}\bV)- \|\widehat{\bV}^{\top}\widehat{\bV}-\bV^{\top}\bV\|_2\geq C^{-1}p^{-1}r$. In other words,
		\beq \label{thmU-V'V-1}
		\bigl\|(\bV^{\top}\bV)^{-1}\bigr\|_2\leq Cpr^{-1}, \quad \bigl\|(\widehat{\bV}^{\top}\widehat{\bV})^{-1}\bigr\|_2\leq Cpr^{-1}.
		\eeq
		Furthermore,
		\begin{align} \label{thmU-V'V-2}
		& \bigl\|(\bV^{\top}\bV)^{-1}- (\widehat{\bV}^{\top}\widehat{\bV})^{-1}\bigr\|_2\cr
		\leq & \bigl\|(\widehat{\bV}^{\top}\widehat{\bV})^{-1}\bigr\|_2\|\widehat{\bV}^{\top}\widehat{\bV}-\bV^{\top}\bV\|_2\bigl\|(\bV^{\top}\bV)^{-1}\bigr\|_2\cr
		\leq & Cp\cdot \widetilde{err}_n. 
		\end{align}
		
		We now proceed to proving the claim. Using the triangular inequality,
		\begin{align*}
		& \|\widehat{\bU}-\bU\|_2\cr
		\leq & \bigl\|[\diag(n^{-1}\widehat{\bN})]^{-1}-[\diag(\bpi)]^{-1}\bigr\|_2\|n^{-1}\widehat{\bN}\|_2\|\widehat{\bV}\|_2\bigl\|(\widehat{\bV}^{\top}\widehat{\bV})^{-1}\bigr\|_2\cr
		& + \bigl\|[\diag(\bpi)]^{-1}\bigr\|_2\|n^{-1}\widehat{\bN}-\bF\|_2\|\widehat{\bV}\|_2\bigl\|(\widehat{\bV}^{\top}\widehat{\bV})^{-1}\bigr\|_2\cr
		& + \bigl\|[\diag(\bpi)]^{-1}\bigr\|_2\|\bF\|_2\|\widehat{\bV}-\bV\|_2\bigl\|(\widehat{\bV}^{\top}\widehat{\bV})^{-1}\bigr\|_2\cr
		& + \bigl\|[\diag(\bpi)]^{-1}\bigr\|_2\|\bF\|_2\|\bV\|_2\bigl\|(\widehat{\bV}^{\top}\widehat{\bV})^{-1}-(\bV^{\top}\bV)^{-1}\bigr\|_2\cr
		\leq &C(pr^{-\frac{1}{2}}\,\widetilde{err}_n)\cdot p^{-1}\cdot p^{-\frac{1}{2}}r^{\frac{1}{2}}\cdot pr^{-1}\cr
		& + Cp\cdot (p^{-1}r^{-\frac{1}{2}}\,\widetilde{err}_n)\cdot p^{-\frac{1}{2}}r^{\frac{1}{2}}\cdot pr^{-1}\cr
		& + Cp\cdot p^{-1}\cdot (p^{-\frac{1}{2}}r^{\frac{3}{2}}\widetilde{err}_n)\cdot pr^{-1}\cr
		& + Cp\cdot p^{-1}\cdot p^{-\frac{1}{2}}r^{\frac{1}{2}}\cdot (p\,\widetilde{err}_n)\cr
		\leq & C\sqrt{pr}\cdot \widetilde{err}_n. 
		\end{align*}
		It follows that
		\begin{align*}
		& p^{-1}\sum_{j=1}^p \|\be_j^{\top}(\widehat{\bU}-\bU)\|_2^2
		= p^{-1}\|\widehat{\bU}-\bU\|_F^2
		\leq p^{-1}\cdot (2r)\cdot \|\widehat{\bU}-\bU\|_2^2
		\leq Cr^2\cdot \widetilde{err}_n. 
		\end{align*}
	\end{proof}

	
	\begin{theorem}[Recovery of anchor states, Theorem 5 in paper]
		Let $\mathcal{N}$ be the set of non-anchor states. Denote \[ \phi = \max_{j \in \mathcal{N}} \max_{1 \leq k \leq r} \mathbb{P}_{X_0 \sim \bpi}\bigl(Z_t = k \mid X_{t+1} = j\bigr). \] Under conditions \eqref{Assumption1} - \eqref{Assumption4}, for a fixed $c_0 > 0$, there exist constants $c > 0$ and $C > 0$ such that when $\delta_0 \leq c(1-\phi)$ and \[ n \geq C\delta_0^{-2}\tau_*p^{\frac{3}{2}}r\bigl(\log^2(r) \vee 1\bigr) \log^2(n),\] we can successfully identify the anchor states with probability at least $1-n^{-c_0}$.
	\end{theorem}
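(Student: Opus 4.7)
The plan is to combine the deterministic bound \eqref{DW0} from Theorem \ref{Deterministic_Decompose} with the stochastic control on $Err$ from Theorem \ref{Perturbation}, and then exploit the population-level gap between $\max_k\bw_j(k)$ for anchor states (which equals $1$) and for non-anchor states (which is at most $\phi=1-\delta$). First, I would verify the key probabilistic interpretation of the population weight vector: from the identity $\bV[\diag(\bl_1)]=[\diag(\bh_1)][\diag(\bpi)]^{1/2}\bW$ established in the proof of Theorem \ref{Deterministic_Decompose}, together with Bayes' rule applied to $(X_t,Z_t,X_{t+1})$ under $X_0\sim\bpi$, one obtains $\bw_j(k)=\mathbb{P}_{X_0\sim\bpi}(Z_0=k\mid X_1=j)$. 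Thus if $j\in\mathcal{A}^*$ is a disaggregation anchor state of some meta-state $k^*$, the support condition $V_{jk}=0$ for $k\neq k^*$ forces $\bw_j=\be_{k^*}$, so $\max_k\bw_j(k)=1$; if $j\in\mathcal{N}$, then by the definition of $\phi$, $\max_k\bw_j(k)\leq\phi=1-\delta$.

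Next, I would convert this population separation into a sample-level separation. By \eqref{DW0}, uniformly in $j$, $\|\widehat\bw_j-\bw_j\|_\infty\leq\|\widehat\bw_j-\bw_j\|_1\leq C^*\cdot Err$, while on the event of probability at least $1-n^{-c_0}$ provided by Theorem \ref{Perturbation}, $Err\leq\widetilde C\bigl(n^{-1/2}\sqrt{\tau_*pr(\log^2(r)\vee 1)\log^2(n)}+n^{-1}\tau_*p^{3/2}r^{1/2}(\log^2(r)\vee 1)\log^2(n)\bigr)$. I would then take the constant $c$ in $\delta_0\leq c(1-\phi)$ small enough that $\delta_0\leq\delta/2$, and $C$ in the sample-size condition large enough that $n\geq C\delta_0^{-2}\tau_*p^{3/2}r(\log^2(r)\vee 1)\log^2(n)$ forces $C^*\cdot Err\leq\delta_0$; the lower-order term $n^{-1}\tau_*p^{3/2}r^{1/2}$ is absorbed by the same bound after using $\delta_0\leq 1\leq\sqrt r$.

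On this event, for any true anchor state $j$ of meta-state $k^*$, $\widehat\bw_j(k^*)\geq\bw_j(k^*)-C^*\cdot Err\geq 1-\delta_0$, so $j\in\mathcal{A}$; for any non-anchor $j\in\mathcal{N}$, $\max_k\widehat\bw_j(k)\leq\max_k\bw_j(k)+C^*\cdot Err\leq(1-\delta)+\delta_0\leq 1-\delta_0$, using $\delta_0\leq\delta/2$, so $j\notin\mathcal{A}$. No extra union bound is needed because \eqref{DW0} and Theorem \ref{Perturbation} are already uniform over $j$. The main technical subtlety is the identification $\bw_j(k)=\mathbb{P}_{X_0\sim\bpi}(Z_0=k\mid X_1=j)$, which is what links the model-level quantity $\delta=1-\phi$ in the theorem to the algorithmic threshold $\delta_0$; once this identification and the $\ell_\infty$-bound derived from \eqref{DW0} are in place, the rest is bookkeeping with the rates of Theorem \ref{Perturbation} and the choice of constants.
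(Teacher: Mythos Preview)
Your overall strategy---use \eqref{DW0} to control $\|\widehat\bw_j-\bw_j\|$, plug in the rate from Theorem~\ref{Perturbation}, and compare to the population gap---is exactly right, but the key identification you rely on is false. From $\bV[\diag(\bl_1)]=[\diag(\bh_1)][\diag(\bpi)]^{1/2}\bW$ one gets
\[
\bw_j(k)\;=\;[\bh_1(j)]^{-1}\pi_j^{-1/2}\,V_{jk}\,\bl_1(k),
\]
whereas Bayes' rule gives
\[
\mathbb{P}_{X_0\sim\bpi}(Z_0=k\mid X_1=j)\;=\;\pi_j^{-1}\,V_{jk}\,(\bU^{\top}\bpi)_k.
\]
These coincide only when $\bl_1(k)\propto(\bU^{\top}\bpi)_k$, which is not guaranteed by any of the assumptions (recall $\bl_1$ is the Perron vector of $(\bU^{\top}[\diag(\bpi)]^2\bU)(\bV^{\top}[\diag(\bpi)]^{-1}\bV)$, not $\bU^{\top}\bpi$). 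Consequently your claim ``$\max_k\bw_j(k)\le\phi$ for $j\in\mathcal N$'' does not follow from the definition of $\phi$.

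The paper closes this gap by introducing $\bzeta_j(k)=\mathbb{P}_{X_0\sim\bpi}(Z_0=k\mid X_1=j)$ and observing that $\bzeta_j=\|\bA\bw_j\|_1^{-1}\bA\bw_j$ with $\bA=[\diag(\bU^{\top}\bpi)][\diag(\bl_1)]^{-1}$. Since Lemma~\ref{B} and \eqref{xi_LowerBound} show the diagonal entries of $\bA$ are bounded above and below by constants, the map $\bw\mapsto\|\bA\bw\|_1^{-1}\bA\bw$ is bi-Lipschitz on the simplex in $\ell_1$; in particular $\min_{j\in\mathcal N}\min_k\|\bw_j-\be_k\|_1\ge c'\,(1-\phi)$ for some constant $c'>0$. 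With this replacement for your incorrect identity, the remainder of your argument (the $\ell_\infty$ bound from \eqref{DW0}, the rate from Theorem~\ref{Perturbation}, and the choice of $c,C$) goes through essentially unchanged.
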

	
	\begin{proof}
		\newcommand{\bzeta}{{\boldsymbol{\zeta}}}
		Define  \[ \bzeta_j = \pi_j^{-1} \bigl[\diag(\bU^{\top}\bpi)\bigr] \bV^{\top} \be_j, \quad j = 1,2,\ldots,p.\] Then for $k=1,2,\ldots,r$,
		\[ \bzeta_j(k) = \mathbb{P}_{X_0 \sim \bpi} \bigl(Z_t = k \mid X_{t+1} = j\bigr). \]
		We first present a useful fact that, for any two vectors $\bx, \bx'$ in the $r$-dimensional standard simplex $\mathcal{S}_{r-1}$, by triangle inequality,
		\beq \label{l1} \begin{aligned} & \bigl\| \bx' - \bx \bigr\|_1 = \bigl| \bx'(k) - \bx(k) \bigr| + \sum_{l \neq k} \bigl| \bx'(l) - \bx(l) \bigr| \\ \geq & \bigl| \bx'(k) - \bx(k) \bigr| + \Big| \sum_{l \neq k} \bigl( \bx'(l) - \bx(l) \bigr) \Big| = 2\bigl| \bx'(k) - \bx(k) \bigr|, \quad k = 1,2,\ldots,r. \end{aligned} \eeq
		If $\bx = \be_k$ then the equality holds.
		According to \eqref{l1}, the parameter $\phi$ in the theorem can be equivalently defined as
		\[ \begin{aligned} 1 - \phi = & \min_{j \in \mathcal{N}} \min_{1 \leq k \leq r} \bigl(1 - \bzeta_j(k)\bigl) = \frac{1}{2} \min_{j \in \mathcal{N}} \min_{1 \leq k \leq r} \bigl\| \bzeta_j - \be_k \bigl\|_1. \end{aligned} \]

		Denote \[ \bA = \bigl[\diag(\bU^{\top}\bpi)\bigr][\diag(\bl_1)]^{-1}. \] 
		Since
		\[ \bw_j = \pi_j^{-\frac{1}{2}}[\bh_1(j)]^{-1}[\diag(\bl_1)]\bV^{\top} \be_j, \]
		we have $\bzeta_j = \bigl\|\bA \bw_j\bigr\|_1^{-1} \bA \bw_j$ for $j = 1,2,\ldots,p$.
		By \eqref{xi_LowerBound}, Lemma \ref{B} and assumption \eqref{Assumption5}, there exist constants $\widetilde{c}>0$ and $\widetilde{C}>0$ such that the diagonal entries of $\bA$ satisfies
		\[ \widetilde{c} \leq  \bigl(\bU^{\top}\bpi\bigr)_k\bigl[\bl_1(k)\bigr]^{-1} \leq \widetilde{C}, \quad \text{for $k=1,2,\ldots,r$}. \]
		
		We first derive a lower bound for $\bigl\|\bw_j - \be_k\bigr\|_1$, $j \in \mathcal{N}$, using $\phi$.
		For any $\bw, \bw' \in \mathcal{S}_{r-1}$, let $\bzeta = \bigl\|\bA\bw\bigr\|_1^{-1}\bA\bw$, $\bzeta' = \bigl\|\bA\bw'\bigr\|_1^{-1}\bA\bw'$.
		Because $\bw,\bw' \in \mathcal{S}_{r-1}$,
		\[ \begin{aligned} & \bigl\| \bA\bw \bigr\|_1 \geq \widetilde{c}, \qquad \bigl\| \bA\bw' \bigr\|_1 \geq \widetilde{c}, \qquad \bigl\| \bA\bigl(\bw - \bw'\bigr)\bigr\|_1 \leq \widetilde{C} \bigl\|\bw - \bw'\bigr\|_1. \end{aligned} \]
		We find that
		\[ \begin{aligned} \bigl\| \bzeta - \bzeta' \bigr\|_1 = & \Bigl\| \bigl\| \bA\bw\bigr\|_1^{-1} \bA\bw - \bigl\| \bA\bw'\bigr\|_1^{-1} \bA\bw' \Bigr\|_1 \\ \leq & \bigl\| \bA \bw \bigr\|_1^{-1} \bigl\| \bA \bigl(\bw - \bw'\bigr) \bigr\|_1 + \Bigl| \bigl\| \bA \bw \bigr\|_1^{-1} - \bigl\| \bA \bw' \bigr\|_1^{-1} \Bigr| \cdot \bigl\|\bA\bw' \bigr\|_1 \end{aligned} \]
		Here, the second term
		\[ \begin{aligned} \Bigl| \bigl\| \bA \bw \bigr\|_1^{-1} - \bigl\| \bA \bw' \bigr\|_1^{-1} \Bigr| \cdot \bigl\| \bA\bw' \bigr\|_1 \leq & \Bigl| \bigl\| \bA \bw \bigr\|_1 - \bigl\| \bA \bw \bigr\|_1 \Bigr| \cdot \bigl\| \bA\bw \bigr\|_1^{-1} \\ \leq & \bigl\| \bA\bw \bigr\|_1^{-1} \bigl\| \bA \bigl(\bw -  \bw'\bigr) \bigr\|_1. \end{aligned} \]
		Therefore,
		\beq \label{VH1} \begin{aligned}  \bigl\| \bzeta - \bzeta' \bigr\|_1  \leq 2 \bigl\| \bA\bw \bigr\|_1^{-1} \bigl\| \bA \bigl(\bw -  \bw'\bigr) \bigr\|_1 \leq 2 \widetilde{C}\widetilde{c}^{-1} \bigl\|\bw - \bw' \bigr\|_1, \end{aligned} \eeq
		and
		\beq \label{c5} \begin{aligned} & \min_{j \in \mathcal{N}} \min_{1 \leq k \leq r} \bigl\| \bw_j - \be_k \bigr\|_1 \geq \frac{1}{2}\widetilde{C}^{-1}\widetilde{c}\min_{j \in \mathcal{N}} \min_{1 \leq k \leq r} \bigl\| \bzeta_j - \be_k \bigr\|_1 = \widetilde{C}^{-1}\widetilde{c} (1-\phi). \end{aligned} \eeq
		
		
		We now consider the perturbation bound for $\widehat\bw_j$. According to Lemma \ref{Perturbation}, for a fixed $c_0 > 0$, there exists a constant $C_0>0$ such that 	if $n \geq C_0 \tau_*p^{\frac{3}{2}}\bigl(\log^2(r) \vee 1)\log^2(n)$, with probability at least $1-n^{-c_0}$,
		\[ \begin{aligned} Err \leq C_0 \Bigl( &
		n^{-1/2}\sqrt{\tau_*pr\bigl(\log^2(r) \vee 1\bigr)\log^2(n)} \\
		& + n^{-1}\tau_*p^{\frac{3}{2}}r^{\frac{1}{2}}\bigl(\log^2(r) \vee 1\bigr)\log^2(n) \Bigr). \end{aligned} \]
		We further take $n > (c^*)^{-2}(C_0^{\frac{1}{2}}+1)^{-1}C_0 \tau_*p^{\frac{3}{2}}r\bigl(\log^2(r) \vee 1)\log^2(n)$,
		then $Err < c^*$, where $c^*$ is the constant in Theorem \ref{Deterministic_Decompose}. Theorem \ref{Deterministic_Decompose} then implies that
		\[ \max_{1 \leq j \leq p} \bigl\|\widehat\bw_j - \bw_j\bigr\|_1 \leq C^* \cdot Err. \]
		There exists a constant $\widetilde{C}_0 \geq (c^*)^{-2}(C_0^{\frac{1}{2}}+1)^{-1}C_0$ such that when \[ n \geq \widetilde{C}_0 \delta_0^{-2}\tau_*p^{\frac{3}{2}}r\bigl(\log^2(r) \vee 1\bigr) \log^2(n),\] we have
		\beq \label{kkk}  \max_{1 \leq j \leq p} \bigl\|\widehat\bw_j - \bw_j\bigr\|_1  \leq 2 \delta_0.  \eeq
		
		
		Suppose that $j$ is an anchor state for meta-state $k$. Then $\bw_j = \be_k$.
		Under \eqref{kkk}, we use \eqref{l1} and obtain
		\[ \widehat\bw_j(k) = 1 - \bigl( 1- \widehat\bw_j(k)\bigr) = 1 - \frac{1}{2}\bigl\|\widehat\bw_j - \bw_j \bigr\|_1 \geq 1 - \delta_0. \]
		
		Consider the case where $j \in \mathcal{N}$. Suppose that $\delta_0 \leq 4^{-1}\widetilde{C}^{-1}\widetilde{c}(1-\phi)$. Then by \eqref{c5}, for $k = 1,2,\ldots,r$,
		\[ 1 - \bw_j(k) = \frac{1}{2}\bigl\|\bw_j - \be_k \bigr\|_1 \geq \frac{1}{2}\widetilde{C}^{-1}\widetilde{c}(1-\phi) \geq 2\delta_0.  \]
		It follows from \eqref{kkk} that
		\[ \begin{aligned} \widehat\bw_j(k) \leq & 1 - \bigl( 1 - \bw_j(k) \bigr) + \bigl| \widehat\bw_j(k) - \bw_j(k) \bigr| \\ \leq & 1 - 2\delta_0 + \frac{1}{2}\bigl\| \widehat\bw_j - \bw_j\bigr\|_1 \leq 1 - \delta_0. \end{aligned} \]

	\end{proof}
	
	\section{Explanation of Main Algorithm}
	
	In this section, we explain the rationale of Algorithm 1, especially for:
	\begin{itemize}
		\item  Why the SCORE normalization \cite{SCORE} produces a simplex geometry.
		\item  How the simplex geometry is used for estimating $\bV$.
	\end{itemize}

	Without loss of genrality, we assume that all entires of the stationary distribution $\bpi \in \mathbb{R}^p$ are positive. The normalized data matrix 
	\beq \label{signalMat}
	\widetilde{\bN}\approx n^{\frac{1}{2}}\mathrm{diag}(\bpi)\bP[ \mathrm{diag}(\bpi)]^{-1/2}\equiv n^{\frac{1}{2}}\bQ. 
	\eeq
	The matrix $\bQ$ can be viewed as the ``signal'' part of $\widetilde{\bN}$. Let $\bh_1,\ldots,\bh_r$ be the right singular vectors of $\mathrm{diag}(\bpi)\bP[ \mathrm{diag}(\bpi)]^{-1/2}$. They can be viewed as the population counterpart of $\hbh_1,\ldots,\hbh_r$. We define a population counterpart of the matrix $\widehat{\bD}$ produced by SCORE:
	\beq
	\bD = [\mathrm{diag}(\bh_1)]^{-1}[\bh_2,\ldots,\bh_r] = \bigl[ \bd_1, \bd_2, \ldots, \bd_p \bigr]^{\top}. 
	\eeq
	From now on, we pretend that the matrix $\bQ$ is directly given and study the geometric structures associated with the singular vectors and the SCORE matrix $\bD$.   

	\subsection{The Simplex Geometry and Explanation of Steps of Algorithm 1}
	
	When $\bP=\bU\bV^\top$, the matrix $\bQ$, defined in \eqref{signalMat}, also admits a low-rank decomposition:
	\[
	\bQ=\bU^*(\bV^*)^\top, 
	\]
	where
	\[
	\bU^*=[\mathrm{diag}(\bpi)]\bU, \qquad \bV^*=[\mathrm{diag}(\bpi)]^{-1/2}\bV.
	\]
	The span of the right singular vectors $\bh_1,\ldots,\bh_r$ is the same as the column space of $\bV^*$. It implies there exists a linear transformation $\bL\in\mathbb{R}^{r\times r}$ such that
	\beq \label{HandVstar}
	\bH\equiv[\bh_1,\ldots,\bh_r] = \bV^*\bL. 
	\eeq
	Since $\bV^*$ is a nonnegative matrix, each row of $\bH$ is an affine combination of rows of $\bL$. Furthermore, if $j$ is an anchor state, then the $j$-th row of $\bV^*$ has exactly one nonzero entry, and so the $j$-th row of $\bH$ is proportional to one row of $\bL$. This gives rise to the following simplicial-cone geometry:
	\begin{Proposition}[Simplicial cone geometry] \label{prop:simplicialcone}
		Suppose $\bP=\bU\bV^\top$, each meta-state has an anchor state, and $\mathrm{rank}(\bU)=r$.  
		Let $\bH=[\bh_1,\ldots,\bh_r]$ contain the right singular vectors of $\bQ=\mathrm{diag}(\bpi) \bP [\mathrm{diag}(\bpi)]^{-1/2}$. There exists a simplicial cone in $\mathbb{R}^r$, which has $r$ extreme rays, such that all rows of $\bH$ are contained in this simplicial cone. Furthermore, for all anchor states $j$ of a meta-state, the $j$-th row of $\bH$ lies exactly on one extreme ray of this simplicial cone. 
	\end{Proposition}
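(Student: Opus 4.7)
My plan is to exploit the factorization in \eqref{HandVstar} and read off the simplicial cone directly from the rows of $\bL$. The proposition is essentially a translation of the nonnegativity of $\bV^*$ through a linear change of basis, together with the anchor-state structure.

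First, I would verify that $\bH$ and $\bV^*$ have the same column space. Since $\bpi$ is componentwise positive and $\mathrm{rank}(\bU)=r$, the matrix $\bU^*=[\mathrm{diag}(\bpi)]\bU$ also has rank $r$. The existence of one disaggregation anchor state per meta-state forces $r$ linearly independent rows in $\bV$, hence $\mathrm{rank}(\bV)=r$ and therefore $\mathrm{rank}(\bV^*)=r$. Consequently $\bQ=\bU^*(\bV^*)^\top$ has rank exactly $r$, so the row space of $\bQ$ coincides with the column space of $\bV^*$. The top $r$ right singular vectors of a rank-$r$ matrix span its row space, so $\mathrm{Span}(\bH)=\mathrm{Span}(\bV^*)$. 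This yields an invertible $\bL\in\mathbb{R}^{r\times r}$ with $\bH=\bV^*\bL$, as claimed in \eqref{HandVstar}.

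Next, I would read off the simplicial cone. Let $\bl_1^\top,\ldots,\bl_r^\top$ denote the rows of $\bL$; they are linearly independent because $\bL$ is invertible. Define
\[
\mathcal{C}=\Bigl\{\sum_{k=1}^r c_k\bl_k : c_k\geq 0\Bigr\}.
\]
This is a simplicial cone with exactly $r$ extreme rays, each generated by one of the $\bl_k$. For any state $j$, the $j$-th row of $\bH$ equals $\sum_{k=1}^r \bV^*_{jk}\bl_k$, and $\bV^*_{jk}=\pi_j^{-1/2}V_{jk}\geq 0$, so every row of $\bH$ lies in $\mathcal{C}$.

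Finally, for an anchor state $j$ of meta-state $k$, Definition~\ref{def:anchor} gives $V_{jk}>0$ and $V_{js}=0$ for $s\neq k$, hence $\bV^*_{jk}>0$ and $\bV^*_{js}=0$. Thus $\be_j^\top \bH=\bV^*_{jk}\bl_k^\top$ lies precisely on the extreme ray spanned by $\bl_k$. The main (and essentially only) obstacle is justifying that $\mathrm{rank}(\bV^*)=r$ so that the column spaces match and $\bL$ is invertible; once the anchor-state hypothesis is used to force linearly independent rows in $\bV$, the rest is routine.
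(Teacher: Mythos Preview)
Your proposal is correct and follows essentially the same route as the paper: the paper derives \eqref{HandVstar} from $\mathrm{Span}(\bH)=\mathrm{Span}(\bV^*)$, then notes that nonnegativity of $\bV^*$ places each row of $\bH$ in the cone generated by the rows of $\bL$, with anchor rows landing on extreme rays. You simply spell out the rank argument for invertibility of $\bL$ more explicitly than the paper does.
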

	
	{\bf Remark}. Similar simplicial-cone geometry has been discovered in the literature of nonnegative matrix factorization \cite{donoho2004does}. The simplicial cone there is associated with rows of the matrix that admits a nonnegative factorization, but the simplicial cone here is associated with singular vectors of the matrix. Since SVD is a linear projection, it is not surprising that the simplicial cone structure is retained in singular vectors.

	However, in the real case, we have to apply SVD to the noisy matrix, then the simplicial cone is corrupted by noise and hardly visible. We hope to find a proper normalization of $\bH$, so that the normalized rows are all contained in a simplex, where all points on the extreme ray of the previous simplicial cone (these points do not overlap) fall onto one vertex of the current simplex (these points now overlap).  Such a simplex geometry is much more robust to noise corruption and is easier to estimate.

	How to normalize $\bH$ to obtain a simplex geometry is tricky. If all entries of $\bH$ are nonnegative, we can normalize each row of $\bH$ by the $\ell^1$-norm of that row, and rows of the resulting matrix are contained in a simplex. However, $\bH$ consists of singular vectors and often has negative entries, so such a normalization doesn't work. 
	
	By Perron-Frobenius theorem in linear algebra, the leading right singular vector $\bh_1$ have all positive coordinates. It turns out that normalizing each row of $\bH$ by the corresponding coordinate of $\bh_1$ is a proper normalizaiton that will produce a simplex geometry. This is the idea of SCORE \cite{SCORE,mixed-SCORE,Topic-SCORE}. See Figure 2 in the paper for illustration.
	
	\begin{Proposition}[Post-SCORE simplex geometry] \label{prop:simplex}
		In the setting of Proposition~\ref{prop:simplicialcone}, additionally, we assume $\bh_1$ have all positive coordinates ({\it e.g.}, $\bQ^\top \bQ$ is an irreducible matrix). Consider the $p\times (r-1)$ matrix ${\bf D}=[\mathrm{diag}(\bh_1)]^{-1}[\bh_2,\ldots,\bh_r]$. Then, there exists a simplex ${\cal S}_0^*\subset\mathbb{R}^{r-1}$, which has $r$ vertices ${\bf b}_1,\ldots,{\bf b}_r$, such that all rows of ${\bf D}$ are contained in this simplex. Furthermore, for all anchor states $j$ of a same meta-state, the $j$-th row of ${\bf D}$ falls exactly onto one vertex of this simplex. 
	\end{Proposition}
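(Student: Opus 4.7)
The plan is to leverage Proposition~\ref{prop:simplicialcone} and then track what the row-wise SCORE normalization does to the simplicial-cone decomposition of $\bH$. From Proposition~\ref{prop:simplicialcone}, one has $\bH=\bV^*\bL$ with $\bV^*=[\mathrm{diag}(\bpi)]^{-1/2}\bV\geq 0$ and an invertible $\bL\in\mathbb{R}^{r\times r}$. Writing this row-wise, $\be_j^\top\bH=\sum_{k=1}^r V^*_{jk}\bL_{k,\cdot}$, so if $\bl_1$ denotes the first column of $\bL$, then $\bh_1(j)=\sum_{k=1}^r V^*_{jk}\bl_1(k)$ and $\bh_s(j)=\sum_{k=1}^r V^*_{jk}L_{k,s}$ for $s\geq 2$.

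The first step is to establish $\bl_1(k)>0$ for every $k$, using the anchor-state assumption together with the hypothesis $\bh_1>0$. Pick an anchor state $j_k$ of meta-state $k$: only the $k$-th coordinate of $\be_{j_k}^\top\bV^*$ is nonzero, and it is strictly positive, which forces $\bh_1(j_k)=V^*_{j_k,k}\bl_1(k)$, hence $\bl_1(k)>0$.

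Once this positivity is in hand, the rest is algebraic. I would set $\bb_k=\bl_1(k)^{-1}[L_{k,2},\ldots,L_{k,r}]^\top\in\mathbb{R}^{r-1}$ and verify by direct substitution that every SCORE-normalized row admits the representation $\bd_j=\sum_{k=1}^r w_{jk}\bb_k$ with weights $w_{jk}=V^*_{jk}\bl_1(k)/\bh_1(j)$ that are non-negative and sum to one. This would show $\bd_j\in\mathcal{S}_0^*:=\mathrm{conv}\{\bb_1,\ldots,\bb_r\}$, and the anchor-state case collapses to $\bd_{j_k}=\bb_k$ since the sum then has only one surviving term.

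To finish, I would confirm that $\mathcal{S}_0^*$ is a genuine $(r-1)$-simplex by showing $\bb_1,\ldots,\bb_r$ are affinely independent: the $r\times r$ matrix with rows $(1,\bb_k^\top)$ factors as $[\mathrm{diag}(\bl_1)]^{-1}\bL$, which is invertible. The main obstacle is the positivity claim $\bl_1(k)>0$; without it the SCORE rescaling could flip signs and destroy the simplex picture. Everything else is bookkeeping, so the whole argument boils down to noticing that anchor states, together with positivity of $\bh_1$, force the first column of the change-of-basis matrix $\bL$ to be strictly positive.
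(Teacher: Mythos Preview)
Your proposal is correct and follows essentially the same route as the paper: use anchor states together with $\bh_1>0$ to force $\bl_1(k)>0$, then define the vertices $\bb_k$ as the rows of $[\mathrm{diag}(\bl_1)]^{-1}\bL$ (restricted to columns $2,\ldots,r$) and read off the convex weights $w_{jk}=V^*_{jk}\bl_1(k)/\bh_1(j)$. The paper packages this matricially as $[{\bf 1},\bD]=\bW[{\bf 1},\bB]$ with $\bW=[\mathrm{diag}(\bh_1)]^{-1}\bV^*[\mathrm{diag}(\bl_1)]$, but the content is identical; your explicit affine-independence check via the factorization $[\mathrm{diag}(\bl_1)]^{-1}\bL$ is a nice addition that the paper leaves implicit.
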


	Proposition~\ref{prop:simplex} explains the rationale of the vertex hunting step. The vertex hunting step we used was borrowed from \cite{mixed-SCORE,Topic-SCORE}; see explanations therein. 
	
	Let ${\bf b}_1,\ldots,{\bf b}_r$ be the vertices of the simplex ${\cal S}_0^*$. 
	By vertex hunting, we obtain estimates of these vertices. The next question is: 
	How can we recover $\bV$ from the simplex vertices ${\bf b}_1,\ldots,{\bf b}_r$?

	Let ${\bf d}_j^\top$ be the $j$-th row of $\bD$, for $j\in[p]$. 
	By the nature of a simplex, each point in it can be uniquely expressed as a convex combination of the vertices. This means, for each $j\in[p]$, there exists a weight vector ${\bf w}_j$ from the standard simplex such that
	\[
	{\bf d}_j = \sum_{k=1}^r \bw_j(k){\bf b}_k. 
	\]
	The next proposition shows that we can recover $\bV$ from ${\bf w}_1,\ldots,{\bf w}_p$.
	
	\begin{Proposition}[Relation of simplex and matrix $\bV$] \label{prop:weight}
		In the setting of Proposition~\ref{prop:simplex}, each row of $\bD$ is a convex combination of the vertices of ${\cal S}_0^*$, i.e., for each $j\in[p]$, there exists ${\bf w}_j$ in the standard simplex such that $\bd_j = \sum_{k=1}^r \bw_j(k){\bf b}_k$. Furthermore, consider the matrix $\bW\in\mathbb{R}^{p\times r}$, whose $j$-th row equals to ${\bf w}_j^\top$. Then, $\bW$ and $\bV$ are connected by 
		\[
		[\mathrm{diag}(\bh_1)][\mathrm{diag}(\bpi)]^{1/2}\bW = \bV[\mathrm{diag}({\bf l}_1)], 
		\]
		where ${\bf l}_1$ is the first column of $\bL$ as defined in \eqref{HandVstar}.  
	\end{Proposition}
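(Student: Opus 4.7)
My plan is to exploit the factorization $\bH = \bV^* \bL$ from \eqref{HandVstar} to write down an explicit formula for the weight vectors $\bw_j$, and then verify the claimed matrix identity by an entrywise computation. The first assertion (existence of a convex combination) is immediate from Proposition~\ref{prop:simplex}: every point of the simplex $\mathcal{S}_0^*$ is a convex combination of its vertices, and since the $r$ vertices $\bb_1, \ldots, \bb_r$ are affinely independent, the combination is unique. Thus it suffices to produce one explicit set of nonnegative weights summing to one.

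To identify $\bw_j$, I would first expand $\bH = \bV^* \bL$ entrywise as
\[ \bh_m(j) = \sum_{k=1}^r V^*_{jk}\, L_{km}, \qquad j \in [p],\ m \in [r]. \]
Setting $m = 1$ gives $\bh_1(j) = \sum_k V^*_{jk}\, \bl_1(k)$, which is strictly positive because $\bl_1$ has positive entries by Lemma~\ref{B} and $\bV^* \geq 0$ has at least one positive entry in each row. Dividing the $m \geq 2$ identity by $\bh_1(j)$ then gives
\[ \bd_j(m-1) \;=\; \frac{\bh_m(j)}{\bh_1(j)} \;=\; \sum_{k=1}^r \frac{V^*_{jk}\, \bl_1(k)}{\bh_1(j)} \cdot \frac{L_{km}}{\bl_1(k)}. \]
As identified in the proof of Proposition~\ref{prop:simplex}, the vertex $\bb_k$ has coordinates $\bigl( L_{k,2}/\bl_1(k), \ldots, L_{k,r}/\bl_1(k) \bigr)^{\top}$, so this reads $\bd_j = \sum_k \bw_j(k)\, \bb_k$ with the explicit weights
\[ \bw_j(k) \;:=\; \frac{V^*_{jk}\, \bl_1(k)}{\bh_1(j)}, \qquad k = 1, \ldots, r. \]
These are manifestly nonnegative, and sum to $1$ by the defining relation for $\bh_1(j)$, so they lie in the standard simplex and coincide with the unique weights asserted by the proposition.

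The matrix identity then reduces to a one-line check: the $(j,k)$ entry of $[\mathrm{diag}(\bh_1)][\mathrm{diag}(\bpi)]^{1/2}\bW$ equals
\[ \bh_1(j)\, \sqrt{\pi_j}\; \bw_j(k) \;=\; \sqrt{\pi_j}\; V^*_{jk}\, \bl_1(k) \;=\; V_{jk}\, \bl_1(k), \]
where the last equality uses $V^*_{jk} = \pi_j^{-1/2} V_{jk}$; and the $(j,k)$ entry of $\bV[\mathrm{diag}(\bl_1)]$ is likewise $V_{jk}\, \bl_1(k)$. I do not expect a substantive obstacle here: the only non-routine inputs are the positivity of $\bl_1$ (which guarantees that the SCORE denominator $\bh_1(j)$ does not vanish and that $\bw_j$ is a bona fide convex weight) and the explicit coordinates of the vertices $\bb_k$, both of which are already supplied by Lemma~\ref{B} and the proof of Proposition~\ref{prop:simplex}.
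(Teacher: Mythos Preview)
Your proof is correct and follows essentially the same route as the paper: both define $\bw_j(k) = V^*_{jk}\,\bl_1(k)/\bh_1(j)$ (the paper writes this in matrix form as $\bW = [\mathrm{diag}(\bh_1)]^{-1}\bV^*[\mathrm{diag}(\bl_1)]$), verify that $\bW$ has nonnegative rows summing to one via $\bh_1(j) = \sum_k V^*_{jk}\bl_1(k)$, and then read off the identity using $\bV^* = [\mathrm{diag}(\bpi)]^{-1/2}\bV$. Your entrywise presentation is a faithful unpacking of the paper's matrix computation.
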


	By Proposition~\ref{prop:weight}, each column of the matrix
	\[
	[\mathrm{diag}(\bh_1)][\mathrm{diag}(\boldsymbol{\pi})]^{1/2}\bW
	\]
	is proportional to the corresponding column of $\bV$. Since each column of $\bV$ has a unit $\ell^1$-norm, if we normalize each column of the above matrix by its $\ell^1$-norm, we can exactly recover $\bV$. 
	
	Once we have obtained $\bV$, we can immediately recover $\bU$ from $(\bP, \bV)$ by the relation:
	\[
	\bU =\bU(\bV^\top\bV)(\bV^\top\bV)^{-1}=\bP\bV(\bV^\top\bV)^{-1}. 
	\]
	The above gives the following theorem:
	\begin{Proposition}[Exact recovery of $\bU$ and $\bV$] \label{prop:noiseless}
		In the setting of Proposition~\ref{prop:simplex}, if we apply Algorithm~1 to the matrix $[\mathrm{diag}(\bpi)]\bP[ \mathrm{diag}(\bpi)]^{-1/2}$, it exactly outputs $\bU$ and $\bV$. 
	\end{Proposition}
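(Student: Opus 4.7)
The plan is to walk through Algorithm~1 step by step in the noiseless setting and verify that every intermediate quantity equals its population analogue, so that Propositions~\ref{prop:simplicialcone}--\ref{prop:weight} can be chained to give $\widehat\bV=\bV$ and $\widehat\bU=\bU$. Interpreting the statement as feeding $\bN=[\diag(\bpi)]\bP$ (so that $\bN^{\top}\mathbf{1}_p=\bpi$ by stationarity and hence $\bN[\diag(\bN^{\top}\mathbf{1}_p)]^{-1/2}=\bQ\equiv[\diag(\bpi)]\bP[\diag(\bpi)]^{-1/2}$), step~1(i) returns the exact right singular vectors $\bh_1,\ldots,\bh_r$ of $\bQ$ and hence the exact SCORE matrix $\bD=[\diag(\bh_1)]^{-1}[\bh_2,\ldots,\bh_r]$ featuring in Proposition~\ref{prop:simplex}.

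Next I would argue that step~1(ii) recovers $\{\bb_1,\ldots,\bb_r\}$ exactly. By Proposition~\ref{prop:simplex} the rows of $\bD$ are contained in the simplex $\mathcal{S}_0^{*}$ with vertices $\bb_1,\ldots,\bb_r$, and for every $k$ at least one row coincides with $\bb_k$. Under Assumption~2 the vertex-hunting error is bounded by $C\max_j\|\widehat\bd_j-\bd_j\|$, which vanishes here, so $\widehat\bb_k=\bb_k$. For step~1(iii), observe that the augmented matrix $\bB=\bigl[\begin{smallmatrix}\mathbf{1}_r^{\top}\\ \bb_1\ \cdots\ \bb_r\end{smallmatrix}\bigr]\in\mathbb{R}^{r\times r}$ is nonsingular: by Lemma~\ref{B}, $\bB=\bL^{\top}[\diag(\bl_1)]^{-1}$ with $\bL$ invertible and $\bl_1$ strictly positive entrywise, so the $\bb_k$ are affinely independent. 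Consequently the quadratic program in step~1(iii) has the unique minimizer $\bw_j^{*}=\bw_j$, where $\bw_j$ is the barycentric-coordinate vector furnished by Proposition~\ref{prop:weight}. Because $\bw_j\geq 0$ and $\mathbf{1}_r^{\top}\bw_j=1$, the subsequent truncation and $\ell^1$-renormalization act as the identity, giving $\widehat\bW=\bW$.

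Finally, Proposition~\ref{prop:weight} yields $[\diag(\bh_1)][\diag(\bpi)]^{1/2}\widehat\bW=\bV[\diag(\bl_1)]$, so the $k$-th column of this matrix is the strictly positive multiple $\bl_1(k)$ of $\bV_k$; since each $\bV_k$ has unit $\ell^1$-norm, column-normalizing returns $\widehat\bV=\bV$ exactly. For step~2, the assumption $\rank(\bU)=r$ combined with $\bP=\bU\bV^{\top}$ forces $\rank(\bV)=r$, so $\bV^{\top}\bV$ is invertible; and $[\diag(\bN\mathbf{1}_p)]^{-1}\bN=[\diag(\bpi)]^{-1}[\diag(\bpi)]\bP=\bP$, so
\[
\widehat\bU=\bP\bV(\bV^{\top}\bV)^{-1}=\bU\bV^{\top}\bV(\bV^{\top}\bV)^{-1}=\bU.
\]
The main delicacy is the vertex-hunting step: one must invoke the exact-recovery property of the specific algorithm (or the trivial specialization of Assumption~2 to zero noise) and verify the affine independence of $\bb_1,\ldots,\bb_r$, which is where Lemma~\ref{B} is doing the real work. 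Everything else is essentially bookkeeping for the normalizations.
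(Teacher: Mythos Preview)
Your proposal is correct and follows essentially the same route as the paper, whose ``proof'' is precisely the step-by-step walkthrough of Algorithm~1 on the population matrix given in the explanation subsection; your clarification that one should feed $\bN=[\diag(\bpi)]\bP$ (so that $\bN^{\top}\mathbf{1}_p=\bN\mathbf{1}_p=\bpi$ and $\widetilde\bN=\bQ$) is exactly the right reading. One small technical point: you invoke Lemma~\ref{B} for the invertibility of $\bL$ and the strict positivity of $\bl_1$, but Lemma~\ref{B} is proved under the regularity conditions \eqref{Assumption1}--\eqref{Assumption4}, which Proposition~\ref{prop:noiseless} does not assume; in this noiseless setting the paper instead reads both facts directly off the hypotheses of Proposition~\ref{prop:simplex} (namely, $\bL$ is a change of basis between the rank-$r$ matrices $\bH$ and $\bV^*$, and for an anchor state $j$ of meta-state $k$ one has $\bh_1(j)=V^*_{jk}\bl_1(k)$ with $\bh_1(j),V^*_{jk}>0$, hence $\bl_1(k)>0$). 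This is a cosmetic adjustment, not a gap.
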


	\subsection{Proof of propositions}
	Proposition~\ref{prop:simplicialcone} follows from \eqref{HandVstar} and definition of simplicial cone. Proposition~\ref{prop:noiseless} is proved in Section 1.1. We now prove Propositions~\ref{prop:simplex}-\ref{prop:weight}. Recall that by \eqref{HandVstar}, for $k\in[r]$,
	\[
	\bh_k = \bV^*{\bf l}_k, 
	\]
	where ${\bf l}_k$ is the $k$-th column of $\bL$. When all the coordinates of $\bh_1$ are strictly positive, the matrix $\bD$ is well-defined. Additionally, for an anchor state $j$ of the $k$-th meta state, ${\bf h}_1(j)=V^*_{jk}{\bf l}_1(k)$, where $V^*_{jk}>0$. Therefore, ${\bf l}_1(k)>0$ for $k\in[r]$. We define a matrix
	\[
	\bB = [\mathrm{diag}({\bf l}_1)]^{-1}[{\bf l}_2,\ldots,{\bf l}_r]. 
	\]
	By definition,
	\beq \label{prop-1}
	[{\bf 1}, \bD] =  [\mathrm{diag}(\bh_1)]^{-1}\bH, 
	\eeq
	and 
	\beq \label{prop-2}
	[{\bf 1}, \bB] = [\mathrm{diag}({\bf l}_1)]^{-1}\bL. 
	\eeq
	Combining them with \eqref{HandVstar} gives
	\beq \label{prop-3}
	[{\bf 1}, \bD] = [\mathrm{diag}(\bh_1)]^{-1}\bV^*[\mathrm{diag}({\bf l}_1)][{\bf 1}, \bB]. 
	\eeq
	Let 
	\[
	\bW = [\mathrm{diag}(\bh_1)]^{-1}\bV^*[\mathrm{diag}({\bf l}_1)].
	\]
	The \eqref{prop-3} implies
	\[
	{\bf 1} = \bW{\bf 1}, \quad \bD = \bW\bB. 
	\]
	Since $\bW$ is a nonnegative matrix, the first equation implies that each row of $\bW$ is from the standard simplex, and the second equation implies that each row of $\bD$ is a linear combination of the $r$ rows of $\bB$, where the combination coefficients come from the corresponding row of $\bW$. This has proved the simplex geometry stated in Proposition~\ref{prop:simplex}. 
	
	Note that the $j$-th row of $\bD$ is located on one vertex of the simplex if and only if the $j$-th row of $\bW$ is located on one vertex of the standard simplex. From the way we define $\bW$, its $j$-th row equal to  
	\[
	{\bf w}^\top_j = \frac{1}{{\bf h}_1(j)\sqrt{\pi_j}} [V_{j1}{\bf l}_1(1), V_{j2}{\bf l}_1(2), \ldots, V_{jr}{\bf l}_1(r)]. 
	\]
	Since ${\bf h}_1$, ${\bf l}_1$ and $\bpi$ are all positive vectors, ${\bf w}_j$ is located on one vertex of the standard simplex if and only if exactly one of $V_{j1},\ldots,V_{jr}$ is nonzero, where the latter is true if and only if $j$ is an anchor state. This has proved Proposition~\ref{prop:simplex}.  
	
	Furthermore, from the way $\bW$ is defined above, using the fact that $\bV^*=[\mathrm{diag}(\bpi)]^{-1/2}\bV$, we immediately find that 
	\[
	\bW = [\mathrm{diag}(\bh_1)]^{-1}[\mathrm{diag}(\bpi)]^{-1/2}\bV[\mathrm{diag}({\bf l}_1)], 
	\]
	which is equivalent to
	\[
	[\mathrm{diag}(\bh_1)][\mathrm{diag}(\bpi)]^{1/2}\bW = \bV[\mathrm{diag}({\bf l}_1)]. 
	\]
	This has proved Proposition~\ref{prop:weight}. 
	
	\section{Technical Proofs}\label{TechnicalProofs}
	
	\begin{proof}[Proof of Lemma \ref{sigma_r}]
		Since $\bpi$ is a stationary distribution, the frequency matrix $\bF$ satisfies \beq\label{begin1}  \sum_{i=1}^p F_{ij} = \sum_{i=1}^p \pi_i P_{ij} = \pi_j \eeq for $j=1,2,\ldots,p$.
		Because $\sum_{j=1}^p P_{ij} = 1$ for $i=1,2,\ldots,p$,
		\beq \label{begin2} \sum_{j=1}^p F_{ij} = \sum_{i=1}^p \pi_i  P_{ij} = \pi_i. \eeq It follows that \[ \|\bF\|_1 = \max_{1 \leq j \leq p}\sum_{i=1}^p F_{ij} = \max_{1 \leq j \leq p} \pi_j = \pi_{\max}, \quad \|\bF\|_{\infty} = \max_{1 \leq i \leq p}\sum_{j=1}^p F_{ij} = \max_{1 \leq i \leq p}\pi_i = \pi_{\max},\] which further implies \beq\label{normF} \| \bF \|_2 \leq \sqrt{\| \bF \|_{\infty}\| \bF \|_1} = \pi_{\max} \leq C_1 p^{-1}. \eeq
		Therefore, \[ \begin{aligned}
		\sigma_1 = \|\bQ\|_2 = \bigl\|\bF[\diag(\bpi)]^{-\frac{1}{2}}\bigr\|_2  \leq  \pi_{\min}^{-\frac{1}{2}}\|\bF\|_2 \leq C_1c_1^{-\frac{1}{2}}p^{-\frac{1}{2}}.\end{aligned}\]
		
		As for the smallest singular value $\sigma_r$, by definition,
		\[ \begin{aligned} \sigma_r = \min_{\bx\in\mathbb{S}^{p-1}} \bigl\| \bQ^{\top}\bx \bigr\|_2 =  \min_{\bx\in\mathbb{S}^{p-1}} \bigl\|[\diag(\bpi)]^{-\frac{1}{2}}\bV\bU^{\top}[\diag(\bpi)]\bx\bigr\|_2. \end{aligned} \]
		Since \[ \begin{aligned}  \bigl\| [\diag(\bpi)]^{-\frac{1}{2}}\bV\bU^{\top}[\diag(\bpi)]\bx \bigr\|_2 \geq  \sigma_{\min}\bigl([\diag(\bpi)]^{-\frac{1}{2}} \bV\bigr) \bigl\| \bU^{\top}[\diag(\bpi)]\bx \bigr\|_2 \end{aligned} \] and \[ \bigl\| \bU^{\top}[\diag(\bpi)]\bx \bigr\|_2 \geq \sigma_{\min}\bigl(\bU^{\top}[\diag(\bpi)]\bigr)\|\bx\|_2,\] we have
		\[ \begin{aligned}
		\sigma_r \geq & \sigma_{\min}\bigl([\diag(\bpi)]^{-\frac{1}{2}} \bV\bigr)  \sigma_{\min} \bigl(\bU^{\top}[\diag(\bpi)]\bigr) \\
		= & \lambda_{\min}^{\frac{1}{2}}\bigl( \bV^{\top}[\diag(\bpi)]^{-1} \bV\bigr) \lambda_{\min}^{\frac{1}{2}}\bigl(\bU^{\top}[\diag(\bpi)]^2\bU\bigr) \geq c_2p^{-\frac{1}{2}},
		\end{aligned} \]
		where the last inequality holds due to \eqref{Assumption2}
	\end{proof}

	\begin{proof}[Proof of Lemma \ref{RowNorm}]
		We first consider the rows of right singular matrix ${\bf H}$.
		The columns of $[\diag(\bpi)]^{-\frac{1}{2}}\bV$ and $\bH$ span the same linear space. Hence, there exists a nonsingular matrix $\bL \in \mathbb{R}^{r \times r}$ such that 
		\beq \label{Def_L} \bH = [\diag(\bpi)]^{-\frac{1}{2}} \bV \bL. \eeq
		We plug \eqref{Def_L} into $\bH^{\top}\bH = \bI_r$ and obtain $\bL^{\top}\bV^{\top}[\diag(\bpi)]^{-1} \bV\bL = \bI_r$. Multiplying $\bL$ on the left and $\bL^{\top}$ on the right gives $\bL\bL^{\top}\bV^{\top}[\diag(\bpi)]^{-1} \bV\bL\bL^{\top} = \bL\bL^{\top}$. Because $\bL\bL^{\top}$ is non-singular, \[ \bL\bL^{\top} = \bigl( \bV^{\top}[\diag(\bpi)]^{-1}\bV \bigr)^{-1}.\] As a result, \beq\label{normL}\|\bL\|_2 = \lambda_{\min}^{-1/2}\bigl( \bV^{\top}[\diag(\bpi)]^{-1}\bV \bigr) \leq c_2^{-\frac{1}{2}}r^{-\frac{1}{2}},\eeq
		which implies that for any $j = 1,2,\ldots,p$,
		\beq\label{H_Row-wiseNorm} \begin{aligned}
			& \bigl\| \be_j^{\top} \bH \bigr\|_2 = \bigl\| \be_j^{\top} [\diag(\bpi)]^{-\frac{1}{2}} \bV \bL \bigr\|_2 \leq \pi_j^{-\frac{1}{2}}\bigl\| \be_j^{\top} \bV \bigr\|_2 \|\bL\|_2 \\
			\leq & \pi_j^{-\frac{1}{2}}\bigl\| \be_j^{\top} \bV \bigr\|_1 \|\bL\|_2 \leq c_2^{-\frac{1}{2}} \pi_j^{-\frac{1}{2}} r^{-\frac{1}{2}} \cdot \bigl\|\be_j^{\top}\bV\bigr\|_1. 
		\end{aligned} \eeq
		It only remains to estimate $\big\| \be_j^{\top}\bV \big\|_1$.
		
		Note that the invariant distribution $\bpi$ satisfies $\bpi^{\top}\bP = \bpi^{\top}$. For $ j = 1,2,\ldots,p$,
		\beq\label{temp_1}
		\pi_j = \bigl(\bpi^{\top} \bP\bigr) \be_j = \bpi^{\top} \bU \bV^{\top} \be_j = \sum_{k=1}^r (\bU^{\top}\bpi)_k(\bV^{\top}\be_j)_k \geq \big\| \be_j^{\top} \V \big\|_1 \cdot \min_{1 \leq k \leq r} \big( \bU^{\top}\bpi \big)_k.
		\eeq
		Under assumption \eqref{Assumption2},
		\[ \sum_{i=1}^p \pi_i^2 U_{ik}^2 = \be_k^{\top}\bigl(\bU^{\top}[\diag(\bpi)]^2\bU\bigr)\be_k \geq c_2p^{-1}r^{-1}, \qquad \text{for $k=1,2,\ldots,r$.} \]
		It follows that 
		\beq \label{xi_LowerBound} \bigl(\bU^{\top}\bpi\bigr)_k = \sum_{i=1}^p \pi_i U_{ik} \geq \pi^{-1}_{\max}\sum_{i=1}^p \pi_i^2 U_{ik}^2 \geq C_1^{-1}c_2r^{-1}. \eeq 
		Plugging \eqref{xi_LowerBound} into \eqref{temp_1} yields
		\beq \label{rowV} \bigl\|\be_j^{\top}\bV\bigr\|_1 \leq C_1 c_2^{-1} \pi_j r. \eeq
		We can further derive from \eqref{H_Row-wiseNorm} an upper bound for $\bigl\|\be_j^{\top}\bH\bigr\|_2$. 
		
		
		As for the left singular matrix ${\bf G}$, we can estimate $\bigl\|\be_j^{\top}\bG\big\|_2$ in a similar way.
		Analogous to the definition of ${\bf L}$, there exists a nonsingular matrix ${\bf R} \in \mathbb{R}^{r \times r}$ such that $\bG = [\diag(\bpi)]\bU{\bf R}$ and $\|\bR\|_2 = \lambda_{\min}^{-1/2}\bigl( \bU^{\top}[\diag(\bpi)]^2\bU \bigr) \leq c_2^{-\frac{1}{2}}\sqrt{pr}$. It follows that
		\[ \begin{aligned} & \bigl\| \be_j^{\top}\bG \bigr\|_2 = \bigl\| \be_j^{\top}[\diag(\bpi)]\bU {\bf R} \bigr\|_2 = \pi_j \bigl\| \be_j^{\top} \bU \bR \bigr\|_2 \\ \leq & \pi_j \bigl\| \be_j^{\top} \bU \bigr\|_2 \| \bR \|_2 \leq \pi_j \bigl\|\be_j^{\top}\bU\bigr\|_1 \|\bR\|_2 = c_2^{-\frac{1}{2}}\pi_j\sqrt{pr}, \end{aligned} \]
		where we used $\bigl\| \be_j^{\top} \bU \bigr\|_1 = 1$.
		
	\end{proof}
	
	\begin{proof}[Proof of Lemma \ref{B}]
		We first show that $\bl_1$ is the leading eigen vector of matrix \[ \boldsymbol{\Theta} = \bigl(\bU^{\top}[\diag(\bpi)]^2\bU\bigr)\bigl(\bV^{\top}[\diag(\bpi)]^{-1}\bV\bigr).\]
		Note that by definition,
		$ \bQ = [\diag(\bpi)]\bU\bV^{\top}[\diag(\bpi)]^{-\frac{1}{2}} $, 
		thus $\bQ^{\top}\bQ$ and $\boldsymbol{\Theta}$ share the same eigen values.
		Recall that $\bh_1$ is the leading right singular vector of $\bQ$,
		\beq \label{final1} \begin{aligned} \sigma_1^2\bh_1 = \bQ^{\top}\bQ \bh_1 =  [\diag(\bpi)]^{-\frac{1}{2}}\bV\bU^{\top}[\diag(\bpi)]^2\bU\bV^{\top}[\diag(\bpi)]^{-\frac{1}{2}}\bh_1. \end{aligned} \eeq
		Plugging $\bh_1 = [\diag(\bpi)]^{-\frac{1}{2}}\bV\bl_1$ into \eqref{final1} and multiplying $\bV^{\top}[\diag(\bpi)]^{-\frac{1}{2}}$ on the left, we have
		\[ \begin{aligned} \sigma_1^2\bV^{\top}[\diag(\bpi)]^{-1}\bV\bl_1 = \bV^{\top}[\diag(\bpi)]^{-1}\bV\bigl(\bU^{\top}[\diag(\bpi)]^2\bU\bigr) \bV^{\top}[\diag(\bpi)]^{-1}\bV\bl_1. \end{aligned} \]
		It can be reduced to
		\[ \boldsymbol{\Theta} \bl_1 = \bigl(\bU^{\top}[\diag(\bpi)]^2\bU\bigr)\bigl(\bV^{\top}[\diag(\bpi)]^{-1}\bV\bigr) \bl_1 = \sigma_1^2 \bl_1. \]
		
		The entries of $\boldsymbol{\Theta}$ are lower bounded by
		\[ \pi_{\min}^2\pi_{\max}^{-1} \min_{k,l}\bigl[(\bU^{\top}\bU)(\bV^{\top}\bV)\bigr]_{kl} = \pi_{\min}^2\pi_{\max}^{-1} \min_{k,l}\bigl(\bU^{\top}\bP\bV\bigr)_{kl},\] and upper bounded by
		\[ \pi_{\max}^2\pi_{\min}^{-1} \max_{k,l}\bigl[(\bU^{\top}\bU)(\bV^{\top}\bV)\bigr]_{kl} = \pi_{\max}^2\pi_{\min}^{-1} \max_{k,l}\bigl(\bU^{\top}\bP\bV\bigr)_{kl}.\]
		Condition \eqref{Assumption4} ensures that $\bU^{\top}\bP\bV$ is a positive matrix, hence $\boldsymbol{\Theta}$ is also positive. According to Perron-Frobenius Theorem, all components of $\bl_1$ are non-zero and have the same sign. Without loss of generality, we assume that the entries of $\bl_1$ are all positive. Accoring to Theorem 3.1 in \cite{minc1988nonnegative},
		\beq \label{ratio} \begin{aligned} & \frac{\max_{1 \leq k \leq r} \bl_1(k)}{\min_{1 \leq k \leq r} \bl_1(k)} \leq \max_{s,t,k} \biggl\{ \frac{\boldsymbol{\Theta}_{sk}}{\boldsymbol{\Theta}_{tk}} \biggr\} \leq \frac{\max_{k,l}\boldsymbol{\Theta}_{kl}}{\min_{k,l}\boldsymbol{\Theta}_{kl}} \leq \frac{\pi_{\max}^2\pi_{\min}^{-1} \max_{k,l}\bigl(\bU^{\top}\bP\bV\bigr)_{kl}}{\pi_{\min}^2\pi_{\max}^{-1} \min_{k,l}\bigl(\bU^{\top}\bP\bV\bigr)_{kl}} \leq C_1^3c_1^{-3}C_4, \end{aligned} \eeq
		where we used assumptions \eqref{Assumption1} and \eqref{Assumption4}.
		Recall that in \eqref{normL}, $ \|\bl_1\|_2 \leq \|\bL\|_2 \leq c_2^{-\frac{1}{2}}r^{-\frac{1}{2}}$,
		therefore,
		$\min_{1 \leq k \leq r} \bl_1(k) \leq r^{-\frac{1}{2}} \|\bl_1\|_2 \leq c_2^{-\frac{1}{2}}r^{-1}$. \eqref{ratio} then implies
		\[ \max_{1 \leq k \leq r} \bl_1(k) \leq C_1^3c_1^{-3}C_4 \min_{1 \leq k \leq r} \bl_1(k) \leq \widetilde{C}r^{-1} \]
		for some constant $\widetilde{C} > 0$.
		
		Consider $\bigl\|\bL^{-1}\bigr\|_2$. Since $[\diag(\bpi)]^{-\frac{1}{2}}\bV = \bH\bL^{-1}$ and $\bH$ is orthonormal,
		\[ \begin{aligned} & \bigl\|\bL^{-1}\bigr\|_2 = \max_{\bx \in \mathbb{S}^{r-1}} \bigl\|\bL^{-1}\bx\bigr\|_2 = \max_{\bx \in \mathbb{S}^{r-1}} \bigl\|\bH\bL^{-1}\bx\bigr\|_2 \\ = & \max_{\bx \in \mathbb{S}^{r-1}} \bigl\|[\diag(\bpi)]^{-\frac{1}{2}}\bV\bx\bigr\|_2 = \bigl\|[\diag(\bpi)]^{-\frac{1}{2}}\bV\bigr\|_2. \end{aligned} \]
		By \eqref{rowV}, $0 \leq [\diag(\bpi)]^{-1}\bV{\bf 1}_r \leq C_1c_2^{-1}r$, thus
		\[ \begin{aligned} & \bigl\|\bV^{\top}[\diag(\bpi)]^{-1}\bV\bigr\|_1 = \bigl\|\bV^{\top}[\diag(\bpi)]^{-1}\bV{\bf 1}_r\bigr\|_{\infty} \\ \leq & C_1c_2^{-1}r \bigl\|\bV^{\top}{\bf 1}_p\bigr\|_{\infty} = C_1c_2^{-1} r \|{\bf 1}_r\|_{\infty} = C_1c_2^{-1}r. \end{aligned} \]
		We have
		\[ \begin{aligned}  \bigl\| \bL^{-1} \bigr\|_2 = \bigl\|[\diag(\bpi)]^{-\frac{1}{2}}\bV\bigr\|_2 \leq  \bigl\|\bV^{\top}[\diag(\bpi)]^{-1}\bV\bigr\|_1^{\frac{1}{2}} \leq  C_1^{\frac{1}{2}}c_2^{-\frac{1}{2}}\sqrt{r}. \end{aligned} \]
		
		Therefore, $\|\bl_1\|_2 \geq \bigl\| \bL^{-1} \bigr\|_2^{-1} \geq C_1^{-\frac{1}{2}}c_2^{\frac{1}{2}}r^{-\frac{1}{2}}$ and
		\[ \max_{1 \leq k \leq r} \bl_1(k) \geq r^{-\frac{1}{2}}\|\bl_1\|_2 \geq C_1^{-\frac{1}{2}}c_2^{\frac{1}{2}}r^{-1}. \]
		We can conclude from \eqref{ratio} that
		\[ \min_{1 \leq k \leq r} \bl_1(k) \geq C_1^{-3}c_1^3C_4^{-1} \max_{1 \leq k \leq r} \bl_1(k) \geq \widetilde{c}r^{-1} \]
		for some $\widetilde{c}>0$.
	\end{proof}

	\begin{proof}[Proof of Lemma \ref{Pre}]
		Recall that by definition
		\[ \bh_1 = [\diag(\bpi)]^{-\frac{1}{2}}\bV\bl_1, \]
		and Lemma \ref{B} provides an estimate of the entries in $\bl_1$. Therefore,
		\[ cr^{-1} \cdot \pi_j^{-\frac{1}{2}} \bigl\|\be_j^{\top}\bV\bigr\|_1 \leq \bh_1(j) \leq Cr^{-1} \cdot \pi_j^{-\frac{1}{2}} \bigl\|\be_j^{\top}\bV\bigr\|_1, \]
		where $c,C > 0$ are the constants in Lemma \ref{B}.
		
		Note that in \eqref{rowV}, there is an upper bound for $\bigl\|\be_j^{\top}\bV\bigr\|_1$. Hence,
		\beq \label{hUpper} \bh_1(j) \leq C \pi_j^{-\frac{1}{2}}r^{-1} \cdot C_1 c_2^{-1} \pi_j r = CC_1c_2^{-1}\sqrt{\pi_j}. \eeq
		
		We now derive a lower bound for $\bigl\|\be_j^{\top}\bV\bigr\|_1$ using assumption \eqref{Assumption5}. Because the stationary distribution $\bpi$ satisfies $\bpi^{\top} \bP = \bpi^{\top}$, for each $j=1,2,\ldots,p$ we have
		\[ \begin{aligned} & \pi_j = \bpi^{\top}\bP\be_j = \bpi^{\top}\bU\bV^{\top} \be_j = \sum_{k=1}^r \bigl(\bU^{\top}\bpi\bigr)_k\bigl(\be_j^{\top}\bV\bigr)_k \\ \leq & \bar{C}_1r^{-1} \sum_{k=1}^r \bigl(\be_j^{\top}\bV\bigr)_k = \bar{C}_1r^{-1} \bigl\|\be_j^{\top}\bV\bigr\|_1. \end{aligned} \]
		It follows that
		\beq \label{hLower} \begin{aligned} \bh_1(j) \geq c\pi_j^{-\frac{1}{2}}r^{-1} \bigl\|\be_j^{\top}\bV\bigr\|_1 \geq c\pi_j^{-\frac{1}{2}}r^{-1} \cdot \bar{C}_1^{-1}\pi_jr = \bar{C}_1^{-1}c\sqrt{\pi_j}. \end{aligned} \eeq
		
		As for $\bg_1$, by the definition of singular value decomposition,
		\[ \bg_1 = \sigma_1^{-1} \bQ\bh_1, \]
		thus $\bg_1$ is nonnegative.
		For any $j=1,2,\ldots,p$,
		\[ \bg_1(j) = \sigma_1^{-1} \be_j^{\top} \bQ \bh_1 = \sigma_1^{-1} \be_j^{\top} \bF [\diag(\bpi)]^{-\frac{1}{2}} \bh_1. \]
		By \eqref{hUpper}, $[\diag(\bpi)]^{-\frac{1}{2}}\bh_1 \leq CC_1c_2^{-1}{\bf 1}_p$, thus
		\beq \label{g1} \bg_1(j) \leq CC_1c_2^{-1}\sigma_1^{-1} \be_j^{\top} \bF {\bf 1}_p = CC_1c_2^{-1} \sigma_1^{-1} \pi_j. \eeq
		Here, we used $\bF{\bf 1}_p = \bpi$. Since $\bigl\|[\diag(\bpi)]^{\frac{1}{2}}{\bf 1}\bigr\|_2 = \sqrt{\sum_{i=1}^p \pi_i} = 1$,
		\beq \label{Sigma1_LowerBound} \begin{aligned} \sigma_1 =  \max_{\bx \in \mathbb{R}^p} \bigl\|\bQ\bx\bigr\|_2 \geq \bigl\|\bQ[\diag(\bpi)]^{\frac{1}{2}}{\bf 1}_p\bigr\|_2, = \bigl\|\bF{\bf 1}_p\bigr\|_2 = \|\bpi\|_2 \geq p^{-\frac{1}{2}}\|\bpi\|_1 = p^{-\frac{1}{2}}. \end{aligned} \eeq
		Plugging \eqref{Sigma1_LowerBound} into \eqref{g1}, we obtain an upper bound for $\bg_1(j)$.
	\end{proof}
	
	\section{Numerical Experiments}
	
	
	\subsection{Explanation of Simulation Settings}
	
	We test our new approach on simulated sample transitions. For a $p$-state Markov chain with $r$ meta-states, we first randomly create two matrices ${\bf U}, {\bf V} \in \mathbb{R}_+^{p \times r}$ such that each meta-state has the same number of anchor states. 
	After assembling a transition matrix ${\bf P} = {\bf U}{\bf V}^{\top}$, we generate random walk data $\{X_0, X_1,\ldots,X_n\}$. For each data point in the figures, we conduct $5$ independent experiements and plot their mean and standard deviation.
	
	In Figure~3 (a), we run experiments with $p=1000$, $r = 6$ and the number of anchor states equal to $25,50,75,100$ for each meta-state. When $p$ is fixed and $n$ varies, the log-total-variation error in $\widehat{\bf V}$ scales linearly with $\log(n)$, with a fitted slope $\approx -0.5$. This is consistent with conclusion of Theorem 3 in paper which indicates that the error bound decreases with $n$ at the speed of $n^{-1/2}$. 
	In Figure~3 (b), we carry out experiments with $n/p=1000$, $r = 6$ and the number of anchor states equal to $\lfloor0.025p\rfloor,\lfloor0.050p\rfloor,\lfloor0.075p\rfloor,\lfloor0.100p\rfloor$ for each meta-state. When both $(n,p)$ vary while $n/p$ is fixed, the the log-total-variation error in $\widehat{\bf V}$ remains almost constant, with a fitted slope $\approx 0$. This validates the scaling of $\sqrt{p/n}$ in the error bound of $\widehat{\bf V}$. In both figures, we 
	observe that having multiple anchor states per each metastate makes the estimation error slightly smaller.
	
	In Figure~3 (c), we consider estimating the transition matrix $\bP$ by $\widehat{\bU}\widehat{\bV}^{\top}$, and compare it with the the spectral estimator in \cite{zhang2018spectral}. Our method has a slightly better performance. Note that our method not only estimates $\bP$ but also estimates $(\bU,\bV)$, while the spectral method cannot estimate $(\bU,\bV)$. 
	
	\subsection{More Results on Manhattan Taxi-trip Data}
	
	\vspace{-0.2cm}
	
	\paragraph{Distributions of Pick-up and Drop-off Locations.}~
	
	\vspace{-0.2cm}
	\begin{figure}[H]
		\centering
		\begin{minipage}{0.32\linewidth}
			\centering
			\includegraphics[width=\linewidth]{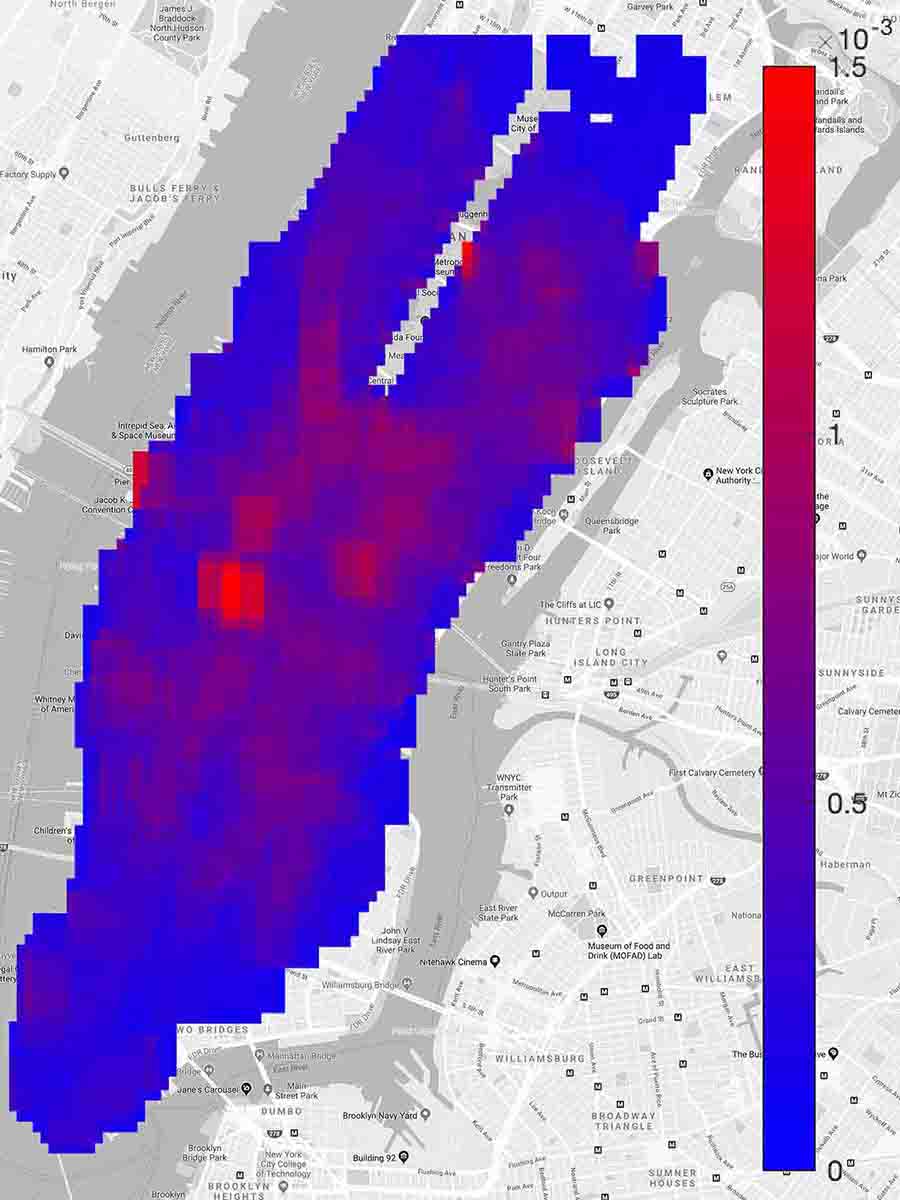}
		\end{minipage}
		\hspace{1cm}
		\begin{minipage}{0.32\linewidth}
			\centering
			\includegraphics[width=\linewidth]{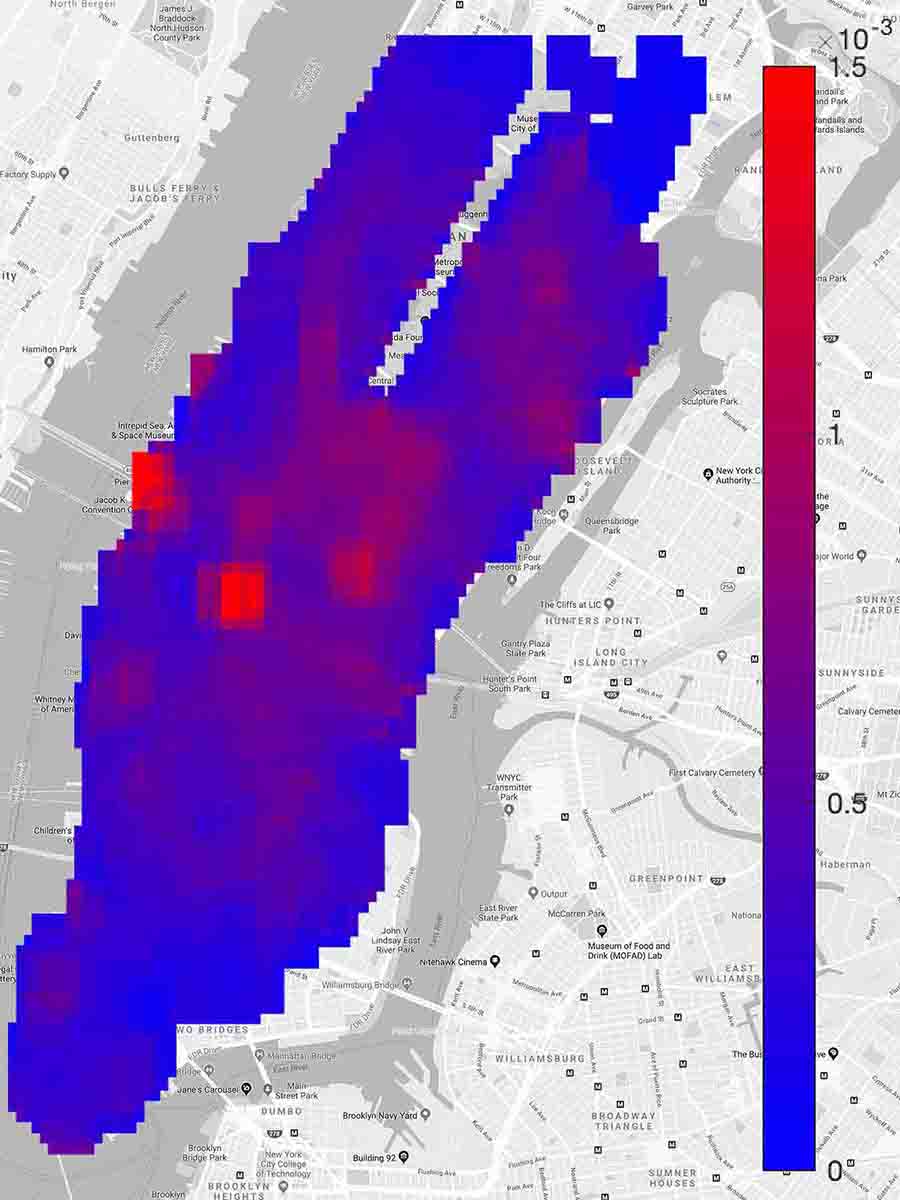}
		\end{minipage} \\ \vspace{0.3cm}
		{Figure F.1: Distributions of pick-up (L) and drop-off (R) location, illustrated as heatmaps.}
		\vspace{-0.5cm}
	\end{figure}
	
	\paragraph{Columns of Singular Vectors.}~ \vspace{-0.2cm}
	
	We conduct SVD to matrix $\widetilde{\bf N} = {\bf N} \big[ {\rm diag}({\bf N}^{\top}{\bf 1}_p) \big]^{-\frac{1}{2}}$. Denote the right singular vectors as $\widehat{\bf h}_1, \widehat{\bf h}_2, \ldots, \widehat{\bf h}_r$ and singular values $\widehat{\sigma}_1, \widehat{\sigma}_2, \ldots, \widehat{\sigma}_r$. In the following, we illustrate $\widehat{\bf h}_1, \widehat{\bf h}_2, \ldots, \widehat{\bf h}_r$ with heat maps. It turns out that the figures do not have clear patterns.
	
	\vspace{-0.4cm}
	\begin{figure}[H]
		\centering
		\hspace{-0.6cm}
		\begin{minipage}{0.32\linewidth}
			\centering
			\includegraphics[width=\linewidth]{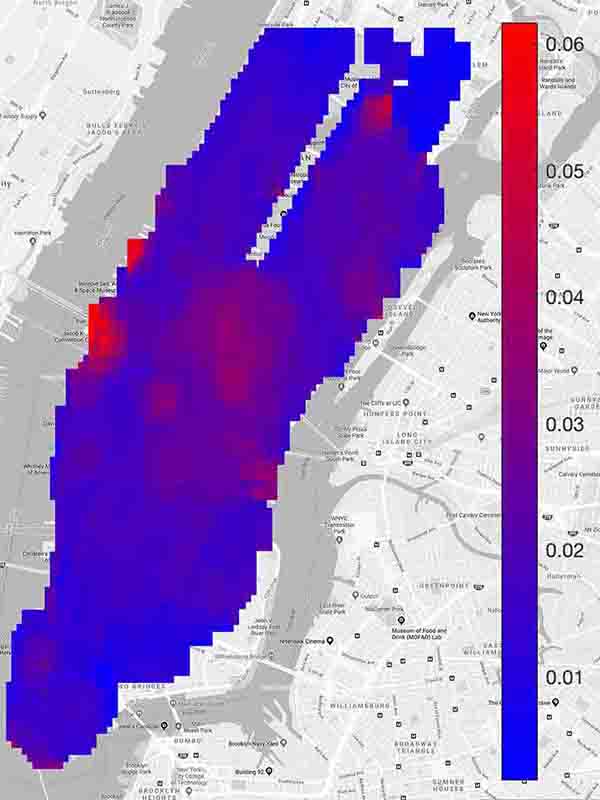}
			{$\widehat{\sigma}_1 = 2.3787$}
		\end{minipage}
		\hspace{0.3cm}
		\begin{minipage}{0.32\linewidth}
			\centering
			\includegraphics[width=\linewidth]{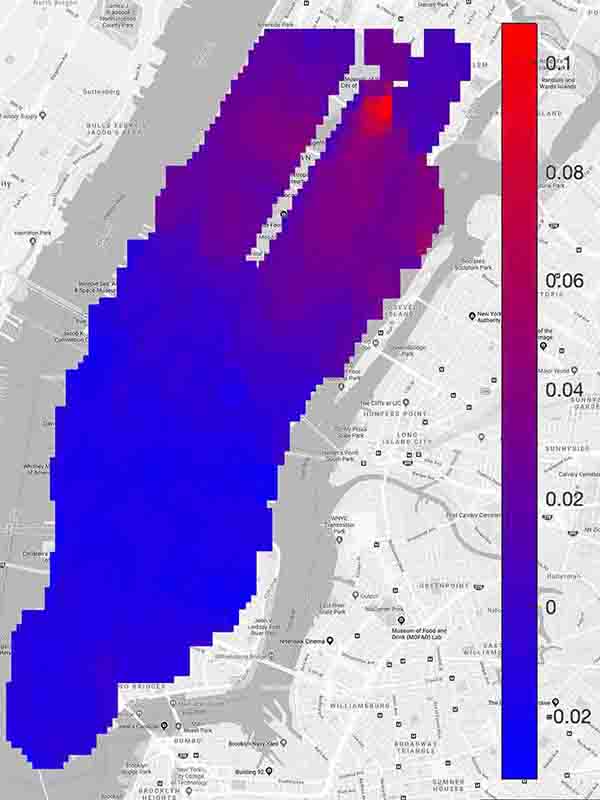}
			{$\widehat{\sigma}_2 = 1.1028$}
		\end{minipage}
		\hspace{0.3cm}
		\begin{minipage}{0.32\linewidth}
			\centering
			\includegraphics[width=\linewidth]{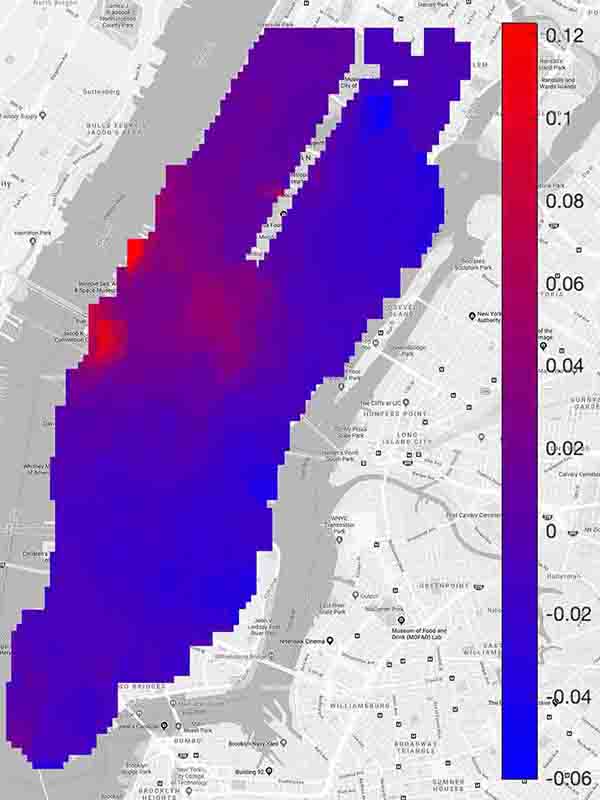}
			{$\widehat{\sigma}_3 = 0.9148$}
		\end{minipage} 
		\vspace{-1cm}
	\end{figure}
	
	\begin{figure}[H]
		\centering
		\hspace{-0.6cm}
		\begin{minipage}{0.32\linewidth}
			\centering
			\includegraphics[width=\linewidth]{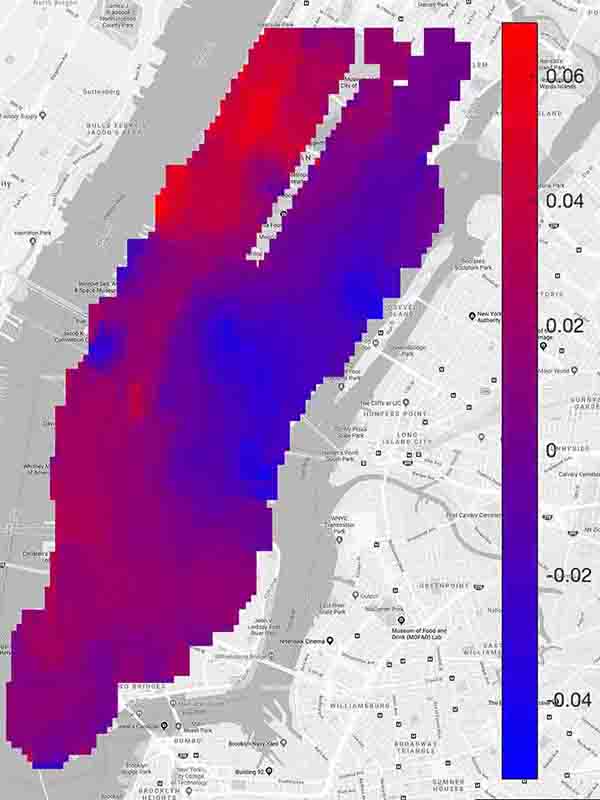}
			{$\widehat{\sigma}_4 = 0.8154$}
		\end{minipage}
		\hspace{0.3cm}
		\begin{minipage}{0.32\linewidth}
			\centering
			\includegraphics[width=\linewidth]{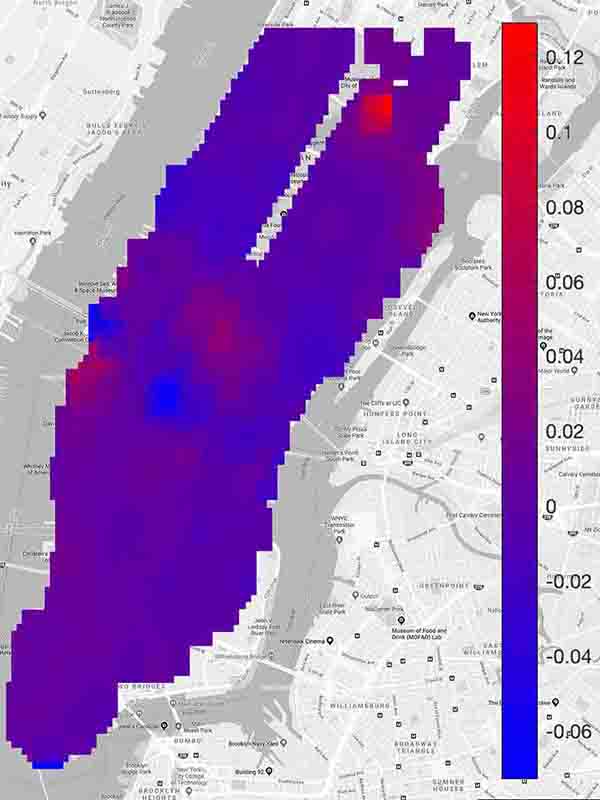}
			{$\widehat{\sigma}_5 = 0.6880$}
		\end{minipage}
		\hspace{0.3cm}
		\begin{minipage}{0.32\linewidth}
			\centering
			\includegraphics[width=\linewidth]{V_svd6.jpg}
			{$\widehat{\sigma}_6 = 0.5076$}
		\end{minipage} 
		\vspace{-0.5cm}
	\end{figure}
	\begin{figure}[H]
		\centering
		\hspace{-0.6cm}
		\begin{minipage}{0.32\linewidth}
			\centering
			\includegraphics[width=\linewidth]{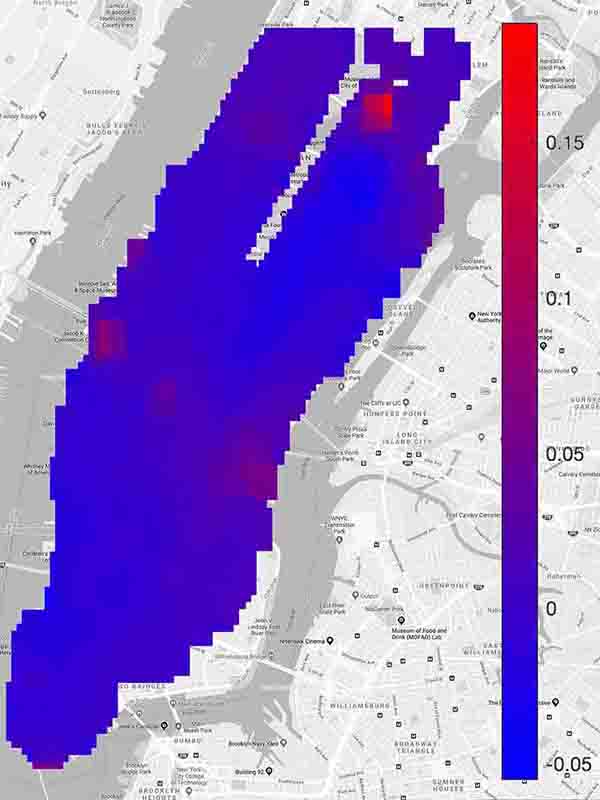}
			{$\widehat{\sigma}_7 = 0.4557$}
		\end{minipage}
		\hspace{0.3cm}
		\begin{minipage}{0.32\linewidth}
			\centering
			\includegraphics[width=\linewidth]{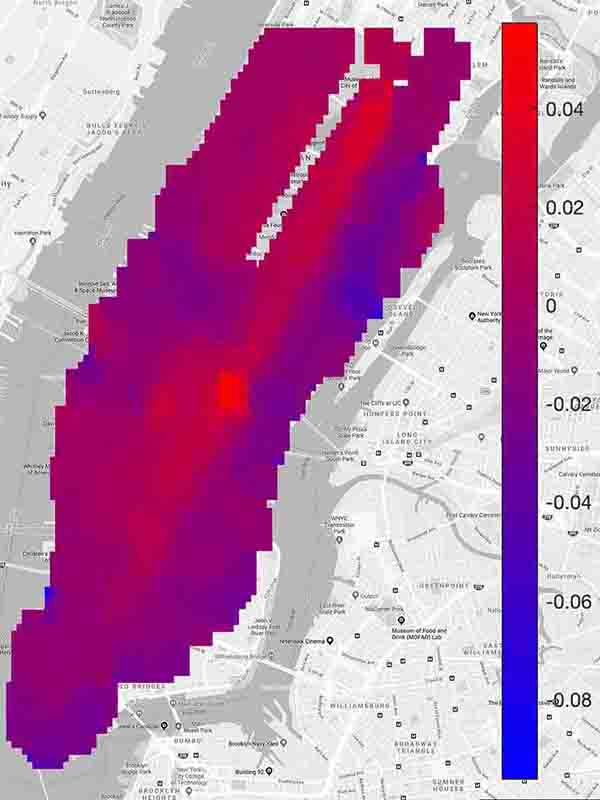}
			{$\widehat{\sigma}_8 = 0.4352.$}
		\end{minipage}
		\hspace{0.3cm}
		\begin{minipage}{0.32\linewidth}
			\centering
			\includegraphics[width=\linewidth]{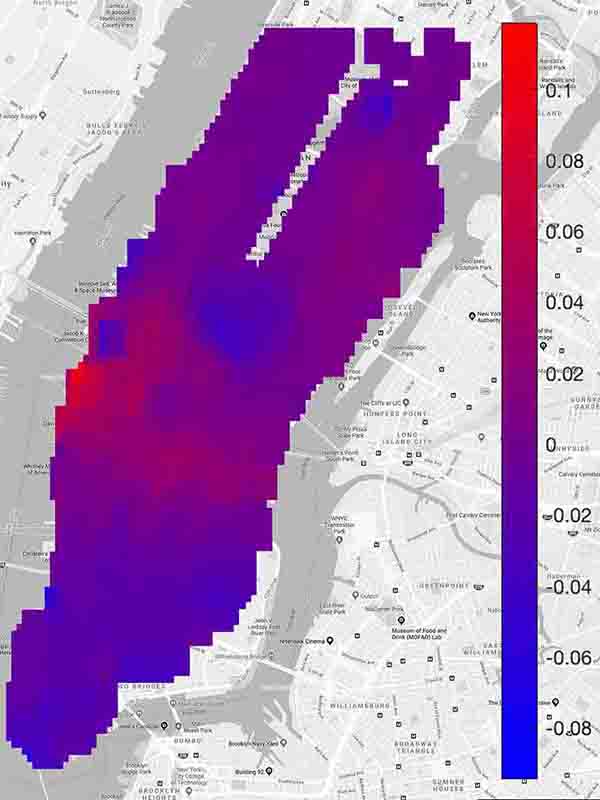}
			{$\widehat{\sigma}_9 = 0.4076$}
		\end{minipage}
		\vspace{-0.8cm}
	\end{figure}
	
	\begin{figure}[H]
		\centering
		\hspace{-0.6cm}
		\begin{minipage}{0.32\linewidth}
			\centering
			\includegraphics[width=\linewidth]{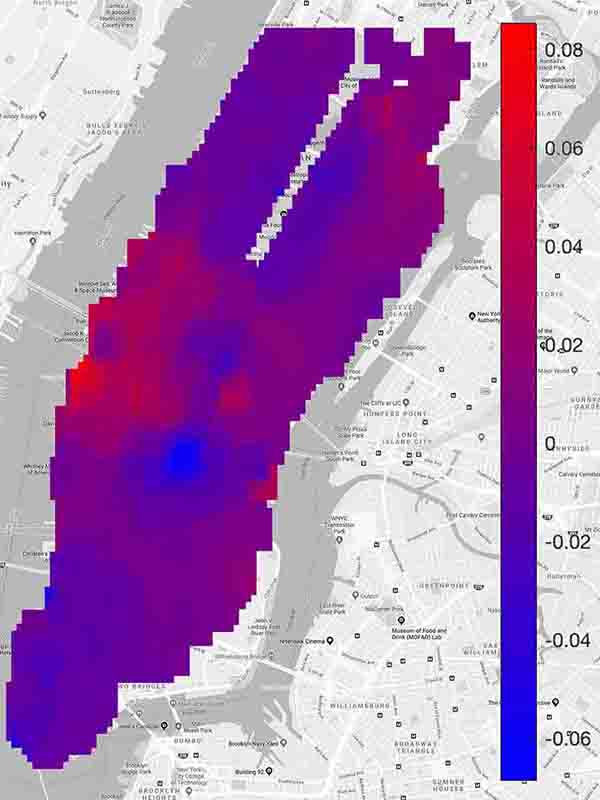}
			{$\widehat{\sigma}_{10} = 0.3818$}
		\end{minipage} \vspace{0.2cm}
		\\ {Figure F.2: Singular vectors of $\widetilde{\bf N}$ illustrated as heat maps.}
		\vspace{-0.4cm}
	\end{figure}
	
	\paragraph{Aggregation and Disaggregation Distributions.}~ \vspace{-0.1cm}
	
	We apply state aggregation learning to ${\bf N}$ with the number of meta-states equal to $r = 10$. The columns of estimated $\widehat{\bf U}$ and $\widehat{\bf V}$ are illustrated as heat maps. 
	We can tell from the figures that aggregation likelihoods and disaggregation distributions have practical meanings in real life.
	For example, the heat map of $\widehat{\bf V}$ has two red points which correspond to New York Penn. Station \& New York Ferry Waterway. It reveals the behavior of a certain group of passengers.
	
	\begin{figure}[H]
		\vspace{-.3cm}
		\centering
		\begin{minipage}{0.32\linewidth}
			\centering
			\includegraphics[width=\linewidth]{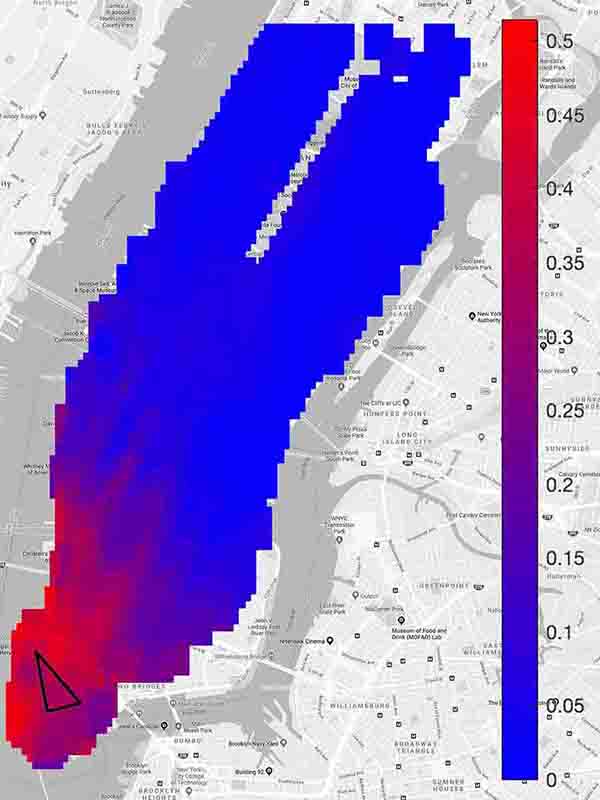}
			{$\widehat{\bf U}_1$}
		\end{minipage}
		\hspace{1cm}
		\begin{minipage}{0.32\linewidth}
			\centering
			\includegraphics[width=\linewidth]{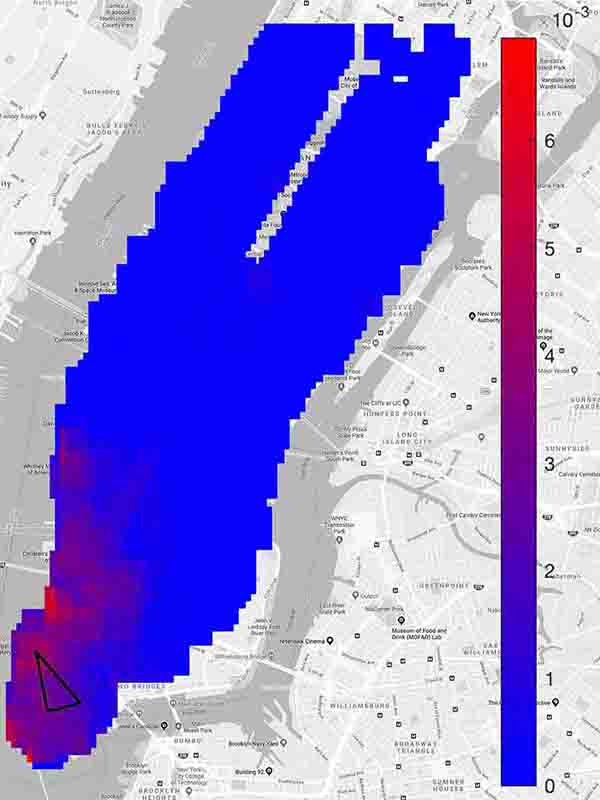}
			{$\widehat{\bf V}_1$}
		\end{minipage} \\ \vspace{0.05cm}
		{$\big( \widehat{\boldsymbol{\pi}}^{\top}\widehat{\bf U} \big)_1 = 0.0770$}\vspace{-0.5cm}
	\end{figure}
	
	\begin{figure}[H]
		\centering
		\begin{minipage}{0.32\linewidth}
			\centering
			\includegraphics[width=\linewidth]{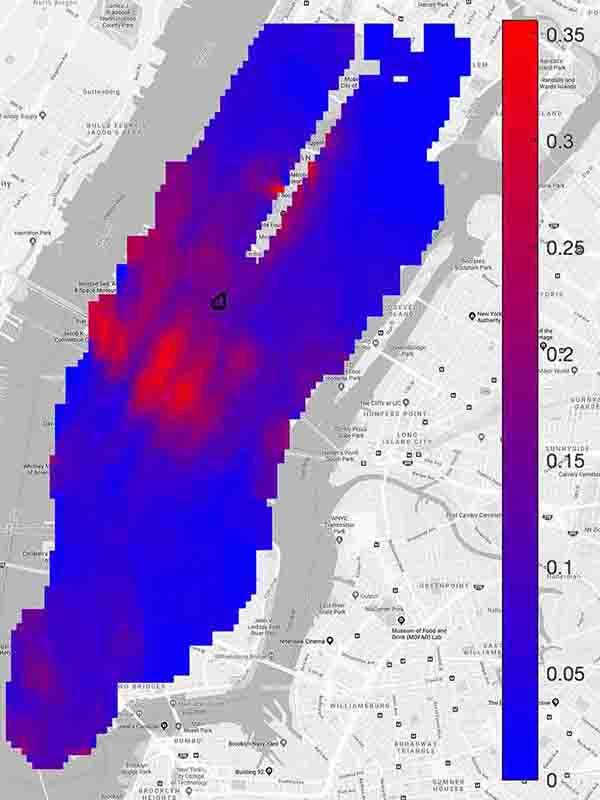}
			{$\widehat{\bf U}_2$}
		\end{minipage}
		\hspace{1cm}
		\begin{minipage}{0.32\linewidth}
			\centering
			\includegraphics[width=\linewidth]{V2.jpg}
			{$\widehat{\bf V}_2$}
		\end{minipage} \\ \vspace{0.05cm}
		{$\big( \widehat{\boldsymbol{\pi}}^{\top}\widehat{\bf U} \big)_2 = 0.1133$}\vspace{-0.5cm}
	\end{figure}
	
	\begin{figure}[H]
		\centering	\begin{minipage}{0.32\linewidth}
			\centering
			\includegraphics[width=\linewidth]{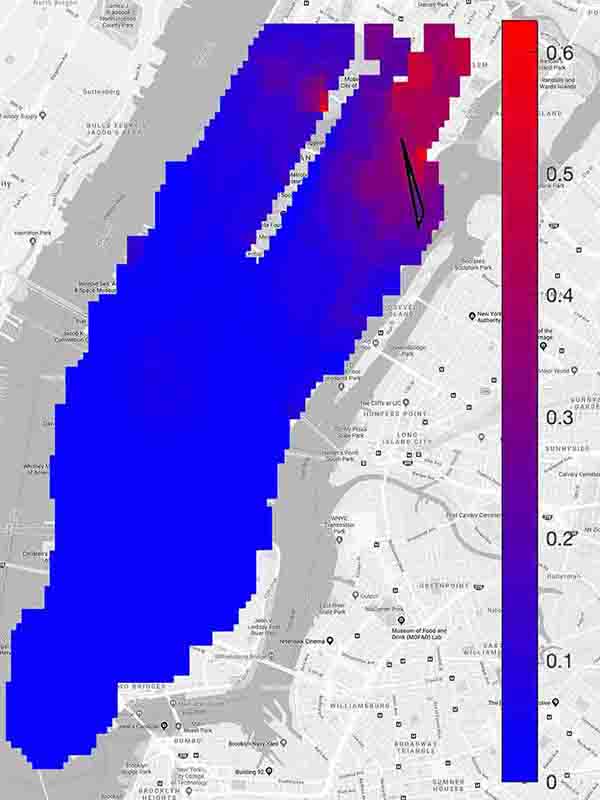}
			{$\widehat{\bf U}_3$}
		\end{minipage}
		\hspace{1cm}
		\begin{minipage}{0.32\linewidth}
			\centering
			\includegraphics[width=\linewidth]{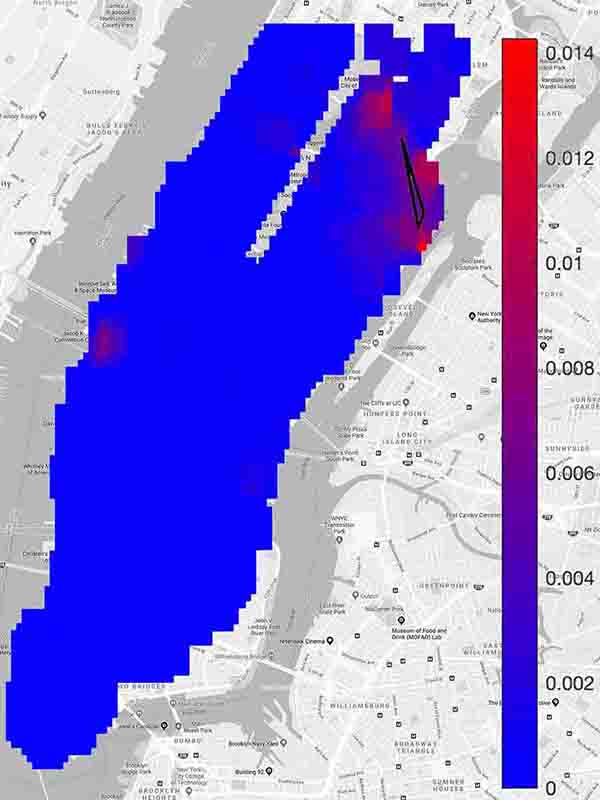}
			{$\widehat{\bf V}_3$}
		\end{minipage} \\ \vspace{0.15cm}
		{$\big( \widehat{\boldsymbol{\pi}}^{\top}\widehat{\bf U} \big)_3 = 0.0503$}\vspace{-0.5cm}
	\end{figure}
	
	\begin{figure}[H]
		\centering
		\begin{minipage}{0.32\linewidth}
			\centering
			\includegraphics[width=\linewidth]{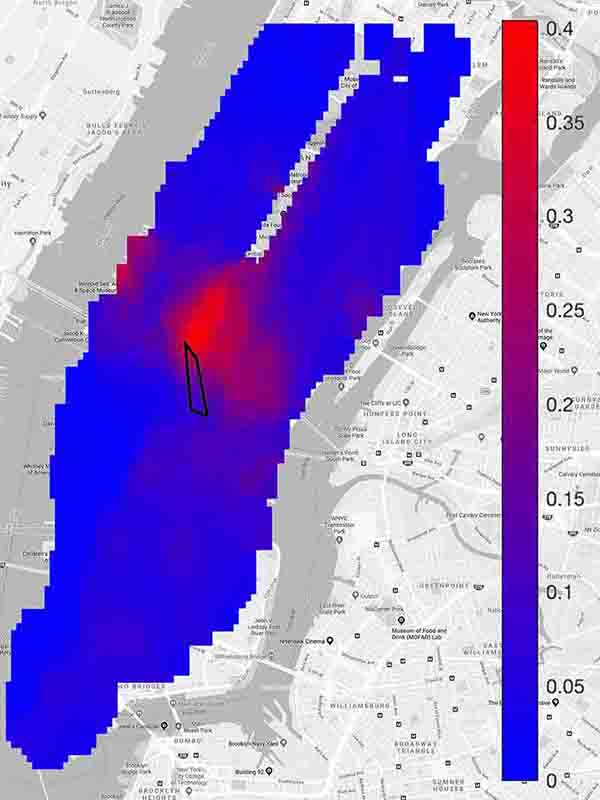}
			{$\widehat{\bf U}_4$}
		\end{minipage}
		\hspace{1cm}
		\begin{minipage}{0.32\linewidth}
			\centering
			\includegraphics[width=\linewidth]{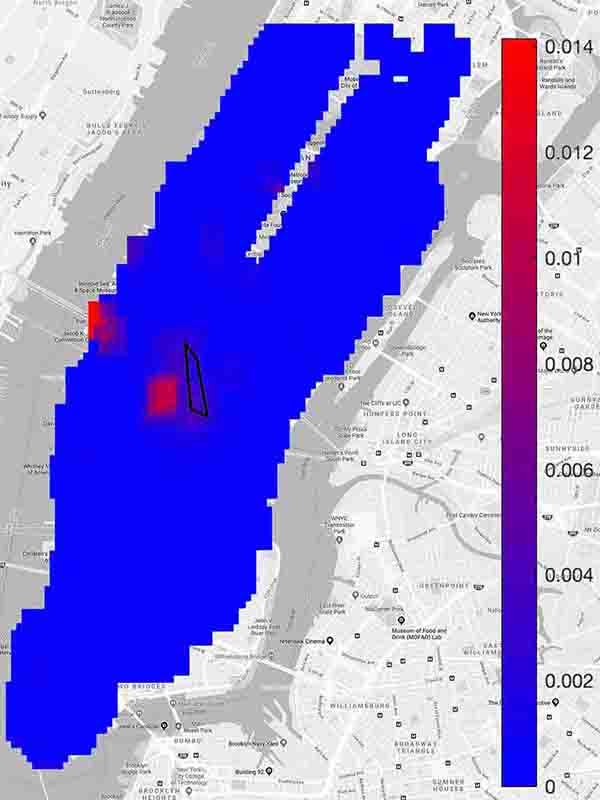}
			{$\widehat{\bf V}_4$}
		\end{minipage} \\ \vspace{0.15cm}
		{$\big( \widehat{\boldsymbol{\pi}}^{\top}\widehat{\bf U} \big)_4 = 0.1022$ \\ ~ \\ Right: The two red points correspond to New York Penn. Station \& New York Ferry Waterway.}\vspace{-0.5cm}
	\end{figure}
	
	\begin{figure}[H]
		\centering
		\begin{minipage}{0.32\linewidth}
			\centering
			\includegraphics[width=\linewidth]{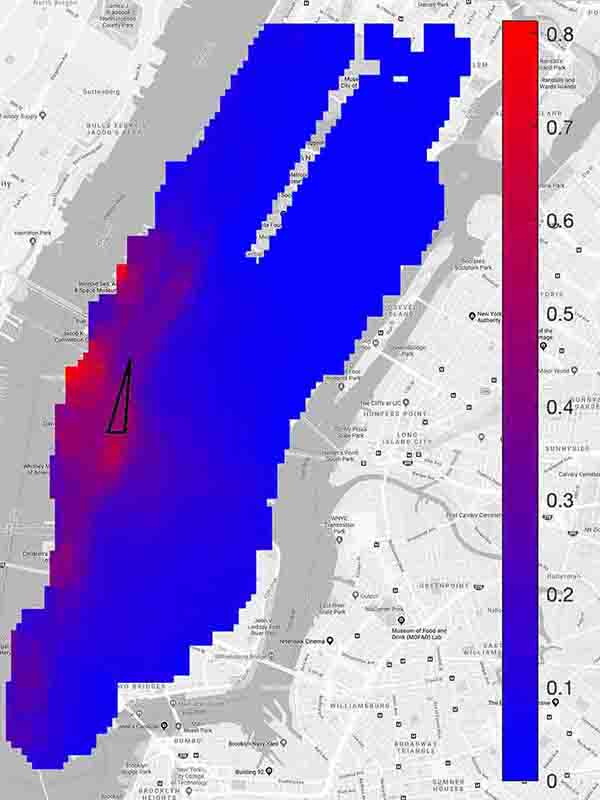}
			{$\widehat{\bf U}_5$}
		\end{minipage}
		\hspace{1cm}
		\begin{minipage}{0.32\linewidth}
			\centering
			\includegraphics[width=\linewidth]{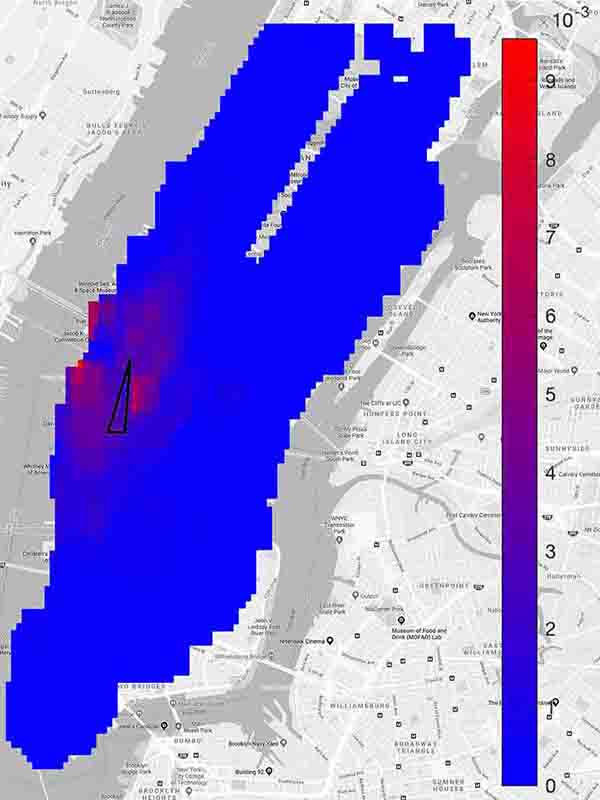}
			{$\widehat{\bf V}_5$}
		\end{minipage} \\ \vspace{0.15cm}
		{$\big( \widehat{\boldsymbol{\pi}}^{\top}\widehat{\bf U} \big)_5 = 0.1135$}\vspace{-0.5cm}
	\end{figure}
	
	\begin{figure}[H]
		\centering
		\begin{minipage}{0.32\linewidth}
			\centering
			\includegraphics[width=\linewidth]{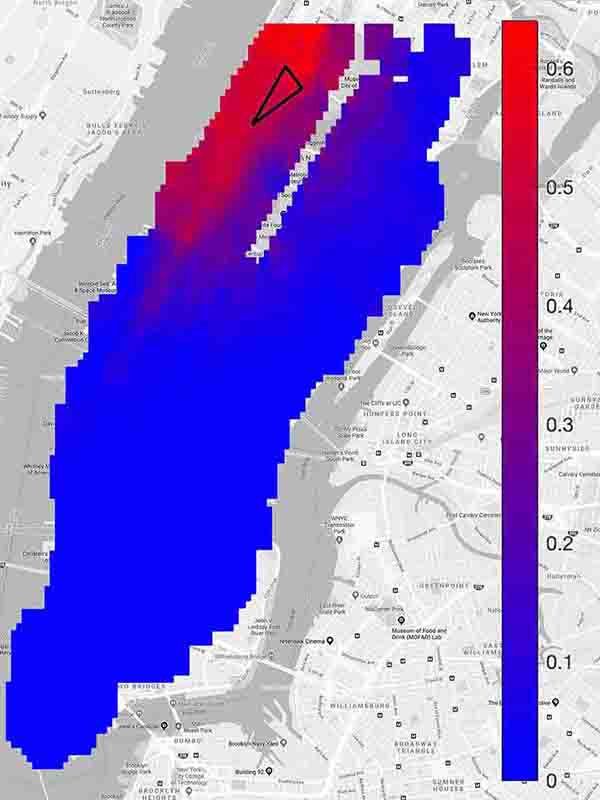}
			{$\widehat{\bf U}_6$}
		\end{minipage}
		\hspace{1cm}
		\begin{minipage}{0.32\linewidth}
			\centering
			\includegraphics[width=\linewidth]{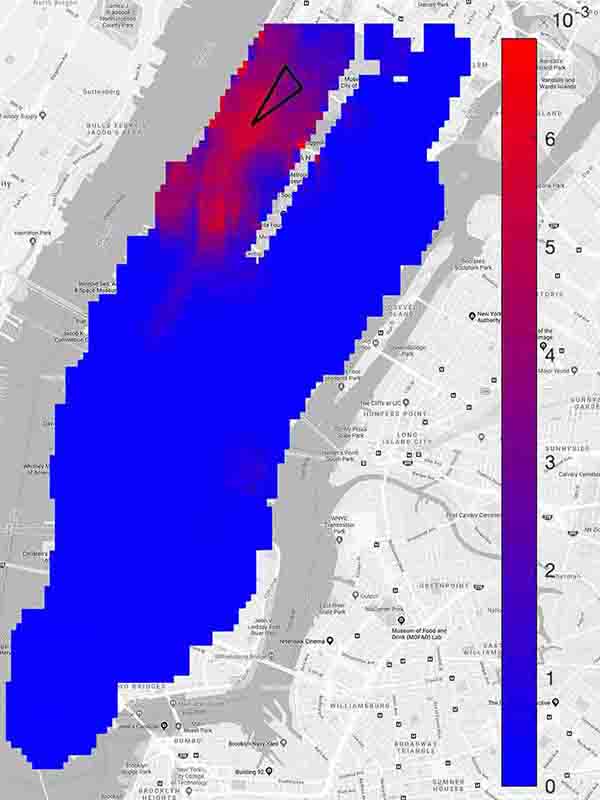}
			{$\widehat{\bf V}_6$}
		\end{minipage} \\ \vspace{0.15cm}
		{$\big( \widehat{\boldsymbol{\pi}}^{\top}\widehat{\bf U} \big)_6 = 0.0757$}\vspace{-0.5cm}
	\end{figure}
	
	\begin{figure}[H]
		\centering
		\begin{minipage}{0.32\linewidth}
			\centering
			\includegraphics[width=\linewidth]{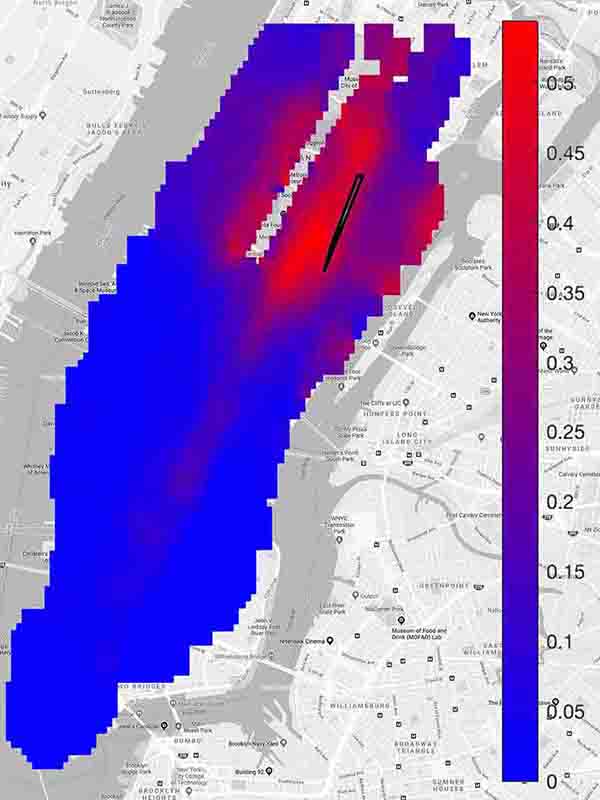}
			{$\widehat{\bf U}_7$}
		\end{minipage}
		\hspace{1cm}
		\begin{minipage}{0.32\linewidth}
			\centering
			\includegraphics[width=\linewidth]{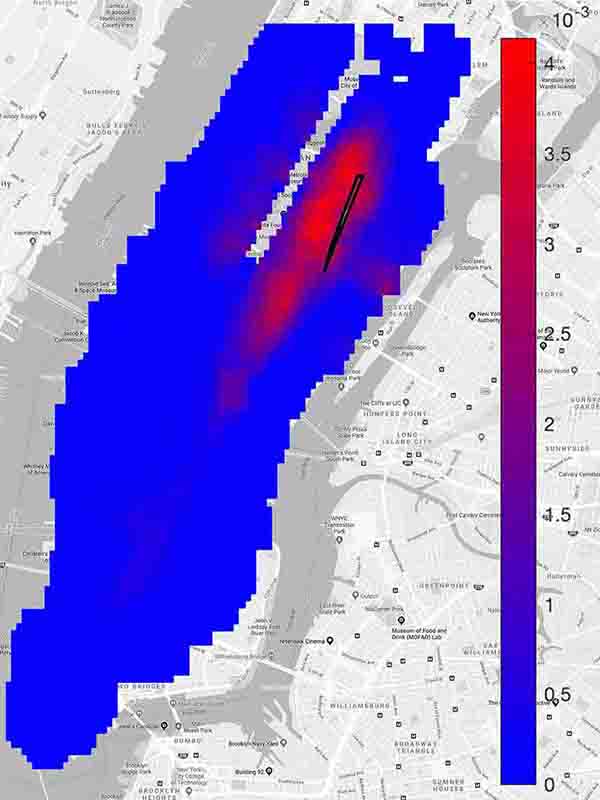}
			{$\widehat{\bf V}_7$}
		\end{minipage} \\ \vspace{0.15cm}
		{$\big( \widehat{\boldsymbol{\pi}}^{\top}\widehat{\bf U} \big)_7 = 0.1288$}\vspace{-0.5cm}
	\end{figure}
	\begin{figure}[H]
		\centering
		\begin{minipage}{0.32\linewidth}
			\centering
			\includegraphics[width=\linewidth]{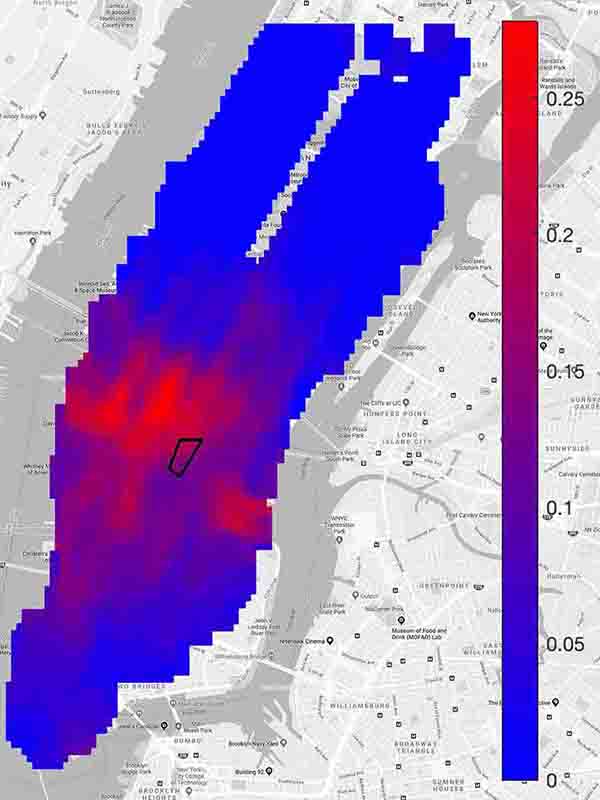}
			{$\widehat{\bf U}_8$}
		\end{minipage}
		\hspace{1cm}
		\begin{minipage}{0.32\linewidth}
			\centering
			\includegraphics[width=\linewidth]{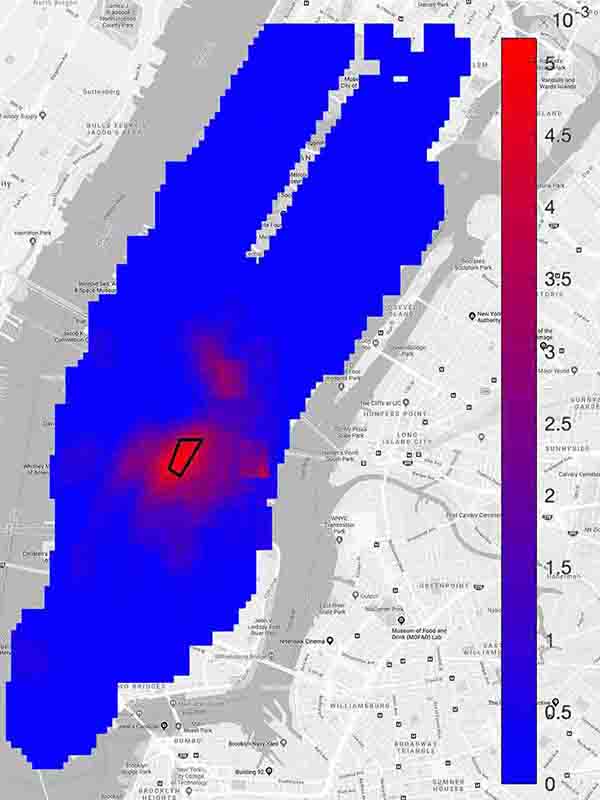}
			{$\widehat{\bf V}_8$}
		\end{minipage} \\ \vspace{0.15cm}
		{$\big( \widehat{\boldsymbol{\pi}}^{\top}\widehat{\bf U} \big)_8 = 0.0899$}\vspace{-0.5cm}
	\end{figure}
	\begin{figure}[H]
		\centering
		\begin{minipage}{0.32\linewidth}
			\centering
			\includegraphics[width=\linewidth]{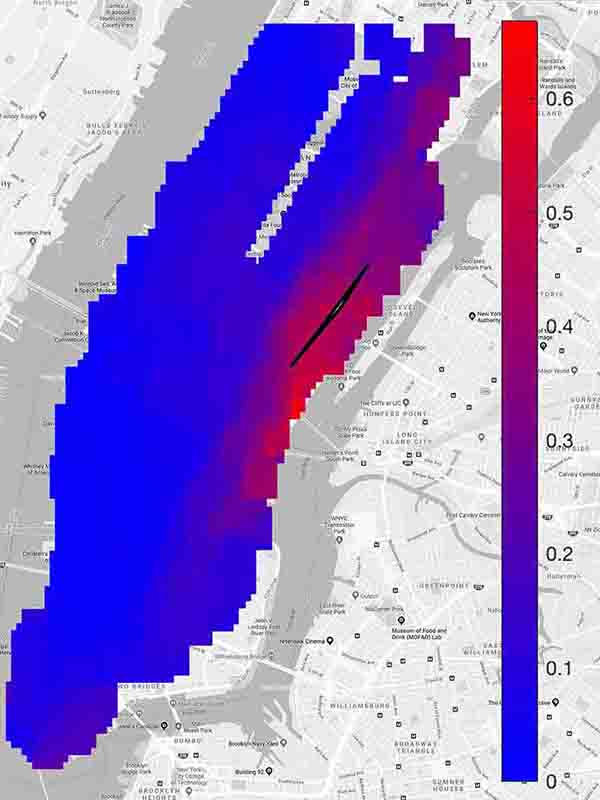}
			{$\widehat{\bf U}_9$}
		\end{minipage}
		\hspace{1cm}
		\begin{minipage}{0.32\linewidth}
			\centering
			\includegraphics[width=\linewidth]{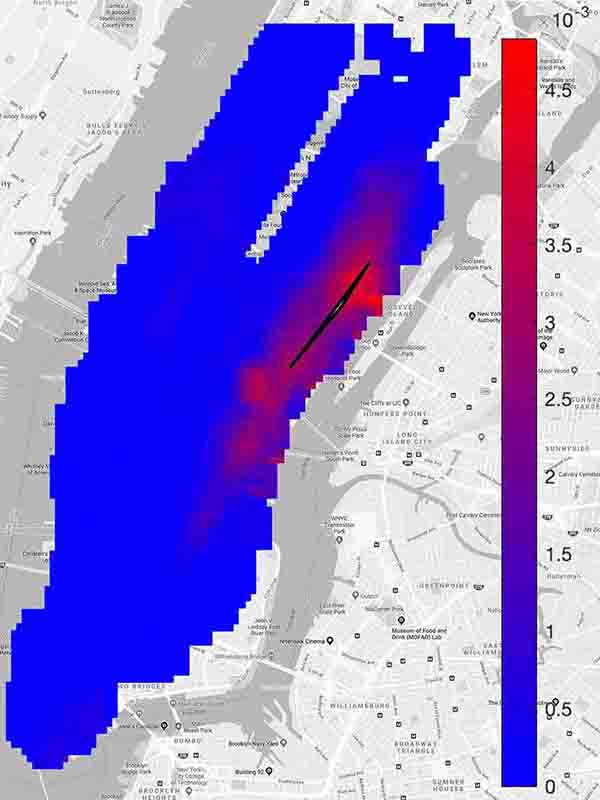}
			{$\widehat{\bf V}_9$}
		\end{minipage} \\ \vspace{0.15cm}
		{$\big( \widehat{\boldsymbol{\pi}}^{\top}\widehat{\bf U} \big)_9 = 0.1385$}\vspace{-0.5cm}
	\end{figure}
	
	\begin{figure}[H]
		\centering
		\begin{minipage}{0.32\linewidth}
			\centering
			\includegraphics[width=\linewidth]{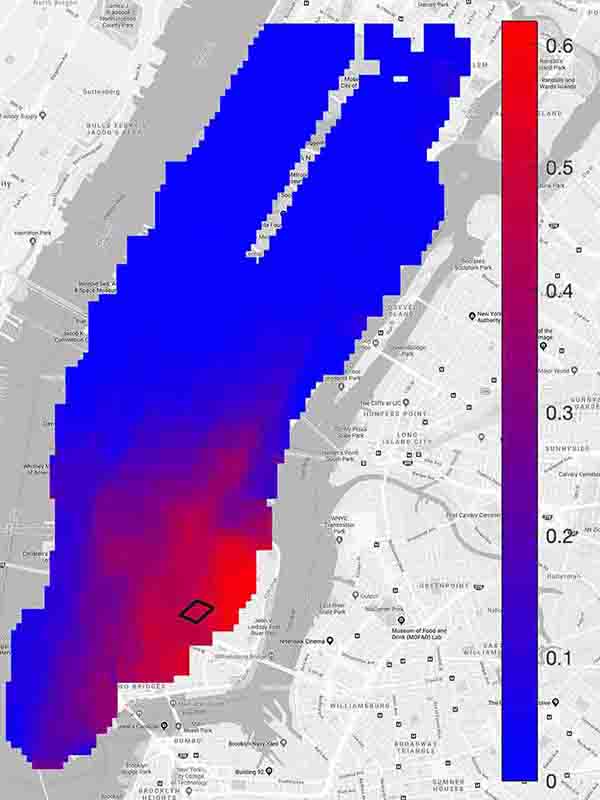}
			{$\widehat{\bf U}_{10}$}
		\end{minipage}
		\hspace{1cm}
		\begin{minipage}{0.32\linewidth}
			\centering
			\includegraphics[width=\linewidth]{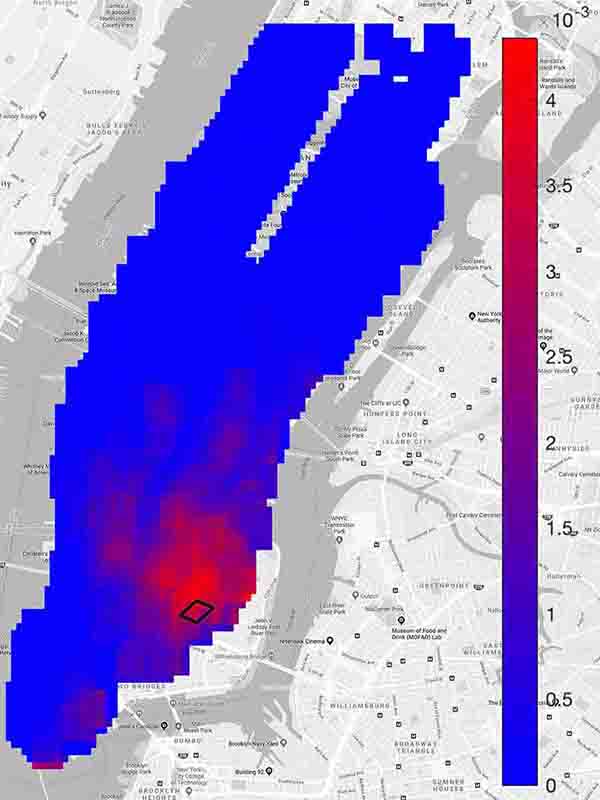}
			{$\widehat{\bf V}_{10}$}
		\end{minipage} \\ \vspace{0.15cm}
		{$\big( \widehat{\boldsymbol{\pi}}^{\top}\widehat{\bf U} \big)_{10} = 0.1106$} \vspace{1cm}
		\\ {Figure F.3: Columns of $\widehat{\bf U}$ and $\widehat{\bf V}$ illustrated as heat maps.}
		\vspace{-0.5cm}
	\end{figure}

	\paragraph{Anchor Regions and Partitions for Different $r$.}~
	
	\begin{figure}[H]
		\centering
		\begin{minipage}{0.32\linewidth}
			\centering
			\includegraphics[width = \linewidth]{Cluster4-Partition.jpg}
			$r = 4$
		\end{minipage}
		\hspace{1cm}    
		\begin{minipage}{0.32\linewidth}
			\centering
			\includegraphics[width = \linewidth]{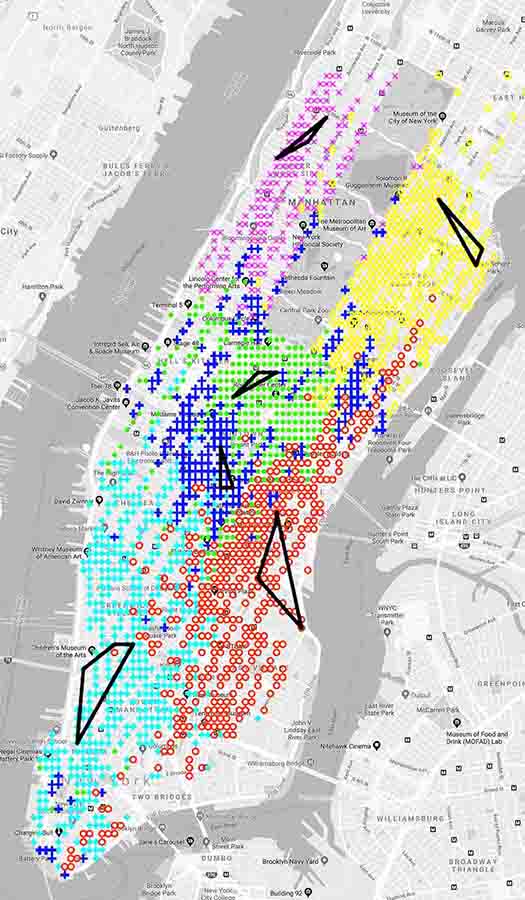}
			$r = 6$
		\end{minipage} \\ ~ \\ ~ \\
		\begin{minipage}{0.32\linewidth}
			\centering
			\includegraphics[width = \linewidth]{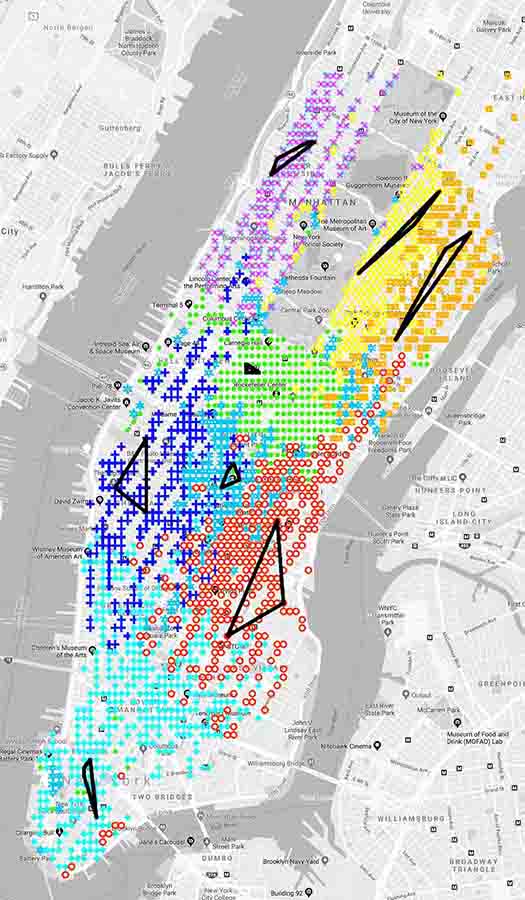}
			$r = 8$
		\end{minipage}
		\hspace{1cm}
		\begin{minipage}{0.32\linewidth}
			\centering
			\includegraphics[width = \linewidth]{Cluster10-Partition.jpg}
			$r = 10$
		\end{minipage}\\ \vspace{1cm}
		{Figure F.4: Partition of New York City by rounding the estimated disaggregation distributions to the closest vertices in the low-dimensional simplex.}\label{Partition}
	\end{figure}

	
	
	

\end{document}